\newcommand*{\rom}[1]{%
\textup{\uppercase\expandafter{\romannumeral#1}}%
}
\def\eqref#1{equation~\ref{#1}}
\def\1{\bm{1}}
\def\vs{{\bm{s}}}
\def\vu{{\bm{u}}}
\def\vw{{\bm{w}}}
\def\vx{{\bm{x}}}
\def\vz{{\bm{z}}}
\def\muon{\texttt{Muon}\xspace}
\def\signgd{\texttt{SignGD}\xspace}
\def\adam{\texttt{Adam}\xspace}
\def\adamw{\texttt{AdamW}\xspace}
\def\scaledgd{\texttt{ScaledGD}\xspace}
\def\gd{\texttt{GD}\xspace}
\def\shampoo{\texttt{Shampoo}\xspace}
\def\mA{{\bm{A}}}
\def\mB{{\bm{B}}}
\def\mE{{\bm{E}}}
\def\mG{{\bm{G}}}
\def\mH{{\bm{H}}}
\def\mI{{\bm{I}}}
\def\mJ{{\bm{J}}}
\def\mM{{\bm{M}}}
\def\mO{{\bm{O}}}
\def\mQ{{\bm{Q}}}
\def\mR{{\bm{R}}}
\def\mS{{\bm{S}}}
\def\mU{{\bm{U}}}
\def\mV{{\bm{V}}}
\def\mW{{\bm{W}}}
\def\mX{{\bm{X}}}
\def\mY{{\bm{Y}}}
\def\mZ{{\bm{Z}}}
\def\mLambda{{\bm{\Lambda}}}
\def\mTheta{{\bm{\Theta}}}
\def\mSigma{{\bm{\Sigma}}}
\def\mDelta{{\bm{\Delta}}}
\DeclareMathAlphabet{\mathsfit}{\encodingdefault}{\sfdefault}{m}{sl}
\SetMathAlphabet{\mathsfit}{bold}{\encodingdefault}{\sfdefault}{bx}{n}
\DeclareMathOperator{\sign}{\mathsf{sign}}
\DeclareMathOperator{\dsign}{\mathsf{diag}\text{-}\mathsf{sign}}
\DeclareMathOperator{\msign}{\mathsf{msign}}
\newcommand{\cO}{\mathcal{O}}
\newcommand{\cS}{\mathcal{S}}
\newcommand{\bE}{\mathbb{E}}
\newcommand{\bP}{\mathbb{P}}
\newcommand{\norm}[1]{\left\lVert#1\right\rVert}
\newcommand{\fro}{\mathrm{F}}
\newcommand{\diag}{\mathsf{diag}}
\def\bR{{\mathbb{R}}}
\newcommand{\stoptocwriting}{%
	\addtocontents{toc}{\protect\setcounter{tocdepth}{-5}}}
\newcommand{\resumetocwriting}{%
	\addtocontents{toc}{\protect\setcounter{tocdepth}{\arabic{tocdepth}}}}
\newcommand{\vertiii}[1]{{\left\vert\kern-0.25ex\left\vert\kern-0.25ex\left\vert #1
		\right\vert\kern-0.25ex\right\vert\kern-0.25ex\right\vert}}
\definecolor{yxc}{RGB}{255,0,0}
\definecolor{yjc}{RGB}{0,155,100}
\newcommand{\tr}{\operatorname{tr}}
\newcommand{\vecc}{\mathsf{vec}}
\newtheorem{theorem}{Theorem}
\newtheorem{lemma}{Lemma}
\newtheorem{remark}{Remark}
\definecolor{sketchblue}{HTML}{007BFF}
\begin{document}

	\title{%
    Preconditioning Benefits of Spectral Orthogonalization in Muon}

\author{%
Jianhao Ma\thanks{Department of Statistics and Data Science,  Wharton School, University of Pennsylvania.} \\
Penn
\and
Yu Huang\footnotemark[1] \\
Penn 
\and Yuejie Chi\thanks{Department of Statistics and Data Science, Yale University.} \\
Yale 
 \and Yuxin Chen\footnotemark[1] \thanks{Department of Electrical and Systems Engineering,  University of Pennsylvania.} \\
 Penn \\
}
    
	\maketitle
	\begin{abstract}
		 The \texttt{Muon} optimizer, a matrix-structured algorithm that leverages spectral orthogonalization of gradients, is a milestone in the pretraining of large language models. However, the underlying mechanisms of \texttt{Muon}---particularly the role of gradient orthogonalization---remain poorly understood, with very few works providing end-to-end analyses that rigorously explain its advantages in concrete applications.  
        We take a step by studying the effectiveness of a simplified variant of \texttt{Muon} through two case studies: matrix factorization, and in-context learning of linear transformers. For both problems, we prove that simplified \texttt{Muon} converges linearly with iteration complexities independent of the relevant condition number, provably outperforming gradient descent and \texttt{Adam}. Our analysis reveals that the \muon dynamics decouple into a collection of independent scalar sequences in the spectral domain, each exhibiting similar convergence behavior. Our theory formalizes the preconditioning effect induced by spectral orthogonalization, offering insight into \texttt{Muon}’s effectiveness in these matrix optimization problems and potentially beyond.
	\end{abstract}

\stoptocwriting
\section{Introduction}

The emergence of \muon---a matrix-structured, spectrum-aware  optimizer recently proposed by \citet{jordan2024muon}---has marked a milestone in the pretraining of large language models (LLMs) and beyond. Standing for {\em MomentUm Orthogonalized by Newton-Schulz} and leveraging spectral orthogonalization of gradients, \muon  was initially shown to set new training speed records on benchmarks like CIFAR-10 and NanoGPT, outperforming conventional optimizers \citep{jordan2024muon}. 
Subsequent work has scaled \muon to multi-billion-parameter LLMs, demonstrating approximately a twofold improvement in training efficiency over the \adamw optimizer  \citep{liu2025muon}. Such empirical advances have positioned \muon as a compelling alternative to established optimizers such as \adam and \texttt{AdamW}, and have motivated theoretical investigation into the mechanisms underlying \muon's practical efficiency.

\subsection{The \muon algorithm and prior theory}

Setting the stage, consider an unconstrained optimization problem:
\begin{align}
    \text{minimize}_{\mX} \quad f(\mX),  
\end{align}
where $\mX\in \bR^{m\times n}$ is a matrix variable. At each iteration $t\geq 0$,  \muon  executes the following  update:
\begin{subequations}
\begin{align}\label{eq:muon-updates-full}
    \mB_t = \nabla f(\mX_t) + \mu \mB_{t-1},  \\
    \mX_{t+1} = \mX_t - \eta_t \msign(\mB_t),
\end{align}
\end{subequations}
where $\mB_t$ represents an auxiliary momentum-like iterate that aggregates the current gradient with exponentially discounted past gradients, $0\leq \mu <1$ controls the degree of momentum (exponential averaging),  $\eta_t>0$ stands for the learning rate at iteration $t$, and 
$\msign(\cdot)$ denotes the matrix sign function defined as
\begin{align}
    \msign(\mZ) \coloneqq \arg\min_{\mO} \big\{ \| \mZ - \mO \|_{\mathrm{F}}: \text{either }\mO\mO^{\top} = \mI \text{ or }\mO^{\top}\mO = \mI \big\}.
\end{align}
Equivalently, if a matrix $\mZ$ has compact singular value decomposition (SVD) $\mZ=\mU_Z\mSigma_Z\mV_Z^{\top}$---where $\mU_Z$ (resp.~$\mV_Z$) denotes the left (resp.~right) singular matrix---then its matrix sign is given by 
$\msign(\mZ)=\mU_Z\mV_Z^{\top}$, although in practice $\msign(\cdot)$ is computed efficiently using Newton-Schulz iterations \citep{jordan2024muon,higham2008functions}.  
A notable special case of (\ref{eq:muon-updates-full}) arises when momentum is disabled by setting $\mu=0$, yielding the simplified update rule
\begin{align}
\label{eq:muon-simplified-updates}
    \mX_{t+1} = \mX_t - \eta_t \msign\big(\nabla f(\mX_t)\big),
    \qquad t=0,1,\cdots
\end{align}
This important variant is commonly referred to as  {\em simplified \muon} or the {\em spectral gradient method}. Turning off momentum substantially simplifies theoretical analysis \citep{an2025asgo,shen2025convergence,davis2025spectral,su2025isotropic}, while often retaining comparable empirical performance to its momentum-based counterpart for nonstochastic settings \citep{shen2025convergence}. 
In contrast to standard optimizers like \adam \citep{kingma2014adam} and \texttt{AdamW} \citep{loshchilov2019decoupled}  that apply independent per-coordinate preconditioning, a distinguishing feature of \muon or spectral gradient methods lies in the use of gradient orthogonalization: update directions are obtained by spectrally orthogonalizing the gradient estimates.

Motivated by \muon's remarkable empirical success, the past year has witnessed a surge of theoretical efforts aimed at elucidating the mechanisms behind its effectiveness from diverse perspectives. 
From an optimization standpoint,  
\citet{li2025note,shen2025convergence} established convergence guarantees of \muon on smooth objectives. In particular, \cite{shen2025convergence} showed that \muon's convergence is governed by the gradient Lipschitz parameter defined w.r.t.~the spectral norm, which can sometimes be substantially smaller than its Euclidean counterpart and hence offers a potential explanation for \muon's accelerated convergence. \citet{kovalev2025understanding} interpreted \muon as a trust region method with non-Euclidean
trust regions and derived tighter convergence rates for certain function classes. 
Complementing this line of work, \cite{chen2025muon} showed that: \muon (with decoupled weight decay) approximately enforces a spectral norm constraint on weight updates,   which implicitly reduces the worst-case smoothness of the optimization landscape and enables the use of larger learning rates.  Stepping beyond worst-case convergence guarantees, \cite{davis2025spectral} compared the one-step progress of spectral updates (as in (\ref{eq:muon-simplified-updates})) relative to Euclidean gradient updates, and showed that \muon yields a larger {\em one-step} reduction in the objective than gradient descent (\gd) when the gradient rank exceeds the activation rank. Another recent work \cite{su2025isotropic} introduced an ``isotropic curvature model''---proposed through heuristic arguments and validated empirically in transformer training---and derived gradient orthogonalization iterations as the optimal updates under certain assumptions.

Despite these theoretical pursuits, however, the theoretical foundation of \muon remains far from complete. In particular, very few existing results were able to offer end-to-end, rigorous analyses that provably demonstrate \muon's advantages over classical optimizers in concrete applications.

\subsection{This paper: preconditioning with \muon}

In this work, we take a step towards theoretically justifying the effectiveness of \muon by investigating its preconditioning effect---a core feature built into its design via spectral orthogonalization---that is hypothesized to make the optimizer better align with the geometry of neural networks
\citep{jordan2024muon,bernstein2024old,vasudeva2025muon,lau2025polargrad}. Rather than tackling the most general settings, we focus on two concrete, yet fundamental, matrix optimization problems: (a) matrix factorization, and (b) in-context learning of linear transformers. 
By focusing on these stylized applications, we develop end-to-end convergence theory unveiling provable advantages of \muon over classical optimizers like \gd and \adam. Our main contributions are summarized below. 
\begin{itemize}
    \item {\em Matrix factorization.} We show in \Cref{thm::muon-main} that simplified \muon converges linearly, encompassing both exactly-parameterized and over-parameterized settings. Notably, \muon's iteration complexity is provably independent of the condition number $\kappa$ of the matrix to be factorized---a stark contrast to both \gd and \signgd (a simplified variant of \adam with momentum disabled), whose iteration complexities scale at least linearly with $\kappa$ (cf.~\Cref{thm::adam-lower-bound}).  

    \item {\em In-context learning of linear transformers.} Akin to the matrix factorization case, we establish linear convergence of simplified \muon in \Cref{thm::muon-transformer}, with an iteration complexity independent of the condition number of the target covariance matrix. This contrasts sharply with both \gd and \signgd, for which we develop iteration complexity lower bounds (cf.~\Cref{prop::lower-bound-transformer}) that scale polynomially with the condition number of interest. 

\end{itemize}
See \Cref{tab:convergence_rates_mf} for more detailed comparisons. 
For both problems, our results reveal that by normalizing the gradient spectrum at each iteration, \muon exhibits preconditioning benefits that yield provably faster, condition-number-free, convergence rates. These theoretical findings are complemented by a series of numerical experiments that corroborate the preconditioning benefits of \muon. At a more technical level, our analyses uncover that the dynamics of \muon decouple into a collection of independent scalar sequences in the spectral domain, each associated with one eigenvalue of the target matrix and exhibiting similar convergence behavior.  
While our theory is restricted to two simple matrix optimization problems and by no means exhaustive, we expect the preconditioning effect of \muon to manifest in broader applications.

\newcommand{\topsepremove}{\aboverulesep = 0mm \belowrulesep = 0mm} \topsepremove

\begin{table}[t]
    \centering
    \renewcommand{\arraystretch}{1.4} %

    {\small
\begin{tabular}{c|c|c|c|c}
\toprule
 problem & \multicolumn{2}{c|}{algorithm} & iterations & paper\tabularnewline
\toprule 
 & \multirow{2}{*}{simplified \muon} & exactly-parameterized & $\log\frac{1}{\varepsilon}$ & this work (\Cref{thm::muon-main})\tabularnewline
 &  & over-parameterized & $\log\frac{1}{\varepsilon}$ & this work (\Cref{thm::muon-main})\tabularnewline
\cline{2-5}
matrix & \multirow{3}{*}{\gd} & exactly-parameterized & $\kappa\log\frac{1}{\varepsilon}$ & \cite{chi2019nonconvex}\tabularnewline
factorization &  & over-parameterized & $\kappa^{3}\log\frac{1}{\varepsilon}$ & \cite{stoger2021small}\tabularnewline
 &  & lower bound & $\kappa\log\frac{1}{\varepsilon}$ & folklore\tabularnewline
\cline{2-5}
 & \signgd & lower bound & $\kappa$ & this work (\Cref{thm::adam-lower-bound})\tabularnewline
\toprule 
 & simplified \muon & exactly-parameterized & $\log\frac{1}{\varepsilon}$ & this work (\Cref{thm::muon-transformer})
 \tabularnewline
\cline{2-5}
in-context learning & \gd & lower bound & $\sqrt{\kappa}\log\frac{1}{\varepsilon}$ & \citet{d2021acceleration}\tabularnewline
\cline{2-5}
 & \signgd & lower bound & $\kappa$ & this work (\Cref{prop::lower-bound-transformer})\tabularnewline
 \bottomrule
\end{tabular}
}
        \caption{Summary of convergence theory for simplified \muon, \gd and \signgd for both matrix factorization and in-context learning tasks. We report the numbers of iterations required to achieve $\varepsilon$-accuracy; only the orders are shown, with all preconstants omitted. 
    \label{tab:convergence_rates_mf}}
\end{table}

\subsection{Additional related work}

We now provide additional discussion of related prior work. 
The convergence analyses in \cite{li2025note,shen2025convergence,chen2025muon} were motivated in part by \cite{bernstein2024old}, which interpreted (some simplified variants of) \adam, \shampoo, and \texttt{Prodigy} as steepest descent under certain norm constraints. 
It is noteworthy that the idea of spectral initialization of gradients has appeared in earlier designs of optimizers (e.g., \citet{carlson2015stochasticB,carlson2015stochastic,carlson2015preconditioned,tuddenham2022orthogonalising}). 
Another line of research studied the implicit bias of \muon. 
For example, \cite{fan2025implicit} showed that in multi-class linear classification, \muon (or its idealized variant with exact orthogonal updates) converges to solutions that maximize the margin w.r.t.~the spectral norm of the weight matrix, which contrasts with the biases of \texttt{SGD} or \adam that favor max-margin solutions w.r.t.~Euclidean or coordinate-wise norms. 
 Moreover, spectrum-aware optimizers like \muon were shown to improve generalization on tasks with imbalanced or long-tailed data distributions \citep{vasudeva2025muon,wang2025muon}, as \muon  (with the aid of spectral orthogonalization) tends to learn all principal components of the data at a more uniform rate instead of over-emphasizing the dominant features. 
Further insights were provided by \cite{zhang2025concurrence}, who demonstrated statistical benefits of layer-wise preconditioning in simplified settings, and by \cite{wang2025muon,vasudeva2025muon}, who showed that \muon yields a more isotropic singular value spectrum than \adam. Moreover, \cite{tveit2025muon} reported that \muon accelerates grokking, offering further evidence of its practical advantages in long-horizon training dynamics. 
There have also been discussions drawing connections between \muon and other second-order methods---for example, \cite{jordan2024muon,shah2025practical} noted that \muon's update can be interpreted as an approximate form of \shampoo \citep{gupta2018shampoo}. 
Lastly, several prior work derived \muon and closely related methods from alternative theoretical perspectives, with some of these studies even predating the formal introduction of \muon \citep{pethick2025training,carlson2015preconditioned,lau2025polargrad,bernstein2024modular,bernstein2024old,an2025asgo}.

Moving beyond \muon, it is worth noting that 
preconditioning has emerged as a powerful tool for accelerating nonconvex matrix factorization. \cite{tong2021accelerating} introduced \scaledgd, with a nonsmooth version presented in \cite{tong2021low}. They proved that in the exactly-parameterized regime with spectral initialization, \scaledgd achieves linear convergence at a rate independent of the condition number. Subsequent work by \cite{zhang2021preconditioned,zhang2023preconditioned} extended these results to the over-parameterized setting, demonstrating condition-number-free convergence  when suitably initialized.  \cite{xu2023power} showed that \scaledgd remains effective under small random initialization, further broadening the scope of condition-number-free guarantees.

\subsection{Notation}

We also introduce a set of useful notation. 
For any matrix $\mM$, we denote by $\sigma_i(\mM)$ the $i$-th largest singular value of $\mM$, let $\sigma_{\min}(\mM)$ be its smallest singular value,  and we let $\|\mM\|$ (resp.~$\|\mM\|_{\mathrm{F}}$) represent its spectral norm (resp.~Frobenius norm). For any $k\leq d$, we let $\mathcal{O}_{d\times k}$ denote the set of orthonormal matrices in $\mathbb{R}^{d\times k}$. For any set of scalars $(a_1,\dots,a_d)$, we denote by $\diag\{a_1,\dots,a_d\}$ the diagonal matrix whose diagonal entries are $a_1,\dots,a_d$. Finally, for any scalar $x\in\bR$, we define the sign function as $\sign(x)=1$ if $x>0$, $\sign(x)=0$ if $x=0$, and $\sign(x)=-1$ if $x<0$.

\section{Main results: two case studies}
\label{sec:main-results}

In this section, we carry out both theoretical and empirical studies on two simple yet fundamental matrix optimization problems. Here and throughout, we shall focus on analyzing  simplified \muon described in \Cref{eq:muon-simplified-updates}, which discards the momentum term and thereby facilitates analysis.

\subsection{Matrix factorization}
\label{sec:main-results-mf}

The first problem considered herein is symmetric matrix factorization,  which can be formulated as
\begin{equation}
    \mathop{\text{minimize}}\limits_{\mU\in \bR^{d\times k}}\quad f(\mU)=\frac{1}{4}\big\|\mU\mU^{\top}-\mM^\star\big\|_\fro^2.
    \label{eq:optimization-matrix-factorization}
\end{equation}
Here, $\mM^\star \in \bR^{d \times d}$ is a rank-$r$ positive semidefinite matrix, and $\mU \in \bR^{d \times k}$ is a (possibly over-parameterized) factor containing $k$ ($k\geq r$) columns. In a nutshell, we seek to factorize the target matrix $\mM^{\star}$ as $\mU\mU^{\top}$ by solving the optimization problem (\ref{eq:optimization-matrix-factorization}). 
Throughout, we let $\mM^\star = \mV^\star \mLambda^\star \mV^{\star\top}$ be the eigen-decomposition of $\mM^\star$, where $\mLambda^\star = \diag\{\lambda_1^\star, \dots, \lambda_r^\star\}$ contains the nonzero eigenvalues $\lambda_1^\star \geq \cdots \geq \lambda_r^\star > 0$, and $\mV^{\star}\in \mathbb{R}^{d\times r}$ is an orthonormal matrix whose columns correspond to the associated eigenvectors.  The condition number of $\mM^{\star}$ is defined and denoted by $$\kappa \coloneqq \lambda_1^\star / \lambda_r^\star.$$

\subsubsection{Convergence guarantees for \muon} 
When applied to the matrix factorization problem (\ref{eq:optimization-matrix-factorization}), the simplified \muon algorithm (\ref{eq:muon-simplified-updates}) yields a straightforward closed-form update rule
\begin{align}
    \label{eq:muon-mf}
    \mU_{t+1} = \mU_t - \eta_t \msign \big( (\mU_t\mU_t^{\top}-\mM^{\star}) \mU_t \big),
    \qquad t=0,1,\cdots 
\end{align}
For both the exactly-parameterized (i.e., $k=r$) and over-parameterized  (i.e., $k>r$, or even $k>d$) settings, we establish rapid convergence of simplified \muon to the ground truth, as formalized in the theorem below.

\begin{theorem}
\label{thm::muon-main}
    Suppose that $\lambda_{\max}^{\star}\geq \lambda_1^{\star}\geq \dots \geq \lambda_r^{\star}> 0$, and consider any $0<\varepsilon<\lambda_{\max}^{\star}$.  
    \begin{itemize}
        \item[(a)] Consider the case with $k\geq d$. Set the learning rates as $\eta_t = C_{\eta}\sqrt{\lambda_{\max}^\star}\rho^t$ for $1/2 \leq \rho < 1$, with $C_{\eta}$ uniformly sampled from the interval $[1, 2]$. Set the initialization as  $\mU_0 = \alpha\mO$, where $0<\alpha\leq C_{\eta}\sqrt{\lambda_{\max}^\star}$ and $\mO\mO^{\top}=\mI_d$. Then with probability $1$, it holds that
        $\big\|\mU_T\mU_T^\top - \mM^\star \big\| \leq \varepsilon$
        as long as
        \begin{align}
        T\geq \frac{1}{1-\rho} \log \bigg( \frac{8\lambda_{\max}^{\star}}{\varepsilon} \bigg). 
        \end{align}

        \item[(b)] Consider the case with $r\leq k<d$. Set the learning rates as $\eta_t = C_{\eta, t}\sqrt{\lambda_{\max}^\star}\rho^t$ for $2/3 \leq \rho < 1$, with $C_{\eta, t}$ independently and uniformly sampled from the interval $[1, 2]$. Set the initialization $\mU_0 = \alpha\mO$ for some  $\alpha>0$, where $\mO \in \mathcal{O}_{d \times k}$ is an orthonormal matrix sampled uniformly at random from $\mathcal{O}_{d \times k}$. Then with probability at least $0.99$, we have
      $ \big\|\mU_T\mU_T^\top - \mM^\star \big\| \leq \varepsilon $
        as soon as
        \begin{align}
        T=\left\lceil \frac{1}{1-\rho} \log \bigg( \frac{16\lambda_{\max}^{\star}}{(1-\rho)^2\varepsilon} \bigg)\right\rceil, 
        \end{align}
        provided that $\alpha$ is sufficiently small. 
    \end{itemize}
\end{theorem}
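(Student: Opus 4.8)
## Proof Strategy for Theorem~\ref{thm::muon-main}

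The plan is to exploit the rotational invariance of both the objective and the $\msign$ operator to reduce the matrix dynamics to a collection of decoupled scalar recursions, one per eigenvalue of $\mM^\star$. First I would work in the eigenbasis of $\mM^\star$: writing $\mU_t = \mV^\star \mA_t + \mV^\star_\perp \mB_t$ (with $\mV^\star_\perp$ an orthonormal complement), one checks that the residual $(\mU_t\mU_t^\top - \mM^\star)\mU_t$ transforms compatibly under left multiplication by $[\mV^\star,\mV^\star_\perp]^\top$, and since $\msign(\mQ\mZ) = \mQ\,\msign(\mZ)$ for orthogonal $\mQ$, the update preserves a block structure. The key reduction is to track the SVD-type quantities of $\mA_t$ (and show $\mB_t$ stays negligible in the over-parameterized case, or is absent when $k=r$). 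In the cleanest scenario — say part~(a) with $\mU_0 = \alpha\mO$ and $k \geq d$ — one can diagonalize: there is a (possibly time-varying) orthogonal alignment under which $\mU_t\mU_t^\top$ is simultaneously diagonalizable with $\mM^\star$, so that writing $x_{i,t}$ for the $i$-th ``signal'' singular-value-squared coordinate, the $\msign$ normalizes the residual $x_{i,t}(x_{i,t} - \lambda_i^\star)$ to its sign, yielding the scalar recursion
\begin{align}
\label{eq:scalar-recursion-plan}
    \sqrt{x_{i,t+1}} = \sqrt{x_{i,t}} - \eta_t \,\sign\!\big(x_{i,t} - \lambda_i^\star\big),
\end{align}
uniformly across all $i$, plus analogous ``null'' coordinates that must be driven to $0$.

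Second, I would analyze the scalar recursion~\eqref{eq:scalar-recursion-plan}. Because the step size $\eta_t = C_\eta\sqrt{\lambda_{\max}^\star}\,\rho^t$ is geometrically decaying and, crucially, the \emph{same} for every coordinate, each $\sqrt{x_{i,t}}$ performs a sign-controlled descent toward $\sqrt{\lambda_i^\star}$: once it enters an $O(\eta_t)$-neighborhood of the target it stays within $O(\eta_t)$ thereafter (this is where the randomization $C_\eta \in [1,2]$ matters — it ensures that with probability $1$ the iterate never gets exactly ``stuck'' straddling the target in a way that prevents the residual from shrinking, and more importantly it gives a clean almost-sure bound $|\sqrt{x_{i,t}} - \sqrt{\lambda_i^\star}| \lesssim \eta_{t-1}$). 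Summing the geometric series, after $T \gtrsim \frac{1}{1-\rho}\log\frac{\lambda_{\max}^\star}{\varepsilon}$ steps one gets $|x_{i,T} - \lambda_i^\star| \leq \varepsilon$ for all $i$ simultaneously, and since $\|\mU_T\mU_T^\top - \mM^\star\|$ is the max over these coordinate errors, the spectral-norm bound follows. The null coordinates obey $\sqrt{y_{j,t+1}} = \sqrt{y_{j,t}} - \eta_t$ (driven downward), which also collapses geometrically — one just needs the initialization scale $\alpha$ small enough relative to $\sqrt{\lambda_{\max}^\star}$, as assumed.

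Third, for part~(b) with $r \leq k < d$, the obstacle is that one can no longer simultaneously diagonalize, because the factor $\mU_t \in \bR^{d\times k}$ with $k < d$ cannot contain all of $\mV^\star$'s directions from the start; the signal subspace must be \emph{learned}. Here I would follow the two-phase template familiar from small-initialization analyses of matrix factorization (e.g.\ \cite{stoger2021small,xu2023power}): in Phase~1, starting from $\mU_0 = \alpha\mO$ with tiny $\alpha$, the iterate's component along the top-$r$ eigenspace of $\mM^\star$ grows while the orthogonal-complement component stays controlled, until the signal component becomes $\Theta(1)$ and well-aligned with $\mathrm{span}(\mV^\star)$; in Phase~2 one is back in an (approximately) exactly-parameterized regime and the scalar analysis of~\eqref{eq:scalar-recursion-plan} applies. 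The randomness of $\mO$ (Haar on $\mathcal{O}_{d\times k}$) and the per-step randomization $C_{\eta,t}\in[1,2]$ are used to guarantee, with probability $\geq 0.99$, that the initial alignment between $\mathrm{col}(\mO)$ and $\mathrm{span}(\mV^\star)$ is not too small — a standard random-matrix incoherence estimate — so that Phase~1 terminates after $O(\frac{1}{1-\rho}\log\frac{1}{(1-\rho)\cdot(\text{init align})})$ iterations, absorbed into the stated bound. The extra $(1-\rho)^{-2}$ factor inside the log reflects the looser control during the alignment phase.

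The main obstacle I anticipate is \textbf{controlling the cross/null block $\mB_t$ in part~(b)}: unlike \gd, the $\msign$ operator is not contractive in any naive sense and couples the signal and null directions through a global normalization of the \emph{whole} residual matrix, so showing that $\msign$ of the perturbed residual still acts approximately as a block-diagonal orthogonal factor (up to errors that remain geometrically small) requires a careful perturbation bound on $\msign$ — essentially, if $\mZ = \mZ_0 + \mE$ with $\mZ_0$ having a spectral gap and $\|\mE\|$ small relative to that gap, then $\|\msign(\mZ) - \msign(\mZ_0)\| \lesssim \|\mE\|/\sigma_{\min}(\mZ_0)$ on the relevant subspace. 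Making this perturbation estimate interact cleanly with the geometrically decaying step sizes — so that the accumulated misalignment error telescopes rather than blows up — is the delicate part of the argument. In part~(a) this difficulty disappears because $k \geq d$ lets $\mU_0 = \alpha\mO$ be (a scaling of) a full orthogonal matrix, so exact simultaneous diagonalization holds for all $t$ and only the scalar recursion needs attention.
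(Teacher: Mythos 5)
Your treatment of part~(a) is essentially the paper's argument: for $k\geq d$ the initialization $\alpha\mO$ is already perfectly aligned with $\mV^{\star}$, the trajectory decouples into $r$ (plus null) scalar sign recursions with geometrically decaying steps, and the randomized prefactor $C_\eta$ rules out the degenerate stationary point (in the paper it is used to guarantee $u_t\neq 0$ for all $t$ with probability $1$, i.e.\ avoiding the origin/saddle rather than the target); the induction ``once within $O(\eta_t)$ of $\sqrt{\lambda_i^\star}$, stay within $O(\eta_t)$'' is exactly \Cref{lem::1d-muon}. So part~(a) is fine as a sketch.

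For part~(b), however, your plan has a genuine gap, and the mechanism you propose does not match how \muon actually behaves. There is no gradual ``Phase~1'' in which the signal component grows from $\alpha$ to $\Theta(1)$: because $\msign$ normalizes the entire gradient spectrum, a single iteration already moves the iterate to scale $\eta_0\asymp\sqrt{\lambda_{\max}^{\star}}$, and the paper shows (\Cref{lem:diff-msign-Q-G} together with \Cref{lem::init}) that $\mU_1$ is aligned with $\mV^{\star}$ up to an $O(\alpha)$ error after this one step; the Haar-random $\mO$ — not the per-step randomization of $C_{\eta,t}$ — is what guarantees the initial alignment is nondegenerate. The real difficulty, which your proposal gestures at but does not resolve, is propagating that $O(\alpha)$ misalignment over $T$ iterations: the natural $\msign$ perturbation bound carries a factor $1/\sigma_{\min}(\nabla f(\widetilde{\mU}_t))$, and along \muon's zigzagging trajectory the scalar coordinates repeatedly pass near $0$ or near their targets, so this factor is \emph{not} uniformly bounded and the errors do not telescope. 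The paper's fix is structural: it introduces an auxiliary perfectly aligned trajectory, uses the i.i.d.\ random prefactors $C_{\eta,t}$ to prove a per-step small-ball lower bound on $\sigma_{\min}(\nabla f(\widetilde{\mU}_t))$ (\Cref{lem::small-ball-prob}), controls the resulting product of amplification factors by an MGF/Chernoff argument (\Cref{lem:prod-mgf}) — accepting that this product can be as large as $\exp(O(T^2))$ — and then absorbs it by taking $\alpha$ correspondingly (exponentially) small, which is precisely the ``$\alpha$ sufficiently small'' proviso in the statement. Without some substitute for the small-ball/random-step-size argument and the compensating choice of $\alpha$, your perturbation-plus-telescoping plan would fail exactly at the step you flag as delicate.
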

\begin{remark}
    Careful readers may note that our theory for the regime $r\leq k<d$ requires more restrictive conditions than those in the regime with $k\geq d$.  We believe that these restrictions are not fundamental and can potentially be relaxed via more refined analyses, which we leave for future work. 
\end{remark}

Remarkably, \Cref{thm::muon-main} uncovers that when applied to matrix factorization, the iteration complexity of \muon is entirely independent from the condition number $\kappa$ of the target matrix $\mX^{\star}$, and scales only logarithmically with the inverse accuracy level $1/\varepsilon$ (thereby establishing linear convergence for \muon).  This finding suggests that the gradient orthogonalization step in \muon serves as an effective preconditioner, accelerating convergence by mitigating ill-conditioning in the gradient search directions. Even in the presence of overparameterization, \muon is guaranteed to achieve condition-number-free linear convergence. 

We also briefly explain the rationale for using exponentially decaying learning rates. In contrast to \gd---where the distance moved in each iteration depends on both the gradient norm and the learning rate---each \muon iteration moves a fixed distance determined solely by the learning rate $\eta_t$. Consequently, to achieve linear convergence, the length of each movement---namely, $\eta_t$---must decrease geometrically over iterations.

\subsubsection{Comparisons with other optimizers} 
To better demonstrate the preconditioning benefits offered by \muon, we compare its convergence theory established in \Cref{thm::muon-main} against two prominent baselines: gradient descent, and a simplified variant of \adam with momentum turned off. Notably, the latter two optimizers are  unable to achieve the desirable condition-number-free convergence rates.

Let us begin by examining \gd, whose convergence properties for matrix factorization have been extensively studied. More precisely, consider the following \gd update rule:
\begin{align}
\textbf{(\gd)}\qquad
    \mU_{t+1} = \mU_t - \eta_t \big( (\mU_t\mU_t^{\top}-\mM^{\star})\mU_t\big),
    \qquad t= 0,1,\cdots
\end{align}
The state-of-the-art convergence theory for this algorithm can be summarized as follows: by taking the learning rates $\eta_t=\Theta(1/\lambda_1^{\star})$, \gd yields $\|\mU_t\mU_t^{\top}-\mM^{\star}\|\leq \varepsilon$ in
\begin{align}
    \begin{cases}
        O\big(\kappa \log (1/\varepsilon)\big) \text{ iterations } & \text{if }k=r~(\text{exactly-parameterized}), \\
        O(\min\{\kappa^3\log(1/\varepsilon),\,\lambda_1^\star/\varepsilon)\}\text{ iterations } & \text{if }k>r~(\text{over-parameterized});
    \end{cases}
\end{align}
see, e.g.,  \cite{chi2019nonconvex,stoger2021small,zhuo2024computational,xiong2023over,xu2024provable} for more details. This implies that \gd cannot attain condition-number-free convergence guarantees without compromising linear convergence.

Next, let us turn attention to a simplified variant of \adam given by
\begin{align}
    \label{eq:signgd-mf}
    \textbf{(\signgd)}\qquad
    \mU_{t+1} = \mU_t - \eta_t \sign \big( (\mU_t\mU_t^{\top}-\mM^{\star}) \mU_t \big),
    \qquad t=0,1,\cdots 
\end{align}
where the sign function $\sign(\cdot)$ is applied entrywise. This algorithm (\ref{eq:signgd-mf}), which disables momentum in \adam, is also referred to as \signgd. Note that \signgd is more amenable to theoretical analysis than its momentum-based counterpart, while still capturing several core features of \adam like entrywise preconditioning \citep{bernstein2024old}. To demonstrate the provable advantage of \muon over  \signgd, we establish the following lower bound on the iteration complexity of (\ref{eq:signgd-mf}). 
\begin{theorem}
\label{thm::adam-lower-bound}
Let $r_0 \in (0, 1/16]$ be a universal constant. Consider the \signgd algorithm~(\ref{eq:signgd-mf}) with any non-increasing, positive learning rate sequence $\{\eta_t\}_{t \geq 0}$ satisfying $\eta_0\leq r_0$. Then, one can find a ground-truth matrix $\mM^\star$ with condition number $\kappa$, along with an initialization $\mU_0$ obeying $
\norm{\mU_0\mU_0^{\top} - \mM^\star}_{\fro} \leq r_0
$,  such that: for any given $\varepsilon \leq \frac{9r_0^2}{4096\kappa^2}$, $f(\mU_T)\leq \varepsilon$ cannot happen unless $$T\geq \frac{\kappa - 1}{4}.$$
\end{theorem}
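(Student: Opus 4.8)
\textbf{Proof proposal for Theorem~\ref{thm::adam-lower-bound}.}

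The plan is to construct an explicit hard instance in which \signgd stalls because its per-step displacement is coordinate-wise of fixed magnitude $\eta_t$ and hence cannot shrink the small eigen-direction of $\mM^\star$ fast enough. Concretely, I would take $\mM^\star = \diag\{1, 1/\kappa\}$ (a $2\times 2$ diagonal example, or its obvious rank-$2$ embedding in $d$ dimensions with all other coordinates zero), and choose the exactly-parameterized factor $\mU\in\bR^{2\times 2}$. The key structural observation is that if $\mU_0$ is diagonal, then the gradient $(\mU\mU^\top-\mM^\star)\mU$ stays diagonal for all $t$, so \signgd decouples into two independent scalar recursions $u_{t+1}^{(i)} = u_t^{(i)} - \eta_t\,\sign\big((u_t^{(i)})^3 - \lambda_i^\star u_t^{(i)}\big)$. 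I would initialize $u_0^{(1)} = 1$ (exactly matching the large eigenvalue) and $u_0^{(2)} = \sqrt{1/\kappa}$ (exactly matching the small one), so that $\norm{\mU_0\mU_0^\top-\mM^\star}_\fro = 0 \le r_0$; then all the action is in how the fixed-size $\pm\eta_t$ steps perturb these fixed points.

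The heart of the argument is a lower bound on $f(\mU_T)$ driven by the small direction $u^{(2)}$. Since $f(\mU) = \frac14\sum_i\big((u^{(i)})^2-\lambda_i^\star\big)^2$, it suffices to show $\big((u_T^{(2)})^2 - 1/\kappa\big)^2$ cannot be too small. Here I would use two facts. First, the fixed point $\sqrt{1/\kappa}$ of the $u^{(2)}$-recursion is unstable-ish under discrete $\pm\eta_t$ steps unless $\eta_t$ is already tiny: because $\sign(\cdot)$ has no small values, once $u_t^{(2)}$ is displaced off $\sqrt{1/\kappa}$ the iterate oscillates in a band of width $\sim\eta_t$ around it, and the \emph{squared} error $\big((u_t^{(2)})^2-1/\kappa\big)^2$ is on the order of $\eta_t^2/\kappa$ (from $((\sqrt{1/\kappa}\pm\eta_t)^2 - 1/\kappa)^2 \approx 4\eta_t^2/\kappa$ when $\eta_t \ll 1/\sqrt\kappa$). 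Second — and this is where the initialization-dependent choice of $\mM^\star$ and a "bad case" selection enters — by perturbing $u_0^{(2)}$ infinitesimally (or equivalently choosing which of two symmetric configurations the adversary presents, to respect "one can find $\mM^\star$ and $\mU_0$"), I can force the first step to push $u^{(2)}$ \emph{away} from $\sqrt{1/\kappa}$, so the iterate never re-enters a neighborhood of size $o(\eta_t)$. Combining: $f(\mU_T) \gtrsim \eta_T^2/\kappa$. Since $\{\eta_t\}$ is non-increasing, $\eta_T \ge \eta_T$ is not directly useful; instead I note that to even \emph{reach} the basin we need the step sizes to have decayed, but the decay is controlled by the \emph{algorithm}, not the analyst — so I should argue differently.

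The cleaner route, which I would actually pursue, is a \emph{distance-budget} argument on the large direction combined with a conservation/monotonicity bound on the small one. Observe that to get $f(\mU_T)\le\varepsilon$ we need $\big((u_T^{(1)})^2 - 1\big)^2 \le 4\varepsilon$ and $\big((u_T^{(2)})^2 - 1/\kappa\big)^2 \le 4\varepsilon$ simultaneously. For the first, since $u_0^{(1)}=1$ already, the \signgd dynamics in coordinate $1$ move by $\pm\eta_t$ each step and the squared error is $\gtrsim \eta_t^2$ whenever the iterate is off the fixed point; so achieving $\varepsilon$-accuracy in coordinate $1$ forces, roughly, $\eta_t \lesssim \sqrt{\varepsilon}$ for all $t$ past some point — but again step-size decay is the algorithm's to control, so what I really want is a \emph{lower bound on $T$ from how slowly the small coordinate can move}. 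Here is the actual mechanism: with $\eta_0 \le r_0 \le 1/16$ and all $\eta_t\le\eta_0$, the displacement of $u^{(2)}$ over $T$ steps is at most $\sum_{t<T}\eta_t \le T\eta_0$; but more importantly the \emph{signed cumulative drift} toward the target is what matters, and I would show that while the large coordinate is "settling" (which takes many oscillation steps because $\eta$ cannot adaptively vanish), the small coordinate is dragged along at a rate that, per step, changes $\big((u_t^{(2)})^2 - 1/\kappa\big)^2$ by at most $O(\eta_t \cdot \sqrt{1/\kappa} \cdot \eta_t) = O(\eta_t^2/\sqrt\kappa)$, forcing $T \gtrsim \kappa$ to cover the required range. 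I expect the \textbf{main obstacle} to be handling the full generality of the non-increasing step-size sequence $\{\eta_t\}$ and the "there exists $\mU_0$" quantifier cleanly: I must pick the instance \emph{after} seeing that $\eta_0 \le r_0$, design the perturbation of $\mU_0$ to guarantee the worst-case sign pattern, and track the coupled $2$-dimensional recursion through its oscillatory phase without the argument collapsing when some $\eta_t$ happens to be extremely small — which I would resolve by noting that if $\eta_t$ becomes tiny too early then coordinate $1$ has not yet reached accuracy (its error is still $\Omega(1)$ since it needs net displacement to absorb the overshoot), contradicting $f(\mU_T)\le\varepsilon$, and if $\eta_t$ stays large then coordinate $2$ cannot have traversed its range in fewer than $\Omega(\kappa)$ steps.
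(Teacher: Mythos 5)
There is a genuine gap, and it is the central idea. Your hard instance is diagonal ($\mM^\star=\diag\{1,1/\kappa\}$ with diagonal $\mU_0$), so the \signgd dynamics decouple into two \emph{independent} scalar recursions $u_{t+1}=u_t-\eta_t\sign\big((u_t^2-\lambda^\star)u_t\big)$. But these are exactly the scalar \muon dynamics analyzed in \Cref{lem::1d-muon}: while outside the oscillation band the distance $\big||u_t|-\sqrt{\lambda^\star}\big|$ shrinks by the full step $\eta_t$ at every iteration, \emph{independently of} $\lambda^\star$. Hence a separable instance cannot exhibit condition-number-dependent slowdown: for a suitable non-increasing schedule (e.g.\ a short constant phase of size $r_0$ to cover the initial distance, followed by geometric decay), every admissible diagonal initialization reaches $f\le\varepsilon$ in $O(\log\kappa)$ steps, so no non-vacuous $\Omega(\kappa)$ bound can come out of this construction (the only "escapes" are degenerate ones — starting at a sign-zero saddle or out of reach of the total step budget — which make the claim vacuous and are not what you propose). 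Worse, your proposed initialization at the exact solution (or an "infinitesimal perturbation" of it) gives $f(\mU_0)\le\varepsilon$ already at $T=0<\frac{\kappa-1}{4}$, directly contradicting the property you need to establish. The subsequent heuristics (per-step change of order $\eta_t^2/\sqrt{\kappa}$ "forcing $T\gtrsim\kappa$") do not close this: they would require control of $\eta_t$, which belongs to the algorithm, not the analyst, and you acknowledge this circularity yourself.

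The paper's proof rests on a mechanism your construction cannot produce: \emph{misalignment between the sign geometry and the curvature geometry}. It takes $\mM^\star=\mH=\frac12\begin{psmallmatrix}\kappa+1&\kappa-1\\ \kappa-1&\kappa+1\end{psmallmatrix}$, whose eigenbasis is rotated $45^\circ$ from the coordinate axes in which the entrywise sign acts, and restricts to the invariant set $\mathcal S$ of matrices $\begin{psmallmatrix}a&b\\ b&a\end{psmallmatrix}$. In the eigenbasis each \signgd step moves \emph{exactly one} eigen-coordinate by $\sqrt2\,\eta_t$, namely the one winning the comparison $|\kappa\delta_1|$ vs.\ $|\delta_2|$ (after a local expansion showing the cubic gradient has the same sign pattern as the quadratic model). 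The initialization is then built \emph{backward in time} (the "learning-rate barrier", \Cref{lem:learning rates-lower-bound}): the stiff coordinate is placed so that it monopolizes every update while $\eta_t\ge 4\varepsilon_q$, freezing the soft coordinate at error $\kappa\varepsilon_q$ with $\varepsilon_q\asymp\sqrt{\varepsilon}$; once the steps have shrunk below $4\varepsilon_q$, closing that gap takes at least $\frac{\kappa-1}{4}$ iterations. This cross-coordinate coupling, the quantifier handling via the backward construction against an arbitrary non-increasing schedule, and the translation between $F(\mU)\le\varepsilon$ and $|\delta_2|\lesssim\sqrt\varepsilon$ are all absent from your proposal, so the argument as sketched cannot be repaired without changing the instance itself.
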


In words, this lower bound demonstrates that  a momentum-free variant of \adam may incur at least a linear dependency on $\kappa$ in the iteration complexity. The proof of this lower bound is deferred to \Cref{sec:lower-bound-signgd-mf}.

\subsubsection{Intuition}

Thus far, we have established the  advantage of simplified \muon over a simplified variant of \adam (i.e., \signgd). In this subsection, we seek to provide some intuitive explanations about their differences in convergence rates. To streamline the presentation, we restrict our discussion  to the exactly-parameterized regime where $k = r$.

\paragraph{Decoupling of \muon dynamics into independent scalar sequences.} 
To build intuition for the working mechanism of simplified \muon, we adopt for the moment a simplifying assumption: 
\begin{align}
\label{eq:assumption-Ut-mf-intuition}
    \mU_t = \mV^{\star} \mSigma_t \mR^{\top}, 
    \qquad \text{for all }t\geq 1
\end{align}
for some diagonal matrix $\mSigma_t = \diag \{\sigma_{1,t},\dots,\sigma_{r,t}\} \in \mathbb{R}^{r\times r}$ and some orthonormal matrix $\mR \in \mathcal{O}_{r\times r}$. In words,  (\ref{eq:assumption-Ut-mf-intuition}) asserts that each \muon iterate $\mU_t$ has its singular subspace perfectly aligned with the true subspace $\mV^{\star}$. 
Although this assumption may appear overly restrictive at first glance, it will be approximately justified in our analysis in \Cref{sec:analyis-muon-mf}.  

Under this simplifying assumption (\ref{eq:assumption-Ut-mf-intuition}), the update rule (\ref{eq:muon-mf}) satisfies
\begin{align}
    \mU_{t+1} &= \mU_t - \eta_t \msign\big( (\mU_t\mU_t^{\top}-\mM^{\star})\mU_t\big) \notag\\
    &= \mV^{\star} \mSigma_t \mR^{\top}
     - \eta_t\msign\big( (\mV^{\star} \mSigma_t \mR^{\top}\mR \mSigma_t\mV^{\star\top} - \mV^{\star} \mLambda^{\star}\mV^{\star\top})\mV^{\star} \mSigma_t \mR^{\top} \big) \notag\\
     &= \mV^{\star} \mSigma_t \mR^{\top}
     - \eta_t\msign\big( \mV^{\star}( \mSigma_t^3   - \mLambda^{\star}\mSigma_t)  \mR^{\top} \big) \notag\\
     &=\mV^{\star} \mSigma_t \mR^{\top} - \eta_t\mV^{\star} 
       \dsign( \mSigma_t^3   - \mLambda^{\star}\mSigma_t)   \mR^{\top},
      \label{eq:Ut-iteration-intuition}
\end{align}
where for any diagonal matrix $\mSigma = \{\sigma_{1},\dots,\sigma_r\}$, we define $\diag(\mSigma) = \{\sign(\sigma_{1}),\dots,\sign(\sigma_r)\}$. If we write $\mU_{t+1}=\mV^{\star} \mSigma_{t+1} \mR^{\top}$ according to (\ref{eq:assumption-Ut-mf-intuition}), then it readily follows from \Cref{eq:Ut-iteration-intuition} that
\begin{align}
    \mSigma_{t+1} =
    \mSigma_t - \eta_t \dsign( \mSigma_t^3   - \mLambda^{\star}\mSigma_t). 
    \label{eq:Sigmat-iteration-intuition}
\end{align}
Crucially, all terms in \Cref{eq:Sigmat-iteration-intuition} are diagonal, thereby allowing it to be decomposed into $r$ independent scalar recursions:
\begin{align}
    \sigma_{i,t+1} = \sigma_{i,t} - \eta_t\sign\big( \sigma_{i,t}^3 - \lambda_i^{\star}\sigma_{i,t} \big),\qquad t=0,1,\cdots
    \label{eq:scalar-sequence-intuition}
\end{align}
for each $1\leq i\leq r$, 
each associated with one eigenvalue of $\mM^{\star}$. Noteworthily, the $r$ scalar sequences in (\ref{eq:scalar-sequence-intuition}) evolve completely independently, with no interaction across sequences. 

Owing to its simplicity, the scalar recursion in (\ref{eq:scalar-sequence-intuition}) admits a straightforward analysis. As we shall formally establish in \Cref{sec:muon-dynamics-scalar-mf}, elementary calculations give
\begin{align}
    \big| \sigma_{i,t+1}^2 - \lambda_i^{\star} \big|=O\big(\sqrt{\lambda_{\max}^\star}\eta_t\big) = O\big( \lambda_{\max}^{\star}\rho^t \big), 
\end{align}
provided that the learning rates decay exponentially as $\eta_t =C_{\eta} \sqrt{\lambda_{\max}^{\star}} \rho^t$. This linear convergence feature---with the convergence rate $\rho$ a numerical constant within $[1/2,1)$---mitigates the imbalance between large and small eigenvalues, thereby paving the way for condition-number-free convergence. 

This intuition further hints at a connection between \muon and the scaled gradient descent (\scaledgd) method \citep{tong2021accelerating}. We formalize this connection and discuss its implications in Appendix~\ref{sec::connection-muon-scaledgd}.

\paragraph{Why do \signgd and \adam fail?} As illustrated in \Cref{thm::adam-lower-bound}, the convergence rate of \signgd (a simplified variant of \adam) is sensitive to the condition number of $\mM^\star$. This arises because \signgd employs a {\em per-coordinate preconditioner}, which disregards the richer curvature structure of the problem and hence fails to adapt as effectively as \muon.

To see this more formally, denote by $\vu_t = \vecc(\mU_t)$ the flattened iterate, where $\vecc(\mZ)$ stacks the rows of a matrix $\mZ$ into a single column vector. Invoking the identities $\vecc(\mA \mX \mB) = (\mA \otimes \mB^\top)\vecc(\mX)$ and $\msign(\mZ)=\mZ(\mZ^{\top}\mZ)^{-1/2}$ for $\mZ\in \mathbb{R}^{d\times r}$, we can express the \muon update (\ref{eq:muon-mf}) as
\begin{equation}
\vu_{t+1} = \vu_t - \eta_t \big(\mI\otimes  ( \nabla f(\mU_t)^\top \nabla f(\mU_t) )^{-1/2}\big) \vecc\big(\nabla f(\mU_t)\big),
\end{equation}
where $\mI\otimes  ( \nabla f(\mU_t)^\top \nabla f(\mU_t) )^{-1/2}$ can be interpreted as a blockwise preconditioner.  Crucially, this preconditioning matrix is {\em not} diagonal, even in the limit when $\mU_t$ converges to the truth.

In contrast, \signgd and \adam employ diagonal preconditioners. For instance, the \signgd update (\ref{eq:signgd-mf}) can be expressed as
\begin{equation}
\vu_{t+1} = \vu_t - \eta_t\, \diag\left\{ |\vecc(\nabla f(\mU_t))|^{-1} \right\} \vecc\big(\nabla f(\mU_t)\big),
\end{equation}
where $|\bm{z}|^{-1}$ denotes the entrywise inverse of the entrywise magnitude of a vector $\bm{z}$. This diagonal preconditioner completely neglects cross-coordinate curvature. Consequently, \adam fails to adapt to the geometry of the matrix factorization problem, leading to slow convergence when $\mM^\star$ is ill-conditioned.

\subsubsection{Numerical experiments}
\label{sec:numerics-mf}
We now carry out a series of numerical experiments to validate the theoretical separation in convergence rates between \muon, \gd, and \signgd, with results displayed in \Cref{fig::matrix-factorization}.
In the top row (a–c) of \Cref{fig::matrix-factorization}, we investigate the impact of the condition number $\kappa \in \{1, 5, 25, 125, 625\}$, while fixing the matrix dimension to $d = 100$, target rank $r = 2$, and search rank $k = 2$.
In the bottom row (d–f) of \Cref{fig::matrix-factorization}, we evaluate the effect of search rank $k \in \{2, 3, 100\}$, fixing the condition number $\kappa = 1$, matrix dimension $d = 100$, and target rank $r = 2$.
All experiments adopt a more robust exponentially decaying learning rate schedule: the learning rate is reduced by a factor of $0.3$ if the loss does not decrease for $50$ consecutive iterations. 

Across all settings, \muon exhibits fast and stable convergence, reaching machine-level precision within a few hundred to a few thousand iterations—even under large condition numbers or severe rank over-specification. In contrast, both \gd and \signgd experience significant slowdowns as the condition number increases or the search rank grows. These results underscore the robustness of \muon vis-\`a-vis ill-conditioning and over-parameterization.
Moreover, while our theoretical guarantees for \muon require small initialization, we observe that in practice \muon converges robustly even with moderately sized initialization. In all experiments, we use an initialization scale of $\alpha = 0.1$. Rigorously elucidating why \muon remains stable and convergent under a broader range of initialization is an important direction for future work.

\begin{figure}[t]
     \centering
     \begin{subfigure}[b]{0.3\textwidth}
         \centering
         \includegraphics[width=\textwidth]{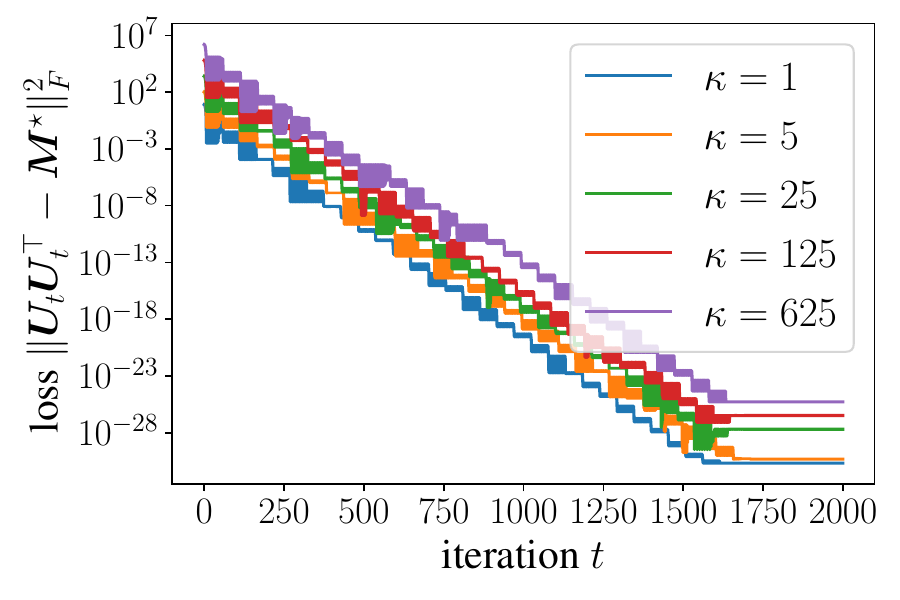}
         \caption{\muon}
         \label{fig::muon-condition_number}
     \end{subfigure}
     \begin{subfigure}[b]{0.3\textwidth}
         \centering
         \includegraphics[width=\textwidth]{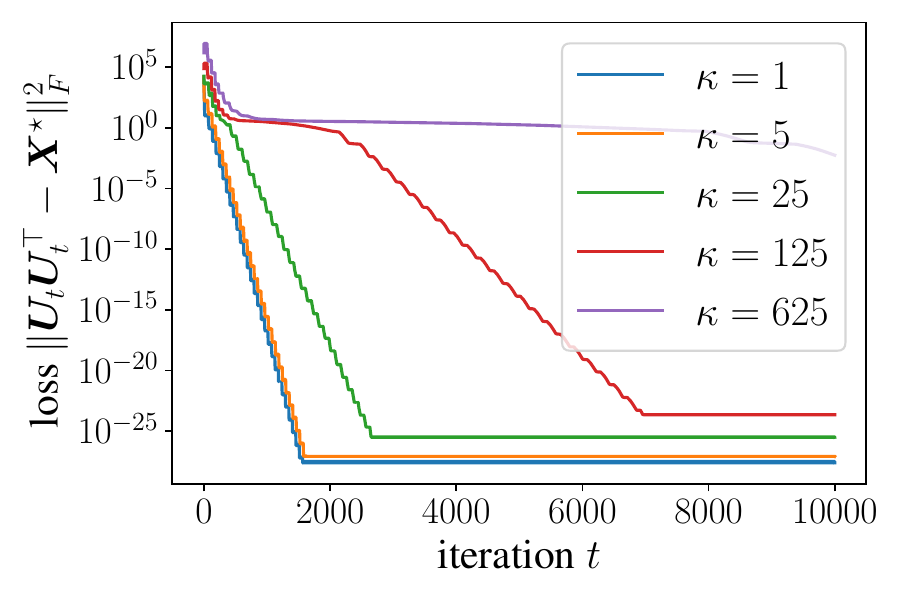}
         \caption{\signgd}
         \label{fig::adam-condition_number}
     \end{subfigure}
     \begin{subfigure}[b]{0.3\textwidth}
         \centering
         \includegraphics[width=\textwidth]{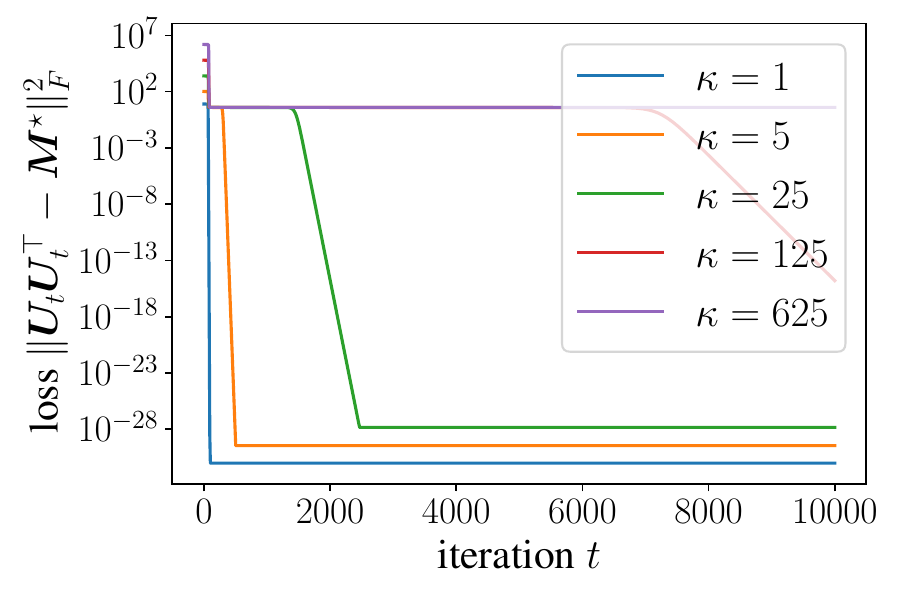}
         \caption{\gd}
         \label{fig::gd-condition_number}
     \end{subfigure}\\
     \begin{subfigure}[b]{0.3\textwidth}
         \centering
         \includegraphics[width=\textwidth]{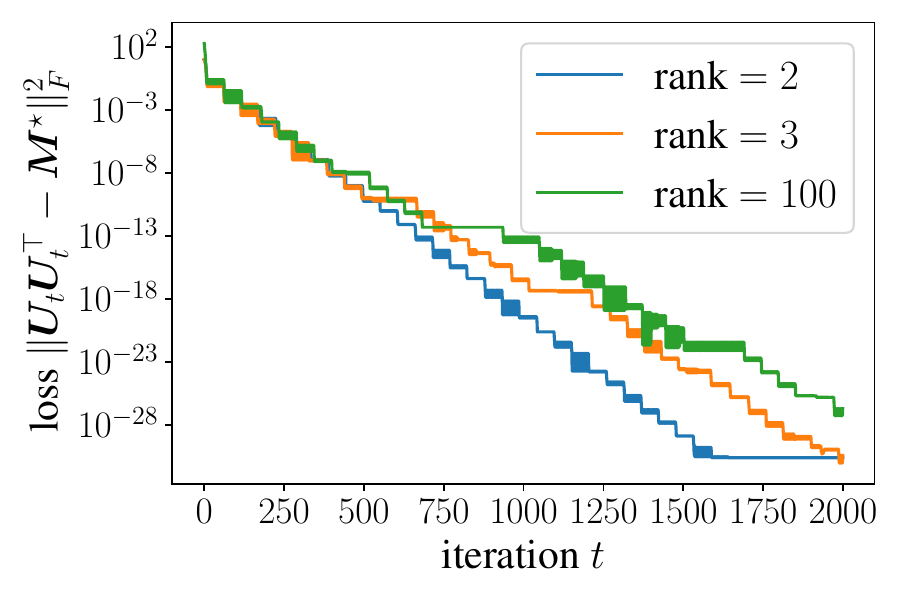}
         \caption{\muon}
         \label{fig::muon-rank}
     \end{subfigure}
     \begin{subfigure}[b]{0.3\textwidth}
         \centering
         \includegraphics[width=\textwidth]{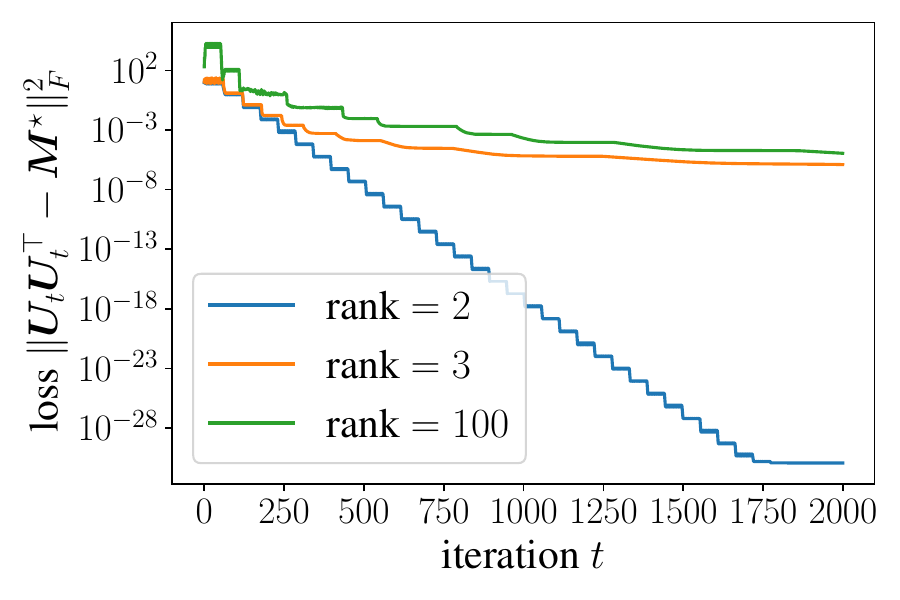}
         \caption{\signgd}
         \label{fig::adam-rank}
     \end{subfigure}
     \begin{subfigure}[b]{0.3\textwidth}
         \centering
         \includegraphics[width=\textwidth]{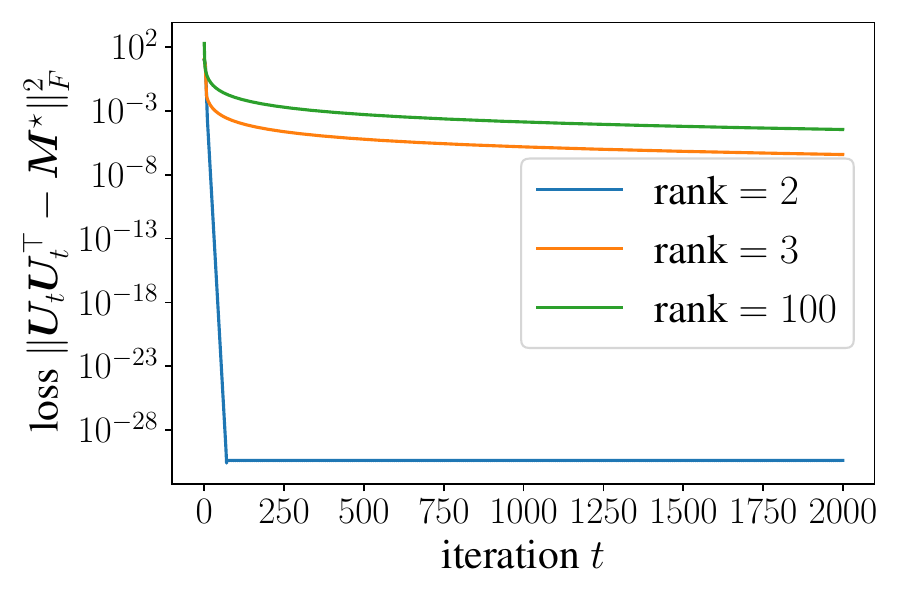}
         \caption{\gd}
         \label{fig::gd-rank}
     \end{subfigure}
        \caption{
    Numerical convergence behavior of \muon,  \signgd, and \gd on matrix factorization tasks under varying condition numbers and search ranks. 
    }
        \label{fig::matrix-factorization}
\end{figure}

\subsection{In-context learning with linear transformers}
\label{sec:main-results-transformer}

Next, we turn to the second case study, motivated by in-context learning with linear transformers. 
Let us first state the optimization problem before describing the motivation. Let $\{\vx_i\}_{i=1}^N \subseteq \bR^d$ be a fixed set of $N$ vectors. Define the empirical covariance matrix as
\begin{align}
    \mS \coloneqq \frac{1}{N}\sum_{i=1}^N \vx_i \vx_i^\top \label{eq-emp-cov},
\end{align}
which is assumed to be invertible throughout. 
We aim to solve the following optimization problem:
\begin{equation}
\label{eq:transformer-obj}
\mathop{\text{minimize}}\limits_{\mQ \in \bR^{d\times d}}\quad 
f(\mQ)
\coloneqq 
\frac{1}{2}\tr\!\big( (\mS\mQ-\mI)\mS(\mS\mQ-\mI)^\top \big).
\end{equation}
This is a simple quadratic optimization problem with $\mQ^\star = \mS^{-1}$ the minimizer. Letting $\kappa(\mS)$ denote the condition number of the matrix $\mS$, we see that the quadratic form induced by~(\ref{eq:transformer-obj})
has an effective condition number that scales as 
\begin{align}
\kappa \coloneqq  \kappa(\mS)^3.
\end{align}

\paragraph{Motivation: in-context learning of a single-layer linear transformer.}
In-context learning (ICL) refers to the phenomenon whereby a pretrained model can make predictions from a
\emph{prompt} on the fly \citep{brown2020language}. More specifically, the prompt contains a sequence of $N$ labeled examples (i.e., the context), followed by a query token,
and the model must infer the query label from the context at inference time without updating its parameters. 
Transformers~\citep{vaswani2017attention}
arise as a natural model class that supports ICL.
Here, we focus on a special case: in-context fixed-design linear regression, where the set of possible input vectors $\{\vx_i\}_{i=1}^N\subset\mathbb R^d$
is fixed with empirical covariance $\mS$,
and each task is indexed by a vector $\vw\in \bR^d$ with corresponding labels $y_{\vw,i}=\vw^\top \vx_i$.
At a high level, the context can be summarized by the vector
$\frac1N\sum_{i=1}^N y_{\vw,i}\vx_i=\mS\vw$.
Given a query $\vx_{\mathrm q}\in \mathbb R^d$, a simple in-context predictor uses a shared meta-parameter
$\mQ\in\mathbb R^{d\times d}$ to map the query to an effective readout $\mQ\vx_{\mathrm q}$,
and predicts via the bilinear form
\[
\widehat y_{\mathrm q} = (\mS\vw)^\top \mQ\vx_{\mathrm q}
= \vw^\top \mS \mQ \vx_{\mathrm q}.
\]
Averaging the squared prediction risk over tasks with $\bE[\vw]=\mathbf{0}$ and
$\bE[\vw\vw^\top]=\mI$, and over uniformly sampled queries  
$\vx_{\mathrm q}\sim\mathsf{Unif}\{\vx_1,\dots,\vx_N\}$, yields the expected loss that coincides with the objective function in (\ref{eq:transformer-obj}).
Moreover, this predictor can be realized by a single-layer \emph{linear} transformer (attention without softmax)
under a standard reparameterization~\citep{zhang2024trained,huang2023context}. 
See \Cref{app:icl-linear-transformer} for more details.

\paragraph{Convergence guarantees for \muon.} 
When applied to the optimization problem (\ref{eq:transformer-obj}), the update rule of the simplified \muon algorithm admits a closed-form expression as follows: 
\begin{equation}
\label{eq:muon-update-transformer}
\mQ_{t+1}=\mQ_t-\eta_t\, \msign\!\big(\mS^2\mQ_t\mS-\mS^2\big),
\qquad t=0,1,\cdots
\end{equation}
Encouragingly, this algorithm is guaranteed to converge linearly at a rate
independent of $\kappa$, as asserted by our theory below. 
\begin{theorem}
\label{thm::muon-transformer}
Let the initialization be $\mQ_0 = \mathbf{0}$ and set the learning rate schedule as $\eta_t = \frac{C_\eta}{\sigma_{\min}(\mS)} \rho^t$ for some  quantities $C_\eta \geq 1 $ and $\rho \in [1/2, 1)$. Then, for any $\varepsilon > 0$, 
 simplified \muon (\ref{eq:muon-update-transformer}) achieves
$ \| \mQ_T - \mQ^\star \| 
    = \| \mQ_T - \mS^{-1} \|\leq \varepsilon$
as long as 
\begin{align}
T \geq   \frac{1}{1 - \rho} \log \bigg(\frac{C_{\eta}}{\sigma_{\min}(\mS)\varepsilon}\bigg) .
\end{align}
\end{theorem}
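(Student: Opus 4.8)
The plan is to exploit the special structure of the quadratic $f$: since $\mQ_0=\mathbf{0}$ commutes with $\mS$, and the gradient $\nabla f(\mQ)=\mS^2\mQ\mS-\mS^2$ preserves this commuting structure, the matrix recursion~(\ref{eq:muon-update-transformer}) collapses \emph{exactly} -- not merely approximately, as with the ansatz~(\ref{eq:assumption-Ut-mf-intuition}) in the matrix-factorization case -- into $d$ decoupled scalar sign-descent recursions, one per eigenvalue of $\mS$, each converging at rate $\rho$ regardless of that eigenvalue's magnitude. Concretely, write $\mS=\mV\mLambda\mV^\top$ with $\mLambda=\diag\{\mu_1,\dots,\mu_d\}$ and $\mu_1\ge\cdots\ge\mu_d=\sigma_{\min}(\mS)>0$, so $\mQ^\star=\mS^{-1}=\mV\mLambda^{-1}\mV^\top$. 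I would first prove by induction that $\mQ_t=\mV\mD_t\mV^\top$ for a diagonal $\mD_t=\diag\{d_{1,t},\dots,d_{d,t}\}$, the base case being $\mD_0=\mathbf{0}$: if $\mQ_t=\mV\mD_t\mV^\top$, then $\nabla f(\mQ_t)=\mV\diag\{\mu_i^2(\mu_i d_{i,t}-1)\}\mV^\top$ is symmetric with eigenbasis $\mV$; applying the identity $\msign(\mZ)=\mZ(\mZ^\top\mZ)^{-1/2}$ -- which on a symmetric matrix reduces to applying $\sign$ to the eigenvalues, with the paper's convention $\sign(0)=0$ in any degenerate direction -- gives $\msign(\nabla f(\mQ_t))=\mV\diag\{\sign(\mu_i d_{i,t}-1)\}\mV^\top$, and hence $\mD_{t+1}=\mD_t-\eta_t\diag\{\sign(\mu_i d_{i,t}-1)\}$.

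This produces $d$ fully decoupled recursions $d_{i,t+1}=d_{i,t}-\eta_t\sign(\mu_i d_{i,t}-1)$. Introducing the rescaled error $e_{i,t}\coloneqq\mu_i d_{i,t}-1$, so that $\mQ_t-\mQ^\star=\mV\diag\{e_{i,t}/\mu_i\}\mV^\top$ and therefore $\|\mQ_t-\mQ^\star\|=\max_i|e_{i,t}|/\mu_i$, the recursion becomes $e_{i,t+1}=e_{i,t}-\delta_{i,t}\sign(e_{i,t})$ with $\delta_{i,t}\coloneqq\mu_i\eta_t=\frac{\mu_i C_\eta}{\sigma_{\min}(\mS)}\rho^t$ and $e_{i,0}=-1$. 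The key estimate, which I would establish by induction on $t$, is $|e_{i,t}|\le\delta_{i,t-1}$ for every $t\ge1$: the base case reads $|e_{i,1}|=\delta_{i,0}-1<\delta_{i,0}$, valid because $\delta_{i,0}=\mu_i C_\eta/\sigma_{\min}(\mS)\ge C_\eta\ge1=|e_{i,0}|$; for the step, $|e_{i,t+1}|=\big|\,|e_{i,t}|-\delta_{i,t}\,\big|$ when $e_{i,t}\ne0$ (and $e_{i,t+1}=0$ otherwise), which combined with $|e_{i,t}|\le\delta_{i,t-1}=\delta_{i,t}/\rho$ gives $|e_{i,t+1}|\le\max\{\delta_{i,t},\,\delta_{i,t}/\rho-\delta_{i,t}\}=\delta_{i,t}$, where the final equality uses exactly the hypothesis $\rho\ge1/2$.

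Combining these, since $|e_{i,T}|\le\delta_{i,T-1}=\mu_i\eta_{T-1}$ for all $i$, we obtain $\|\mQ_T-\mQ^\star\|=\max_i|e_{i,T}|/\mu_i\le\eta_{T-1}=\frac{C_\eta}{\sigma_{\min}(\mS)}\rho^{T-1}$, which drops below $\varepsilon$ once $(T-1)\log(1/\rho)\ge\log\!\big(\tfrac{C_\eta}{\sigma_{\min}(\mS)\varepsilon}\big)$; invoking $\log(1/\rho)\ge1-\rho$ then yields the claimed iteration count (up to an absolute constant inside the logarithm, absorbable by standard bookkeeping). Crucially, neither the rate $\rho$ nor the prefactor $C_\eta/\sigma_{\min}(\mS)$ depends on the effective condition number $\kappa=\kappa(\mS)^3$, which is precisely the asserted condition-number-free convergence.

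The proof is comparatively short because, unlike in \Cref{thm::muon-main}, the $\mS^2\mQ\mS$ structure makes the commuting invariance of the first step \emph{exact}, so there is no misalignment to track. The two points needing care are (i) pinning down $\msign$ on rank-deficient inputs -- handled by the convention $\sign(0)=0$, which is relevant only when some $e_{i,t}$ equals $0$, in which case that coordinate already sits at its target and stays there -- and (ii) controlling the overshoot of sign-descent under geometrically decaying step sizes, which is what forces the factor $1/\rho$ in the Step~2 estimate and the requirement $\rho\ge1/2$; I expect (ii) to demand the most attention.
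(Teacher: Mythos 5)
Your proposal is correct and follows essentially the same route as the paper: an induction showing the iterates stay diagonal in the eigenbasis of $\mS$ (the paper's Lemma~\ref{lem:Q-decompose-transformer}), decoupling into per-eigenvalue sign recursions, and a contraction argument in which the induction hypothesis $|e_{i,t}|\le\delta_{i,t-1}=\delta_{i,t}/\rho$ together with $\rho\ge 1/2$ controls the overshoot — identical, up to the cosmetic rescaling $e_{i,t}=\mu_i d_{i,t}-1$, to the paper's bound $\Delta_{t+1}\le\max\{\Delta_t-\eta_t,\eta_t\}\le\eta_t$ in Lemma~\ref{lem::1d-muon-transformer}. The final bookkeeping (bounding $\|\mQ_T-\mQ^\star\|\le\eta_{T-1}$ and converting via $\log(1/\rho)\ge 1-\rho$) also matches the paper's conclusion.
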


This theorem establishes that the number of iterations needed for simplified \muon to yield $\varepsilon$-accuracy is independent of the condition number $\kappa$ underlying this quadratic optimization problem. Akin to the matrix factorization counterpart, the \muon dynamics admit a decomposition into a set of independent scalar sequences in the spectral domain, each evolving at a comparable rate of convergence irrespective of the magnitude of the associated eigenvalue, a feature that we shall rigorize in the proof presented in \Cref{sec:analysis-muon-transformer}.

\paragraph{Comparisons with other optimizers.}

To demonstrate the provable benefits of \muon compared against other optimizers, we discuss in this subsection the convergence rate of \gd and \signgd.

When applied to this problem (\ref{eq:transformer-obj}), \gd follows the update rule
\begin{align}
\textbf{(\gd)}\qquad
\mQ_{t+1}&=\mQ_t-\eta_t(\mS^2\mQ_t\mS-\mS^2),
\qquad t=0,1,\cdots
\label{eq:gd-update-transformer}
\end{align}
Given that this problem is a strongly convex quadratic optimization problem, classical optimization theory already reveals that the number of iterations needed for \gd to achieve $\varepsilon$-accuracy is lower bounded by (see, e.g., \cite{d2021acceleration})
$$\Omega\left(\sqrt{\kappa}\log(1/\varepsilon)\right).$$ 
This lower bound for \gd scales proportionally with $\sqrt{\kappa}$, unveiling the unavoidable dependency of its iteration complexity on the condition number.

We then switch attention to \signgd (recall that this is a variant of \adam with momentum turned off), which adopts the update rule 
\begin{align}
\textbf{(\signgd)}\qquad
\mQ_{t+1}&=\mQ_t-\eta_t\,\sign\!\big(\mS^2\mQ_t\mS-\mS^2\big),
\qquad t=0,1,\cdots \label{eq:signgd-update-transformer}
\end{align}
where the $\sign(\cdot)$ operator is applied entrywise. 

\begin{theorem}
\label{prop::lower-bound-transformer}
    Consider the \signgd algorithm (\ref{eq:signgd-update-transformer}) with any non-increasing, positive learning rate schedule $\{\eta_t\}_{t\geq 0}$. Consider any $0<\varepsilon\le \sqrt{2}\eta_0/\kappa$. 
    Then, 
  there exists an empirical covariance matrix $\mS$, along with an initialization $\mQ_0$, such that $\|\mQ_T-\mQ^{\star}\|_{\mathrm{F}}\leq \varepsilon$ cannot happen unless
  $$
    T \geq \frac{\kappa - 1}{4}. 
  $$
\end{theorem}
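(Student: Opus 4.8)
The plan is to produce, for an arbitrary non-increasing positive schedule $\{\eta_t\}$, an explicit hard instance built from a $2\times 2$ diagonal covariance matrix $\mS$ together with the all-zeros initialization $\mQ_0=\mathbf{0}$, and to show that the fixed, geometry-blind step length of \signgd simply prevents it from travelling far enough along the eigendirection of $\mS$ associated with $\sigma_{\min}(\mS)$. Concretely, fix $\kappa>1$ and set $\mS=\diag(s_1,s_2)$ with $s_1=\kappa^{1/3}s_2$, so that $\kappa(\mS)=\kappa^{1/3}$ and the effective condition number of $f$ in (\ref{eq:transformer-obj}) is $\kappa(\mS)^3=\kappa$; the overall \emph{scale} $s_2=\sigma_{\min}(\mS)>0$ is left free and will be taken $\asymp 1/(\eta_0\kappa)$ — i.e.\ small, so that $\mQ^\star=\mS^{-1}$ has a large entry $1/s_2$. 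Because $\mS$ is diagonal, a one-line computation gives $(\mS^2\mQ\mS-\mS^2)_{ij}=s_i^2(s_j\mQ_{ij}-\delta_{ij})$, so the entrywise $\sign$ in (\ref{eq:signgd-update-transformer}) decouples completely: each off-diagonal entry evolves by $\mQ_{ij}^{(t+1)}=\mQ_{ij}^{(t)}-\eta_t\sign(\mQ_{ij}^{(t)})$ and each diagonal entry by $\mQ_{ii}^{(t+1)}=\mQ_{ii}^{(t)}-\eta_t\sign(s_i\mQ_{ii}^{(t)}-1)$. Starting from $\mQ_0=\mathbf{0}$ and using the paper's convention $\sign(0)=0$, the off-diagonal entries stay at $0$ for all $t$; since $\mQ^\star=\mS^{-1}$ is diagonal, this yields $\|\mQ_T-\mQ^\star\|_{\mathrm F}^2=\sum_{i=1}^2(\mQ_{ii}^{(T)}-1/s_i)^2$, so it suffices to analyze the single scalar recursion for the coordinate $i^\star$ with $s_{i^\star}=\sigma_{\min}(\mS)=s_2$, whose target $1/s_2$ is the largest entry of $\mQ^\star$.

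For that scalar sequence, write $q^{(t)}:=\mQ_{i^\star i^\star}^{(t)}$, so $q^{(0)}=0$ and $q^{(t+1)}=q^{(t)}-\eta_t\sign(s_2q^{(t)}-1)$. The only fact I need is that each step has length at most $\eta_t\le\eta_0$, whence by the triangle inequality $q^{(T)}\le|q^{(T)}|\le\sum_{t=0}^{T-1}\eta_t\le T\eta_0$. On the other hand, if $\|\mQ_T-\mQ^\star\|_{\mathrm F}\le\varepsilon$ then in particular $|q^{(T)}-1/s_2|\le\varepsilon$, so $q^{(T)}\ge 1/s_2-\varepsilon$. Combining the two inequalities gives $T\ge(1/s_2-\varepsilon)/\eta_0$.

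It remains to calibrate the scale. Take $s_2:=\big(\eta_0\big[(\kappa-1)/4+\sqrt{2}/\kappa\big]\big)^{-1}$ — a well-defined positive number for every $\kappa\ge 1$ — and $s_1=\kappa^{1/3}s_2$; then $\mS$ is a valid, invertible empirical covariance matrix (e.g.\ $\mS=\tfrac12\big[(\sqrt{2s_1}\,\ve_1)(\sqrt{2s_1}\,\ve_1)^\top+(\sqrt{2s_2}\,\ve_2)(\sqrt{2s_2}\,\ve_2)^\top\big]$) with $\sigma_{\min}(\mS)=s_2$ and effective condition number $\kappa$. With this choice $1/(s_2\eta_0)=(\kappa-1)/4+\sqrt{2}/\kappa$, and the hypothesis $\varepsilon\le\sqrt{2}\eta_0/\kappa$ gives $\varepsilon/\eta_0\le\sqrt{2}/\kappa$, so $T\ge 1/(s_2\eta_0)-\varepsilon/\eta_0\ge(\kappa-1)/4$, which is exactly the claimed bound.

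I do not expect any genuine obstacle here: the argument is a direct construction. The only two points needing a little care are (i) verifying the entrywise decoupling of the \signgd update under diagonal $\mS$ and the persistence of the zero off-diagonal pattern from $\mQ_0=\mathbf{0}$ (so that the problem collapses to one scalar recursion), and (ii) calibrating the \emph{scale} $\sigma_{\min}(\mS)$ so that the ``distance over step size'' quantity $1/(\sigma_{\min}(\mS)\,\eta_0)$ sits at $\Theta(\kappa)$ while the constraint $\sigma_{\max}(\mS)/\sigma_{\min}(\mS)=\kappa^{1/3}$ is respected — the slack term $\sqrt{2}/\kappa$ in the choice of $s_2$ being precisely what absorbs the budget $\varepsilon/\eta_0\le\sqrt{2}/\kappa$ permitted by the hypothesis. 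The construction also makes transparent that \signgd's difficulty here is of a different nature than \gd's: it is not curvature per se but the fixed step length that blocks progress over the distance $\|\mQ^\star\|=1/\sigma_{\min}(\mS)$ from the origin.
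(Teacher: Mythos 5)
Your argument is internally consistent and, read against the literal quantifier structure of the statement (the existential on $\mS$ and $\mQ_0$ comes after the schedule and $\varepsilon$, and no locality condition is imposed on $\mQ_0$), it does deliver the claimed bound: with diagonal $\mS$ the \signgd update decouples entrywise, the off-diagonal entries stay at zero from $\mQ_0=\mathbf 0$, and the travel-distance inequality $|Q^{(T)}_{22}|\le\sum_{t<T}\eta_t\le T\eta_0$ together with your calibration $1/(\sigma_{\min}(\mS)\,\eta_0)=(\kappa-1)/4+\sqrt2/\kappa$ and $\varepsilon/\eta_0\le\sqrt2/\kappa$ gives $T\ge(\kappa-1)/4$. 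However, this is a genuinely different route from the paper's, and it buys much less. Your instance is hard only because you place $\mQ^\star$ at distance $\approx\kappa\,\eta_0$ from the initialization (by shrinking $\sigma_{\min}(\mS)$ after seeing $\eta_0$) while each step moves every entry by at most $\eta_t\le\eta_0$; the condition number plays no role beyond bookkeeping, and the identical argument applies verbatim to \muon on the same instance with the same schedule, since $\|\msign(\cdot)\|\le1$ also bounds each entry of \muon's update by $\eta_t$. Consequently your construction cannot certify the \signgd-versus-\muon separation this theorem is meant to support: \Cref{thm::muon-transformer} evades the travel-distance obstruction only because its schedule is scaled as $C_\eta/\sigma_{\min}(\mS)$, i.e.\ matched to $\|\mQ^\star\|$. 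The paper's proof is designed so that the step budget is never the bottleneck: it fixes $\mS=\mR\,\diag\{\kappa^{1/3},1\}\,\mR^\top$ (depending only on $\kappa$), chooses an initialization within $O(\eta_0)$ of $\mQ^\star$, shows the \signgd trajectory stays in a two-dimensional symmetric slice, and reduces the error dynamics to the rotated quadratic recursion of Lemma~\ref{lem:signgd-kappa-lb}; there the $\Omega(\kappa)$ bound comes from the sign geometry—in the eigenbasis \signgd updates one coordinate per step, and the adversarial initialization freezes the small-eigenvalue coordinate at $\kappa\varepsilon$ until the learning rate decays below $4\varepsilon$—which is an intrinsically conditioning-driven obstruction that \muon does not suffer. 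So while your proof would pass as a proof of the sentence as written, it proves it for a reason orthogonal to the one the paper (and the surrounding comparison table) intends; to match the paper's content you would need a hard instance in which $\|\mQ_0-\mQ^\star\|$ is comparable to the step size, as in the paper's rotated construction.
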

In words, \Cref{prop::lower-bound-transformer} rigorously establishes that the \signgd algorithm cannot achieve condition-number-free convergence for solving this problem, and is therefore substantially outperformed by \muon. 
The proof of \Cref{prop::lower-bound-transformer} is provided in \Cref{sec:lower-bound-signgd-transformer}.

\paragraph{Numerical experiments.}
We now evaluate and compare the numerical convergence performance of \muon, \signgd, and \gd on in-context learning tasks with one-layer linear transformers. We vary the condition number $\kappa \in \{1, 5, 25, 125, 625\}$ while fixing the matrix dimension to $d = 100$.
All experiments use an exponential decay learning rate schedule: the learning rate is reduced by a factor of $0.3$ whenever the loss fails to decrease for $50$ consecutive iterations.
\muon achieves rapid convergence across all condition numbers and reaches machine precision within a few hundred steps. In contrast, \signgd and \gd suffer from significantly slower rates, particularly under ill-conditioned settings, thereby validating the robustness and efficiency of \muon for ill-conditioned problems.

\begin{figure}
     \centering
     \begin{subfigure}[b]{0.3\textwidth}
         \centering
         \includegraphics[width=\textwidth]{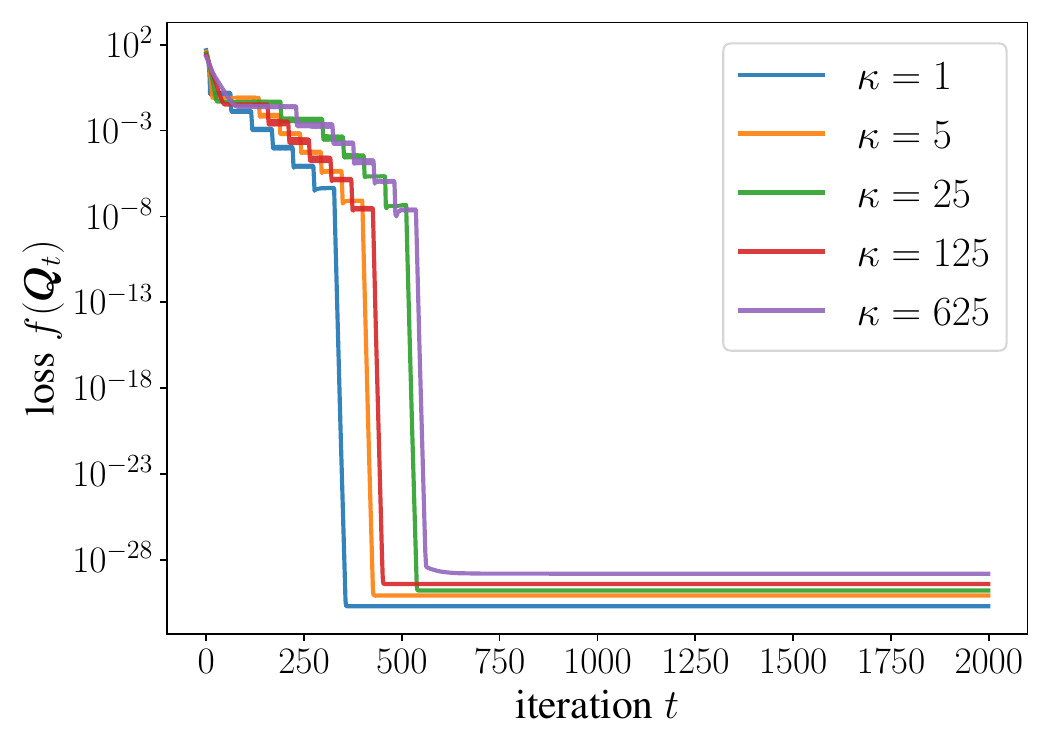}
         \caption{\muon}
         \label{fig::muon-condition_number-transformer}
     \end{subfigure}
     \begin{subfigure}[b]{0.3\textwidth}
         \centering
         \includegraphics[width=\textwidth]{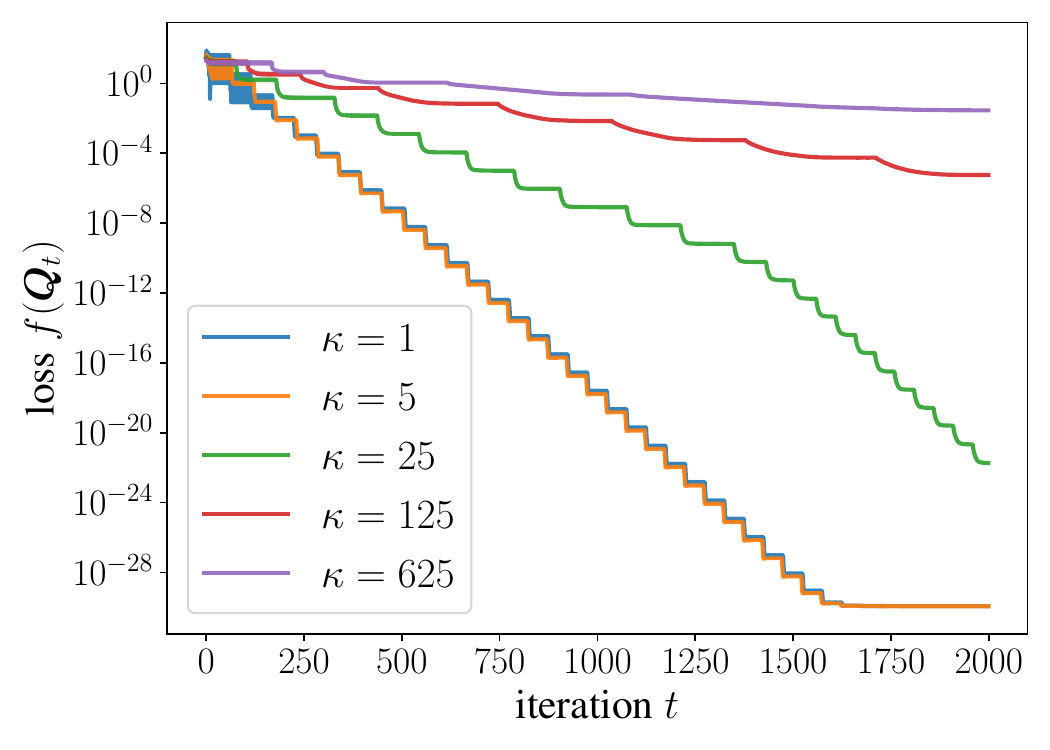}
         \caption{\signgd}
         \label{fig::adam-condition_number-transformer}
     \end{subfigure}
     \begin{subfigure}[b]{0.3\textwidth}
         \centering
         \includegraphics[width=\textwidth]{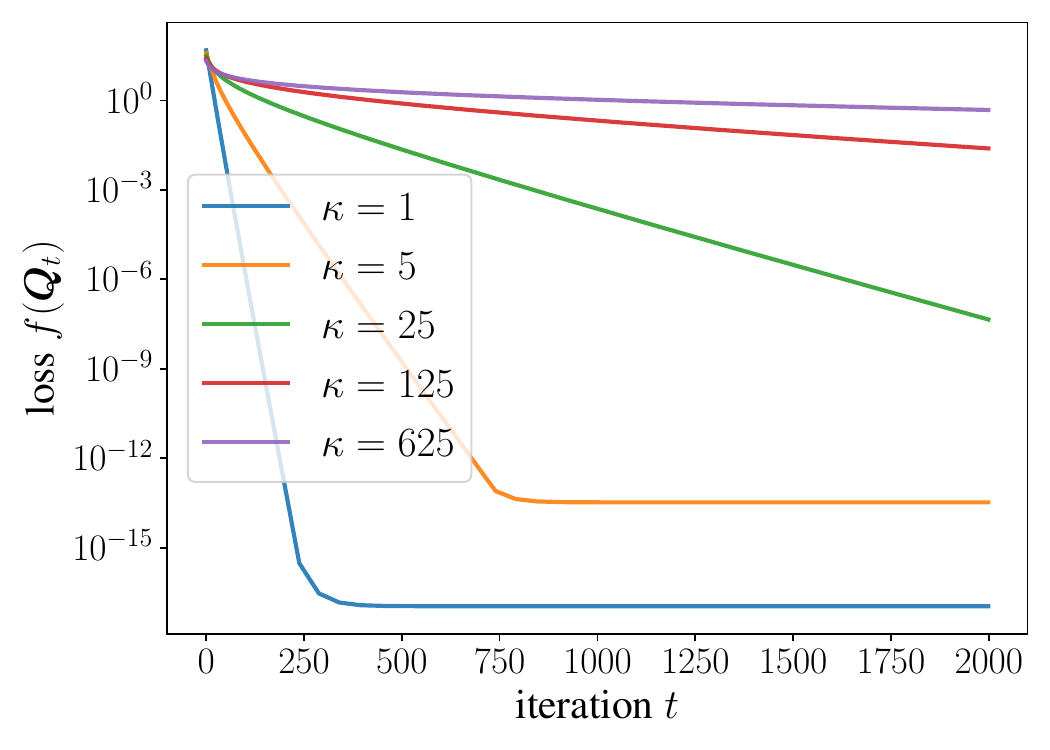}
         \caption{\gd}
         \label{fig::gd-condition_number-transformer}
     \end{subfigure}
        \caption{
    Numerical convergence behavior of \muon, \signgd, and \gd on in-context learning problems with one-layer linear transformers under varying condition numbers.
    }
        \label{fig::condition_numer-transformer}
\end{figure}

\section{Analysis for matrix factorization (proof of \Cref{thm::muon-main})}
\label{sec:analyis-muon-mf}

In this section, we establish our convergence guarantees for \muon applied to matrix factorization (i.e., \Cref{thm::muon-main}). 
Our analysis is structured into several parts. Firstly, we analyze the dynamics of \muon for a special scalar case. Secondly, 
building on this scalar recurrence analysis, we establish the desirable convergence assuming that $\bm{U}_t$ has its singular subspace perfectly aligned with $\mV^{\star}$. With these preparations in place, Steps 3 and 4 then prove the full convergence theory for the cases with $k\geq d$ and $r\leq k < d$, respectively.

\subsection{Step 1: dynamics of \muon in the scalar case}
\label{sec:muon-dynamics-scalar-mf}

Before delving into the general case, let us first consider a special case that aims at solving the following scalar optimization problem:
\begin{align}
\mathop{\text{minimize}}\limits_{u\in \bR}\quad (u^2-\lambda^\star)^2,
\label{eq:optimization-matrix-factorization-1d}
\end{align}
where $\lambda^{\star}\geq 0$. Evidently, this problem can be viewed as a 1-dimensional special case of  (\ref{eq:optimization-matrix-factorization}). The \muon algorithm (\ref{eq:muon-mf}) applied to (\ref{eq:optimization-matrix-factorization-1d}) follows the scalar dynamic below:
\begin{equation}
\label{eq:1d-muon-rec}
u_{t+1} = u_t - \eta_t \sign\left( (u_t^2 - \lambda^\star)u_t \right),
\qquad t=0,1,\cdots
\end{equation}
where $u_0 \in \mathbb{R}$ indicates the initialization.

In order to analyze the dynamics of (\ref{eq:1d-muon-rec}), we first demonstrate in the following lemma that with probability 1, the iterates $u_t$ never reach $0$, as long as $C_{\eta}$ is randomly generated. 
\begin{lemma}
\label{lem::never-reach-zero}
    Consider any update sequence taking the form of $u_{t+1} = u_t + \eta_ts_t $ for $t \ge 0$, 
    where $u_0 \neq 0$ is the initialization, and $s_t\in \{1, -1\}$ for all $t\geq 0$.  The learning rates are taken as $\eta_t = C_{\eta} \sqrt{\lambda^\star_{\max}} \rho^t$ for some $\lambda^\star_{\max} > 0$ and $\rho \in [1/2, 1)$, where the prefactor $C_{\eta}$ is uniformly sampled from the interval $[1, 2]$ and is independent of $u_0$.
    Then, with probability $1$, one has $u_t \neq 0$ for all $t \ge 0$.
\end{lemma}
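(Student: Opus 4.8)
The plan is to show that the ``bad'' event $\{\exists t \ge 0: u_t = 0\}$ has probability zero by a union bound over $t$, exploiting the fact that $u_t$ is a deterministic affine function of the single random scalar $C_\eta$. Concretely, fix any $t \ge 1$ and observe that, unrolling the recursion, $u_t = u_0 + C_\eta \sqrt{\lambda^\star_{\max}} \sum_{j=0}^{t-1} s_j \rho^j$. The key point is that the sign choices $s_0,\dots,s_{t-1}$ are themselves (possibly) functions of $C_\eta$, so one cannot simply say $u_t$ is affine in $C_\eta$ globally; rather, one must condition on the sign pattern. Since each $s_j \in \{+1,-1\}$, there are only finitely many (at most $2^t$) possible sign sequences $(\epsilon_0,\dots,\epsilon_{t-1}) \in \{+1,-1\}^t$, and on the event that $(s_0,\dots,s_{t-1}) = (\epsilon_0,\dots,\epsilon_{t-1})$ we have $u_t = u_0 + C_\eta \sqrt{\lambda^\star_{\max}}\, S$ where $S \coloneqq \sum_{j=0}^{t-1}\epsilon_j \rho^j$ is a fixed constant.

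\textbf{Key steps.} First I would argue that for each fixed sign pattern $(\epsilon_0,\dots,\epsilon_{t-1})$, the quantity $S = \sum_{j=0}^{t-1}\epsilon_j \rho^j$ is nonzero: indeed, since $\rho \in [1/2,1)$, the leading term $\epsilon_0$ dominates because $\sum_{j=1}^{t-1}\rho^j \le \frac{\rho}{1-\rho}$, which need not be less than $1$ in general --- so this naive bound is insufficient and I would instead note that $S \ne 0$ is not actually needed in full; what is needed is just that $u_t = 0$ forces $C_\eta$ into an at-most-countable set. If $S \ne 0$, then $u_t = 0$ pins down $C_\eta = -u_0/(\sqrt{\lambda^\star_{\max}}\,S)$, a single value; if $S = 0$, then $u_t = u_0 \ne 0$ automatically, so $u_t = 0$ is impossible on that branch. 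Either way, for each fixed $t$ and each fixed sign pattern, the set of values of $C_\eta$ making $u_t = 0$ is at most a single point, hence has Lebesgue measure zero, hence probability zero under the uniform distribution on $[1,2]$. Taking a union over the at most $2^t$ sign patterns and then over all $t \ge 1$ (a countable union) gives $\Pbb(\exists t: u_t = 0) = 0$. The base case $u_0 \ne 0$ is given by hypothesis.

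\textbf{Main obstacle.} The only subtlety --- and the step I would be most careful about --- is the measurability/conditioning bookkeeping: the sign sequence $(s_0,\dots,s_{t-1})$ depends on $C_\eta$, so I cannot treat $u_t$ as a clean affine function of $C_\eta$ without first partitioning the sample space according to which of the finitely many sign patterns occurs. The clean way to phrase this is: write $\{u_t = 0\} = \bigcup_{(\epsilon_0,\dots,\epsilon_{t-1})} \big(\{u_t = 0\} \cap \{s_j = \epsilon_j\ \forall j < t\}\big)$, and on each piece $u_t$ agrees with the deterministic affine map $C_\eta \mapsto u_0 + C_\eta\sqrt{\lambda^\star_{\max}}\,S_{(\epsilon)}$, whose zero set is a single point (or empty). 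Since a single point has probability zero under $\uniform[1,2]$ and there are countably many $(t,\epsilon)$ pairs, countable subadditivity closes the argument. I would also remark that the exponential decay $\rho \in [1/2,1)$ and the precise interval $[1,2]$ play no essential role here --- any non-atomic distribution for $C_\eta$ would do --- though they are convenient for the quantitative convergence analysis that follows.
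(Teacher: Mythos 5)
Your proposal is correct and takes essentially the same route as the paper: unroll the recursion so that on each fixed sign pattern $u_t$ is affine in $C_\eta$, observe that $u_t=0$ either is impossible (when the geometric sum vanishes, since then $u_t=u_0\neq 0$) or pins $C_\eta$ to a single critical value, and conclude via the countability of the set of critical values over all $t$ and sign patterns together with the non-atomicity of $\uniform[1,2]$. The paper phrases this slightly more tersely by bounding $\Pbb(\exists t: u_t=0)\le \Pbb(C_\eta\in\mathcal{C})=0$ for the countable set $\mathcal{C}$ of all critical values, which implicitly handles the conditioning issue you carefully spell out.
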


The fact that $\{u_t\}$ never hits 0 eliminates the need to analyze this undesirable stationary point. 
We are now positioned to develop theoretical convergence guarantees for the scalar dynamics (\ref{eq:1d-muon-rec}).  

\begin{lemma}[Convergence of scalar \texttt{Muon}]
\label{lem::1d-muon}
Consider the scalar updates in (\ref{eq:1d-muon-rec}), where $0 \le \lambda^\star \le \lambda_{\max}^\star$. Set the learning rate schedule to be $\eta_t = C_{\eta}\sqrt{\lambda_{\max}^\star}\rho^t$
for some quantities $1/2 \le \rho < 1$ and $C_{\eta} \ge 1$. Assume that $0<|u_0| \le C_{\eta}\sqrt{\lambda_{\max}^\star} = \eta_0$.
Then, with probability $1$, for all $t \ge 0$, it holds that
\begin{subequations}
\begin{align}
\big| |u_{t+1}| - \sqrt{\lambda^\star} \big| &\le \eta_t \leq 2\sqrt{\lambda_{\max}^\star}\rho^t, \label{eq:1d-bound} \\
|u_{t+1}^2 - \lambda^\star|
&\leq 8\lambda_{\max}^\star\rho^{t}.
\label{eq:1d-linear}
\end{align}
\end{subequations}
\end{lemma}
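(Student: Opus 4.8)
The plan is to track the scalar sequence $u_t$ through the recursion (\ref{eq:1d-muon-rec}) and show it converges geometrically to (a signed version of) $\sqrt{\lambda^\star}$. By \Cref{lem::never-reach-zero} we may assume $u_t\neq 0$ for all $t$ with probability $1$, so the sign $\sign\big((u_t^2-\lambda^\star)u_t\big)$ is always well-defined and equals $\sign(u_t)\cdot\sign(u_t^2-\lambda^\star)$. Without loss of generality I would first reduce to the case $u_0>0$: if $u_0<0$, then as long as $u_t<0$ the update reads $u_{t+1}=u_t-\eta_t\sign(u_t)\sign(u_t^2-\lambda^\star)=u_t+\eta_t\sign(u_t^2-\lambda^\star)$, which is exactly the recursion for $-u_t$ with a positive start; and once the sequence (or its reflection) is trapped near $\sqrt{\lambda^\star}$ it cannot cross $0$ because the step size $\eta_t$ will be shown to be smaller than the distance to $0$. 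So I may assume $u_t>0$ throughout, and the recursion becomes
\begin{align}
u_{t+1}=u_t-\eta_t\,\sign(u_t^2-\lambda^\star),\qquad t\ge 0.
\label{eq:reduced-rec-plan}
\end{align}

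The core is an induction establishing the invariant $\big||u_{t+1}|-\sqrt{\lambda^\star}\big|\le\eta_t$, i.e.\ the iterate lands within one step-length of the target. The base case $t=0$: if $u_0^2>\lambda^\star$ we move down by $\eta_0$; since $0<u_0\le\eta_0$ and $u_0>\sqrt{\lambda^\star}$ (as $u_0^2>\lambda^\star$) and $\sqrt{\lambda^\star}\le\sqrt{\lambda_{\max}^\star}\le\eta_0$, the new value $u_1=u_0-\eta_0$ satisfies $|u_1-\sqrt{\lambda^\star}|=|u_0-\eta_0-\sqrt{\lambda^\star}|\le\max(\eta_0-\sqrt{\lambda^\star},\,|u_0-\sqrt{\lambda^\star}|)\le\eta_0$ by a short interval argument; if instead $u_0^2\le\lambda^\star$ we move up by $\eta_0$ and a symmetric estimate gives $|u_1-\sqrt{\lambda^\star}|\le\eta_0$. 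For the inductive step, suppose $\big|u_t-\sqrt{\lambda^\star}\big|\le\eta_{t-1}$. I would split into two regimes. \textbf{Far regime:} if $|u_t-\sqrt{\lambda^\star}|\ge\eta_t$, then $u_t$ is on one definite side of $\sqrt{\lambda^\star}$, so $\sign(u_t^2-\lambda^\star)=\sign(u_t-\sqrt{\lambda^\star})$, and the update moves $u_t$ monotonically toward $\sqrt{\lambda^\star}$ by exactly $\eta_t$; hence $\big|u_{t+1}-\sqrt{\lambda^\star}\big|=\big||u_t-\sqrt{\lambda^\star}|-\eta_t\big|\le\max\!\big(|u_t-\sqrt{\lambda^\star}|-\eta_t,\,\eta_t\big)$. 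Using the inductive bound $|u_t-\sqrt{\lambda^\star}|\le\eta_{t-1}=\eta_t/\rho\le 2\eta_t$ (since $\rho\ge1/2$), the first term is $\le\eta_{t-1}-\eta_t\le\eta_t$, so we get $\big|u_{t+1}-\sqrt{\lambda^\star}\big|\le\eta_t$. \textbf{Near regime:} if $|u_t-\sqrt{\lambda^\star}|<\eta_t$, then whichever direction the sign points, one step of size $\eta_t$ keeps us within distance $\le 2\eta_t$... but that's too weak; instead note $u_{t+1}$ lies in $(\sqrt{\lambda^\star}-2\eta_t,\sqrt{\lambda^\star}+2\eta_t)$ only crudely, so the right argument is: $u_{t+1}=u_t\pm\eta_t$ with $u_t\in(\sqrt{\lambda^\star}-\eta_t,\sqrt{\lambda^\star}+\eta_t)$, hence $u_{t+1}\in(\sqrt{\lambda^\star}-2\eta_t,\sqrt{\lambda^\star})\cup(\sqrt{\lambda^\star},\sqrt{\lambda^\star}+2\eta_t)$; this only gives $\le 2\eta_t$, so to sharpen I would observe that the sign is $\sign(u_t^2-\lambda^\star)$, which points \emph{away} from $\sqrt{\lambda^\star}$ only when $u_t$ is already on the far side — so actually in the near regime the update either moves toward $\sqrt{\lambda^\star}$ (if $u_t$ is on the far side of where it's heading) giving distance $<\eta_t$, or overshoots from the near side giving distance $<\eta_t$ as well since it started within $\eta_t$. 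The clean way is a single unified inequality: since $\sign(u_t^2-\lambda^\star)=\sign(u_t-\sqrt{\lambda^\star})$ whenever $u_t\neq\sqrt{\lambda^\star}$, the map $u\mapsto u-\eta\,\sign(u-\sqrt{\lambda^\star})$ sends the interval $[\sqrt{\lambda^\star}-\eta_{t-1},\sqrt{\lambda^\star}+\eta_{t-1}]$ into $[\sqrt{\lambda^\star}-\eta_t,\sqrt{\lambda^\star}+\eta_t]$ precisely because $\eta_{t-1}-\eta_t\le\eta_t$ and the contraction-toward-target never leaves $[\sqrt{\lambda^\star}-\eta_t,\sqrt{\lambda^\star}+\eta_t]$; this handles both regimes at once and yields (\ref{eq:1d-bound}).

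From (\ref{eq:1d-bound}) the quadratic bound (\ref{eq:1d-linear}) follows by a one-line algebraic estimate: $|u_{t+1}^2-\lambda^\star|=\big||u_{t+1}|-\sqrt{\lambda^\star}\big|\cdot\big(|u_{t+1}|+\sqrt{\lambda^\star}\big)\le\eta_t\cdot(|u_{t+1}|+\sqrt{\lambda^\star})$, and since $|u_{t+1}|\le\sqrt{\lambda^\star}+\eta_t\le 3\sqrt{\lambda_{\max}^\star}$ (using $\eta_t\le 2\sqrt{\lambda_{\max}^\star}$ and $\lambda^\star\le\lambda_{\max}^\star$) while $\sqrt{\lambda^\star}\le\sqrt{\lambda_{\max}^\star}$, the sum is $\le 4\sqrt{\lambda_{\max}^\star}$, giving $|u_{t+1}^2-\lambda^\star|\le 4\sqrt{\lambda_{\max}^\star}\,\eta_t\le 8\lambda_{\max}^\star\rho^t$. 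I expect the main obstacle to be the near regime of the induction: making the interval-mapping argument airtight requires carefully using $\eta_t\le\eta_{t-1}\le 2\eta_t$ (which is exactly where $\rho\ge1/2$ is needed) and handling the boundary behavior when $u_t$ is extremely close to $\sqrt{\lambda^\star}$ without the sign being zero — which is where \Cref{lem::never-reach-zero} (applied to $u_t^2-\lambda^\star$, or rather the genericity of $C_\eta$) guarantees $u_t\neq\sqrt{\lambda^\star}$ so the sign is well-defined; I would state this as a preliminary reduction so the induction itself stays clean.
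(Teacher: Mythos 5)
Your overall induction is essentially the paper's: the invariant ``the iterate lands within one step-length of the target,'' the base case from $|u_0|\le\eta_0$ and $\sqrt{\lambda^\star}\le\sqrt{\lambda_{\max}^\star}\le\eta_0$, the inductive step from $\eta_{t-1}-\eta_t\le\eta_t$ (which is where $\rho\ge1/2$ enters), and the closing algebra $|u_{t+1}^2-\lambda^\star|\le\eta_t\big(|u_{t+1}|+\sqrt{\lambda^\star}\big)\le 4\sqrt{\lambda_{\max}^\star}\,\eta_t\le 8\lambda_{\max}^\star\rho^t$ all match. The genuine gap is the reduction ``so I may assume $u_t>0$ throughout.'' The odd symmetry of the recursion does let you take $u_0>0$, but it does not prevent a single trajectory from crossing zero, and your trapping argument does not rescue this: the invariant only gives $|u_t-\sqrt{\lambda^\star}|\le\eta_{t-1}$, and when $\sqrt{\lambda^\star}$ is small relative to $\eta_t$ --- most starkly when $\lambda^\star=0$, which the lemma explicitly allows --- the iterate provably changes sign (it oscillates around $0$ with steps of size $\eta_t$); it can also go negative already at the first step (take $0<\sqrt{\lambda^\star}<u_0<\eta_0$, so $u_1=u_0-\eta_0<0$). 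Once $u_t<0$, the update direction is $\sign(u_t+\sqrt{\lambda^\star})$, i.e.\ the iterate is drawn toward $-\sqrt{\lambda^\star}$ rather than $+\sqrt{\lambda^\star}$, so your reduced recursion and the interval-mapping step ``$u\mapsto u-\eta\,\sign(u-\sqrt{\lambda^\star})$ maps $[\sqrt{\lambda^\star}-\eta_{t-1},\sqrt{\lambda^\star}+\eta_{t-1}]$ into $[\sqrt{\lambda^\star}-\eta_t,\sqrt{\lambda^\star}+\eta_t]$'' apply the wrong update rule on the negative portion of that interval; moreover the quantity you control, $|u_t-\sqrt{\lambda^\star}|$, is then no longer the quantity $\big||u_t|-\sqrt{\lambda^\star}\big|$ appearing in (\ref{eq:1d-bound}).

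The repair is exactly the paper's device: track $\Delta_t\coloneqq\big||u_t|-\sqrt{\lambda^\star}\big|$, the distance from $u_t$ to its nearest root among $\pm\sqrt{\lambda^\star}$. A two-case check ($u_t>0$ and $u_t<0$, using the factorization $(u_t^2-\lambda^\star)u_t=(u_t-\sqrt{\lambda^\star})(u_t+\sqrt{\lambda^\star})u_t$) shows the sign always points from $u_t$ toward its nearest root, whence $\Delta_{t+1}\le\max\{\Delta_t-\eta_t,\eta_t\}$ holds regardless of sign changes; your base case, inductive step, and the derivation of (\ref{eq:1d-linear}) then go through verbatim with $\Delta_t$ in place of $u_t-\sqrt{\lambda^\star}$. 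A minor further point: \Cref{lem::never-reach-zero} guarantees $u_t\neq 0$, not $u_t\neq\sqrt{\lambda^\star}$, so genericity does not dispose of the boundary case $u_t^2=\lambda^\star$; but none is needed, since there the gradient vanishes, the iterate is frozen at an optimum, and the claimed bounds hold trivially.
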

In words, \Cref{lem::1d-muon} reveals that \muon converges linearly at a rate $\rho$ for this scalar case. 
Remarkably, analyzing this scalar case not only addresses this special setting, but also sheds light on the spectral dynamics underlying \muon for the more general case, as detailed in subsequent subsections.

\begin{proof}[Proof of \Cref{lem::never-reach-zero}]
    Regarding $t=0$, we have $u_0 \neq 0$ by assumption. For any $t \ge 1$, we can express $u_t$ by expanding the recurrence relation:
    \begin{equation}
        u_t = u_0 + \sum_{k=0}^{t-1} s_k \eta_k =u_0 + C_{\eta} \sqrt{\sigma^\star_{\max}} \left( \sum_{k=0}^{t-1} s_k \rho^k \right) 
        \eqqcolon u_0 + C_{\eta} S_t.
    \end{equation}
    If $S_t=0$, we have $u_t=u_0\neq 0$. Otherwise, the condition $u_t = 0$ is equivalent to
    $
        C_{\eta} = -{u_0}/{S_t}.
   $
   In other words, 
    for any given $t$ and any fixed sequence $\{s_k\}_{k=0}^{t-1}$, there exists exactly one value 
    of $C_{\eta}$ that can make $u_t$ equal $0$.

    Let $\mathcal{C}$ be the set containing all such critical values for all possible $t$ and $\{s_t\}$:
    \begin{equation}
        \mathcal{C} = \bigcup_{t=1}^{\infty} \bigcup_{s \in \{-1, 1\}^t} \left\{ -\frac{u_0}{\sqrt{\sigma^\star_{\max}} \sum_{k=0}^{t-1} s_k \rho^k} \,\bigg|\, \sum_{k=0}^{t-1} s_k \rho^k \neq 0 \right\},
    \end{equation}
    which is clearly a countable set given that the set of time steps and the set of possible sign sequences are both countable.  
Therefore, when $C_{\eta}$ is uniformly sampled from the interval $[1,2]$, the probability of this continuous random variable taking values in a countable set is 0, i.e., 
    \begin{equation}
        \mathbb{P}(\exists t\geq 0 : u_t = 0) \leq \mathbb{P}(C_{\eta} \in \mathcal{C}) = 0.
    \end{equation}
    Thus, it follows that, with probability $1$, $u_t \neq 0$ holds for all $t\geq 0$.
\end{proof}

\begin{proof}[Proof of \Cref{lem::1d-muon}]
 First,  \Cref{lem::never-reach-zero} tells us that with probability $1$, $u_t\neq 0$ for all $t\geq 0$. Moreover, if $u_t^2=\lambda^\star$, then the iterate has reached the optimal solution, and will stay unchanged thereafter. 
 Consequently, it suffices in the sequel to analyze the case where $(u_t^2 - \lambda^\star)u_t\neq 0$.
 
 To proceed, observe that
\begin{equation}
\label{eq:factor}
(u_t^2 - \lambda^\star)u_t = (u_t - \sqrt{\lambda^\star})(u_t + \sqrt{\lambda^\star})u_t.
\end{equation}
\begin{itemize}
\item  
If $u_t > 0$, then $u_t(u_t + \sqrt{\lambda^\star}) > 0$, and hence
\begin{subequations}
\label{eq:sign-pos-neg}
\begin{equation}
\label{eq:sign-pos}
\sign\left( (u_t^2 - \lambda^\star)u_t \right) = \sign(u_t - \sqrt{\lambda^\star}).
\end{equation}
\item 
If $u_t < 0$, then $u_t(u_t - \sqrt{\lambda^\star}) > 0$, and as a result, 
\begin{equation}
\label{eq:sign-neg}
\sign\left( (u_t^2 - \lambda^\star)u_t \right) = \sign(u_t + \sqrt{\lambda^\star}).
\end{equation}
\end{subequations}
\end{itemize}
\noindent 
This implies that in both of the above cases, the search direction is the sign of the difference between $u_t$ and its nearest root of $\lambda^{\star}$.
In light of this, we find it helpful to 
define 
\begin{align}
\Delta_t \coloneqq \big| |u_t| - \sqrt{\lambda^\star} \big|.
\label{eq:defn-Delta-t-scalar}
\end{align}
Making use of \Cref{eq:1d-muon-rec,eq:sign-pos-neg} allows one to easily verify that 
\begin{equation}
\label{eq:delta-rec}
\Delta_{t+1} = \big| \Delta_t - \eta_t \big| \le \max\{\Delta_t - \eta_t, \eta_t\}.
\end{equation}
Armed with this inequality, we are ready to prove the claim (\ref{eq:1d-bound}), which we accomplish by induction.

\begin{itemize}
\item \textit{Base case ($t=0$).}
Given that $\sqrt{\lambda^\star} \le \sqrt{\lambda_{\max}^\star}$ and $|u_0| \le \eta_0$, we have
\begin{equation}
\label{eq:delta0}
\Delta_0 = \big| |u_0| - \sqrt{\lambda^\star} \big|
\le |u_0| + \sqrt{\lambda^\star}
\le \eta_0 + \sqrt{\lambda_{\max}^\star}
\le 2\eta_0,
\end{equation}
where we have used $\eta_0=C_{\eta}\sigma_{\max}^{\star}$ for $C_{\eta}\geq 1$. 
Combining this with (\ref{eq:delta-rec}) at $t=0$ gives
\begin{equation}
\label{eq:delta1}
\Delta_1 \le \max\{\Delta_0 - \eta_0, \eta_0\} \le \max\{\eta_0, \eta_0\} = \eta_0,
\end{equation}
which establishes the claim (\ref{eq:1d-bound}) for $t=0$.

\item \textit{Inductive step.}
Assume $\Delta_{t+1} \le \eta_t$ for some $t \ge 0$. Then in view of \Cref{eq:delta-rec},
\begin{equation}
\label{eq:ind-step1}
\Delta_{t+2}
\le \max\{\Delta_{t+1} - \eta_{t+1}, \eta_{t+1}\}
\le \max\{\eta_t - \eta_{t+1}, \eta_{t+1}\}.
\end{equation}
Equipped with our assumptions $\eta_{t+1} = \rho \eta_t$ and $1/2\le \rho <1 $, we obtain
\begin{equation}
\label{eq:eta-gap}
\eta_t - \eta_{t+1} = (1-\rho)\eta_t \le \rho \eta_t = \eta_{t+1},
\end{equation}
which taken together with \Cref{eq:ind-step1} yields $$\Delta_{t+2} \le \eta_{t+1}.$$ 
This establishes  the claim (\ref{eq:1d-bound}) for iteration $t+2$, which in turn finishes the proof of the claim (\ref{eq:1d-bound}) for all $t\geq 0$ by induction.

\end{itemize}

Lastly, with inequality (\ref{eq:1d-bound}) in place, we can readily demonstrate that, 
for any $t \ge 0$, 
\begin{equation}
\label{eq:square-diff}
|u_{t+1}^2 - \lambda^\star|
= \big| |u_{t+1}| - \sqrt{\lambda^\star} \big| \big( |u_{t+1}| + \sqrt{\lambda^\star} \big)
\le \Delta_{t+1} \big( \Delta_{t+1} + 2\sqrt{\lambda^\star} \big)\leq 8\lambda_{\max}^\star\rho^{t}
\end{equation}
as claimed, 
where the last inequality holds since  $\lambda^\star\leq \lambda^\star_{\max}$ and $\Delta_{t+1}\leq \eta_{t}= C_{\eta}\sqrt{\lambda_{\max}^\star}\rho^{t} \leq 2\sqrt{\lambda_{\max}^\star}$. 
\end{proof}

\subsection{Step 2: dynamics of \muon with perfectly initialized column space}

Next, we extend our analysis beyond the scalar case to another special case involving a particular---albeit often impractical---choice of initialization. As will become clear momentarily, the general case is intimately connected to this special setting.

More precisely, suppose that the initialization can be decomposed as
\begin{align}
\label{eq:special-U0-mf}
    \mU_0 = \mV^{\star} \bm{\Sigma}_0 \bm{O}_{\mathsf{init}}^{\top}, 
\end{align}
where $\mSigma_0 = \diag\{\sigma_{1,0},\dots,\sigma_{r,0}\}$ is a diagonal matrix in $\mathbb{R}^{r\times r}$, 
and $\bm{O}_{\mathsf{init}}\in \mathbb{R}^{k\times r}$ is some arbitrary orthonormal matrix  with $k\geq r$  obeying $\bm{O}_{\mathsf{init}}^{\top}\bm{O}_{\mathsf{init}}=\mI_r$. 
Armed with this initialization, we can establish convergence guarantees of \muon by extending the scalar analysis in \Cref{lem::1d-muon}, as formalized in the lemma below.
\begin{lemma}
\label{lem:muon-mf-special-init}
Suppose that $\bm{U}_0$ satisfies (\ref{eq:special-U0-mf}). 
Then for all $t\geq 0$,  $\bm{U}_t$ can be decomposed as
\begin{subequations}
\label{eq:lemma-Ut-decomp-mf}
\begin{equation}
\label{eq:case1-form}
\mU_t = \mV^\star \mSigma_t \bm{O}_{\mathsf{init}}^{\top}
\qquad
\text{for some }\mSigma_t = \diag\{\sigma_{1,t},\dots,\sigma_{r,t}\} \in \bR^{r\times r}. 
\end{equation}
In particular, for every $t\geq 0$ and $1\leq i\leq r$, one has 
\begin{align}
\sigma_{i,t+1} = \sigma_{i,t} - \eta_{t} \sign\big( (\sigma_{i,t}^2 - \lambda_i^\star)\sigma_{i,t} \big).
\label{eq:case1-Sigma-iterative-updates}
\end{align}
\end{subequations}
\end{lemma}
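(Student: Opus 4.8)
The plan is to prove \Cref{lem:muon-mf-special-init} by induction on $t$, with the base case $t=0$ being exactly the hypothesis (\ref{eq:special-U0-mf}); note that the claim is purely structural and deterministic, so the probability-$1$ apparatus of \Cref{lem::never-reach-zero} plays no role here. For the inductive step, suppose $\mU_t = \mV^\star\mSigma_t\bm{O}_{\mathsf{init}}^\top$ with $\mSigma_t$ diagonal. The first move is to simplify the gradient $\nabla f(\mU_t) = (\mU_t\mU_t^\top - \mM^\star)\mU_t$: using $\bm{O}_{\mathsf{init}}^\top\bm{O}_{\mathsf{init}} = \mI_r$ we get $\mU_t\mU_t^\top = \mV^\star\mSigma_t^2\mV^{\star\top}$, and then, since $\mM^\star = \mV^\star\mLambda^\star\mV^{\star\top}$ and $\mV^{\star\top}\mV^\star = \mI_r$,
\[
\nabla f(\mU_t) = \mV^\star\big(\mSigma_t^2 - \mLambda^\star\big)\mV^{\star\top}\mV^\star\mSigma_t\bm{O}_{\mathsf{init}}^\top = \mV^\star\mD_t\bm{O}_{\mathsf{init}}^\top, \qquad \mD_t \coloneqq \mSigma_t^3 - \mLambda^\star\mSigma_t,
\]
where $\mD_t$ is $r\times r$ diagonal with entries $(\sigma_{i,t}^2 - \lambda_i^\star)\sigma_{i,t}$.

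The crux---and the step I expect to demand the most care---is evaluating $\msign(\mV^\star\mD_t\bm{O}_{\mathsf{init}}^\top)$. Because $\mV^\star$ ($d\times r$) and $\bm{O}_{\mathsf{init}}$ ($k\times r$) are tall while $\mD_t$ may have zero or negative diagonal entries, the product is only ``SVD-like'', and one must produce an honest compact SVD rather than just formally writing $\mU_Z\mV_Z^\top$. I would write $\mD_t = \dsign(\mD_t)\,|\mD_t|$ as a signature matrix times a nonnegative diagonal, restrict to the support $\mathcal{I} = \{i : (\mD_t)_{ii}\neq 0\}$, and observe that the $\mathcal{I}$-columns of $\mV^\star\dsign(\mD_t)$ and of $\bm{O}_{\mathsf{init}}$ stay orthonormal (the signature matrix only flips signs), so that $\mV^\star\mD_t\bm{O}_{\mathsf{init}}^\top = \big(\mV^\star\dsign(\mD_t)\big)_{\mathcal{I}}\,|\mD_t|_{\mathcal{I}}\,(\bm{O}_{\mathsf{init}})_{\mathcal{I}}^\top$ is a bona fide compact SVD of rank $|\mathcal{I}|$. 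The definition of the matrix sign function then yields $\msign(\mV^\star\mD_t\bm{O}_{\mathsf{init}}^\top) = \big(\mV^\star\dsign(\mD_t)\big)_{\mathcal{I}}(\bm{O}_{\mathsf{init}})_{\mathcal{I}}^\top = \mV^\star\dsign(\mD_t)\bm{O}_{\mathsf{init}}^\top$, the last equality holding because columns indexed outside $\mathcal{I}$ contribute nothing on either side. (When every $\sigma_{i,t}\neq 0$ and $\sigma_{i,t}^2\neq\lambda_i^\star$ one could alternatively invoke $\msign(\mZ) = \mZ(\mZ^\top\mZ)^{-1/2}$ directly, but the compact-SVD route disposes of the degenerate entries uniformly.)

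Substituting into the \muon update (\ref{eq:muon-mf}) then gives
\[
\mU_{t+1} = \mU_t - \eta_t\,\msign\big(\nabla f(\mU_t)\big) = \mV^\star\big(\mSigma_t - \eta_t\dsign(\mSigma_t^3 - \mLambda^\star\mSigma_t)\big)\bm{O}_{\mathsf{init}}^\top =: \mV^\star\mSigma_{t+1}\bm{O}_{\mathsf{init}}^\top,
\]
with $\mSigma_{t+1}$ again diagonal, which is precisely (\ref{eq:case1-form}) at iteration $t+1$; reading the diagonal off entry-by-entry gives $\sigma_{i,t+1} = \sigma_{i,t} - \eta_t\sign\big((\sigma_{i,t}^2 - \lambda_i^\star)\sigma_{i,t}\big)$, i.e.\ (\ref{eq:case1-Sigma-iterative-updates}), closing the induction. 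Beyond the $\msign$ bookkeeping for the rank-deficient tall factors, everything is a direct substitution; the only conceptual point is that the column spaces $\mV^\star$ and $\bm{O}_{\mathsf{init}}$ are exactly preserved because the gradient inherits their structure.
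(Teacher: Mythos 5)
Your proof is correct and follows essentially the same route as the paper's: induction on $t$, the gradient computation $\nabla f(\mU_t)=\mV^\star(\mSigma_t^2-\mLambda^\star)\mSigma_t\bm{O}_{\mathsf{init}}^{\top}$, the identity $\msign\big(\mV^\star\mD_t\bm{O}_{\mathsf{init}}^{\top}\big)=\mV^\star\dsign(\mD_t)\bm{O}_{\mathsf{init}}^{\top}$, and substitution into the update. The only difference is that you justify the $\msign$ step in more detail (via the compact SVD restricted to the support of $\mD_t$, handling zero diagonal entries), which the paper simply asserts from diagonality; this is a welcome refinement, not a departure.
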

Importantly, \Cref{lem:muon-mf-special-init} reveals that: if the initialization has its left singular subspace perfectly aligned with the desired $\mV^{\star}$, then along the entire trajectory, the ``spectrum'' of each \muon iterate decouples into $r$ scalar sequences, each resembling the dynamics analyzed in \Cref{lem::1d-muon}. 
Therefore, invoking \Cref{lem::1d-muon} yields
\begin{subequations}
\label{eq:coord-bound-case1-final}
\begin{equation}
\label{eq:coord-bound}
|\sigma_{i,t+1}^2 - \lambda_i^\star| \le 8\lambda_{\max}^\star\rho^{t}
\end{equation}
for all $t\geq 0$, 
with the proviso that $|\sigma_{i,0}|\leq \eta_0$ for all $1\leq i\leq r$. 
Taking this collectively with property~(\ref{eq:case1-form}) leads to the following convergence bound for all $t\geq 0$: 
\begin{align}
\big\| \mU_{t+1}\mU_{t+1}^\top - \mM^\star \big\|
&= \big\| \mV^{\star}\mSigma_{t+1}^2\mV^{\star\top} - \mV^{\star} \mLambda^{\star}\mV^{\star\top} \big\|= \max_{1 \le i \le r} |\sigma_{i,t+1}^2 - \lambda_i^\star|
\le 8\lambda_{\max}^\star\rho^{t}.
\label{eq:case1-final}
\end{align}
\end{subequations}

\begin{proof}[Proof of \Cref{lem:muon-mf-special-init}]
Let us prove this lemma by induction. 
\begin{itemize}
\item {\em Base case with $t=0$.}
This holds trivially given our assumption (\ref{eq:special-U0-mf}). 

\item {\em Inductive step.}
Assuming the induction hypothesis (\ref{eq:case1-form}) holds at time $t$, we can compute the gradient as
\begin{equation}
\label{eq:grad-case1}
\nabla f(\mU_t)
= (\mU_t\mU_t^\top - \mM^\star)\mU_t
= \mV^\star (\mSigma_t^2 - \mLambda^\star)\mSigma_t \bm{O}_{\mathsf{init}}^{\top}.
\end{equation}
Given that both $\mSigma_t$ and $\mLambda^{\star}$ are diagonal matrices, the matrix sign of $\nabla f(\mU_t)$ is given by
\begin{equation}
\label{eq:sign-case1}
\msign\big(\nabla f(\mU_t)\big)
= \mV^\star \dsign\big( (\mSigma_t^2 - \mLambda^\star)\mSigma_t \big) \bm{O}_{\mathsf{init}}^{\top}.
\end{equation}
Here, we recall that $\dsign(\bm{D})=\diag\{\sign(D_{1,1}),\dots,\sign(D_{r,r})\}$ for any diagonal matrix $\bm{D}=\diag\{D_{1,1},\dots,D_{r,r}\}$.  
As a consequence, 
\begin{equation}
\label{eq:update-case1}
\mU_{t+1}
= \mU_t - \eta_t \msign\big(\nabla f(\mU_t)\big)
= \mV^\star \left( \mSigma_t - \eta_t \dsign\big( (\mSigma_t^2 - \mLambda^\star)\mSigma_t \big) \right)\bm{O}_{\mathsf{init}}^{\top}.
\end{equation}
Thus, this validates the claim (\ref{eq:case1-form}) for $t+1$ and demonstrates that
$$
    \mSigma_{t+1} = \mSigma_t - \eta_t \dsign\big( (\mSigma_t^2 - \mLambda^\star)\mSigma_t \big), 
$$
as claimed in (\ref{eq:case1-Sigma-iterative-updates}). 
\end{itemize}
The proof is thus complete by induction.  
\end{proof}

\subsection{Step 3: analysis for the case with $k\geq d$}

Turning to the general case, we begin by analyzing the scenario with $k\geq d$. 
In this setting, we find it convenient to work with the decomposition $\mM^\star = \mV^\star \mLambda^\star \mV^{\star\top}$ with $\mLambda^\star = \diag\{\lambda_1^\star,\dots,\lambda_d^\star\}$, where we take $r=d$ and allow some of the eigenvalues in $\{\lambda_1^\star,\dots,\lambda_d^\star\}$ to be zero.

Recall the initialization $\mU_0 = \alpha \mO$, where $\mO \in \mathbb{R}^{d\times k}$ is an arbitrary orthonormal matrix obeying $\bm{O}\bm{O}^{\top}=\mI_d$ and $\alpha \le \eta_0 $. One can express $\mU_0$ alternatively as
\begin{equation}
    \mU_0 = \alpha \mO = \mV^{\star} (\alpha \bm{I}_d) \mV^{\star\top} \mO \eqqcolon 
    \mV^{\star} (\alpha \bm{I}_d)  \mO_{\mathsf{init}}^{\top},
\end{equation}
where $
\mO_{\mathsf{init}}^{\top}\mO_{\mathsf{init}}= \mV^{\star\top} \mO \mO^{\top} \mV^{\star} = \bm{I}_d.
$
This indicates that the initialization $\mU_0$ satisfies Condition~(\ref{eq:special-U0-mf}). Therefore, by applying \Cref{lem:muon-mf-special-init} and inequality (\ref{eq:case1-final}), we see that with probability 1, 
 $\norm{\mU_T\mU_T^\top - \mM^\star} \le \varepsilon$ holds as long as
\[
T> \frac{1}{1-\rho} \log \bigg( \frac{8\lambda_{\max}^{\star}}{\varepsilon} \bigg). 
\]

\subsection{Step 4: analysis for the case with $r\leq k< d$}

We now switch attention to the case with $r\leq k< d$, which is substantially more challenging to analyze than the preceding setting. Here, we shall employ a random orthonormal initialization $\mU_0 = \alpha \mO$ obeying $\mO^{\top} \mO = \mI_k$. Our proof arguments unfold in several steps, as described below.

\paragraph{Step 4.1: initial subspace alignment.} A key property that we would like to establish is that: after the first \muon iteration, $\mU_1$ is already well aligned with the eigenspace $\mV^{\star}$.
Note that when initialized at  $\mU_0=\alpha\mO$, the gradient takes the following form
\begin{equation}
    \mG_0 \coloneqq\nabla f(\mU_0)=(\mU_0\mU_0^\top - \mM^\star)\mU_0 = \underset{\eqqcolon\, \mQ}{\underbrace{-\alpha \mM^\star \mO}} + \alpha^3 \mO, 
\end{equation}
where $\mQ$ denotes the leading term for small enough $\alpha$.  
Let us decompose $\mG_0$ into two components as $$\mG_0 = \mG_{0, \leq r} + \mG_{0, > r},$$ 
where $\mG_{0, \leq r}$ is the best rank-$r$ approximation of $\mG_0$ (i.e., it is composed of the $r$ leading singular components of $\mG_0$), and  $\mG_{0, > r}$ consists of the remaining $k-r$ singular components.  Given that $\mG_{0, \leq r}$ and $ \mG_{0, > r}$ are orthogonal to each other, the matrix sign of $\mG_0$ admits the following decomposition: 
\begin{equation}
    \msign(\mG_0)=\msign(\mG_{0, \leq r})+\msign(\mG_{0, > r}).
\end{equation}

As it turns out, \( \msign(\mG_{0, \leq r}) \) and \( \msign(\mQ) \) can be fairly close for small enough $\alpha$, as asserted by the following lemma. The proof is postponed to \Cref{sec:proof:lem:diff-msign-Q-G}. 
\begin{lemma}
    \label{lem:diff-msign-Q-G}
    There exists some universal constant $c_0>0$ such that, with probability at least 0.995, 
    \begin{equation}
    \big\| \msign(\mG_{0, \leq r})-\msign(\mQ) \big\|
    \leq \frac{16\alpha^2\sqrt{dr}}{c_0\lambda_r^\star}
\end{equation}
holds as long as $4\alpha^{2}\leq c_{0}\lambda_{r}^{\star}/\sqrt{{dr}}$. 

\end{lemma}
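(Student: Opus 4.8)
The plan is to combine (i) a high-probability lower bound on $\sigma_r(\mQ)$, coming from standard random-matrix estimates for the Haar-random frame $\mO$, with (ii) a deterministic perturbation bound for the matrix sign function of fixed-rank matrices, exploiting that $\mG_{0,\leq r}$ is only an $O(\alpha^3)$ perturbation of $\mQ$.

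\textbf{Step 1: lower-bounding $\sigma_r(\mQ)$.} First I would establish that $\sigma_r(\mQ)\gtrsim \alpha\lambda_r^\star/\sqrt{dr}$ with constant probability. Since $\mQ=-\alpha\mM^\star\mO=-\alpha\mV^\star\mLambda^\star\mV^{\star\top}\mO$ and $\mV^\star$ has orthonormal columns, the nonzero singular values of $\mQ$ coincide with those of $\alpha\mLambda^\star\mV^{\star\top}\mO\in\bR^{r\times k}$, so $\sigma_r(\mQ)\geq \alpha\lambda_r^\star\,\sigma_{\min}(\mV^{\star\top}\mO)$. By rotational invariance one may assume $\mV^\star=[\mI_r;\mathbf{0}]$, so that $\mV^{\star\top}\mO$ is the leading $r\times k$ block of a Haar-random element of $\mathcal{O}_{d\times k}$; realizing $\mO=\mA(\mA^\top\mA)^{-1/2}$ with $\mA\in\bR^{d\times k}$ having i.i.d.\ $\mathcal{N}(0,1)$ entries, a short computation gives $\sigma_{\min}(\mV^{\star\top}\mO)\geq \sigma_{\min}(\mB)/\sigma_{\max}(\mA)$ with $\mB:=\mV^{\star\top}\mA\in\bR^{r\times k}$ Gaussian. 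Now $\sigma_{\max}(\mA)\lesssim\sqrt d$ with overwhelming probability (since $k<d$), while $\sigma_{\min}(\mB)\gtrsim 1/\sqrt r$ with probability at least $0.997$ --- the worst case being $k=r$, where $\sqrt r\,\sigma_{\min}$ of a square Gaussian has a density bounded near the origin. A union bound then yields, for a suitable universal constant $c_0>0$, $\sigma_r(\mQ)\geq c_0\,\alpha\lambda_r^\star/\sqrt{dr}$ with probability at least $0.995$.

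\textbf{Step 2: the truncation error is $O(\alpha^3)$.} Next I would bound $\|\mG_{0,\leq r}-\mQ\|$ deterministically. Write $\mG_0=\mQ+\mE$ with $\mE=\alpha^3\mO$; since $\mO^\top\mO=\mI_k$, all singular values of $\mO$ equal $1$ and $\|\mE\|=\alpha^3$. As $\mQ$ has rank at most $r$, Weyl's inequality yields $\|\mG_{0,>r}\|=\sigma_{r+1}(\mG_0)\leq \sigma_{r+1}(\mQ)+\|\mE\|=\alpha^3$, hence $\|\mG_{0,\leq r}-\mQ\|=\|\mE-\mG_{0,>r}\|\leq 2\alpha^3$. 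Combining the hypothesis $4\alpha^2\leq c_0\lambda_r^\star/\sqrt{dr}$ with Step 1 gives $\alpha^3\leq\tfrac14\sigma_r(\mQ)$, so Weyl's inequality again shows $\sigma_r(\mG_{0,\leq r})=\sigma_r(\mG_0)\geq\sigma_r(\mQ)-\alpha^3>0$; in particular $\mG_{0,\leq r}$ has rank exactly $r$, and $\min\{\sigma_r(\mQ),\sigma_r(\mG_{0,\leq r})\}\geq\tfrac34\sigma_r(\mQ)\geq \tfrac34 c_0\,\alpha\lambda_r^\star/\sqrt{dr}$.

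\textbf{Step 3 and the main obstacle.} Finally I would invoke an inequality of the form $\|\msign(\mZ_1)-\msign(\mZ_2)\|\leq C\,\|\mZ_1-\mZ_2\|/\min\{\sigma_r(\mZ_1),\sigma_r(\mZ_2)\}$, valid for rank-$r$ matrices $\mZ_1,\mZ_2\in\bR^{d\times k}$ whose separation is dominated by their $r$-th singular value (as guaranteed by Step 2). This can be cited, or assembled from (i) Wedin's $\sin\Theta$ theorem, to control the perturbation of the leading left and right singular subspaces of $\mG_{0,\leq r}$ relative to those of $\mQ$, together with (ii) the Lipschitz perturbation bound for the polar factor of a full-column-rank matrix, applied after aligning those subspaces. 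Substituting $\|\mG_{0,\leq r}-\mQ\|\leq 2\alpha^3$ and the lower bound on $\min\{\sigma_r(\mQ),\sigma_r(\mG_{0,\leq r})\}$ from Step 2, and absorbing the $O(1)$ constant, then yields $\|\msign(\mG_{0,\leq r})-\msign(\mQ)\|\leq 16\alpha^2\sqrt{dr}/(c_0\lambda_r^\star)$. The hard part will be exactly this perturbation bound: classical polar-factor perturbation theory assumes full rank, whereas here $r<k$, so one must first confirm the two matrices have equal rank (Step 2) and then reconcile the mismatch of their two pairs of singular subspaces, making sure the final estimate depends only on the $r$-th singular value --- not on $\sigma_1$ or a spurious dimension factor --- and that the universal constant does not exceed the target $16/c_0$. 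Step 1 is routine but hinges on the correct small-ball estimate for the least singular value of a possibly square Gaussian matrix, which is precisely what produces the $\sqrt r$, and hence the $\sqrt{dr}$, in the final bound.
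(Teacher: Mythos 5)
Your proposal is correct and follows essentially the same route as the paper: a high-probability lower bound $\sigma_r(\mQ)\geq c_0\alpha\lambda_r^\star/\sqrt{dr}$ obtained from the Gaussian realization of the Haar frame (this is exactly the paper's auxiliary lemma, proved via $\sigma_{\max}(\mA)\lesssim\sqrt d$ and the small-ball bound $\bP(\sigma_{\min}\le\varepsilon r^{-1/2})\le\varepsilon$ for a square Gaussian), combined with a perturbation bound for $\msign$ of two rank-$r$ matrices of the form $\|\msign(\mZ_1)-\msign(\mZ_2)\|\le C\,\|\mZ_1-\mZ_2\|/\min\{\sigma_r(\mZ_1),\sigma_r(\mZ_2)\}$. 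The one genuine difference is your Step 2: the paper controls $\|\mG_{0,\le r}-\mQ\|$ by invoking a Wedin-type perturbation bound for best rank-$r$ approximations (giving roughly $6\alpha^3$ plus a small correction), whereas you observe directly that $\mG_{0,\le r}-\mQ=\mE-\mG_{0,>r}$ with $\|\mE\|=\alpha^3$ and, by Weyl, $\|\mG_{0,>r}\|=\sigma_{r+1}(\mG_0)\le\alpha^3$, so the truncation error is at most $2\alpha^3$; this is simpler and even slightly tighter, removing one cited lemma from the argument. The "main obstacle" you flag in Step 3 is not actually a gap: the needed equal-rank $\msign$ perturbation inequality is available off the shelf with constant $C=2$ (the paper cites Li, 2006, Theorem 2.1), and with your bounds the constants close comfortably, since $2\cdot 2\alpha^3/\bigl(\tfrac34\sigma_r(\mQ)\bigr)=\tfrac{16}{3}\,\alpha^3/\sigma_r(\mQ)\le 16\alpha^2\sqrt{dr}/(c_0\lambda_r^\star)$.
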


In addition, given that $\mG_{0, \leq r}$ and $\mG_{0, >r}$ are orthogonal to each other,  \Cref{lem:orth-completion} in \Cref{sec:technical-lemmas} reveals the existence of a matrix $\widetilde{\mG}_0\in \bR^{d\times k}$ such that %
\begin{subequations}
\label{eq:construction-Gtilde-0}
\begin{align}
\widetilde{\mG}_{0, \leq r}&=\mQ
\qquad \text{and} \\
    \big\|\msign(\mG_0)-\msign(\widetilde{\mG}_0)\big\| &\leq \sqrt{2}\,\big\|\msign(\mG_{0, \leq r})-\msign(\mQ) \big\|
    \leq \frac{16\sqrt{2}\alpha^2\sqrt{dr}}{c_0\lambda_r^\star}.
\end{align}
\end{subequations}
Taking this together with the first iteration $\mU_1 = \alpha \mO - \eta_0 \msign(\mG_0)$ leads to
\begin{subequations}
\label{eq:auxiliary-U1-R10}
\begin{equation}
    \mU_1 = \alpha \mO - \eta_0 \big(\msign(\widetilde{\mG}_0) + \bm{R}_0\big) = - \eta_0 \msign(\widetilde{\mG}_0) + \bm{R}_1,
    \label{eq:U1-muon-residual-mf}
\end{equation}
where the residual terms $\mR_0,\mR_1$ satisfy
\begin{align}
    \|\mR_0\| \leq \frac{16\sqrt{2}\alpha^2\sqrt{dr}}{c_0\lambda_r^\star}
    \qquad \text{and} \qquad \|\mR_1\|\leq 
    \alpha + \frac{16\sqrt{2}\eta_0\alpha^2\sqrt{dr}}{c_0\lambda_r^\star}
    \leq 2\alpha, 
    \label{eq:U1-muon-residual-mf-size}
\end{align}
\end{subequations}
provided that $\alpha\leq {c_{0}\lambda_{r}^{\star}}/(32\sqrt{2\lambda_{\max}^{\star}dr})$.

\paragraph{Step 4.2: construction of an auxiliary trajectory.} To facilitate analysis, we find it helpful to construct an auxiliary trajectory $\{\widetilde{\mU}_t\}_{t\geq 1}$ as follows:
\begin{subequations}
\label{eq:auxiliary-muon-mf}
\begin{align}
\widetilde{\mU}_1 &=- \eta_0 \msign(\widetilde{\mG}_0), 
\label{eq:auxiliary-muon-mf-init}\\
    \widetilde{\mU}_{t+1}&=\widetilde{\mU}_t-\eta_t\msign\big(\nabla f(\widetilde{\mU}_t)\big),\qquad t=1,2,\cdots  
    \label{eq:auxiliary-muon-mf-init-update}
\end{align}
\end{subequations}
In words, this auxiliary trajectory is also generated by simplified \texttt{Muon} in (\ref{eq:muon-mf}), but with a slightly modified initialization that discards the residual term $\mR_1$ appearing in the original iteration (\ref{eq:U1-muon-residual-mf}). In particular, it follows from (\ref{eq:U1-muon-residual-mf-size}) that
\begin{align}
\big\|\widetilde{\mU}_1-\mU_1\big\|=\norm{\mR_1}\leq 2\alpha. 
\label{eq:U1-U1tilde-gap}
\end{align}

Next, we demonstrate that the dynamics of this auxiliary trajectory $\{\widetilde{\mU}_t\}_{t\geq 1}$ can be decomposed into a collection of independent scalar dynamics, akin to Step 2.  To see this, we first claim that with high probability, $\msign(\mQ)$ can be decomposed as
\begin{align}
    \msign(\mQ) = \mV^{\star}\mO^{\prime\top}
    \label{eq:msign-Q-mf}
\end{align}
for some matrix $\mO^{\prime}\in \mathbb{R}^{k\times r}$ obeying $\mO^{\prime\top}\mO^{\prime}=\mI_r$. 
\begin{proof}[Proof of property~(\ref{eq:msign-Q-mf})]
Observe that
\begin{equation*}
-\msign(\mQ)= 
\msign(\mM^\star\mO) =\msign(\mV^\star\mLambda^\star \mB)=\mV^\star \mLambda^\star \mB(\mB^\top\mLambda^{\star 2}\mB)^{\dagger/2}=\mV^\star \msign(\mLambda^\star \mB),
\end{equation*}
where we take $\mB=\mV^{\star\top}\mO$. \Cref{lem::init} asserts that with probability at least $0.995$, $\sigma_r(\mLambda^\star\mB)>0$, thus implying that $\big(\msign(\mLambda^\star \mB)\big)^{\top}\in \cO_{k\times r}$.
This completes the proof.
\end{proof}
Armed with this property, we can readily repeat the analysis in Step 2 to establish convergence guarantees for $\{\widetilde{\mU}_t\}$. It can be easily seen from (\ref{eq:msign-Q-mf}) and our construction of $\widetilde{\mG}_0$ that:    
there exist two orthonormal matrices $\mV\in \cO_{d\times k}$ and $\mR\in \cO_{k\times k}$ such that 
\begin{align}
    \mV_{:,1:r}=\mV^\star, 
    \qquad \mR_{:,1:r}=\mO^{\prime}, \qquad
    \text{and} \qquad
    \msign(\widetilde{\mG}_0)=\mV\mR^{\top},
    \label{eq:choice-V-R-mf}
 \end{align}
where $\mM_{:,1:r}$ denotes the first $r$ columns of a matrix $\mM$. 
In the meantime, Condition~(\ref{eq:choice-V-R-mf}) allows us to express the ground truth as
\begin{align}
    \mX^{\star} = \mV^{\star} \mLambda^{\star} \mV^{\star\top}
    = \mV \mLambda_{\mathsf{aug}} \mV^{\top},
    \label{eq:Xstar-aug-mf}
\end{align}
where $\mLambda_{\mathsf{aug}}\in \mathbb{R}^{k\times k}$ is an augmented diagonal matrix $\mLambda_{\mathsf{aug}}=\diag\{\lambda_1^{\star},\dots,\lambda_k^{\star}\}$ with $\lambda_{r+1}^{\star}=\dots=\lambda_k^{\star}=0$.

Recall that $\widetilde{\mU}_1=-\eta_0\msign(\widetilde{\mG}_0)=-\eta_0 \mV\mR^{\top}$ (cf.~(\ref{eq:auxiliary-muon-mf-init})). 
Combining this together with \Cref{eq:Xstar-aug-mf}, we can readily invoke \Cref{lem:muon-mf-special-init} to show that: for each $t\geq 1$,  
$\widetilde{\bm{U}}_t$ can be decomposed as
\begin{subequations}
\label{eq:tilde-Ut-decompose-mf}
\begin{equation}
\widetilde{\mU}_t = \mV \widetilde{\mSigma}_t \bm{R}^{\top}
\qquad
\text{for some }\mSigma_t = \diag\{\widetilde{\sigma}_{1,t},\dots,\widetilde{\sigma}_{k,t}\} \in \bR^{k\times k},
\end{equation}
where $|\widetilde{\sigma}_{i,1}|= \eta_0$ ($1\leq i\leq k$), and for every $t\geq 1$ and $1\leq i\leq k$,  
\begin{align}
\widetilde{\sigma}_{i,t+1} = \widetilde{\sigma}_{i,t} - \eta_{t} \sign\big( (\widetilde{\sigma}_{i,t}^2 - \lambda_i^\star)\widetilde{\sigma}_{i,t} \big).
\end{align}
\end{subequations}
Repeating the same convergence analysis for (\ref{eq:coord-bound-case1-final}) tells us that (see \Cref{lem::1d-muon-varying-C} for a slight extension that accounts for random learning rates): for every $t\geq 0$ and every $1\leq i\leq k$ one has 
\begin{subequations}
\label{eq:auxiliary-convergence-mf}
\begin{align}
|\widetilde{\sigma}_{i,t+1}^2 - \lambda_i^\star| &\le \frac{8}{(1-\rho)^2}\lambda_{\max}^\star\rho^{t}, 
\label{eq:auxiliary-convergence-mf-sigma}\\
\big\|\widetilde{\mU}_{t+1}\widetilde{\mU}_{t+1}^\top-\mM^\star\big\| &\le \frac{8}{(1-\rho)^2}\lambda_{\max}^\star\rho^{t}.
\label{eq:auxiliary-convergence-mf-Sigma}
\end{align}
\end{subequations}
Consequently, one achieves 
\begin{equation}
    \big\|\widetilde{\mU}_{T}\widetilde{\mU}_{T}^\top-\mM^\star\big\| \leq \varepsilon / 2
\end{equation}
as long as 
$T> \frac{1}{1-\rho} \log \big( \frac{16\lambda_{\max}^{\star}}{(1-\rho)^2\varepsilon} \big)$.

\paragraph{Step 4.3: proximity between the original and auxiliary trajectories.} 
With the desirable convergence property of $\{\widetilde{\mU}_t\}$ in place, it remains to show that the original iterates $\{\mU_t\}$ remain close to the auxiliary iterates.  
First, we bound the differences between $\mU_t$ and $\widetilde{\mU}_t$, as well as between their associated gradients, in the following lemma; the proof is deferred to \Cref{sec:proof-lem:diff-Ut-Utilde-t-grad-mf}. 
\begin{lemma}
    \label{lem:diff-Ut-Utilde-t-grad-mf}
    Assume that $\sigma_{\min}(\nabla f(\mU_t)),\sigma_{\min}(\nabla f(\widetilde{\mU}_{t}))>0$. Then it holds that
    \begin{align}
        \big\|\mU_{t+1}-\widetilde{\mU}_{t+1}\big\|&\leq
        \left(1+\eta_t \frac{147\lambda_{\max}^\star}{(1-\rho)^2\sigma_{\min}(\nabla f(\widetilde{\mU}_{t}))}\right)\big\|\mU_{t}-\widetilde{\mU}_{t}\big\|.
        \label{eq::30-U}
    \end{align}
    In addition, we have $\max\big\{\norm{\mU_t}, \| \widetilde{\mU}_t\|\big\}\leq \frac{4\sqrt{\lambda_{\max}^\star}}{1-\rho}$.
\end{lemma}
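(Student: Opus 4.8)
The plan is to prove the two assertions in order, starting with the uniform operator‑norm bound (which is self‑contained) and then deriving the contraction‑type recursion from it. \textbf{Step A (norm bound).} Since $\msign(\cdot)$ always returns a (co)isometry, $\|\msign(\mZ)\|\le 1$ for every $\mZ$. Unrolling the Muon recursion $\mU_{s+1}=\mU_s-\eta_s\,\msign(\nabla f(\mU_s))$ from $s=0$ and using the triangle inequality gives $\|\mU_t\|\le\|\mU_0\|+\sum_{s=0}^{t-1}\eta_s=\alpha+\sum_{s=0}^{t-1}\eta_s$, while the auxiliary trajectory, starting at $\widetilde{\mU}_1=-\eta_0\,\msign(\widetilde{\mG}_0)$ with $\|\widetilde{\mU}_1\|=\eta_0$, obeys $\|\widetilde{\mU}_t\|\le\eta_0+\sum_{s=1}^{t-1}\eta_s=\sum_{s=0}^{t-1}\eta_s$. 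Since $\eta_s=C_{\eta,s}\sqrt{\lambda_{\max}^{\star}}\,\rho^s$ with $C_{\eta,s}\le2$ and $\rho\in[2/3,1)$, we have $\sum_{s\ge0}\eta_s\le\frac{2\sqrt{\lambda_{\max}^{\star}}}{1-\rho}$; together with $\alpha$ sufficiently small (say $\alpha\le\sqrt{\lambda_{\max}^{\star}}\le\frac{2\sqrt{\lambda_{\max}^{\star}}}{1-\rho}$) this yields $\max\{\|\mU_t\|,\|\widetilde{\mU}_t\|\}\le\frac{4\sqrt{\lambda_{\max}^{\star}}}{1-\rho}$ for all $t$.

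\textbf{Step B (the recursion).} Subtracting the two Muon updates,
\[
\mU_{t+1}-\widetilde{\mU}_{t+1}=(\mU_t-\widetilde{\mU}_t)-\eta_t\big(\msign(\nabla f(\mU_t))-\msign(\nabla f(\widetilde{\mU}_t))\big),
\]
so by the triangle inequality it remains to bound $\|\msign(\nabla f(\mU_t))-\msign(\nabla f(\widetilde{\mU}_t))\|$ by a multiple of $\|\mU_t-\widetilde{\mU}_t\|$. I would use two ingredients. First, under the hypotheses $\sigma_{\min}(\nabla f(\mU_t)),\sigma_{\min}(\nabla f(\widetilde{\mU}_t))>0$, a perturbation bound for the matrix‑sign (unitary polar) factor of the form $\|\msign(\mA)-\msign(\mB)\|\le\frac{c_1}{\sigma_{\min}(\mB)}\|\mA-\mB\|$ for an absolute constant $c_1$ ($c_1\le3$ suffices), applied with $\mA=\nabla f(\mU_t)$ and $\mB=\nabla f(\widetilde{\mU}_t)$. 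Second, local Lipschitzness of $\nabla f$: writing $\nabla f(\mU)=\mU\mU^{\top}\mU-\mM^{\star}\mU$ and telescoping the cubic term as
\[
\mU_t\mU_t^{\top}\mU_t-\widetilde{\mU}_t\widetilde{\mU}_t^{\top}\widetilde{\mU}_t=\mU_t\mU_t^{\top}(\mU_t-\widetilde{\mU}_t)+\mU_t(\mU_t-\widetilde{\mU}_t)^{\top}\widetilde{\mU}_t+(\mU_t-\widetilde{\mU}_t)\widetilde{\mU}_t^{\top}\widetilde{\mU}_t,
\]
one gets $\|\nabla f(\mU_t)-\nabla f(\widetilde{\mU}_t)\|\le\big(3\max\{\|\mU_t\|,\|\widetilde{\mU}_t\|\}^2+\|\mM^{\star}\|\big)\|\mU_t-\widetilde{\mU}_t\|$. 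Plugging in Step A together with $\|\mM^{\star}\|=\lambda_1^{\star}\le\lambda_{\max}^{\star}$ and $(1-\rho)^2\le1$ gives $\|\nabla f(\mU_t)-\nabla f(\widetilde{\mU}_t)\|\le\frac{49\lambda_{\max}^{\star}}{(1-\rho)^2}\|\mU_t-\widetilde{\mU}_t\|$; chaining the three estimates produces precisely the claimed recursion with the constant $147=3\times49$.

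\textbf{Main obstacle.} Everything except the matrix‑sign perturbation inequality is routine bookkeeping (telescoping, geometric‑series sums, $\|\msign(\cdot)\|\le1$). The delicate point is that the bound is needed in the \emph{operator} norm: the Frobenius‑norm bound for the unitary polar factor is classical (constant $2$, with no smallness condition), but its operator‑norm analogue for rectangular full‑column‑rank matrices is more subtle. I would establish it either via the representation $\msign(\mZ)=\mZ(\mZ^{\top}\mZ)^{-1/2}$ combined with a resolvent/Neumann‑type expansion of $(\mZ^{\top}\mZ)^{-1/2}$, or by restricting to the (at most four‑dimensional) subspace spanned by the relevant singular directions and reducing to the Frobenius case; this is also where the absolute constant $c_1$, hence $147$, is pinned down.
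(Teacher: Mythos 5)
Your proposal follows essentially the same route as the paper: the same unrolling-plus-geometric-series argument for the uniform bound $\max\{\|\mU_t\|,\|\widetilde{\mU}_t\|\}\leq \frac{4\sqrt{\lambda_{\max}^\star}}{1-\rho}$, and the same chaining of the triangle inequality, an operator-norm polar-factor perturbation bound with constant $3$, and the local Lipschitz estimate $\|\nabla f(\mU_t)-\nabla f(\widetilde{\mU}_t)\|\leq \frac{49\lambda_{\max}^\star}{(1-\rho)^2}\|\mU_t-\widetilde{\mU}_t\|$, giving $147=3\times 49$. The one step you flag as delicate—the spectral-norm bound $\|\msign(\mA)-\msign(\mB)\|\leq \frac{3}{\sigma_{\min}(\mB)}\|\mA-\mB\|$ for full-column-rank matrices—is exactly the result the paper invokes as \Cref{lem::matrix-sign-perturbation} (adapted from \citealp{li1995new}), so it need not be rederived.
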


Repeating the analysis of \Cref{lem::never-reach-zero} (which we omit here for brevity), we can easily see that with probability one, $\sigma_{\min}(\nabla f(\mU_t))>0$ and $\sigma_{\min}(\nabla f(\widetilde{\mU}_{t}))>0$ hold for all $t\geq 1$. \Cref{lem:diff-Ut-Utilde-t-grad-mf} then tells us that
\begin{align}
        \big\|\mU_{T}-\widetilde{\mU}_{T}\big\|&\leq
        \prod_{t=1}^{T-1}\left(1+\eta_t \frac{147\lambda_{\max}^\star}{(1-\rho)^2\sigma_{\min}(\nabla f(\widetilde{\mU}_{t}))}\right)\big\|\mU_{1}-\widetilde{\mU}_{1}\big\| \notag\\
        &\leq \underset{\eqqcolon\, \Pi_T}{\underbrace{\Bigg\{ \prod_{t=1}^{T-1}\left(1+ \frac{294\lambda_{\max}^{\star 3/2}\rho^t}{(1-\rho)^2\sigma_{\min}(\nabla f(\widetilde{\mU}_{t}))}\right) \Bigg\}}} \big\|\mU_{1}-\widetilde{\mU}_{1}\big\| \notag\\
        &\leq 2\alpha \Pi_T ,
    \label{eq:defn-PiT-UT-bound}
\end{align}
where we have used  $\eta_t=C_{\eta,t}\sqrt{\lambda_{\max}^{\star}}\rho^t\leq 2\sqrt{\lambda_{\max}^{\star}}\rho^t$ as well as (\ref{eq:U1-U1tilde-gap}).

In order to invoke (\ref{eq:defn-PiT-UT-bound}) to control  $\big\|\mU_{T}-\widetilde{\mU}_{T}\big\|$ and $\Pi_T$, 
a crucial step is to lower bound $\sigma_{\min}(\nabla f(\widetilde{\mU}_t))=\min_{1\leq i\leq k}|(\widetilde{\sigma}_{i,t}^2 - \lambda_i^\star)\widetilde{\sigma}_{i,t}|$, as accomplished by the following lemma. See \Cref{sec:proof:lem::small-ball-prob} for the proof. 

\begin{lemma}
\label{lem::small-ball-prob}
Consider any $0 < \varepsilon \leq \frac{1}{4} (\lambda_{\min}^{\star})^{3/2}$. 
   Then for every step $t\geq 1$, we have
    \begin{equation}
        \bP\left(\sigma_{\min}\big(\nabla f(\widetilde{\mU}_{t+1})\big)\leq \varepsilon\mid \mathcal F_{t}\right)\leq \frac{2(k-r)\sqrt[3]{\varepsilon}}{\sqrt{\lambda_{\max}^\star}\rho^t}+\frac{12r\varepsilon}{\lambda_{\min}^{\star}\sqrt{\lambda_{\max}^\star}\rho^t},
    \end{equation}
    where $\mathcal F_t$ represents all events that happen up to and including time $t$.
\end{lemma}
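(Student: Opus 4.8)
The plan is to condition on $\mathcal F_t$ and isolate the single fresh source of randomness entering $\widetilde{\mU}_{t+1}$. Since $\mathcal F_t$ determines $\widetilde{\mU}_1,\dots,\widetilde{\mU}_t$ and the past learning rates $\eta_0,\dots,\eta_{t-1}$, the only new ingredient is $\eta_t=C_{\eta,t}\sqrt{\lambda_{\max}^{\star}}\rho^{t}$ with $C_{\eta,t}\sim\uniform[1,2]$. Using the diagonal representation $\widetilde{\mU}_{t+1}=\mV\widetilde{\mSigma}_{t+1}\mR^{\top}$ from \eqref{eq:tilde-Ut-decompose-mf}, we have $\sigma_{\min}\big(\nabla f(\widetilde{\mU}_{t+1})\big)=\min_{1\le i\le k}|g_i(\widetilde{\sigma}_{i,t+1})|$, where $g_i(x)\coloneqq(x^{2}-\lambda_i^{\star})x$ and the $\lambda_i^{\star}$ refer to the augmented spectrum of \eqref{eq:Xstar-aug-mf} (so $\lambda_i^{\star}=0$ for $i>r$). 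Hence, by a union bound, it suffices to control $\bP\big(|g_i(\widetilde{\sigma}_{i,t+1})|\le\varepsilon\mid\mathcal F_t\big)$ for each $i$. The almost-sure non-degeneracy already established after \Cref{lem:diff-Ut-Utilde-t-grad-mf} (namely $\sigma_{\min}(\nabla f(\widetilde{\mU}_t))>0$, hence $\widetilde{\sigma}_{i,t}\ne0$ and $\widetilde{\sigma}_{i,t}^{2}\ne\lambda_i^{\star}$) guarantees that the sign $s_{i,t}\coloneqq\sign\big((\widetilde{\sigma}_{i,t}^{2}-\lambda_i^{\star})\widetilde{\sigma}_{i,t}\big)$ is $\mathcal F_t$-measurable and equals $\pm1$; therefore, conditional on $\mathcal F_t$, the scalar update $\widetilde{\sigma}_{i,t+1}=\widetilde{\sigma}_{i,t}-\eta_t s_{i,t}$ is uniformly distributed on an interval of length $L\coloneqq\sqrt{\lambda_{\max}^{\star}}\rho^{t}$, so its conditional density is $1/L$ there and $0$ elsewhere. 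Consequently $\bP\big(|g_i(\widetilde{\sigma}_{i,t+1})|\le\varepsilon\mid\mathcal F_t\big)\le \mathrm{Leb}\big(\{x\in\mathbb R:|g_i(x)|\le\varepsilon\}\big)/L$, and the whole lemma reduces to estimating the measure of the sublevel sets of the cubics $g_i$.

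For the ``zero'' directions $i>r$ we have $g_i(x)=x^{3}$, so $\{|g_i|\le\varepsilon\}=[-\varepsilon^{1/3},\varepsilon^{1/3}]$ has measure $2\varepsilon^{1/3}$, contributing $\frac{2(k-r)\varepsilon^{1/3}}{L}$ in total. For the ``signal'' directions $i\le r$ (where $\lambda_i^{\star}\ge\lambda_{\min}^{\star}$), I would analyze the cubic $g_i(x)=x^{3}-\lambda_i^{\star}x$ by elementary calculus: its critical values are $\mp\tfrac{2}{3\sqrt3}(\lambda_i^{\star})^{3/2}$, and the hypothesis $\varepsilon\le\tfrac14(\lambda_{\min}^{\star})^{3/2}<\tfrac{2}{3\sqrt3}(\lambda_i^{\star})^{3/2}$ keeps the horizontal strip $|y|\le\varepsilon$ strictly below those critical values, so $\{|g_i|\le\varepsilon\}$ splits into at most three disjoint intervals, one surrounding each root $0,\pm\sqrt{\lambda_i^{\star}}$ of $g_i$. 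On each of these intervals $|g_i'(x)|=|3x^{2}-\lambda_i^{\star}|$ is bounded below by a constant multiple of $\lambda_i^{\star}$ (again using $\varepsilon\le\tfrac14(\lambda_i^{\star})^{3/2}$ to confine the interval to a region where $3x^{2}-\lambda_i^{\star}$ stays away from $0$, e.g.\ $|x|\le\tfrac12\sqrt{\lambda_i^{\star}}$ near the root $0$ and $|x|\in[\tfrac1{\sqrt2}\sqrt{\lambda_i^{\star}},\sqrt2\sqrt{\lambda_i^{\star}}]$ near $\pm\sqrt{\lambda_i^{\star}}$), so each interval has length $O(\varepsilon/\lambda_i^{\star})$; book-keeping the three lengths gives $\mathrm{Leb}(\{|g_i|\le\varepsilon\})\le\tfrac{12\varepsilon}{\lambda_i^{\star}}\le\tfrac{12\varepsilon}{\lambda_{\min}^{\star}}$, hence a contribution of $\tfrac{12r\varepsilon}{\lambda_{\min}^{\star}L}$ over $i=1,\dots,r$. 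Combining the two contributions and substituting $L=\sqrt{\lambda_{\max}^{\star}}\rho^{t}$ yields exactly the claimed bound.

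The routine parts are the conditioning/union-bound skeleton and the $i>r$ case; the one step requiring care is the cubic sublevel-set estimate for $i\le r$. Concretely, one must verify that $\varepsilon\le\tfrac14(\lambda_{\min}^{\star})^{3/2}$ is exactly what prevents the three branches of $\{|g_i|\le\varepsilon\}$ from merging near the local extrema of $g_i$ (where $g_i'$ vanishes, which would ruin the length bound) while simultaneously keeping each branch inside a region where $|g_i'|\gtrsim\lambda_i^{\star}$, and then carry the three interval-length constants through to land on the advertised factor $12$. All other inputs---the diagonal recursion of \eqref{eq:tilde-Ut-decompose-mf}, the augmented decomposition \eqref{eq:Xstar-aug-mf}, and the almost-sure non-vanishing of $\widetilde{\sigma}_{i,t}$ and of $\widetilde{\sigma}_{i,t}^{2}-\lambda_i^{\star}$---are already in place in the preceding development.
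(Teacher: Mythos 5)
Your proposal is correct and follows essentially the same route as the paper's proof: condition on $\mathcal F_t$, use that the fresh uniform randomness in $\eta_t$ makes each $\widetilde{\sigma}_{i,t+1}$ conditionally uniform on an interval of length $\sqrt{\lambda_{\max}^\star}\rho^t$, decouple into the $k$ scalar coordinates via the diagonal representation, split into the cases $\lambda_i^\star=0$ and $\lambda_i^\star>0$, and finish with a union bound. The only substantive difference is how the cubic sublevel set is measured for $i\le r$: the paper shows directly that $|g_i|\le\varepsilon$ forces $|x|\in[0,2\varepsilon/\lambda_i^\star]\cup[\sqrt{\lambda_i^\star}-2\varepsilon/\lambda_i^\star,\sqrt{\lambda_i^\star}+2\varepsilon/\lambda_i^\star]$ (total length $12\varepsilon/\lambda_i^\star$), whereas your derivative lower bounds as stated ($|g_i'|\ge\lambda_i^\star/4$ near $0$ and $\ge\lambda_i^\star/2$ near $\pm\sqrt{\lambda_i^\star}$) only give $16\varepsilon/\lambda_i^\star$, so to land on the advertised factor $12$ you need the slightly sharper estimate near the origin, e.g.\ $|g_i(x)|\ge\tfrac{3}{4}\lambda_i^\star|x|$ for $|x|\le\tfrac{1}{2}\sqrt{\lambda_i^\star}$, exactly the kind of tightening you flagged as the step requiring care.
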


One can then exploit \Cref{lem::small-ball-prob} to establish high-probability upper bounds on the quantity $\Pi_T$ defined in (\ref{eq:defn-PiT-UT-bound}). The resulting bounds are stated in the lemma below, whose proof is deferred to \Cref{sec:proof:lem:prod-mgf}. 
\begin{lemma}
\label{lem:prod-mgf}
Consider any $\delta\in(0,1)$.
Then, with probability at least $1-\delta$, the following results hold. 

\begin{itemize}
\item[(i)] If $k=r$, then
\begin{equation}
    \Pi_T \le
\exp\left(
O\left(T\log\Big(\frac{r\kappa}{1-\rho}\Big) + \log\frac{1}{\delta}\right)
\right).
\end{equation}

\item[(ii)] If $k>r$, then
\begin{equation}
    \Pi_T \le
\exp\left(O\left(T^2+T\log\left(\frac{(k-r)r\kappa}{1-\rho}\right)
+
\log\frac{1}{\delta}\right)
\right).
\end{equation}
\end{itemize}
In particular, if $T=\left\lceil\frac{1}{1-\rho} \log \big( \frac{16\lambda_{\max}^{\star}}{(1-\rho)^2\varepsilon} \big) \right\rceil$ and $\delta=\mathsf{poly}(\varepsilon/\lambda_{\max}^{\star})$, then with probability at least $1-\delta$ one has
\begin{equation}
\Pi_{T}\leq\bigg(\frac{\lambda_{\max}^{\star}}{\varepsilon}\bigg)^{\zeta_{\mathrm{exp}}}~\text{with }\zeta_{\mathrm{exp}}=\begin{cases}
O\left(\frac{1}{1-\rho} \log\left(\frac{\lambda_{\max}^{\star}}{(1-\rho)\varepsilon}\right)\log\left(\frac{r\kappa}{1-\rho}\right)\right), & \text{if }k=r.\\
O\left(\frac{1}{(1-\rho)^{2}}\log^{2}\left(\frac{\lambda_{\max}^{\star}}{(1-\rho)\varepsilon}\right)+\frac{1}{1-\rho}\log\left(\frac{\lambda_{\max}^{\star}}{(1-\rho)\varepsilon}\right)\log\left(\frac{(k-r)r\kappa}{1-\rho}\right)\right), & \text{if }k>r.
\end{cases}\label{eq:PiT-exponent-UB-special}
\end{equation}
\end{lemma}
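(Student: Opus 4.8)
To prove the lemma, the plan is to pass to logarithms and split $\log\Pi_T=\sum_{t=1}^{T-1}\log\!\big(1+C_t/Y_t\big)$ — where I write $Y_t:=\sigma_{\min}(\nabla f(\widetilde{\mU}_t))$ and $C_t:=294(\lambda_{\max}^{\star})^{3/2}\rho^t/(1-\rho)^2$ — into a deterministic piece evaluated by a direct calculation and a random piece controlled by a conditional-moment argument driven by \Cref{lem::small-ball-prob}. Recall from the discussion preceding the lemma that $Y_t>0$ for all $t\ge1$ almost surely. The basic tool is the elementary inequality $\log(1+C/Y)\le\log(1+C/\varepsilon)+\big(\log(\varepsilon/Y)\big)_+$, valid for all $C,Y,\varepsilon>0$ (trivial if $Y\ge\varepsilon$; if $Y<\varepsilon$, note $1+C/Y\le(\varepsilon/Y)(1+C/\varepsilon)$). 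Applying it with deterministic thresholds $\varepsilon_t$ and summing over $t$ gives $\log\Pi_T\le A_{\mathrm{det}}+A_{\mathrm{rnd}}$ with $A_{\mathrm{det}}:=\sum_{t=1}^{T-1}\log(1+C_t/\varepsilon_t)$ and $A_{\mathrm{rnd}}:=\sum_{t=1}^{T-1}\big(\log(\varepsilon_t/Y_t)\big)_+$.

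For the thresholds I would take $\varepsilon_t:=c_\star\min\{(\lambda_{\max}^{\star})^{3/2}\rho^{3t}/(k-r)^3,\ \lambda_{\min}^{\star}\sqrt{\lambda_{\max}^{\star}}\rho^{t}/r,\ (\lambda_{\min}^{\star})^{3/2}\}$ for a sufficiently small universal constant $c_\star$ (with $1/(k-r)^3:=+\infty$ when $k=r$): the first two terms are calibrated to kill the two coefficients appearing in \Cref{lem::small-ball-prob} after integration, and the last ensures $\varepsilon_t\le\tfrac14(\lambda_{\min}^{\star})^{3/2}$ so that the lemma is applicable. To control $A_{\mathrm{rnd}}$, fix $\theta=\tfrac16$ and, for $t\ge2$, bound the conditional moment generating function by the layer-cake identity $\bE[\max\{1,(\varepsilon_t/Y_t)^{\theta}\}\mid\mathcal F_{t-1}]=1+\int_1^\infty\bP(Y_t<\varepsilon_t s^{-1/\theta}\mid\mathcal F_{t-1})\,ds$; since $\varepsilon_t s^{-1/\theta}\le\varepsilon_t\le\tfrac14(\lambda_{\min}^{\star})^{3/2}$, \Cref{lem::small-ball-prob} applies, and with $\int_1^\infty s^{-2}\,ds=1$, $\int_1^\infty s^{-6}\,ds=\tfrac15$ and the choice of $\varepsilon_t$ this yields $\bE[\exp(\theta(\log(\varepsilon_t/Y_t))_+)\mid\mathcal F_{t-1}]\le3$. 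The $t=1$ factor is handled identically using $\bP(Y_1\le\varepsilon\mid\mathcal F_0)=O(r\varepsilon/(\lambda_{\max}^{\star})^{3/2})$, which follows from $Y_1=\eta_0\min_i|\eta_0^2-\lambda_i^\star|$ and the uniform law of $C_{\eta,0}$. Chaining these conditional bounds along the filtration yields $\bE[\exp(\theta A_{\mathrm{rnd}})]\le3^{T-1}$, so Markov's inequality gives: with probability $\ge1-\delta$, $A_{\mathrm{rnd}}\le\tfrac1\theta((T-1)\log3+\log\tfrac1\delta)=O(T+\log\tfrac1\delta)$.

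It then remains to bound $A_{\mathrm{det}}$ by substituting the $\varepsilon_t$'s: one finds $C_t/\varepsilon_t=\Theta(\max\{(k-r)^3/((1-\rho)^2\rho^{2t}),\,r\kappa/(1-\rho)^2,\,\kappa^{3/2}/(1-\rho)^2\})$, hence $\log(1+C_t/\varepsilon_t)=O(\log\tfrac{(k-r)r\kappa}{1-\rho}+t\log\tfrac1\rho)$ in general and $O(\log\tfrac{r\kappa}{1-\rho})$ when $k=r$ (the $(k-r)$-term vanishes). Summing over $1\le t\le T-1$ and using $\log\tfrac1\rho=O(1)$ (since $\rho\ge\tfrac12$) yields $A_{\mathrm{det}}=O(T\log\tfrac{r\kappa}{1-\rho})$ if $k=r$ and $A_{\mathrm{det}}=O(T^2+T\log\tfrac{(k-r)r\kappa}{1-\rho})$ if $k>r$. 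Adding the $A_{\mathrm{det}}$ and $A_{\mathrm{rnd}}$ bounds and exponentiating gives parts (i) and (ii). For the ``in particular'' statement I would then substitute $T=\lceil\tfrac1{1-\rho}\log\tfrac{16\lambda_{\max}^{\star}}{(1-\rho)^2\varepsilon}\rceil$ and $\delta=\mathsf{poly}(\varepsilon/\lambda_{\max}^{\star})$ (so $\log\tfrac1\delta=O(\log\tfrac{\lambda_{\max}^{\star}}{\varepsilon})$), write $\Pi_T=(\lambda_{\max}^{\star}/\varepsilon)^{\log\Pi_T/\log(\lambda_{\max}^{\star}/\varepsilon)}$, and simplify (using $\varepsilon\le\lambda_{\max}^{\star}$) to read off the stated $\zeta_{\mathrm{exp}}$.

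The main obstacle is the conditional-moment step. Since the small-ball estimate for $Y_t$ has only a cube-root exponent in the $(k-r)$-term, the moment exponent must be taken strictly below $1/3$; the resulting choice $\theta=\tfrac16$ has to be threaded consistently through the layer-cake integrals, and the thresholds $\varepsilon_t$ must be delicately calibrated so that both coefficients in \Cref{lem::small-ball-prob} become $\le1$ after integration while remaining large enough that each term $\log(1+C_t/\varepsilon_t)$ in $A_{\mathrm{det}}$ is only logarithmic — getting these constants to line up (together with verifying $\varepsilon_t\le\tfrac14(\lambda_{\min}^{\star})^{3/2}$ and separately treating the $t=1$ factor, whose randomness comes solely from $C_{\eta,0}$) is the technical heart of the proof.
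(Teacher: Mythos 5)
Your proposal is correct, and at its core it runs on the same engine as the paper's proof: a conditional tail bound from \Cref{lem::small-ball-prob}, converted into a conditional moment bound with exponent strictly below $1/3$ (you fix $\theta=1/6$; the paper uses $\theta\in(0,1/6]$ for $k>r$ and $\theta=1/2$ for $k=r$), chained along the filtration via the tower property, and finished with Markov. The difference is purely in the bookkeeping: the paper bounds the conditional MGF of each full term $X_t=\log(1+C_t/Y_t)$ directly, with thresholds $\tau_0$ built from the quantities $A$, $A_{1,t}$, $A_2$, so that the deterministic factors $(1+A)^{\theta}$ or $(1+A_{1,t}^3)^{\theta}(1+A_2)^{\theta}$ emerge inside the MGF and are later extracted in the Chernoff step; you instead peel off a deterministic part up front via $\log(1+C/Y)\le\log(1+C/\varepsilon_t)+(\log(\varepsilon_t/Y))_+$, evaluate $A_{\mathrm{det}}$ by hand, and only need a constant ($\le 3$) conditional MGF bound for the overshoot. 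The two routes produce the same exponents, and in both the $T^2$ term for $k>r$ has the same origin (the $(k-r)$-dependent scale decays like $\rho^{3t}$ against $C_t\propto\rho^t$, costing $O(t)$ per step); your version trades the paper's threshold-free MGF computation for a calibration of $\varepsilon_t$, but buys a cleaner separation of randomness from the deterministic growth, and your explicit treatment of the $t=1$ factor (whose randomness comes only from $C_{\eta,0}$, since \Cref{lem::small-ball-prob} as stated covers only indices $\ge 2$) is in fact slightly more careful than the paper, which applies the small-ball bound at $t=1$ without comment. The concluding ``in particular'' step is handled at the same level of detail as in the paper.
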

Taking this lemma together with (\ref{eq:defn-PiT-UT-bound}) yields: 
with probability at least $1- 0.001(\varepsilon/\lambda_{\max}^{\star})$, we have
\begin{equation}
    \label{eq:UtUt-Utilde-Utilde-ub-mf}
    \big\|\mU_T-\widetilde{\mU}_T\big\|
    \leq 2\alpha \bigg(\frac{\lambda_{\max}^{\star}}{\varepsilon}\bigg)^{\zeta_{\mathrm{exp}}}
    \leq \frac{(1-\rho)\varepsilon}{16\sqrt{\lambda_{\max}^\star}}.
\end{equation}
provided that 
\begin{align}
\alpha 
\leq \frac{(1-\rho)\varepsilon}{32\sqrt{\lambda_{\max}^\star}} \bigg(\frac{\varepsilon}{\lambda_{\max}^{\star}}\bigg)^{\zeta_{\mathrm{exp}}}.
\end{align}

\paragraph{Step 4.4: putting everything together.} 
Invoking (\ref{eq:auxiliary-convergence-mf-Sigma}),
(\ref{eq:UtUt-Utilde-Utilde-ub-mf}) and \Cref{lem:diff-Ut-Utilde-t-grad-mf}, and
applying the union bound, 
we conclude that with probability at least $0.99$,
\begin{equation}
    \begin{aligned}
        \big\|\mU_T\mU_T^\top-\mM^\star\big\|
    &\leq \big\|\widetilde{\mU}_T\widetilde{\mU}_T^\top-\mM^\star\big\| + \big\|\mU_T\mU_T^\top-\widetilde{\mU}_T\widetilde{\mU}_T^\top\big\|
    \\
    &\leq
    \big\|\widetilde{\mU}_T\widetilde{\mU}_T^\top-\mM^\star\big\|
    +
    \left(\norm{\mU_T}+\big\|\widetilde{\mU}_T\big\|\right)\big\|\mU_{T}-\widetilde{\mU}_{T}\big\|\\
    &\leq \frac{\varepsilon}{2}+2\cdot \frac{4\sqrt{\lambda_{\max}^\star}}{1-\rho}\cdot \frac{(1-\rho)\varepsilon}{16\sqrt{\lambda_{\max}^\star}} =\varepsilon, 
    \end{aligned}
\end{equation}
provided that $\alpha$ is sufficiently small. 
This completes the proof.

\section{Analysis for linear transformers (proof of \Cref{thm::muon-transformer})}
\label{sec:analysis-muon-transformer}

Recall that the gradient of $f(\mQ)$ w.r.t.~$\mQ$ is given by
\begin{align}
\nabla f(\mQ)
 = 
 \mS (\mS \mQ - \mI) \mS
 = 
 \mS^2 \mQ \mS -  \mS^2.
\end{align}
To proceed, let us denote the eigen-decomposition of $\mS$ as $\mS=\mV^{\star}\mLambda^{\star}\mV^{\star\top}$, where $\mLambda^{\star}=\diag\{\lambda_1^{\star},\dots,\lambda_d^{\star}\}$ is a diagonal matrix containing the eigenvalues $\{\lambda_i^{\star}\}$ of $\mS$, and 
$\mV^{\star}$ consists of orthonormal columns corresponding to the eigenvectors of $\mS$. As a key step of this proof, we would like to show that: 
\begin{lemma}
\label{lem:Q-decompose-transformer}
For each $t\geq 0$, 
the simplified \muon iterates (\ref{eq:muon-update-transformer}) can be decomposed as 
\begin{subequations}
\begin{align}
    \label{eq:Qt-decompose-lem-transformer}
    \mQ_t = \mV^{\star} \mTheta_t \mV^{\star\top}
\end{align}
for some diagonal matrix $\mTheta_t=\diag\{\theta_{1,t},\dots,\theta_{d,t}\}$. In particular, $\{\mTheta_t\}$ evolves according to
\begin{align}
\mTheta_{t+1} = \mTheta_t - \eta_t \dsign(\mLambda^{\star} \mTheta_t - \mI),\qquad t=0,1,\cdots 
\end{align}
\end{subequations}
where $\dsign(\mM)\coloneqq \diag\{\sign(M_{1,1}),\dots, \sign(M_{d,d})\}$ for any diagonal matrix $\mM=\diag\{M_{1,1},\dots,M_{d,d}\}$.
\end{lemma}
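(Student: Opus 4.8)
The plan is to prove \Cref{lem:Q-decompose-transformer} by a direct induction on $t$, exactly parallel to the argument underlying \Cref{lem:muon-mf-special-init}: the key observation is that conjugation by the \emph{fixed} orthogonal matrix $\mV^{\star}$ turns the \muon update into a purely diagonal recursion. The base case is immediate: since $\mQ_0 = \mathbf{0} = \mV^{\star}\mathbf{0}\,\mV^{\star\top}$, we may take $\mTheta_0 = \mathbf{0}$, which is trivially diagonal.

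For the inductive step I would assume $\mQ_t = \mV^{\star}\mTheta_t\mV^{\star\top}$ with $\mTheta_t$ diagonal, and substitute the eigen-decomposition $\mS = \mV^{\star}\mLambda^{\star}\mV^{\star\top}$ together with $\mV^{\star\top}\mV^{\star} = \mI$ into the gradient:
\begin{align*}
\nabla f(\mQ_t) = \mS^2\mQ_t\mS - \mS^2 = \mV^{\star}\big(\mLambda^{\star 2}\mTheta_t\mLambda^{\star} - \mLambda^{\star 2}\big)\mV^{\star\top} = \mV^{\star}\,\mLambda^{\star 2}\big(\mLambda^{\star}\mTheta_t - \mI\big)\,\mV^{\star\top},
\end{align*}
where the last equality uses that all factors are diagonal and hence commute. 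Writing $\mD_t \coloneqq \mLambda^{\star 2}(\mLambda^{\star}\mTheta_t - \mI)$, a diagonal matrix, I would then argue that $\msign$ passes through the conjugation by $\mV^{\star}$: since $\mV^{\star}$ is a \emph{square} orthogonal matrix (recall $\mS$ is invertible, so $\mV^{\star}\in\bR^{d\times d}$), an SVD $\mD_t = \mU_D\mSigma_D\mV_D^{\top}$ yields the valid SVD $\mV^{\star}\mD_t\mV^{\star\top} = (\mV^{\star}\mU_D)\mSigma_D(\mV^{\star}\mV_D)^{\top}$, whence $\msign(\mV^{\star}\mD_t\mV^{\star\top}) = \mV^{\star}\,\msign(\mD_t)\,\mV^{\star\top} = \mV^{\star}\,\dsign(\mD_t)\,\mV^{\star\top}$. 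Finally, since $\lambda_i^{\star 2} > 0$ for every $i$, we have $\dsign(\mD_t) = \dsign(\mLambda^{\star}\mTheta_t - \mI)$, so that
\begin{align*}
\mQ_{t+1} = \mQ_t - \eta_t\,\msign\big(\nabla f(\mQ_t)\big) = \mV^{\star}\Big(\mTheta_t - \eta_t\,\dsign(\mLambda^{\star}\mTheta_t - \mI)\Big)\mV^{\star\top},
\end{align*}
which is exactly the claimed form, with $\mTheta_{t+1} = \mTheta_t - \eta_t\,\dsign(\mLambda^{\star}\mTheta_t - \mI)$ still diagonal. This closes the induction.

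The only delicate point---which I do not expect to be a genuine obstacle---is the identity $\msign(\mD_t) = \dsign(\mD_t)$ when $\mD_t$ has a vanishing diagonal entry, i.e.\ when $\theta_{i,t} = 1/\lambda_i^{\star}$ and the $i$-th coordinate already solves the problem. In that case the matrix sign is not unique (the strict definition would force a $\pm 1$ in that slot), and I would simply adopt the convention $\sign(0) = 0$ consistent with the definition of $\dsign$ used throughout the paper, so that an already-optimal coordinate stays fixed; this choice is harmless for everything downstream. Apart from this bookkeeping, the argument is pure diagonal algebra.
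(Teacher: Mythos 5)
Your proposal is correct and follows essentially the same route as the paper: induction on $t$, with the gradient diagonalized in the $\mV^{\star}$ basis and $\msign$ commuting with the orthogonal conjugation so that it reduces to $\dsign$ of the diagonal factor (the paper writes this factor as $\mLambda^{\star 3}\mTheta_t-\mLambda^{\star 2}$, identical to your $\mLambda^{\star 2}(\mLambda^{\star}\mTheta_t-\mI)$). Your remark on the $\sign(0)=0$ convention is consistent with the paper's compact-SVD definition of $\msign$, so no gap remains.
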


\begin{proof}[Proof of \Cref{lem:Q-decompose-transformer}]
The base case with $t=0$ holds trivially, since the initialization $\mQ_0=\mathbf{0}$ is equivalent to taking 
$\mLambda_0 = \mathbf{0}$. Assuming the inductive hypothesis (\ref{eq:Qt-decompose-lem-transformer}) holds at step $t$, we have
\[
\nabla f(\bm{Q}_t) = 
\mS^2\mQ_t\mS - \mS^2 
= 
\mV^{\star} \big(\mLambda^{\star 3}\mTheta_t -\mLambda^{\star 2}\big)\mV^{\star\top}, 
\]
and as a result, 
\begin{equation}
    \begin{aligned}
        \mQ_{t+1}
        &=\mQ_t-\eta_t \msign\big( \nabla L(\bm{Q}_t) \big) \\
        &=\mQ_t-\eta_t \msign\big(\mV^{\star}(\mLambda^{\star 3}\mTheta_t-\mLambda^{\star 2})\mV^{\star\top}\big)\\
        &=\mQ_t-\eta_t \mV^{\star}\dsign(\mLambda^{\star 3}\mTheta_t-\mLambda^{\star 2})\mV^{\star\top}\\
        &=\mV^{\star}\big(\mTheta_t-\eta_t \dsign(\mLambda^{\star 3}\mTheta_t-\mLambda^{\star 2})\big)\mV^{\star\top}\\
        &=\mV^{\star}\big(\mTheta_t-\eta_t \dsign(\mLambda^{\star}\mTheta_t-\mI)\big)\mV^{\star\top}.
    \end{aligned}
\end{equation}
This implies the decomposition $\mQ_{t+1}=\mV^{\star}\mTheta_{t+1}\mV^{\star\top}$, where the diagonal matrix $\mTheta_{t+1}$ can be computed as $\mTheta_{t+1}=\mTheta_t-\eta_t \sign(\mLambda^{\star}\mTheta_t-\mI)$. The proof is thus complete by induction.  
\end{proof}

Importantly, \Cref{lem:Q-decompose-transformer} indicates that the \muon dynamics can be decomposed into a collection of scalar sequences obeying 
\begin{align}
\theta_{i,t+1}=\theta_{i,t}-\eta_t \sign(\lambda_i^{\star}\theta_{i,t}-1),
\qquad t=0,1,\cdots
\end{align}
for each $1\leq i\leq d$. 
As it turns out, the convergence rate of each scalar sequence $\{\theta_{i, t}\}_{t\geq 0}$ can be analyzed through the following lemma.
\begin{lemma}
\label{lem::1d-muon-transformer}
Consider a scalar sequence $\{\theta_t\}_{t\geq 0} \subset \mathbb{R}$ obeying 
$$\theta_{t+1}=\theta_t-\eta_t \sign(\lambda^{\star}\theta_t-1),$$ where the scalar $\lambda^{\star}$ satisfies $\lambda^{\star}\geq \lambda_{\min}^\star>0$. Set the learning rate schedule to be $\eta_t = \frac{C_\eta}{\lambda_{\min}^\star}\rho^t$
for some quantities $1/2 \le \rho < 1$ and $C_\eta \ge 1$. With the initialization $\theta_0 = 0$, one has 
\begin{equation}
\bigg| \theta_{t+1} - \frac{1}{\lambda^{\star}} \bigg| \le \eta_t = \frac{C_\eta}{\lambda_{\min}^\star}\rho^{t}
\qquad 
 \text{for all } t \ge 0.
\end{equation}
\end{lemma}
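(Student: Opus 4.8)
The plan is to mirror the scalar analysis of \Cref{lem::1d-muon}, exploiting that for $\lambda^\star>0$ the update direction $\sign(\lambda^\star\theta_t-1)$ coincides with $\sign(\theta_t-1/\lambda^\star)$, so that each iteration simply moves $\theta_t$ a fixed step $\eta_t$ toward its target $1/\lambda^\star$. Concretely, I would introduce the error quantity $\Delta_t\coloneqq|\theta_t-1/\lambda^\star|$ and first record the one-step inequality $\Delta_{t+1}\le\max\{\Delta_t-\eta_t,\eta_t\}$: when $\theta_t\neq 1/\lambda^\star$ this follows by the same case analysis on the sign as in \Cref{eq:delta-rec}, giving in fact $\Delta_{t+1}=|\Delta_t-\eta_t|$; when $\theta_t=1/\lambda^\star$ the iterate is stationary (since $\sign(0)=0$) and $\Delta_{t+1}=0$, which is trivially dominated by the right-hand side.

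Next I would establish the claimed bound $\Delta_{t+1}\le\eta_t$ by induction on $t$. For the base case $t=0$: since $\theta_0=0$ we have $\Delta_0=1/\lambda^\star\le 1/\lambda_{\min}^\star\le C_\eta/\lambda_{\min}^\star=\eta_0$, and plugging $\Delta_0\in[0,\eta_0]$ into the one-step inequality yields $\Delta_1\le\max\{\Delta_0-\eta_0,\eta_0\}=\eta_0$. For the inductive step, assuming $\Delta_{t+1}\le\eta_t$, the one-step inequality gives $\Delta_{t+2}\le\max\{\eta_t-\eta_{t+1},\eta_{t+1}\}$; since $\eta_{t+1}=\rho\eta_t$ with $\rho\ge 1/2$ we have $\eta_t-\eta_{t+1}=(1-\rho)\eta_t\le\rho\eta_t=\eta_{t+1}$, hence $\Delta_{t+2}\le\eta_{t+1}$, which closes the induction. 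Substituting $\eta_t=(C_\eta/\lambda_{\min}^\star)\rho^t$ then yields the stated conclusion.

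Since each step is elementary, I do not anticipate a genuine obstacle; the only point requiring care is the possibility that some $\theta_t$ lands exactly on $1/\lambda^\star$, which is why the one-step bound is phrased as an inequality rather than the equality $\Delta_{t+1}=|\Delta_t-\eta_t|$. Unlike the matrix-factorization scalar recursion in Step~1, no ``never-reach-zero'' argument and no randomization of $C_\eta$ are needed here, because the unique stationary point $1/\lambda^\star$ coincides with the desired target rather than being a spurious fixed point that the trajectory must avoid.
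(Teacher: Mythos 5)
Your proposal is correct and follows essentially the same route as the paper's proof: define $\Delta_t=|\theta_t-1/\lambda^{\star}|$, use $\sign(\lambda^{\star}\theta_t-1)=\sign(\theta_t-1/\lambda^{\star})$ to get the one-step bound $\Delta_{t+1}\le\max\{\Delta_t-\eta_t,\eta_t\}$ (with the stationary case handled by $\sign(0)=0$), verify the base case from $\Delta_0=1/\lambda^{\star}\le\eta_0$, and close the induction via $\eta_t-\eta_{t+1}=(1-\rho)\eta_t\le\rho\eta_t=\eta_{t+1}$ for $\rho\ge 1/2$. Your closing remark that no never-reach-zero argument or randomized $C_\eta$ is needed here is also consistent with the paper's treatment.
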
 

To finish up, applying \Cref{lem::1d-muon-transformer} to each scalar sequence $\{\theta_{i, t}\}_{t\geq 0}$, we arrive at
\begin{equation}
\big\| \mQ_{t+1} - \mS^{-1} \big\| 
=\big\| \mTheta_{t+1}-(\mLambda^{\star})^{ -1} \big\| 
=\max_{1\leq i\leq d}\bigg| \theta_{i,t+1} - \frac{1}{\lambda_i^{\star}} \bigg| \le \eta_t. 
\end{equation}
Thus, in order to ensure 
$\big\| \mQ_{T} - \mS^{-1} \big\| \leq \varepsilon$, it suffices to take
$T\ge \frac{1}{1-\rho}\log\big(\frac{C_{\eta}}{\sigma_{\min}(\mS)\varepsilon}\big)$.

\begin{proof}[Proof of \Cref{lem::1d-muon-transformer}]
The proof is analogous to the proof of \Cref{lem::1d-muon}. Define the metric
\begin{align}
\Delta_t \coloneqq \bigg| \theta_{t} - \frac{1}{\lambda^{\star}} \bigg|. 
\end{align}
To bound $\Delta_t$, we first observe that
\begin{equation}
    \begin{aligned}
        \theta_{t+1}-\frac{1}{\lambda^{\star}}&=\theta_t-\frac{1}{\lambda^{\star}}-\eta_t \sign(\lambda^{\star}\theta_t-1)\\
        &=\theta_t-\frac{1}{\lambda^{\star}}-\eta_t \sign\left(\theta_t-\frac{1}{\lambda^{\star}}\right)
        =\sign\left(\theta_t-\frac{1}{\lambda^{\star}}\right)\left(\left|\theta_t-\frac{1}{\lambda^{\star}}\right|-\eta_t\right).
    \end{aligned}
    \label{eq:thetat-recurrence-tf}
\end{equation}
If $\theta_t = 1/\lambda^{\star}$, then it is readily seen from (\ref{eq:thetat-recurrence-tf}) and $\sign(0)=0$ that $\Delta_{t+1}=0 \leq \eta_t$. If instead  $\theta_t \neq  1/\lambda^{\star}$, then it follows from  (\ref{eq:thetat-recurrence-tf}) that 
\begin{equation}
\Delta_{t+1} = \big| \Delta_t - \eta_t \big| \le \max\{\Delta_t - \eta_t, \eta_t\}.
\label{eq:Delta-t-UB-transformer}
\end{equation}
In summary, this inequality (\ref{eq:Delta-t-UB-transformer}) holds for both cases, which 
coincides with the bound (\ref{eq:delta-rec}) in the proof of \Cref{lem::1d-muon}. 

When $t=0$, it holds that $$\Delta_1=|\Delta_0-\eta_0|=\left|\frac{1}{\lambda^{\star}}-\frac{C_\eta}{\lambda_{\min}^\star}\right|\leq \frac{C_\eta}{\lambda_{\min}^\star}=\eta_0.$$ 
Then, repeating the same arguments as in the proof of \Cref{lem::1d-muon}, we conclude that
$$\left| \theta_{t+1} - \frac{1}{\lambda^{\star}} \right| = \Delta_{t+1} \le \eta_t = \frac{C_\eta}{\lambda_{\min}^\star}\rho^t$$
as claimed. 
\end{proof}

\section{Discussion}

In this paper, we have rigorously characterized the preconditioning benefits of \muon for two matrix optimization problems: matrix factorization, and in-context learning of linear transformers. Our theory implies that \muon's spectral orthogonalization acts as a form of adaptive preconditioners, effectively transforming its dynamics into independent scalar sequences in the spectral domain, each converging at a comparable rate. Both theoretical analyses and empirical studies suggest that \muon yields better-conditioned optimization trajectories, achieving faster convergence than \gd and \adam. We anticipate that this preconditioning mechanism plays a key role in accelerating various matrix-structured optimization problems, and that it may inform the design of new spectrum-aware optimization algorithms.

As noted previously, our theoretical analysis  is limited to two simple problems. This naturally opens up various avenues for future research. We conclude by highlighting two important directions.
\begin{itemize}
    \item {\em Extension to other matrix-structured problems.} 
    Given the limited scope of our analysis to two problems, a natural next step is to investigate whether the preconditioning effect of \muon generalizes to other matrix-structured tasks. 
    In addition to other nonconvex matrix factorization problems described in \cite{chi2019nonconvex}, one  potential example is the matrix linear regression problem given by $$
        \text{minimize}_{\mW\in \bR^{m\times n}} \norm{\mW\mX - \mY}_{\fro}^2,$$ 
        which generalizes classical linear regression to a matrix setting. This problem not only serves as a useful testbed for theoretical analysis, but also captures the training dynamics of linear layers in neural networks. Recent papers have begun to explore this space: \citet{davis2025spectral} derived a criterion under which \muon outperforms \gd in a single step, while \citet{das2024towards} investigated the preconditioning effect of \adam in the vector case. Extending these insights to broader matrix-valued problems could illuminate how \muon interacts with layer-wise structures and whether spectrum-aware optimizers yield more efficient or stable training.

\item {\em Toward a general theory.} Another important direction is to develop a unified theoretical framework that elucidates the preconditioning and acceleration effects of \muon under broad, practically relevant conditions, such as gradient Lipschitz continuity. While recent research has made progress in this direction \citep{davis2025spectral,su2025isotropic,shen2025convergence}, existing analyses remain limited in several key aspects: some  rely on idealized models, others impose intricate per-iteration conditions whose validity has yet to be rigorously established, and many fall short of explaining the observed empirical advantage of \muon over classical optimizers. Overcoming these limitations will  require deeper insight into both the geometry of the loss landscape---especially in transformer architectures---and the way in which \muon's updates dynamically reshape the optimization trajectories. It would also be of great interest to investigate whether important structural properties arising in neural network training, such as block Hessians \citep{zhang2024transformers}, can be efficiently exploited by \muon.

\end{itemize}

\section*{Acknowledgments}

Y.~Chen is supported in part by the Alfred P.~Sloan Research Fellowship,  the ONR grant N00014-25-1-2344,  the NSF grants 2221009 and 2218773, 
the Wharton AI \& Analytics Initiative's AI Research Fund, 
and the Amazon Research Award. Y. Chi is supported in part by NSF under grant ECCS-2537078 and AFOSR under grant FA9550-25-1-0060. Any opinions, findings, and conclusions or recommendations expressed in this material are those of the authors and do not necessarily reflect the views of the United States Air Force.

 \bibliographystyle{apalike}
\bibliography{ref.bib}

\newpage
\appendix
\resumetocwriting
\tableofcontents

\section{Connection between \muon and \scaledgd for matrix factorization}
\label{sec::connection-muon-scaledgd}

A provably efficient preconditioned optimizer for matrix factorization is \scaledgd \citep{tong2021accelerating}, which also achieves convergence rates independent of the condition number. As it turns out, there are some inherent connections between \muon and \scaledgd. More concretely, the update rule of simplified \muon yields
\begin{equation}
\label{eq:muon-connection-mf}
\mU_{t+1} = \mU_t - \eta_t \nabla f(\mU_t) \big( \nabla f(\mU_t)^\top \nabla f(\mU_t) \big)^{-1/2} = \mU_t - \eta_t \nabla f(\mU_t)\big( \mU_t^\top \mDelta_t^2 \mU_t \big)^{-1/2},
\end{equation}
where $\mDelta_t \coloneqq \mU_t\mU_t^\top - \mM^\star$. 
In comparison, the update rule of  \scaledgd is given by
\begin{equation}
\label{eq:scale-gd-mf}
\mU_{t+1} =  \mU_t - \beta_t \nabla f(\mU_t)\big( \mU_t^\top  \mU_t \big)^{-1}
\end{equation}
for some learning rate $\beta_t>0$. In other words, \muon constructs its preconditioner from the gradient, whereas \scaledgd builds its preconditioner from the iterate itself. In the idealistic case where $\mDelta_t^2 \approx c_t \mU_t\mU_t^{\top}$ for some scalar $c_t>0$,  (\ref{eq:muon-connection-mf}) can be simplified as
\begin{equation}
\label{eq:muon-connection-simplified-mf}
\mU_{t+1} \approx \mU_t - \eta_t c_t \nabla f(\mU_t)\big( \mU_t^\top \mU_t \big)^{-1},
\end{equation}
which coincides with the \scaledgd update (\ref{eq:scale-gd-mf}) up to proper scaling of the learning rate.

In general, the condition $\mDelta_t^2 \approx c_t \mU_t\mU_t^{\top}$ cannot possibly hold, but it offers some useful insight in the local regime $\mU_t\mU_t\approx \mM^{\star}$. Adopting once again the simplifying assumption (\ref{eq:assumption-Ut-mf-intuition}), we derive
\begin{equation}
    \mDelta_t=\mV^{\star}\left(\mSigma_t^2-\mLambda^\star\right)\mV^{\star\top}\quad \text{and}\quad \mU_t\mU_t^{\top}=\mV^{\star}\mSigma_t^2\mV^{\star\top}\approx \mV^{\star}\mLambda^\star\mV^{\star\top}.
\end{equation}
To ensure $\mDelta_t^2\approx \mU_t \mU_t^\top$, one needs to show that $(\mSigma_t^2 - \mLambda^\star)^2 \approx c_t \mLambda^\star$, or equivalently,  $$(\sigma_{i, t}^2-\lambda_i^\star)^2\approx c_t \lambda_i^\star, \qquad 1\leq i\leq r.$$
Given that
$\sigma_{i, t}^2-\lambda_i^\star=\big(\sigma_{i, t}-\sqrt{\lambda_i^\star}\big)\big(\sigma_{i, t}+\sqrt{\lambda_i^\star}\big)\approx 2\lambda_i^{\star}\big(\sigma_{i, t}-\sqrt{\lambda_i^\star}\big)$, this condition is equivalent to
\begin{equation}
    4 \big(\sigma_{i, t}-\sqrt{\lambda_i^\star}\big)^2 \approx c_t, \qquad 1\leq i\leq r.
    \label{eq:condition-sigma-lambda-intuition}
\end{equation}

\begin{figure}[t]
     \centering
     \begin{subfigure}[b]{0.3\textwidth}
         \centering
         \includegraphics[width=\textwidth]{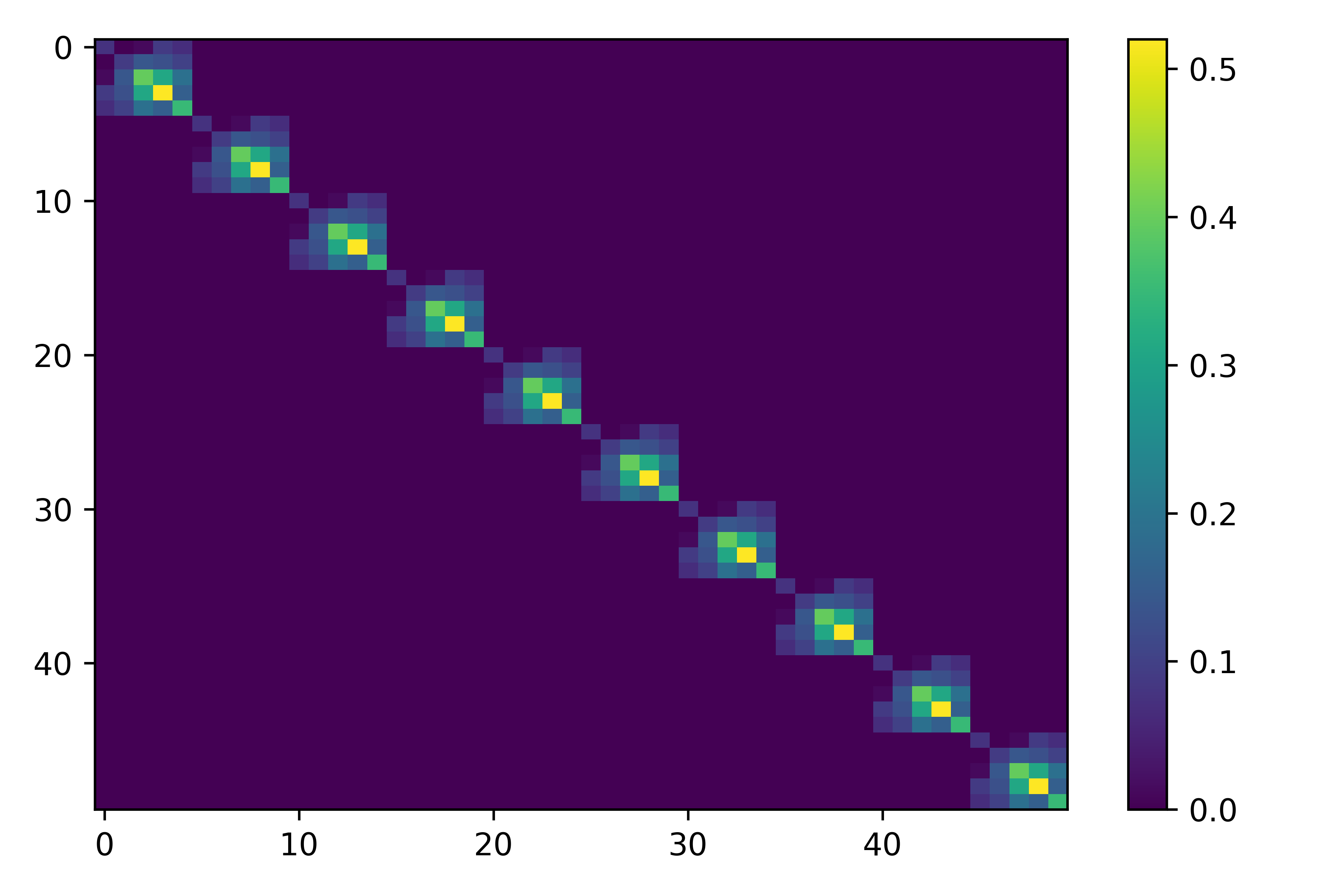}
         \caption{\muon at $t=0$}
         \label{fig::muon-0}
     \end{subfigure}
     \begin{subfigure}[b]{0.3\textwidth}
         \centering
         \includegraphics[width=\textwidth]{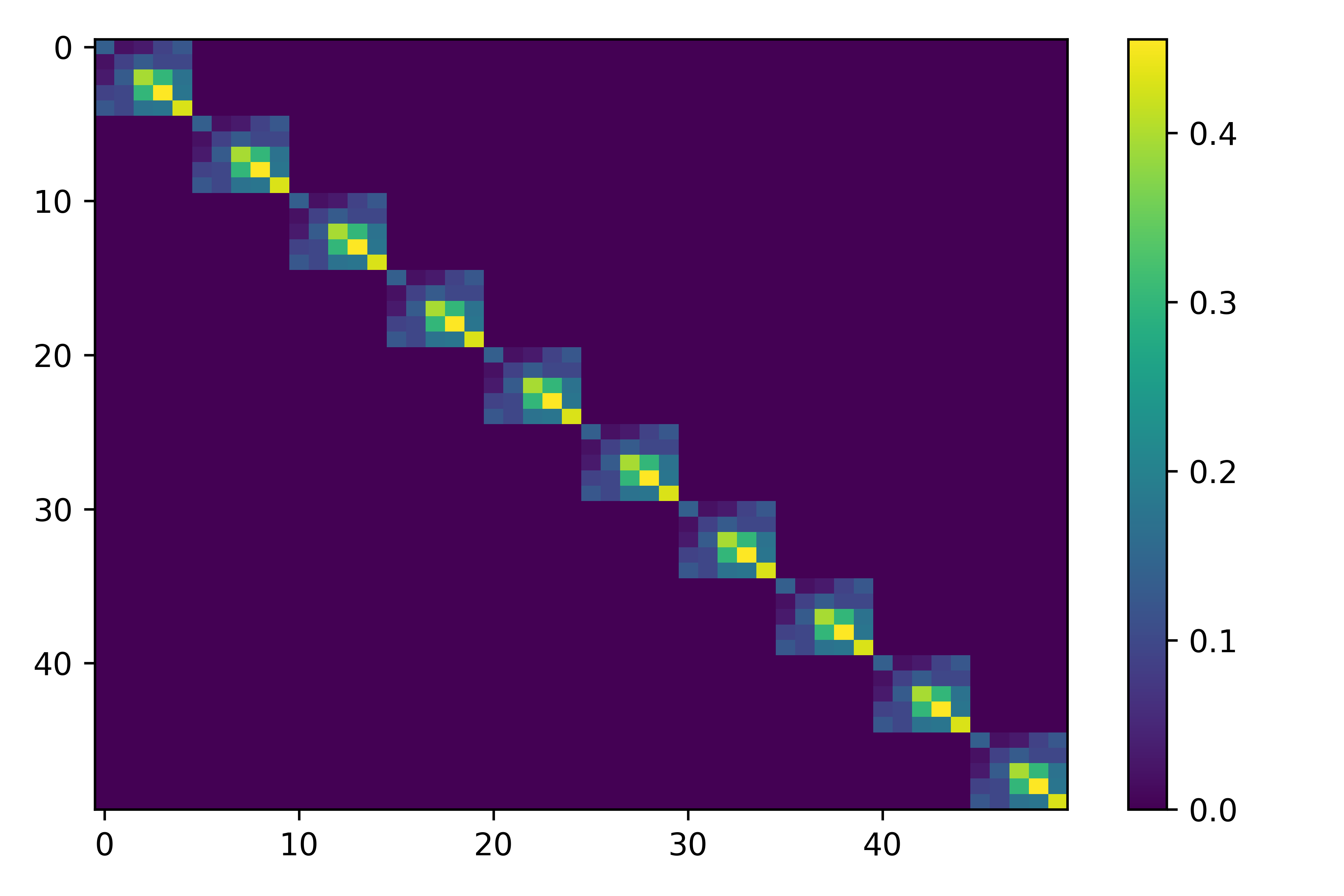}
         \caption{\muon at $t=500$}
         \label{fig::muon-500}
     \end{subfigure}
     \begin{subfigure}[b]{0.3\textwidth}
         \centering
         \includegraphics[width=\textwidth]{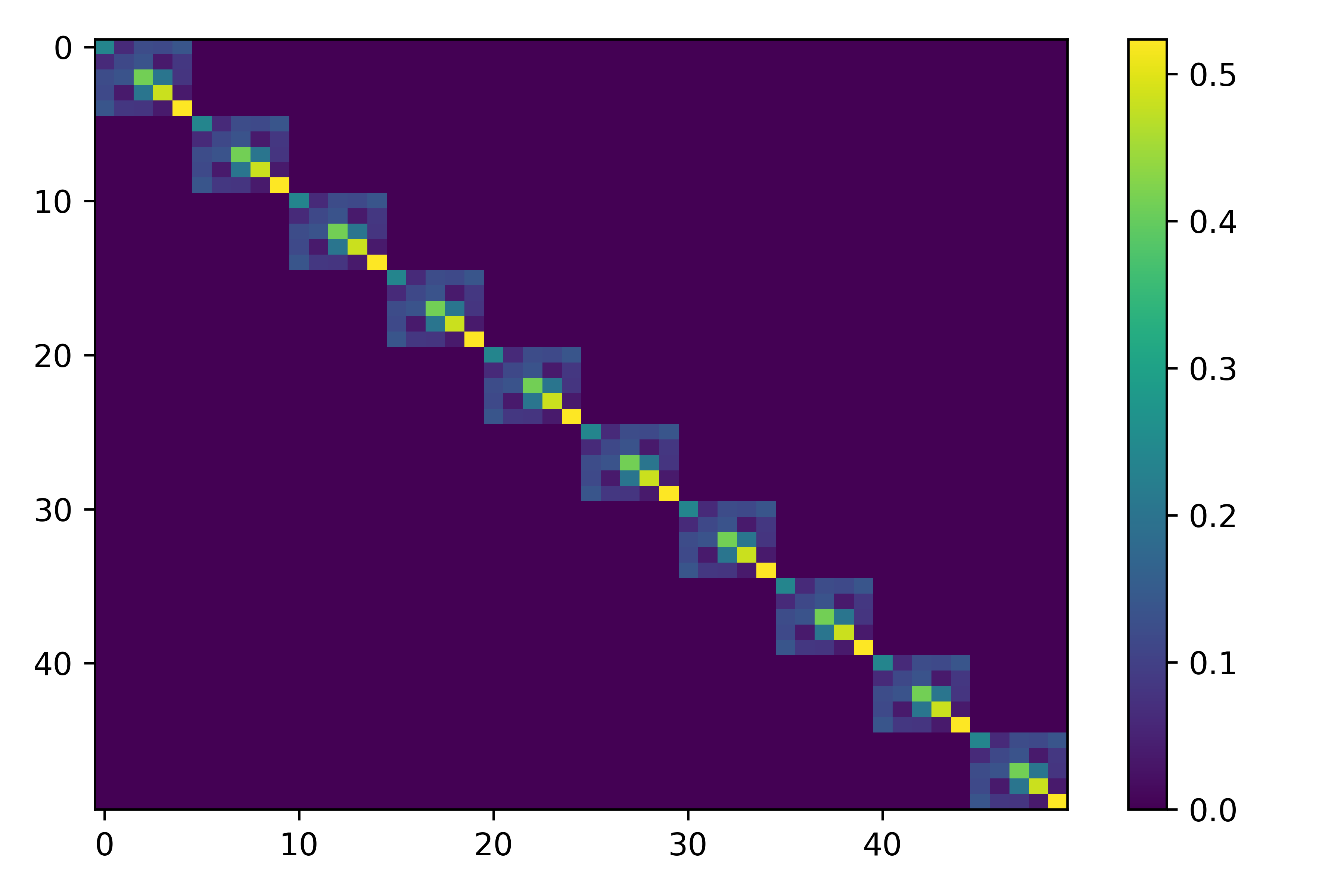}
         \caption{\muon at $t=1000$}
         \label{fig::muon-1000}
     \end{subfigure}\\
     \begin{subfigure}[b]{0.3\textwidth}
         \centering
         \includegraphics[width=\textwidth]{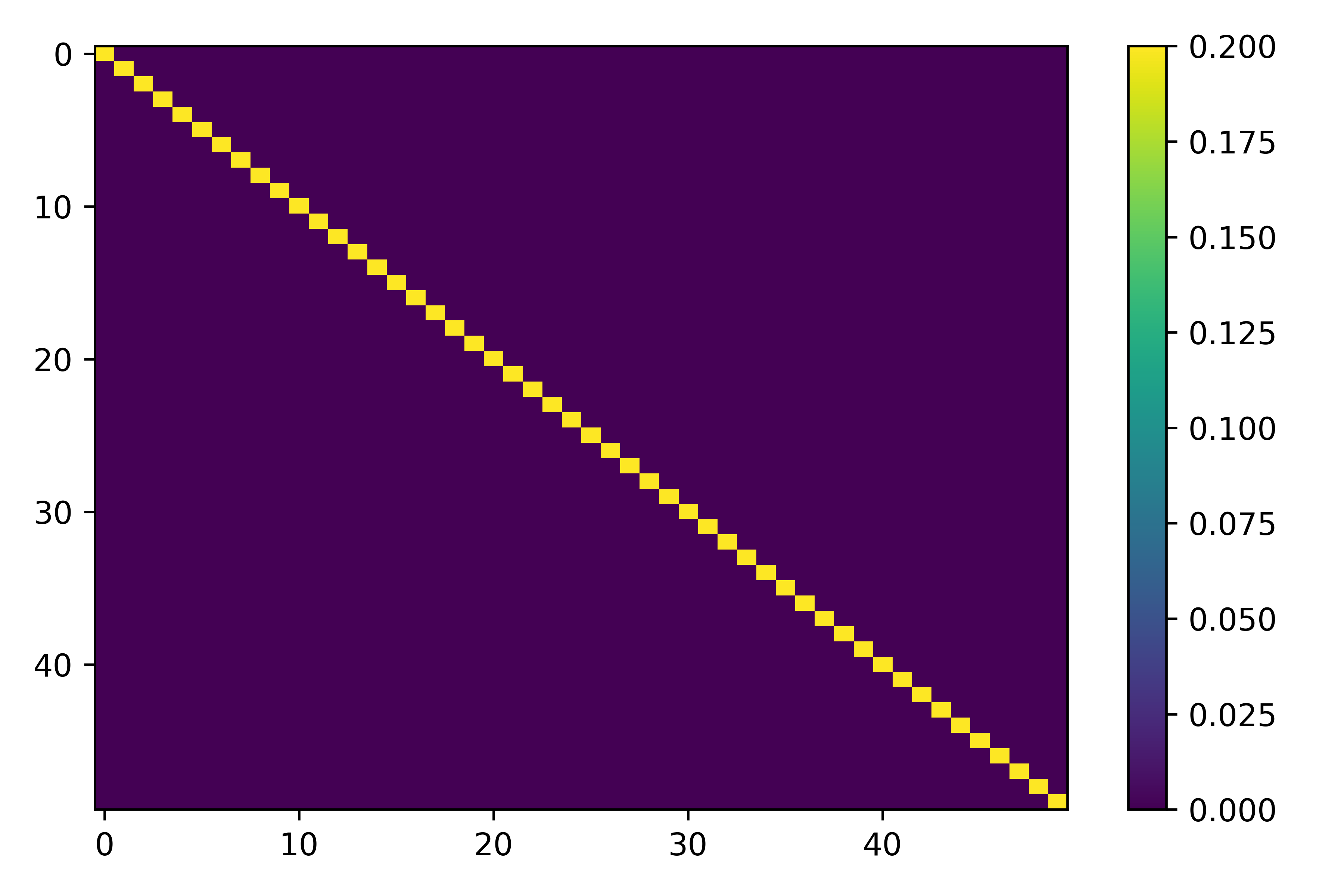}
         \caption{\scaledgd at $t=0$}
         \label{fig::scaledgd-0}
     \end{subfigure}
     \begin{subfigure}[b]{0.3\textwidth}
         \centering
         \includegraphics[width=\textwidth]{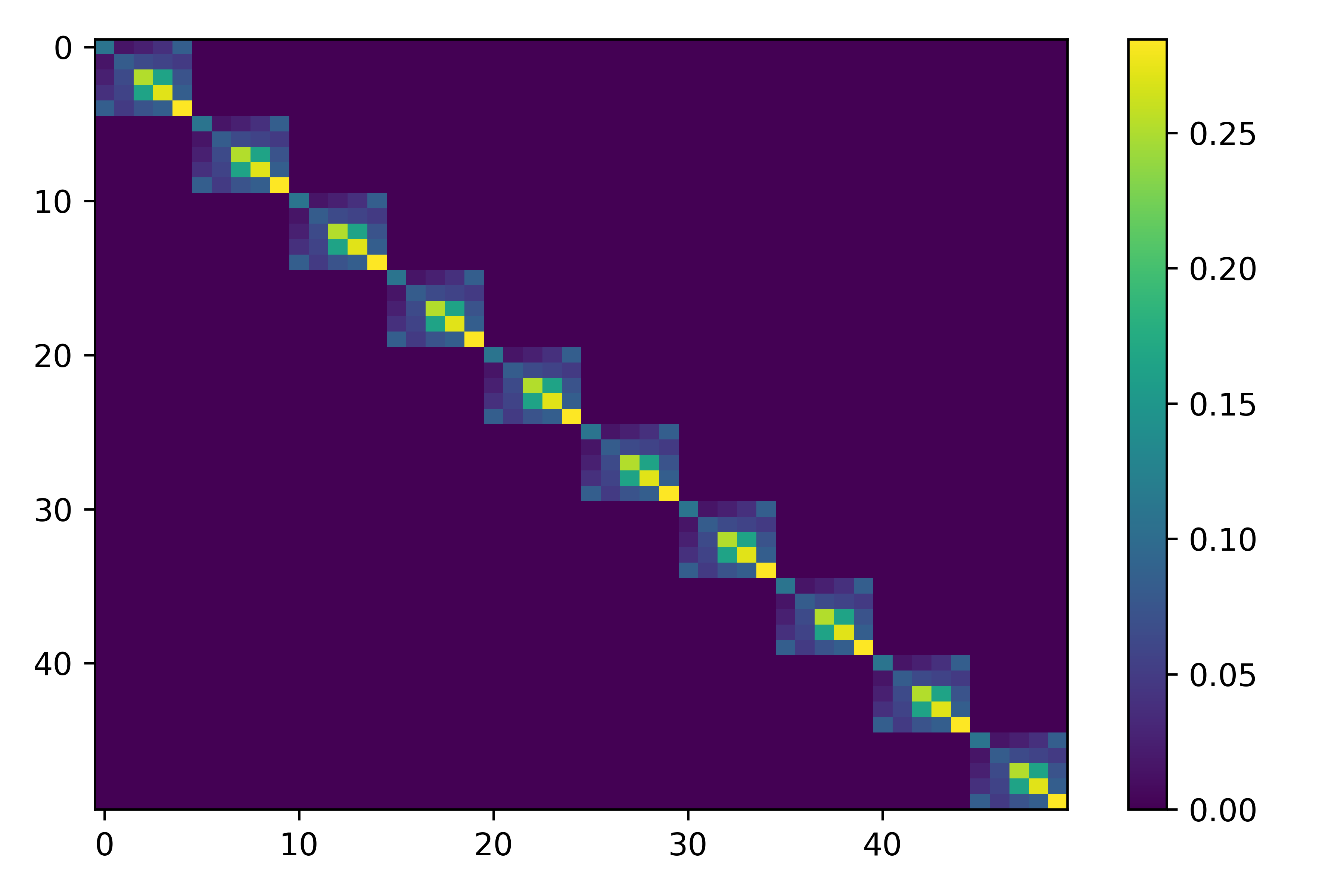}
         \caption{\scaledgd at $t=500$}
         \label{fig::scaledgd-500}
     \end{subfigure}
     \begin{subfigure}[b]{0.3\textwidth}
         \centering
         \includegraphics[width=\textwidth]{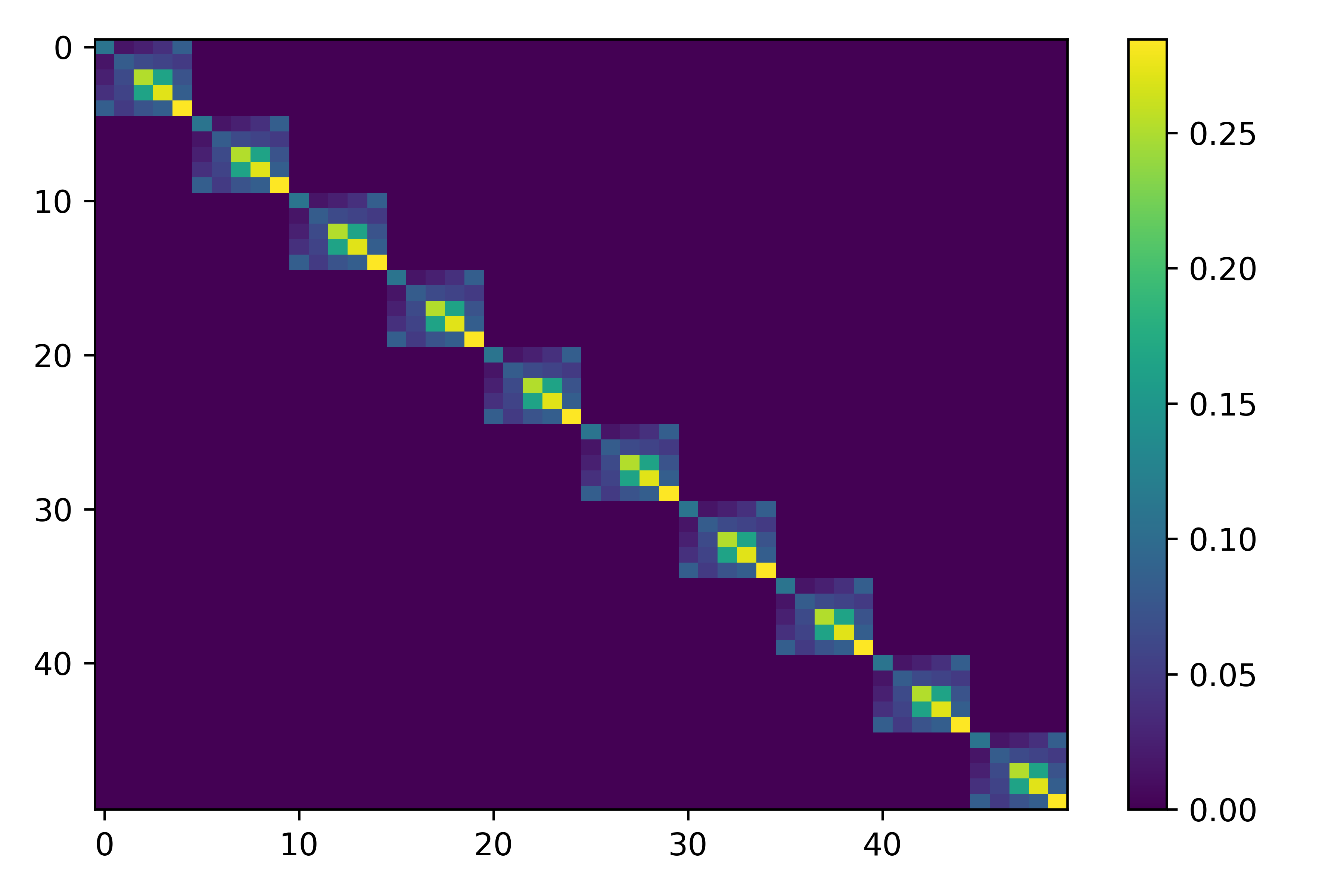}
         \caption{\scaledgd at $t=1000$}
         \label{fig::scaledgd-1000}
     \end{subfigure}
        \caption{Numerical comparison of the preconditioners of \muon and \scaledgd for matrix factorization at various training steps along a \muon trajectory. }
        \label{fig::hessian}
\end{figure}

To justify the approximate feasibility of (\ref{eq:condition-sigma-lambda-intuition}), observe that at each iteration, the scalar sequence $\{\sigma_{i,t}\}$ in (\ref{eq:scalar-sequence-intuition}) moves by a fixed length (i.e., either $\eta_t$ or $-\eta_t$) irrespective of the gradient size. In the local region where $\sigma_{i,t}\approx \lambda_i^{\star}$, the scalar sequence is expected to follow a zigzag trajectory oscillating around $\lambda_i^{\star}$. Under random initialization, one may thus anticipate
$\mathbb{E}[|\sigma_{i,t} - \sqrt{\lambda_i^\star}|] \propto \eta_t$, a scale that is independent of the magnitude of $\lambda_i^{\star}$. This intuition suggests that in the local region, the \muon update may be approximated by \scaledgd. 
Note, however, that these arguments are heuristic in nature; a fully rigorous analysis of their connections is left for future work.

To further understand the connection between \muon and \scaledgd, we conduct experiments to visualize and compare their corresponding preconditioners over the course of a \muon trajectory, as shown in \Cref{fig::hessian}. We consider a matrix factorization task with dimension $d = 10$ and target and search ranks $r = k = 5$, initialized with a small scale $\alpha = 10^{-10}$. We also adopt the same learning rate schedule as in previous experiments. At each step $t$, the \muon preconditioner is defined as $\mH_{\muon, t} = \mI \otimes \left( \nabla f(\mU_t)^\top \nabla f(\mU_t) \right)^{1/2}$, while the \scaledgd preconditioner takes the form $\mH_{\scaledgd, t} = \mI \otimes \left( \mU_t^\top \mU_t \right)$. Throughout training, both preconditioners display a consistent block-diagonal pattern—highlighting their structural similarity and revealing the implicit connection between the two methods. Importantly, the non-diagonal structure of these preconditioners also hints at why methods using diagonal preconditioners, such as \adam, are not well-suited for this setting.

\section{Proof of auxiliary lemmas for matrix factorization}

\subsection{Proof of 
\Cref{lem:diff-msign-Q-G}}
\label{sec:proof:lem:diff-msign-Q-G}

The difference between \( \msign(\mG_{0, \leq r}) \) and \( \msign(\mQ) \) can be bounded by 
\begin{align}
        \big\|\msign(\mG_{0, \leq r})-\msign(\mQ) \big\|&
        \overset{\mathrm{(i)}}{\leq}
        \frac{2}{\min\{\sigma_r(\mQ), \sigma_r(\mG_{0, \leq r})\}}\norm{\mG_{0, \leq r}-\mQ} \notag\\
    &
    \overset{\mathrm{(ii)}}{\leq} \frac{2}{\sigma_r(\mQ)-\alpha^3 }\norm{\mG_{0, \leq r}-\mQ} \notag\\
    &\overset{\mathrm{(iii)}}{\leq} 
    \frac{6+\frac{2(\sigma_{r+1}(\mQ)+\sigma_{r+1}(\mG_0))}{\min\{\sigma_{r}(\mQ),\sigma_{r}(\mG_0)\}-\max\{\sigma_{r+1}(\mQ),\sigma_{r+1}(\mG_0)\}}}{\sigma_r(\mQ)-\alpha^3}\norm{\mG_{0}-\mQ} \notag\\
        &\overset{\mathrm{(iv)}}{\leq} 
    \frac{6+\frac{2\alpha^3}{\sigma_{r}(\mQ)-2\alpha^3}}{\sigma_r(\mQ)-\alpha^3}\norm{\alpha^3 \mO}
    \notag\\
    &\overset{\mathrm{(v)}}{=} \frac{6\alpha^3+\frac{2\alpha^6}{\sigma_{r}(\mQ)-2\alpha^3}}{\sigma_r(\mQ)-\alpha^3}
    \overset{\mathrm{(vi)}}{\leq} \frac{16\alpha^3}{\sigma_r(\mQ)},
    \label{eq:msign-diff-UB-12}
\end{align}
provided that $\sigma_r(\mQ)>4\alpha^3$. 
Here, (i) follows from \Cref{lem::matrix-sign-perturbation-v2}; (ii) is valid since, by Weyl's inequality, 
\begin{align}
\sigma_r(\mG_{0, \leq r})&=\sigma_r(\mG_{0})\geq \sigma_r(\mQ)-\|\alpha^3\mO\|=\sigma_r(\mQ)-\alpha^3 ;
\label{eq:sigma-r-G0-LB12}
\end{align}
(iii) applies \Cref{lem::rank-r-perb};  (iv) results from \Cref{eq:sigma-r-G0-LB12}, the fact $\sigma_{r+1}(\mQ)=0$, as well as the following property (by Weyl's inequality):   
\begin{align*}
\sigma_{r+1}(\mG_{0})& \leq 
\sigma_{r+1}(\mQ)+\|\alpha^3\mO\|=  \|\alpha^3\mO\| = \alpha^3;
\end{align*}
(v) follows since $\bm{O}$ is orthonormal; and (vi) holds as long as $\sigma_r(\mQ)>4\alpha^3$.

To continue upper bounding (\ref{eq:msign-diff-UB-12}),  we develop a lower bound on $\sigma_r(\mQ)$ in the lemma below, whose proof is provided in \Cref{sec::proof-init}.
\begin{lemma}
\label{lem::init}
    There exists some universal constant $c_0>0$ such that, with probability at least $0.995$,  
    \begin{equation}
        \sigma_r(\mM^\star\mO)\geq \frac{c_0\lambda_r^{\star}}{\sqrt{dr}}.
    \end{equation}
\end{lemma}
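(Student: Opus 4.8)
\textbf{Proof proposal for \Cref{lem::init}.} The plan is to peel off the fixed factors of $\mM^{\star}$ and reduce the statement to an anti-concentration estimate for the smallest singular value of a (possibly square) Gaussian matrix. First I would write $\mM^{\star}=\mV^{\star}\mLambda^{\star}\mV^{\star\top}$ and set $\mB\coloneqq\mV^{\star\top}\mO\in\bR^{r\times k}$, so that $\mM^{\star}\mO=\mV^{\star}\mLambda^{\star}\mB$. Since $\mV^{\star}$ has orthonormal columns, left multiplication by $\mV^{\star}$ preserves the top $r$ singular values, so $\sigma_r(\mM^{\star}\mO)=\sigma_r(\mLambda^{\star}\mB)$. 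Using the elementary bound $\sigma_r(\mA\mB)\geq\sigma_{\min}(\mA)\,\sigma_r(\mB)$ for square $\mA$ (which follows from $\|\mB^{\top}\mA^{\top}y\|\geq\sigma_{\min}(\mA)\|\mB^{\top}y\|\geq\sigma_{\min}(\mA)\sigma_r(\mB)\|y\|$) together with $\sigma_{\min}(\mLambda^{\star})=\lambda_r^{\star}$, it then suffices to show that $\sigma_r(\mV^{\star\top}\mO)\geq c_0/\sqrt{dr}$ with probability at least $0.995$.

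Next I would pass to a Gaussian representation of the Haar frame. Let $\mG\in\bR^{d\times k}$ have independent $\mathcal N(0,1)$ entries; since $k\le d$, the matrix $\mG^{\top}\mG$ is almost surely invertible and the polar factor $\mG(\mG^{\top}\mG)^{-1/2}$ is uniformly distributed on $\cO_{d\times k}$, hence has the same law as $\mO$. Consequently $\mV^{\star\top}\mO$ has the same law as $\widetilde{\mG}(\mG^{\top}\mG)^{-1/2}$, where $\widetilde{\mG}\coloneqq\mV^{\star\top}\mG\in\bR^{r\times k}$ again has independent $\mathcal N(0,1)$ entries (orthonormality of the columns of $\mV^{\star}$ preserves the Gaussian law of each column of $\mG$). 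Applying the analogous product inequality with the positive-definite factor $(\mG^{\top}\mG)^{-1/2}$ on the right — whose smallest singular value is $\|\mG\|^{-1}$ — gives $\sigma_r(\mV^{\star\top}\mO)\geq\sigma_r(\widetilde{\mG})/\|\mG\|$. It then remains to control the two pieces: for the denominator, standard concentration for Gaussian matrices (Davidson--Szarek together with Gaussian Lipschitz concentration, using $k<d$) yields a universal $C_1$ with $\|\mG\|\le C_1\sqrt d$ with probability at least $0.9975$; for the numerator, $\widetilde{\mG}$ is an $r\times k$ Gaussian matrix with $k\geq r$, so the anti-concentration bound for the smallest singular value (Edelman's formula in the square case $k=r$, Rudelson--Vershynin-type bounds for $k>r$) gives $\bP\big(\sigma_r(\widetilde{\mG})\le\epsilon(\sqrt k-\sqrt{r-1})\big)\le (C\epsilon)^{k-r+1}$; since $\sqrt k-\sqrt{r-1}\geq\sqrt r-\sqrt{r-1}\geq\tfrac1{2\sqrt r}$, a small universal choice of $\epsilon$ gives $\sigma_r(\widetilde{\mG})\geq c_2/\sqrt r$ with probability at least $0.9975$. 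A union bound then produces $\sigma_r(\mM^{\star}\mO)\geq\lambda_r^{\star}\cdot\frac{c_2/\sqrt r}{C_1\sqrt d}=\frac{c_0\lambda_r^{\star}}{\sqrt{dr}}$ with $c_0\coloneqq c_2/C_1$, with probability at least $0.995$.

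The main obstacle is the anti-concentration step for $\sigma_r(\widetilde{\mG})$. A naive second-moment computation gives $\bE[\mV^{\star\top}\mO\mO^{\top}\mV^{\star}]=(k/d)\mI_r$, which would suggest the much stronger bound $\sigma_r\gtrsim\sqrt{k/d}$; this is badly over-optimistic when $k$ is close to $r$, because the smallest singular value of a near-square Gaussian matrix anti-concentrates at the far smaller scale $1/\sqrt r$. The rate $1/\sqrt{dr}$ asserted by the lemma is exactly what survives this worst case, and capturing it requires the small-ball estimate for $\sigma_{\min}$ of a Gaussian matrix rather than any moment bound. Everything else — the two singular-value product inequalities, the polar representation of the Haar frame, and the operator-norm tail bound — is routine.
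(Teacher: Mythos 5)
Your proposal is correct and follows essentially the same route as the paper: peel off $\mLambda^{\star}$, represent the Haar frame as the polar factor of a Gaussian matrix, and lower bound $\sigma_r(\mV^{\star\top}\mO)$ by a ratio of the form $\sigma_r(\text{Gaussian})/\|\mG\|$ using standard extreme-singular-value bounds; the only (immaterial) difference is that the paper first restricts to the first $r$ columns of $\mO$, so the square Edelman/Rudelson--Vershynin bound applies directly, whereas you keep all $k$ columns and invoke the rectangular small-ball estimate. One cosmetic slip: in your parenthetical justification of $\sigma_r(\mA\mB)\ge\sigma_{\min}(\mA)\,\sigma_r(\mB)$ the chain should read $\|\mB^{\top}\mA^{\top}y\|\ge\sigma_r(\mB)\|\mA^{\top}y\|\ge\sigma_r(\mB)\,\sigma_{\min}(\mA)\|y\|$ (the intermediate inequality you wrote, $\|\mB^{\top}\mA^{\top}y\|\ge\sigma_{\min}(\mA)\|\mB^{\top}y\|$, is false in general), but the fact itself is standard and the argument is unaffected.
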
 
\Cref{lem::init} taken together with inequality~(\ref{eq:sigma-r-G0-LB12})   tells us that, with probability exceeding 0.995, 
\begin{equation}
    \sigma_r (\mQ) = \alpha  \sigma_r(\mM^\star\mO) \geq \frac{c_0\alpha\lambda_r^{\star}}{\sqrt{dr}}.
\end{equation}
Therefore, if  $4\alpha^{2}\leq c_{0}\lambda_{r}^{\star}/\sqrt{{dr}}$, then we establish that
\begin{equation}
    \big\| \msign(\mG_{0, \leq r})-\msign(\mQ) \big\|
    \leq \frac{16\alpha^3}{\sigma_{r}(\mQ)} 
    \leq \frac{16\alpha^2\sqrt{dr}}{c_0\lambda_r^\star}. 
\end{equation}

\subsection{Proof of \Cref{lem:diff-Ut-Utilde-t-grad-mf}}
\label{sec:proof-lem:diff-Ut-Utilde-t-grad-mf}

To begin with, the update rule (\ref{eq:muon-mf}) allows us to upper bound the size of $\mU_t$ as
\begin{subequations}
\label{eq:size-Ut-Utilde-t-mf}
\begin{equation}
    \begin{aligned}
        \norm{\mU_t}&=\norm{\mU_0-\sum_{s=0}^{t-1}\eta_s\msign\big(\nabla f(\mU_s)\big)}\leq \norm{\mU_0}+\sum_{s=0}^{t-1}\eta_s\leq \alpha + \frac{2\sqrt{\lambda_{\max}^\star}}{1-\rho}\leq \frac{4\sqrt{\lambda_{\max}^\star}}{1-\rho},
    \end{aligned}
\end{equation}
provided that $\alpha \leq 2\sqrt{\lambda_{\max}^{\star}}/(1-\rho)$. The same argument applies to $\widetilde{\mU}_t$, yielding
\begin{equation}
        \big\| \widetilde{\mU}_t\big\|\leq \frac{4\sqrt{\lambda_{\max}^\star}}{1-\rho}. 
\end{equation}
\end{subequations}
Then, it follows from (\ref{eq:muon-mf}) and our construction (\ref{eq:auxiliary-muon-mf}) that
    \begin{align}
        \big\|\mU_{t+1}-\widetilde{\mU}_{t+1}\big\|&\leq\big\|\mU_{t}-\widetilde{\mU}_{t}\big\|+\eta_t\big\|\msign\big(\nabla f(\mU_t)\big)-\msign\big(\nabla f(\widetilde{\mU}_{t})\big)\big\| \notag\\
        &\hspace{-0.5cm}\stackrel{\text{\Cref{lem::matrix-sign-perturbation}}}{\leq}\big\|\mU_{t}-\widetilde{\mU}_{t}\big\|+\eta_t \frac{3\big\|\nabla f(\mU_t)-\nabla f(\widetilde{\mU}_{t})\big\|}{\sigma_{\min}(\nabla f(\widetilde{\mU}_{t}))} \notag\\
        &\leq \left(1+\eta_t \frac{147\lambda_{\max}^\star}{(1-\rho)^2\sigma_{\min}(\nabla f(\widetilde{\mU}_{t}))}\right)\big\|\mU_{t}-\widetilde{\mU}_{t}\big\|,
        \label{eq::30-proof}
    \end{align}
where the second line relies on our assumption that $\sigma_{\min}(\nabla f(\mU_t)),\sigma_{\min}(\nabla f(\widetilde{\mU}_{t}))>0$, and the last inequality invokes the triangle inequality and (\ref{eq:size-Ut-Utilde-t-mf}) to obtain
    \begin{align*}
        \big\|\nabla f(\mU_t)-\nabla f(\widetilde{\mU}_t)\big\|&=\big\|(\mU_t\mU_t^{\top}-\mM^\star)\mU_t-(\widetilde\mU_t\widetilde\mU_t^{\top}-\mM^\star)\widetilde\mU_t\big\|
    \notag\\
    &\leq \big\|\widetilde\mU_t\widetilde\mU_t^{\top}-\mM^\star\big\|\big\|\mU_t-\widetilde{\mU}_t\big\|+\big\|\mU_t\big\|\big\|\mU_t\mU_t^{\top}-\widetilde{\mU}_t\widetilde{\mU}_t^{\top}\big\| \notag\\
    &\leq \left(\big\|\widetilde\mU_t\widetilde\mU_t^{\top} \big\|+ \|\mM^\star\|+2\max\big\{\norm{\mU_t}^2,\big\|\widetilde{\mU}_t\big\|^2\big\}\right)\big\|\mU_t-\widetilde{\mU}_t\big\| \notag\\
    &\leq \frac{49\lambda_{\max}^\star}{(1-\rho)^2}\big\|\mU_t-\widetilde{\mU}_t\big\|.
    \end{align*}

\subsection{Proof of \Cref{lem::small-ball-prob}}
\label{sec:proof:lem::small-ball-prob}

    Define the following quantity $$g_{i, t} \coloneqq\big|(\widetilde{\sigma}_{i,t}^2 - \lambda_i^\star)\widetilde{\sigma}_{i,t}\big|.$$ We would like to first control $g_{i,t+1}$ for a single $i$, followed by a union bound to cover all indices $\{1, \cdots, k\}$.

Consider any fix $i$, and 
    recall from (\ref{eq:tilde-Ut-decompose-mf}) that the update rule for $\widetilde{\sigma}_{i,t+1}$ is 
    \begin{equation}
        \widetilde{\sigma}_{i,t+1}=\widetilde{\sigma}_{i,t} - \eta_{t} \sign\big( (\widetilde{\sigma}_{i,t}^2 - \lambda_i^\star)\widetilde{\sigma}_{i,t} \big),
    \end{equation}
    where $\eta_t$ is uniform sampled from $[\sqrt{\lambda_{\max}^\star}\rho^t,2\sqrt{\lambda_{\max}^\star}\rho^t]$. 
   Thus, conditional on past randomness, $\widetilde{\sigma}_{i,t+1}$ is uniformly distributed over $\left[\widetilde{\sigma}_{i,t} -2s_{i, t}\sqrt{\lambda_{\max}^\star}\rho^t, \widetilde{\sigma}_{i,t} -s_{i,t}\sqrt{\lambda_{\max}^\star}\rho^t\right]$, where $s_{i,t}=\sign\big( (\widetilde{\sigma}_{i,t}^2 - \lambda_i^\star)\widetilde{\sigma}_{i,t} \big)$; in other words, $\widetilde{\sigma}_{i,t+1}$ is uniformly sampled from an interval of length $\sqrt{\lambda_{\max}^\star}\rho^t$. We now divide into two cases based on whether $\lambda_i^\star=0$ or $\lambda_i^\star>0$.
    
    \paragraph{Case 1: $\lambda_i^\star=0$.} In this case, one has $g_{i, t+1}=|\widetilde{\sigma}_{i,t+1}|^3$, which implies that
    \begin{equation}
        \bP\left(g_{i, t+1}\leq \varepsilon\mid \mathcal F_{t}\right)=\bP\left(|\widetilde{\sigma}_{i,t+1}|\leq \sqrt[3]{\varepsilon}\mid \mathcal F_{t}\right)\leq \frac{2\sqrt[3]{\varepsilon}}{\sqrt{\lambda_{\max}^\star}\rho^t}.
    \end{equation}
    \paragraph{Case 2: $\lambda_i^\star>0$.} For this case, we first claim that for any $0<\varepsilon\leq \frac{1}{4}(\lambda_{i}^{\star})^{ 3/2}$, it holds that
    \begin{equation}
        g_{i, t}\leq \varepsilon\quad \implies\quad  |\widetilde{\sigma}_{i,t}|\in \left[0, \frac{2 \varepsilon}{\lambda_i^\star}\right] \cup\left[\sqrt{\lambda_i^\star}-\frac{2\varepsilon}{\lambda_i^\star}, \sqrt{\lambda_i^\star}+\frac{2\varepsilon}{\lambda_i^\star}\right].
        \label{eq:git-range-interval}
    \end{equation}
    Without loss of generality,  assume that $\widetilde{\sigma}_{i,t}\geq 0$. To justifies this property (\ref{eq:git-range-interval}), consider two sub-cases:
    \begin{itemize}
        \item If $\widetilde{\sigma}_{i,t}\geq  \sqrt{\lambda_i^\star}+\frac{\varepsilon}{\lambda_i^\star}$, then we have  
        \begin{equation}
            \begin{aligned}
                g_{i, t}=(\widetilde{\sigma}_{i,t}^2 - \lambda_i^\star)\widetilde{\sigma}_{i,t}\geq \frac{2\varepsilon}{\sqrt{\lambda_i^\star}}\cdot \sqrt{\lambda_i^\star}\geq 2\varepsilon > \varepsilon.
            \end{aligned}
        \end{equation}
        \item If $\frac{2\varepsilon}{\lambda_i^\star}< \widetilde{\sigma}_{i,t}< \sqrt{\lambda_i^\star}-\frac{2\varepsilon}{\lambda_i^\star}$, then it follows that 
        \begin{equation}
            \begin{aligned}
                g_{i, t}=(\sqrt{\lambda_i^\star}+\widetilde{\sigma}_{i,t})(\sqrt{\lambda_i^\star}-\widetilde{\sigma}_{i,t})\widetilde{\sigma}_{i,t}\geq \sqrt{\lambda_i^\star}(\sqrt{\lambda_i^\star}-\widetilde{\sigma}_{i,t})\widetilde{\sigma}_{i,t}> \sqrt{\lambda_i^\star}\frac{2\varepsilon}{\lambda_i^\star}\left(\sqrt{\lambda_i^\star}-\frac{2\varepsilon}{\lambda_i^\star}\right)\geq \varepsilon,
            \end{aligned}
        \end{equation}
        provided that $0<\varepsilon\leq \frac{1}{4}(\lambda_{i}^{\star})^{ 3/2}$.
    \end{itemize}
    Combining the above two subcases, one can easily see that
    \begin{equation*}
        \begin{aligned}
            \bP\left(g_{i, t+1}\leq \varepsilon\mid \mathcal F_{t}\right)&\leq \bP\left(|\widetilde{\sigma}_{i,t+1}|\in \left[0, \frac{2 \varepsilon}{\lambda_i^\star}\right] \cup\left[\sqrt{\lambda_i^\star}-\frac{2\varepsilon}{\lambda_i^\star}, \sqrt{\lambda_i^\star}+\frac{2\varepsilon}{\lambda_i^\star}\right]\mid \mathcal F_{t}\right)\leq \frac{12\varepsilon/\lambda_{i}^{\star}}{ \sqrt{\lambda_{\max}^\star}\rho^t}\leq \frac{12\varepsilon}{\lambda_{\min}^{\star}\sqrt{\lambda_{\max}^\star}\rho^t}.
        \end{aligned}
    \end{equation*}

    To finish up, 
    apply the union bound over all indices $i \in \{1, \dots, k\}$ to arrive at
\begin{equation*}
    \begin{aligned}
        \bP\left( \sigma_{\min}\big(\nabla f(\widetilde{\mU}_{t+1})\big) \leq \varepsilon \right) &\leq \sum_{i:\, \lambda_i^\star = 0} \bP(g_{i,t+1} \leq \varepsilon) + \sum_{i:\, \lambda_i^\star > 0} \bP(g_{i,t+1} \leq \varepsilon)\\
    &\leq \frac{2(k-r)\sqrt[3]{\varepsilon}}{\sqrt{\lambda_{\max}^\star}\rho^t}+\frac{12r\varepsilon}{\lambda_{\min}^{\star}\sqrt{\lambda_{\max}^\star}\rho^t}.
    \end{aligned}
\end{equation*}

\subsection{Proof of \Cref{lem:prod-mgf}}
\label{sec:proof:lem:prod-mgf}

Recall that $\mathcal{F}_t$ encompasses what happens up to time $t$, and hence $\widetilde{\bm{U}}_t$ is fully determined by $\mathcal{F}_{t-1}$.  
Define
\begin{equation}
    C_t \coloneqq \frac{294\lambda_{\max}^{\star 3/2}\rho^t}{(1-\rho)^2},
\qquad
X_t=\log\left(1+\frac{C_t}{\sigma_{\min}(\nabla f(\widetilde{\mU}_t))}\right),
\qquad
S_T=\sum_{t=1}^{T-1}X_t,
\end{equation}
which allows us to write $\Pi_T=e^{S_T}$. In the sequel, we intend to control $S_T$ by invoking the Chernoff-type arguments and bounding (conditional) moment generating functions (MGFs).

\paragraph{Step 1: a general connection between MGF and tail bounds. }
For any nonnegative random variable $Z$ and any $\theta>0$, the MGF obeys
\begin{equation}
\label{eq:mgf-tail}
\bE\left[e^{\theta Z}\right]
=1+\int_{0}^{\infty}\theta e^{\theta \tau}\bP(Z\ge \tau)\mathrm{d}\tau.
\end{equation}
This follows from integration by parts, namely, $\bE[e^{\theta Z}]=\int_0^\infty e^{\theta z} \mathrm{d}F(z)
=1+\int_0^\infty \theta e^{\theta \tau}\bP(Z\ge \tau) \mathrm{d}\tau$. 

\paragraph{Step 2: a conditional tail bound on $X_t$.}
For any $\tau\ge 0$, the definition of $X_t$ indicates that
\begin{equation}
    \{X_t\ge \tau\}\quad
\iff \quad
\left\{\sigma_{\min}\big(\nabla f(\widetilde{\mU}_t)\big)\le \frac{C_t}{e^\tau-1}\right\}.
\end{equation}
With this equivalence in mind, applying \Cref{lem::small-ball-prob} to $\sigma_{\min}(\nabla f(\widetilde{\mU}_t))$ reveals that, for all $\tau\ge 0$,
\begin{equation}
\label{eq:tailXt-cond}
\bP(X_t\ge \tau \mid \mathcal F_{t-1})
\le
\min\left\{
\frac{2(k-r)\sqrt[3]{C_t}}{\sqrt{\lambda_{\max}^\star}\rho^{t-1}\sqrt[3]{e^\tau-1}}
+
\frac{12r C_t}{\lambda_{\min}^\star\sqrt{\lambda_{\max}^\star}\rho^{t-1}(e^\tau-1)},\, 1
\right\}.
\end{equation}

\paragraph{Step 3: an MGF bound for the case with $k=r$.}
When $k=r$, the first term in (\ref{eq:tailXt-cond}) vanishes. Define 
\begin{equation}
    A \coloneqq \frac{12r C_t}{\lambda_{\min}^\star\sqrt{\lambda_{\max}^\star}\rho^{t-1}}
=
\frac{12r}{\lambda_{\min}^\star\sqrt{\lambda_{\max}^\star}}\cdot
\frac{294\lambda_{\max}^{\star 3/2}\rho^t}{(1-\rho)^2}\cdot \frac{1}{\rho^{t-1}}
=
\frac{3528r\kappa\rho}{(1-\rho)^2}\geq 1,
\end{equation}
which is independent of $t$. Then, it follows from (\ref{eq:tailXt-cond}) that
\begin{equation}
    \bP(X_t\ge \tau \mid \mathcal F_{t-1})\le \min\left\{1,\frac{A}{e^\tau-1}\right\}.
\end{equation}
Let $\tau_0 \coloneqq \log(1+A)$, which satisfies $1=\frac{A}{e^{\tau_0}-1}$. For $\tau\ge \tau_0$, it is seen that $e^\tau-1\ge  e^\tau/2$ due to the fact that $\tau_0= \log(1+A)\geq \log(2)$. Hence, it holds that, for all $\tau\ge \tau_0$,
\begin{equation}
    \bP(X_t\ge \tau \mid \mathcal F_{t-1})
\le \frac{A}{e^\tau-1}\le 2A e^{-\tau}.
\end{equation}

Next,  substitute this tail bound into (\ref{eq:mgf-tail}) to show that: 
for any $\theta\in(0,1)$,
\begin{equation}
    \begin{aligned}
        \bE\left[e^{\theta X_t}\mid \mathcal F_{t-1}\right]
&=1+\int_0^{\tau_0} \theta e^{\theta \tau}\,\bP(X_t\ge \tau\mid \mathcal F_{t-1})\mathrm{d}\tau
  +\int_{\tau_0}^{\infty} \theta e^{\theta \tau}\,\bP(X_t\ge \tau\mid \mathcal F_{t-1})\mathrm{d}\tau \\
&\le 1+\int_0^{\tau_0} \theta e^{\theta \tau}\mathrm{d}\tau
  +\int_{\tau_0}^{\infty} \theta e^{\theta \tau}\cdot 2A e^{-\tau}\mathrm{d}\tau \\
&= 1+(e^{\theta \tau_0}-1) + 2A\theta \int_{\tau_0}^{\infty} e^{-(1-\theta)\tau}\mathrm{d}\tau \\
&= e^{\theta \tau_0} + \frac{2A\theta}{1-\theta}\,e^{-(1-\theta)\tau_0}.
    \end{aligned}
\end{equation}
Recall the identity $e^{\tau_0}=1+A$, we have
\begin{equation}
    e^{\theta \tau_0}=(1+A)^\theta,
\qquad
\frac{2A\theta}{1-\theta}\,e^{-(1-\theta)\tau_0}
=\frac{2\theta}{1-\theta}\,A(1+A)^{-(1-\theta)}
\le \frac{2\theta}{1-\theta}\,(1+A)^\theta,
\end{equation}
since $A(1+A)^{-(1-\theta)}\le (1+A)^\theta$.
As a result, we arrive at
\begin{equation}
\label{eq:mgf-k=r}
\bE\left[e^{\theta X_t}\mid \mathcal F_{t-1}\right]
\le
\left(1+\frac{2\theta}{1-\theta}\right)(1+A)^\theta
\le
3 (1+A)^\theta
\end{equation}
for any $\theta\in (0, 1/2]$.

\paragraph{Step 4: Chernoff bound on $S_T$ when $k=r$.}
For any $\theta\in(0,1/2]$, apply (\ref{eq:mgf-k=r}) recursively to obtain
\begin{equation}
    \begin{aligned}
        \bE\left[e^{\theta S_T}\right]
&=\bE\left[\prod_{t=1}^{T-1} e^{\theta X_t}\right]=\bE\left[\prod_{t=1}^{T-2} e^{\theta X_t}\cdot \bE\left[e^{\theta X_{T-1}}\mid \mathcal F_{T-2}\right]\right] \\
&\le 3(1+A)^\theta \cdot \bE\left[\prod_{t=1}^{T-2} e^{\theta X_t}\right] \le \cdots \\
&\le \left(3(1+A)^\theta\right)^{T-1}.
    \end{aligned}
\end{equation}
Markov’s inequality yields, for any $u>0$,
\begin{equation*}
    \bP(S_T\ge u)\le e^{-\theta u}\,\bE\left[e^{\theta S_T}\right]
\le
\exp\big(-\theta u + (T-1)\log(3) + \theta (T-1)\log(1+A)\big).
\end{equation*}
Choosing
\begin{equation}
    u
=(T-1)\log(1+A) + \frac{T-1}{\theta}\log(3) + \frac{1}{\theta}\log\frac{1}{\delta}
\end{equation}
then gives $\bP(S_T\ge u)\le \delta$. Taking $\theta=1/2$, we obtain that with probability at least $1-\delta$,
\begin{equation}
    S_T \le (T-1) \log(1+A) 2(T-1)\log(3) + 2 \log\frac{1}{\delta}=O\left(T\log\Big(\frac{r\kappa}{1-\rho}\Big) + \log\frac{1}{\delta}\right).
\end{equation}
Exponentiating both sides yields
\begin{equation}
    \Pi_T=\exp(S_T) \le
\exp\left(
O\left(T\log\Big(\frac{r\kappa}{1-\rho}\Big) + \log\frac{1}{\delta}\right)
\right),
\end{equation}
thereby completing the proof of Part (i) of \Cref{lem:prod-mgf}.

\paragraph{Step 5: an MGF bound  for the case with $k>r$.}
When $k>r$, define
\begin{equation}
    A_{1,t} \coloneqq \frac{2(k-r)\sqrt[3]{C_t}}{\sqrt{\lambda_{\max}^\star}\rho^{t-1}}=\frac{2(k-r)}{\rho^{\frac{2t-3}{3}}},
\qquad
A_{2} \coloneqq \frac{12r C_t}{\lambda_{\min}^\star\sqrt{\lambda_{\max}^\star}\rho^{t-1}}
=\frac{3528r\kappa\rho}{(1-\rho)^2}.
\end{equation}
Then it is readily seen from (\ref{eq:tailXt-cond}) that
\begin{equation}
    \bP(X_t\ge \tau \mid \mathcal F_{t-1})
\le \min\left\{1, \frac{A_{1,t}}{(e^\tau-1)^{1/3}}+\frac{A_{2}}{e^\tau-1}\right\}.
\end{equation}
Let $\tau_{1,t}:=\log(1+A_{1,t}^3)$ and $\tau_2:=\log(1+A_2)$, and set $\tau_0:=\max\{\tau_{1,t},\tau_2\}$.
For $\tau\ge \tau_0$, we have
\begin{equation}
    \bP(X_t\ge \tau \mid \mathcal F_{t-1})
\le
2^{1/3}A_{1,t} e^{-\tau/3} + 2A_2 e^{-\tau},
\end{equation}
given that $e^\tau-1\ge  e^\tau /2$. 
Invoke (\ref{eq:mgf-tail}) to show that, for any $\theta\in(0,1/3)$,
\begin{equation}
\label{eq:UB-MGF-conditional-1357}
    \begin{aligned}
        \bE[e^{\theta X_t}\mid \mathcal F_{t-1}]
&\le 1+\int_0^{\tau_0}\theta e^{\theta \tau}\mathrm{d}\tau
+\int_{\tau_0}^{\infty}\theta e^{\theta \tau}\big(2^{1/3}A_{1,t} e^{-\tau/3}+2A_2 e^{-\tau}\big)\mathrm{d}\tau \\
&= e^{\theta \tau_0}
+2^{1/3}A_{1,t}\theta\int_{\tau_0}^\infty e^{-(1/3-\theta)\tau}\mathrm{d}\tau
+2A_2\theta\int_{\tau_0}^\infty e^{-(1-\theta)\tau}\mathrm{d}\tau \\
&= e^{\theta \tau_0}
+\frac{2^{1/3}A_{1,t}\theta}{1/3-\theta}\,e^{-(1/3-\theta)\tau_0}
+\frac{2A_2\theta}{1-\theta}\,e^{-(1-\theta)\tau_0}.
    \end{aligned}
\end{equation}
Now, recognizing that $ \tau_{1,t}=\log(1+A_{1,t}^3)$ and $ \tau_2=\log(1+A_2)$, we can further derive
\begin{equation}
    e^{\theta \tau_0}\le e^{\theta\tau_{1,t}+\theta\tau_2}
    \le (1+A_{1,t}^3)^\theta (1+A_2)^\theta,
\end{equation}
and also (since $\tau_0\geq \tau_{1,t}$ and $\tau_0\geq \tau_2$)
\begin{equation}
    A_{1,t} e^{-(1/3-\theta)\tau_0}
\le A_{1,t}(1+A_{1,t}^3)^{-(1/3-\theta)}
\le (1+A_{1,t}^3)^\theta,
\qquad
A_2 e^{-(1-\theta)\tau_0}\le (1+A_2)^\theta.
\end{equation}
Substitution into (\ref{eq:UB-MGF-conditional-1357}) reveals that, for any given $\theta\in(0,1/6]$, 
\begin{equation}
\label{eq:mgf-k>r}
\bE[e^{\theta X_t}\mid \mathcal F_{t-1}]
\le
C_1(1+A_{1,t}^3)^\theta\,(1+A_2)^\theta,
\end{equation}
where $C_1$ is a constant given by $C_1=1+\frac{2^{1/3}\theta}{1/3-\theta}+\frac{2\theta}{1-\theta}$.

\paragraph{Step 6: Chernoff bound on $S_T$ when $k>r$.}
Iterating conditional expectations as before and invoking (\ref{eq:mgf-k>r}), we arrive at
\begin{equation}
    \begin{aligned}
        \bE[e^{\theta S_T}]
&\le
\prod_{t=1}^{T-1}\Big(C_1(1+A_2)^\theta(1+A_{1,t}^3)^\theta\Big)
\\
&=
\exp\left(
(T-1)\log\left(C_1\right) + \theta (T-1)\log(1+A_2) + \theta\sum_{t=1}^{T-1}\log(1+A_{1,t}^3)
\right).
    \end{aligned}
\end{equation}
Akin to Step 4, Markov's inequality then yields
\begin{equation*}
    \bP(S_T\ge u)
\le
\exp\left(
-\theta u + (T-1)\log\left(C_1\right) + \theta (T-1)\log(1+A_2) + \theta\sum_{t=1}^{T-1}\log(1+A_{1,t}^3)
\right).
\end{equation*}
Clearly, choosing
\begin{equation}
    u
=
(T-1)\log(1+A_2)+\sum_{t=1}^{T-1}\log(1+A_{1,t}^3)
+\frac{T-1}{\theta}\log\left(C_1\right)+\frac{1}{\theta}\log\frac{1}{\delta}.
\end{equation}
yields $\bP(S_T\ge u)\le \delta$. 
Taking $\theta=1/6$ above and recognizing the facts that
\begin{align*}
    \sum_{t=1}^{T-1}\log(1+A_{1,t}^3)&=\sum_{t=1}^{T-1}\log\left(1+\frac{8(k-r)^3}{\rho^{2t-3}}\right)=O\left(T^2\log\frac{1}{\rho}+T\log(k-r)\right), \\
   \log(1+A_2) &=  O\bigg( \log
\Big(\frac{r\kappa}{1-\rho} \Big) \bigg),
\end{align*}
we can use $\rho \geq 2/3$ to demonstrate that
\begin{equation*}
    S_T\leq O\left(T^2+T\log\left(\frac{(k-r)r\kappa}{1-\rho}\right)
+
\log\frac{1}{\delta}\right),
\end{equation*}
with probability at least $1-\delta$, and as a consequence,
\begin{equation}
    \Pi_T=\exp(S_T) \le
\exp\left(
O\left(T^2+T\log\left(\frac{(k-r)r\kappa}{1-\rho}\right)
+
\log\frac{1}{\delta}\right)
\right).
\end{equation}
This establishes Part (ii) of \Cref{lem:prod-mgf}.

\subsection{Proof of \Cref{lem::init}}
\label{sec::proof-init}
    Recalling that $\mM^\star=\mV^\star\mLambda^\star\mV^{\star\top}$, we can derive
    \begin{equation}
        \begin{aligned}
            \sigma_r(\mM^\star\mO)&=\sigma_r(\mLambda^\star\mV^{\star\top}\mO)\geq \lambda_r^\star\cdot \sigma_{r}(\mV^{\star\top}\mO)\geq \lambda_r^\star\cdot \sigma_{r}(\mV^{\star\top}\mO_{:, 1:r}),
        \end{aligned}
    \end{equation}
    where $\mO_{:, 1:r}\in \bR^{d\times r}$ is composed of the first $r$ columns of $\mO$.

    To proceed, observe that $\mO_{:, 1:r}$ has the same distribution as $\mG(\mG^{\top}\mG)^{-1/2}$, where $\mG\in \bR^{d\times r}$ is a random matrix with i.i.d.~standard Gaussian entries. Hence, it suffices to develop a high-probability lower bound for 
    $\sigma_r(\mV^{\star\top}\mG(\mG^{\top}\mG)^{-1/2})$. Towards this end, we first make the observation that
    \begin{equation}
        \sigma_r\big(\mV^{\star\top}\mG(\mG^{\top}\mG)^{-1/2}\big)\geq \sigma_r\big(\mV^{\star\top}\mG\big)
        \sigma_r\big((\mG^{\top}\mG)^{-1/2}\big)=\frac{\sigma_r(\mV^{\star\top}\mG)}{\sigma_1(\mG)}.
    \end{equation}
    It is clearly seen that $\mV^{\star\top}\mG$ is also a random matrix with i.i.d.~standard Gaussian entries. 
    In view of \Cref{lem::concentration-maximal-singular-value,lem::concentration-minimal-singular-value},  there exists some universal constant $c_0>0$ such that 
    \begin{equation}
        \frac{\sigma_r(\mV^{\star\top}\mG)}{\sigma_1(\mG)}\geq c_0\frac{1/\sqrt{r}}{\sqrt{d}}=\frac{c_0}{\sqrt{dr}} 
    \end{equation}
    holds with probability at least $0.995$. 
   Taking the above arguments together, we arrive at
    \begin{equation}
        \sigma_r(\mM^\star\mO)\geq \frac{c_0\lambda_r^\star}{\sqrt{dr}}
    \end{equation}
with probability at least $0.995$.

\subsection{Scalar dynamics with time-varying prefactors in learning rates} 

This subsection presents a slight extension of \Cref{lem::1d-muon} to accommodate slightly broader learning rates. 

\begin{lemma}
\label{lem::1d-muon-varying-C}
Consider the scalar updates in (\ref{eq:1d-muon-rec}), where $0 \le \lambda^\star \le \lambda_{\max}^\star$.  
Set the learning rate schedule to be
\[
\eta_t = C_{\eta,t}\sqrt{\lambda_{\max}^\star}\rho^t
\qquad \text{for some } 1\le C_{\eta,t}\le 2 \text{ and }
2/3\le \rho < 1.
\]
Assume that $0<|u_0| \le \eta_0$.
Then, with probability $1$, for all $t \ge 0$, it holds that
\begin{subequations}
\begin{align}
\big| |u_t| - \sqrt{\lambda^\star} \big|
&
\le \frac{2}{1-\rho}\sqrt{\lambda_{\max}^\star}\rho^t,
\label{eq:1d-bound-varying-C}
\\
|u_t^2 - \lambda^\star|
&
\le
\left(\frac{4}{(1-\rho)^2}+\frac{4}{1-\rho}\right)\lambda_{\max}^\star\rho^{t}.
\label{eq:1d-square-bound-varying-C}
\end{align}
\end{subequations}
\end{lemma}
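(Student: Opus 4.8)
The plan is to follow the proof of \Cref{lem::1d-muon} almost verbatim, with one change: the exponentially-decaying envelope carried through the induction must be loosened from the tight value $\eta_t$ to a fixed constant multiple of $\sqrt{\lambda_{\max}^\star}\rho^t$, chosen just large enough to absorb the fact that consecutive learning rates can now jump up by a factor as large as $2\rho$ or down by as little as $\rho/2$. Throughout, write $\epsilon_t := \sqrt{\lambda_{\max}^\star}\rho^t$, so that $\eta_t = C_{\eta,t}\epsilon_t \in [\epsilon_t, 2\epsilon_t]$ and $\epsilon_{t+1} = \rho\epsilon_t$. First I would invoke the argument of \Cref{lem::never-reach-zero} --- which only uses that each increment contains an independent continuous prefactor --- to conclude that with probability $1$, $u_t \ne 0$ for every $t$; and since $(u_t^2 - \lambda^\star)u_t = 0$ would mean the iterate already sits at the optimum (and stays there), we may assume $(u_t^2 - \lambda^\star)u_t \ne 0$ throughout. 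The sign factorization \eqref{eq:factor}--\eqref{eq:sign-pos-neg} does not depend on the form of $\eta_t$, so, exactly as in \eqref{eq:delta-rec}, the quantity $\Delta_t := \big| |u_t| - \sqrt{\lambda^\star} \big|$ satisfies $\Delta_{t+1} \le \max\{\Delta_t - \eta_t,\ \eta_t\}$.

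The crux is to prove, by induction on $t$, the sharper bound $\Delta_t \le \frac{1}{1-\rho}\epsilon_t$, which implies \eqref{eq:1d-bound-varying-C} since $\frac{1}{1-\rho} \le \frac{2}{1-\rho}$. For the base case, the elementary inequality $|a-b| \le \max\{a,b\}$ for $a,b \ge 0$ gives $\Delta_0 \le \max\{|u_0|, \sqrt{\lambda^\star}\} \le \max\{\eta_0, \sqrt{\lambda_{\max}^\star}\} = \eta_0 \le 2\epsilon_0 \le \frac{1}{1-\rho}\epsilon_0$, the last step using $\rho \ge 1/2$. For the inductive step, assuming $\Delta_t \le \frac{1}{1-\rho}\epsilon_t$, I would bound the two terms of the maximum separately, using $\epsilon_t \le \eta_t \le 2\epsilon_t$:
\begin{align*}
\Delta_t - \eta_t &\le \frac{1}{1-\rho}\epsilon_t - \epsilon_t = \frac{\rho}{1-\rho}\epsilon_t = \frac{1}{1-\rho}\epsilon_{t+1}, \\
\eta_t &\le 2\epsilon_t \le \frac{\rho}{1-\rho}\epsilon_t = \frac{1}{1-\rho}\epsilon_{t+1},
\end{align*}
where the inequality $2 \le \frac{\rho}{1-\rho}$ in the second line is precisely the hypothesis $\rho \ge 2/3$. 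Hence $\Delta_{t+1} \le \frac{1}{1-\rho}\epsilon_{t+1}$, which closes the induction.

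Finally, I would deduce \eqref{eq:1d-square-bound-varying-C} from \eqref{eq:1d-bound-varying-C} exactly as \eqref{eq:square-diff} is deduced in \Cref{lem::1d-muon}: one has $|u_t^2 - \lambda^\star| = \Delta_t\big(|u_t| + \sqrt{\lambda^\star}\big) \le \Delta_t^2 + 2\sqrt{\lambda_{\max}^\star}\,\Delta_t$, and then $\Delta_t^2 \le \frac{4}{(1-\rho)^2}\lambda_{\max}^\star\rho^{2t} \le \frac{4}{(1-\rho)^2}\lambda_{\max}^\star\rho^{t}$ (using $\rho^{2t} \le \rho^t$) together with $2\sqrt{\lambda_{\max}^\star}\,\Delta_t \le \frac{4}{1-\rho}\lambda_{\max}^\star\rho^{t}$ yields the claimed bound. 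The only genuine obstacle is the choice of induction hypothesis: the induction cannot be run directly with the stated constant $\frac{2}{1-\rho}$, because controlling the descent term $\Delta_t - \eta_t$ under that envelope would require $\rho \ge 1$; one must instead carry the tighter constant $\frac{1}{1-\rho}$, which is calibrated so that the descent case holds with equality while the ``overshoot'' case $\eta_t \le 2\epsilon_t$ is governed by exactly the threshold $\rho \ge 2/3$. Everything else transplants from \Cref{lem::1d-muon}.
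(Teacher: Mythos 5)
Your proof is correct and follows essentially the same route as the paper: the paper runs the identical two-case induction on $\Delta_{t+1}\le\max\{\Delta_t-\eta_t,\eta_t\}$ but with the tail sum $S_t=\sum_{s\ge t}\eta_s$ as the envelope, invoking $\rho\ge 2/3$ at exactly the same point (to absorb the overshoot case $\eta_t\le 2\sqrt{\lambda_{\max}^\star}\rho^t$), and then bounds $S_t\le\frac{2}{1-\rho}\sqrt{\lambda_{\max}^\star}\rho^t$. Your explicit envelope $\frac{1}{1-\rho}\sqrt{\lambda_{\max}^\star}\rho^t$ is just the deterministic lower bound on that tail sum, so the argument is equivalent (and in fact yields a slightly sharper constant than stated); your closing observation that the induction cannot be run directly with the constant $\frac{2}{1-\rho}$ is also accurate.
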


\begin{proof}[Proof of \Cref{lem::1d-muon-varying-C}]
Similarly to the proof of \Cref{lem::1d-muon}, define $\Delta_t \coloneqq \big| |u_t| - \sqrt{\lambda^\star} \big|$, which satisfies (see (\ref{eq:delta-rec}))
\begin{equation}
\label{eq:delta-rec-varying-C}
\Delta_{t+1} = \big| \Delta_t - \eta_t \big|.
\end{equation}
Next, define the tail sum
$S_t \coloneqq \sum_{s=t}^{\infty}\eta_s.$
We claim for the moment that
\begin{equation}
\label{eq:Delta-le-tail}
\Delta_t \le S_t
\qquad \text{for all }t\ge 0.
\end{equation}
Once \Cref{eq:Delta-le-tail} is established, the first claim (\ref{eq:1d-bound-varying-C}) follows immediately since
\begin{equation}
\label{eq:tail-sum-upper}
S_t = \sum_{s=t}^{\infty} C_{\eta,s}\sqrt{\lambda_{\max}^\star}\rho^s
\le 2\sqrt{\lambda_{\max}^\star}\sum_{s=t}^{\infty}\rho^s
= \frac{2}{1-\rho}\sqrt{\lambda_{\max}^\star}\rho^t,
\end{equation}
where we have used $C_{\eta,s}\le 2$ for all $s\geq 0$.

It remains to prove the claim (\ref{eq:Delta-le-tail}), which we accomplish by induction.
\begin{itemize}
\item \textit{Base case ($t=0$).}
Recalling that $\sqrt{\lambda^\star}\le \sqrt{\lambda_{\max}^\star}$ and $|u_0|\le \eta_0$, we have
\begin{equation}
\label{eq:Delta0-upper}
\Delta_0 = \big| |u_0| - \sqrt{\lambda^\star} \big|
\le |u_0| + \sqrt{\lambda^\star}
\le \eta_0 + \sqrt{\lambda_{\max}^\star}
\le 2\sqrt{\lambda_{\max}^\star} + \sqrt{\lambda_{\max}^\star}
= 3\sqrt{\lambda_{\max}^\star},
\end{equation}
which follows since $\eta_0=C_{\eta,0}\sqrt{\lambda_{\max}^\star}\le 2\sqrt{\lambda_{\max}^\star}$.
Moreover, since $C_{\eta,s}\ge 1$ for all $s$, we obtain 
\begin{equation}
\label{eq:S0-lower}
S_0=\sum_{s=0}^{\infty}\eta_s
=\sum_{s=0}^{\infty} C_{\eta,s}\sqrt{\lambda_{\max}^\star}\rho^s
\ge \sqrt{\lambda_{\max}^\star}\sum_{s=0}^{\infty}\rho^s
=\frac{1}{1-\rho}\sqrt{\lambda_{\max}^\star}
\geq 3\sqrt{\lambda_{\max}^\star},
\end{equation}
with the proviso that $\rho\ge 2/3$. Therefore, $S_0\ge 3\sqrt{\lambda_{\max}^\star}\ge \Delta_0$, thus validating the base case.

\item \textit{Inductive step.}
Now assume that $\Delta_t \le S_t$ for some $t\ge 0$. 
To bound $\Delta_{t+1}$, we divide into two cases.

\begin{itemize}
\item 
\textit{Case 1: $\Delta_t \ge \eta_t$.}
In this case, \Cref{eq:delta-rec-varying-C} yields $\Delta_{t+1}=\Delta_t-\eta_t \le S_t-\eta_t = S_{t+1}$, which holds since $S_{t+1}=S_t-\eta_t$.

\item 
\textit{Case 2: $\Delta_t \le \eta_t$.}
In this case, \Cref{eq:delta-rec-varying-C} yields $\Delta_{t+1}=\eta_t-\Delta_t \le \eta_t$.
Thus it suffices to show that $\eta_t \le S_{t+1}$.
Given that $C_{\eta,t}\le 2$ and $C_{\eta,s}\ge 1$ for all $s$, we derive
\begin{align*}
\eta_t &= C_{\eta,t}\sqrt{\lambda_{\max}^\star}\rho^t
\le 2\sqrt{\lambda_{\max}^\star}\rho^t, \\
S_{t+1} &=\sum_{s=t+1}^{\infty}\eta_s
\ge \sqrt{\lambda_{\max}^\star}\sum_{s=t+1}^{\infty}\rho^s
=\frac{\rho}{1-\rho}\sqrt{\lambda_{\max}^\star}\rho^t
\geq 2 \sqrt{\lambda_{\max}^\star}\rho^t,
\end{align*}
provided that $\rho\ge 2/3$. This establishes that $\Delta_{t+1}\le S_{t+1}$. 

\end{itemize}
Combining these cases justifies \Cref{eq:Delta-le-tail} at time $t+1$, which in turn establishes the claim (\ref{eq:Delta-le-tail}).

\end{itemize}

Equipped with \Cref{eq:1d-bound-varying-C}, we can now readily prove \Cref{eq:1d-square-bound-varying-C}.
For any $t\ge 0$,
\begin{equation}
\label{eq:square-diff-varying-C}
|u_t^2 - \lambda^\star|
= \big| |u_t| - \sqrt{\lambda^\star} \big| \big( |u_t| + \sqrt{\lambda^\star} \big)
= \Delta_t\big(\Delta_t+2\sqrt{\lambda^\star}\big)
\le \Delta_t\big(\Delta_t+2\sqrt{\lambda_{\max}^\star}\big),
\end{equation}
where we used $\lambda^\star\le \lambda_{\max}^\star$.
Applying \Cref{eq:1d-bound-varying-C} leads to
\[
|u_t^2 - \lambda^\star|
\le
\frac{2}{1-\rho}\sqrt{\lambda_{\max}^\star}\rho^t
\left(
\frac{2}{1-\rho}\sqrt{\lambda_{\max}^\star}\rho^t
+2\sqrt{\lambda_{\max}^\star}
\right)
\le
\left(\frac{4}{(1-\rho)^2}+\frac{4}{1-\rho}\right)\lambda_{\max}^\star\rho^t
\]
as claimed. 
\end{proof}

\section{Lower bound for \signgd in matrix factorization (Proof of \Cref{thm::adam-lower-bound})}
\label{sec:lower-bound-signgd-mf}

In this proof, 
we first establish a convergence lower bound for a two-dimensional quadratic optimization problem, and then show that a $2 \times 2$ matrix factorization instance can be reduced to this problem, thereby inheriting the same lower bound.

\paragraph{Step 1: a convergence lower bound for a quadratic optimization problem.} 
Specifically, consider the following 2-dimensional quadratic minimization problem:
\begin{equation}
\mathop{\text{minimize}}\limits_{\vz \in \bR^2}\quad  f(\vz) = \frac{1}{2} \vz^\top \mH \vz,
\label{eq:lower-bound-signGD-quadratic}
\end{equation}
where the matrix $\mH$ is symmetric positive semidefinite given by
\begin{equation}
\mH = \frac{1}{2}
\begin{pmatrix}
\kappa + 1 & \kappa - 1 \\
\kappa - 1 & \kappa + 1
\end{pmatrix}
\end{equation}
with two eigenvalues $\kappa \geq 1$ and $1$. Clearly, the condition number of this matrix (or the Hessian of $f(\cdot)$) is $\kappa$, and the optimal objective value of the problem (\ref{eq:lower-bound-signGD-quadratic}) is 0, attained at $\bm{z}=\bm{0}$.   
When applied to this problem, the \signgd algorithm proceeds as 
\begin{equation}
\vz_{t+1} = \vz_t - \eta_t  \sign\big(\nabla f(\vz_t)\big)
= \vz_t - \eta_t \sign(\mH \vz_t),
\qquad t = 0,1,\cdots
\label{eq:signGD-lower-bound-quadratic}
\end{equation}
where $\eta_t > 0$ is the learning rate at iteration $t$, and the $\sign(\cdot)$ operator is applied entrywise.  

We now present a convergence lower bound for \signgd on this structured quadratic objective. The proof is deferred to Section~\ref{sec:proof-lem:signgd-kappa-lb}. 

\begin{lemma}
\label{lem:signgd-kappa-lb}
Consider solving the problem (\ref{eq:lower-bound-signGD-quadratic}) using \signgd (cf.~(\ref{eq:signGD-lower-bound-quadratic})). 
Let $\{\eta_t\}_{t \geq 0}$ be any non-increasing sequence of learning rates, and consider any accuracy level obeying $0 < \varepsilon \leq \eta_0 / \kappa$. Then, one can find an initialization $\vz_0 \in [-2\eta_0, 2\eta_0]^2$ such that 
$\|\bm{z}_t\|_2\leq \varepsilon$ can only happen after $t\geq \frac{\kappa - 1}{4}$. %
\end{lemma}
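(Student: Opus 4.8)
The plan is to diagonalize $\mH$ by the rotated coordinates $p_t := (\vz_t)_1+(\vz_t)_2$ and $q_t := (\vz_t)_1-(\vz_t)_2$, reduce the \signgd recursion to a ``one coordinate at a time'' rule, and then build an initialization inside $[-2\eta_0,2\eta_0]^2$ for which one of the two coordinates stays ``frozen'' at scale $\Theta(\kappa\varepsilon)$ until the step size $\eta_t$ has itself shrunk below $O(\varepsilon)$, after which it can only shrink by $O(\varepsilon)$ per step. A direct computation gives $(\mH\vz)_1=\tfrac12(\kappa p+q)$, $(\mH\vz)_2=\tfrac12(\kappa p-q)$, and $\|\vz\|_2^2=\tfrac12(p^2+q^2)$, so in particular $\|\vz_t\|_2\ge |q_t|/\sqrt2$. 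Excluding, as in \Cref{lem::never-reach-zero}, the countably many (hence negligible) initializations for which some $\mH\vz_t$ has a vanishing entry, the update $\vz_{t+1}=\vz_t-\eta_t\sign(\mH\vz_t)$ becomes: if $\kappa|p_t|>|q_t|$ then $p_{t+1}=p_t-2\eta_t\sign(p_t)$ and $q_{t+1}=q_t$ ($p$ active, $q$ frozen); if $\kappa|p_t|<|q_t|$ then $p_{t+1}=p_t$ and $q_{t+1}=q_t-2\eta_t\sign(q_t)$. Hence $|q_t|$ is non-increasing, changes only in increments of $2\eta_t$, and changes only on steps with $\kappa|p_t|<|q_t|$; moreover, while $q$ is frozen, $|p_t|$ obeys the scalar recursion $|p_{t+1}|=\big||p_t|-2\eta_t\big|$.

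\textbf{The instance.} Keep $\mH$ as stated and choose $\vz_0$ with $q_0 = c_1\kappa\varepsilon$ for a small absolute constant $c_1$ (this is admissible since $\kappa\varepsilon\le\eta_0$), together with $p_0>0$ such that $\kappa p_0$ greatly exceeds $|q_0|$ — so that $p$ is active at the start and $q_1=q_0$ — and with $p_0$ further tuned inside a subinterval so as to avoid finitely many ``bad'' values: those for which the frozen-$q$ trajectory $|p_t|$ would fall below the switching threshold $|q_0|/\kappa=c_1\varepsilon$ while $\eta_t$ is still much larger than $\varepsilon$. This avoidance is possible because $|p_t|$ is a controlled, piecewise-affine (slope $\pm1$) function of $p_0$, so the set of $p_0$ causing a crossing at a given step has length $O(\varepsilon)$, and summing over the $O(\kappa)$-step horizon of interest this remains smaller than the admissible range of $p_0$.

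\textbf{Two phases and conclusion.} (i) As long as $\eta_t\gtrsim\varepsilon$ the switching threshold is never crossed, so $q_t=q_0=c_1\kappa\varepsilon$; since $c_1\kappa\varepsilon>\sqrt2\,\varepsilon$ for $\kappa$ large, $\|\vz_t\|_2\ge|q_t|/\sqrt2>\varepsilon$ throughout. (If $\eta_t$ never reaches this scale, or the design forces $|p_t|$ to stabilize above the threshold, then $q$ stays frozen forever and $\|\vz_t\|_2>\varepsilon$ for all $t$, so the claim is vacuous.) (ii) Let $\tau$ be the first step on which $q$ becomes active; the choice of $p_0$ guarantees $\eta_{\tau-1}\lesssim\varepsilon$, hence $\eta_t\lesssim\varepsilon$ for all $t\ge\tau-1$ by monotonicity, so $|q_t|$ decreases by at most $2\eta_t\lesssim\varepsilon$ per step. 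It therefore takes $\Omega(\kappa)$ further steps for $|q_t|$ to fall from $|q_1|=c_1\kappa\varepsilon$ down to $\sqrt2\,\varepsilon$, and on all of these steps $\|\vz_t\|_2\ge|q_t|/\sqrt2>\varepsilon$. Combining (i)–(ii) and tracking the absolute constants gives $\|\vz_t\|_2\le\varepsilon\implies t\ge(\kappa-1)/4$; for the finitely many $\kappa$ with $(\kappa-1)/4<1$ the statement is trivial since $\|\vz_0\|_2>\varepsilon$ for the chosen init.

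\textbf{Main obstacle.} The delicate step is phase (i): proving that a valid $p_0$ (inside the box) exists for which the scalar recursion $|p_{t+1}|=\big||p_t|-2\eta_t\big|$, driven by an \emph{arbitrary} non-increasing sequence $\{\eta_t\}$ that the adversary does not control, keeps $|p_t|$ above $c_1\varepsilon$ until $\eta_t$ has itself dropped to that scale (in particular, ruling out a premature ``reflection'' of $|p_t|$ near zero while $\eta_t$ is still large). This requires a careful bookkeeping of how the threshold-crossing times depend on $p_0$, the measure/counting argument sketched above, and the usual care at the $\sign(0)$ boundary cases handled à la \Cref{lem::never-reach-zero}.
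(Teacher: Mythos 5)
Your reduction (rotate to the eigenbasis, observe that each \signgd step moves exactly one rotated coordinate by $\Theta(\eta_t)$, freeze one coordinate at $\Theta(\kappa\varepsilon)$ until $\eta_t$ has dropped to $O(\varepsilon)$, then count steps) is exactly the paper's strategy, and phases (i)--(ii) are fine \emph{given} the existence of a suitable $p_0$. The genuine gap is precisely the step you flag as the ``main obstacle'': your existence argument for $p_0$ does not go through as stated. Under the frozen-$q$ recursion $|p_{t+1}|=\bigl||p_t|-2\eta_t\bigr|$, the map $p_0\mapsto|p_t|$ is piecewise affine with slopes $\pm1$, but each step can \emph{fold} the trajectory at the point $|p_t|=2\eta_t$, so the number of monotone branches can double at every fold. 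Consequently the ``bad'' set $\{p_0:\ |p_t|<c_1\varepsilon\}$ at a fixed step $t$ is in general a union of many intervals, not a single interval of length $O(\varepsilon)$, and its measure can be far larger than $O(\varepsilon)$; the claim ``the set of $p_0$ causing a crossing at a given step has length $O(\varepsilon)$, so summing over the $O(\kappa)$-step horizon stays below the admissible range'' is therefore unjustified (and the horizon before $\eta_t<4\varepsilon$ is set by the adversary's schedule, not by $\kappa$, so even the counting of relevant steps needs care). Whether a refined measure argument could still work is unclear, but as written this is the central step of the lower bound and it is not proved.

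The paper avoids the issue entirely with a deterministic \emph{backward} construction (Lemma~\ref{lem:learning rates-lower-bound}): let $T_0$ be the last time with $\eta_{T_0}\ge 4\varepsilon$ (capped at the horizon of interest), pick $x_{1,T_0}\in[2\varepsilon,\eta_{T_0}-2\varepsilon]$, and define $x_{1,t}=\eta_t-x_{1,t+1}$ backwards; monotonicity of $\{\eta_t\}$ gives $x_{1,t}\in[2\varepsilon,\eta_t-2\varepsilon]$ for all $t\le T_0$ by a one-line induction, so along the forward trajectory the active coordinate provably never crosses the switching threshold (and never hits zero, so no $\sign(0)$ or measure-zero exclusion is needed). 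If you want to salvage your route, you should either adopt this explicit reverse-time construction or supply a genuine bound on the number of folds (e.g.\ relating the number of folds to the decay of $\eta_t$ and the shrinking range of $|p_t|$), rather than the per-step $O(\varepsilon)$ union bound. The remaining items (constant tracking to reach $(\kappa-1)/4$, the $\sign(0)$ boundary cases) are minor by comparison.
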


\paragraph{Step 2: reduction of matrix factorization to quadratic optimization.}
Next, we demonstrate that a $2 \times 2$ instance of matrix factorization can be reduced to the quadratic optimization problem studied in Step 1.

To be precise, consider the following matrix factorization problem: 
\begin{equation}
\mathop{\text{minimize}}\limits_{\mU\in \bR^{2\times 2}}\quad F(\mU)=\frac14\big\|\mU\mU^\top-\mH \big\|_{\mathrm{F}}^2,\qquad
\text{with }\mH=\frac12\begin{pmatrix}\kappa+1&\kappa-1\\ \kappa-1&\kappa+1\end{pmatrix},~ \kappa\ge 1.
\end{equation}
Set  $\mU^\star=\mH^{1/2}$ to be the symmetric square root of $\bm{H}$.  The \signgd algorithm proceeds as
\begin{equation}
\mU_{t+1}=\mU_t-\eta_t \sign\big(\nabla F(\mU_t)\big)=
\mU_t-\eta_t \sign \big((\mU_t\mU_t^\top-\mH)\mU_t\big),
\quad t=0,1,\cdots
\label{eq:sign-gd-trajectory-lb}
\end{equation}
where 
$\sign(\cdot)$ is applied entrywise. The following lemma---whose proof is provided in Section~\ref{sec:proof-lem:signgd-mf-kappa-lb-final}---develops a lower bound on the iteration complexity of \signgd.

\begin{lemma}
\label{lem:signgd-mf-kappa-lb-final}
Consider any learning rate sequence $\{\eta_t\}$ that is non-increasing in $t$. Then, there exists a universal constant $r_0 \in \left(0, 1/16 \right)$ such that:  
for any target accuracy $\varepsilon > 0$ satisfying $\varepsilon \leq \frac{9r_0^2}{4096\kappa^2}$ and any initial $\eta_0\leq r_0$, 
one can find an initialization $\mU_0$
obeying $\|\mU_0-\mU^\star\|_{\mathrm{F}}\le r_0$ such that the \signgd trajectory (\ref{eq:sign-gd-trajectory-lb}) 
cannot yield $F(\mU_T)\le \varepsilon$ unless
\begin{equation}
T  \ge  \frac{\kappa - 1}{4}. 
\end{equation}
\end{lemma}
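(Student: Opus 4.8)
The plan is to build a $2\times 2$ matrix-factorization instance whose \signgd dynamics, restricted to a suitable two-dimensional invariant subspace, reproduce---up to higher-order corrections---the quadratic \signgd recursion \eqref{eq:signGD-lower-bound-quadratic}, so that the iteration-complexity lower bound of \Cref{lem:signgd-kappa-lb} transfers with essentially no extra work. Concretely, take $\mH=\tfrac12\bigl(\begin{smallmatrix}\kappa+1&\kappa-1\\\kappa-1&\kappa+1\end{smallmatrix}\bigr)$ and $\mU^\star=\mH^{1/2}=\bigl(\begin{smallmatrix}p^\star&q^\star\\q^\star&p^\star\end{smallmatrix}\bigr)$ with $p^\star=\tfrac{\sqrt\kappa+1}{2}$, $q^\star=\tfrac{\sqrt\kappa-1}{2}$. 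First I would introduce the set $\mathcal B=\{(\begin{smallmatrix}p&q\\q&p\end{smallmatrix}):p,q\in\bR\}$ of matrices with equal diagonal and equal off-diagonal entries, and observe that $\mathcal B$ is closed under products, sums, scalar multiples, and the \emph{entrywise} $\sign$ operation; since $\mH\in\mathcal B$, the \signgd map $\mU\mapsto\mU-\eta\,\sign\bigl((\mU\mU^\top-\mH)\mU\bigr)$ preserves $\mathcal B$. Hence if $\mU_0\in\mathcal B$ then $\mU_t\in\mathcal B$ for all $t$, and the trajectory collapses to a recursion in the two scalars $p_t,q_t$, equivalently in the error coordinates $\vz_t:=(x_t,y_t)$ with $x_t:=p_t-p^\star$, $y_t:=q_t-q^\star$.

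\emph{Linearization to the quadratic recursion.} Next I would expand the gradient. Writing $\nabla F(\mU_t)=(\mU_t\mU_t^\top-\mH)\mU_t=\bigl(\begin{smallmatrix}g_t&h_t\\h_t&g_t\end{smallmatrix}\bigr)$, a direct computation using the identities $(p^\star)^2+(q^\star)^2=\tfrac{\kappa+1}{2}$ and $2p^\star q^\star=\tfrac{\kappa-1}{2}$ gives $g_t=(\kappa+1)x_t+(\kappa-1)y_t+O(\|\vz_t\|_2^2)$ and $h_t=(\kappa-1)x_t+(\kappa+1)y_t+O(\|\vz_t\|_2^2)$, i.e.\ $(g_t,h_t)^\top=2\mH\vz_t+O(\|\vz_t\|_2^2)$. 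Therefore the restricted update $x_{t+1}=x_t-\eta_t\sign(g_t)$, $y_{t+1}=y_t-\eta_t\sign(h_t)$ coincides with the quadratic \signgd recursion $\vz_{t+1}=\vz_t-\eta_t\sign(\mH\vz_t)$ of \eqref{eq:signGD-lower-bound-quadratic}, with the \emph{same} learning-rate sequence, as long as the second-order correction is too small to flip the sign of either coordinate of $\mH\vz_t$.

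\emph{Transfer of the lower bound.} I would then apply \Cref{lem:signgd-kappa-lb} to the quadratic problem with this learning-rate sequence, obtaining a hard initialization $\vz_0\in[-2\eta_0,2\eta_0]^2$ along which the quadratic trajectory stays bounded, $\|\vz_t\|_2\lesssim\eta_0$, and is not $\varepsilon'$-small for any $t<\tfrac{\kappa-1}{4}$ (for admissible $\varepsilon'\le\eta_0/\kappa$). Setting $\mU_0=\mU^\star+\bigl(\begin{smallmatrix}x_0&y_0\\y_0&x_0\end{smallmatrix}\bigr)$ with $(x_0,y_0)=\vz_0$, one has $\|\mU_0-\mU^\star\|_{\fro}=\sqrt2\,\|\vz_0\|_2$, which is $\le r_0$ after fixing the universal constant $r_0$ appropriately. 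A short induction then shows the two trajectories remain identical, $(x_t,y_t)=\vz_t$: along the hard trajectory $\min_i|(\mH\vz_t)_i|$ is bounded below on the scale $\eta_0$, while the correction above is only $O(\eta_0^2)$, hence negligible once $r_0$ is small. Finally, since $\mU_t\in\mathcal B$ one has $F(\mU_t)=\tfrac14\bigl[(u_t^2-\kappa)^2+(v_t^2-1)^2\bigr]$ with $u_t=p_t+q_t$, $v_t=p_t-q_t$; because $|u_t-\sqrt\kappa|,|v_t-1|\le 1$ along the bounded trajectory, each squared term dominates the matching squared error coordinate, giving $F(\mU_t)\ge\tfrac12\|\vz_t\|_2^2$. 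Thus $F(\mU_T)\le\varepsilon$ forces $\|\vz_T\|_2$ below a level covered by \Cref{lem:signgd-kappa-lb}---this is where the hypotheses $\varepsilon\le\tfrac{9r_0^2}{4096\kappa^2}$ and $\eta_0\le r_0$ are consumed, to keep the invoked inequalities mutually consistent---and \Cref{lem:signgd-kappa-lb} then yields $T\ge\tfrac{\kappa-1}{4}$.

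\emph{Main obstacle.} The delicate part is the coupling in the last step: one must certify that the second-order corrections never flip the sign of a coordinate of $\mH\vz_t$, which needs a quantitative lower bound on $\min_i|(\mH\vz_t)_i|$ along the \emph{specific} hard trajectory of \Cref{lem:signgd-kappa-lb} (stronger than its stated bound on $\|\vz_t\|_2$), together with a careful calibration of the universal constants---$r_0$, the box radius $2\eta_0$, and the threshold $\tfrac{9r_0^2}{4096\kappa^2}$---so that the construction works uniformly over all non-increasing $\{\eta_t\}$ with $\eta_0\le r_0$. I expect this to require splitting into the regime where $\eta_0$ is comparable to $r_0$ (handled by the quadratic reduction above) and the regime where $\eta_0\ll r_0$ (handled by the cruder displacement bound: each \signgd step moves $\mU_t$ by at most $2\eta_t\le 2\eta_0$ in Frobenius norm, so starting from distance $\approx r_0$ the iterates need $\gtrsim r_0/\eta_0\ge\tfrac{\kappa-1}{4}$ steps to enter the $\sqrt{2\varepsilon}$-ball around $\mU^\star$). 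The remaining ingredients---the invariance of $\mathcal B$, the gradient expansion, and the inequality $F\ge\tfrac12\|\vz\|_2^2$---are routine.
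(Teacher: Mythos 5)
Your overall route is the same as the paper's: restrict to the invariant slice of $2\times 2$ matrices with equal diagonal and equal off-diagonal entries, reduce the \signgd dynamics to a two-parameter recursion near $\mU^\star=\mH^{1/2}$, match its entrywise gradient signs with those of the quadratic model $\mH\vz$ so that \Cref{lem:signgd-kappa-lb} transfers, and finish with a bound of the form $F\gtrsim$ (squared error coordinate). The invariance argument, the identification $\nabla F(\mU)=2\mH\mE+\text{h.o.t.}$ on the slice, and the final $F$-lower bound are all correct and mirror Steps 1, 2 and 5 of the paper.

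The genuine gap is in the coupling step, which you yourself flag as the main obstacle but then propose to close with an argument that does not work. You plan to dominate the higher-order correction by an \emph{absolute} comparison: ``$\min_i|(\mH\vz_t)_i|$ is bounded below on the scale $\eta_0$, while the correction is $O(\eta_0^2)$.'' Neither half of this is right. First, the correction is not $O(\|\vz_t\|_2^2)$ uniformly in $\kappa$: on the slice one has $\nabla F(\mU^\star+\mE)=2\mH\mE+3\mU^\star\mE^2+\mE^3$, so the quadratic term carries a factor $\|\mU^\star\|\asymp\sqrt{\kappa}$. Second, and more importantly, along the hard trajectory of \Cref{lem:signgd-kappa-lb} the quantity $\min_i|(\mH\vz_t)_i|$ is only of order $\kappa\varepsilon_q$ (where $\varepsilon_q\asymp\sqrt{\varepsilon}$ is the quadratic target accuracy), because the frozen coordinate sits at $\kappa\varepsilon_q$ and the moving coordinate periodically shrinks to $2\varepsilon_q$; since the lemma must hold for every $\varepsilon\le \frac{9r_0^2}{4096\kappa^2}$, this can be arbitrarily smaller than $\eta_0^2$, and the absolute comparison fails. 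What actually saves the argument — and is how the paper proceeds — is a \emph{relative} domination: in the rotated eigencoordinates the gradient is exactly diagonal, $g_1(\lambda_1)=2\kappa\delta_1+\Delta_1$ and $g_2(\lambda_2)=2\delta_2+\Delta_2$ with the multiplicative bounds $|\Delta_1|\le\frac{1}{2}\kappa|\delta_1|$ and $|\Delta_2|\le\frac{1}{2}|\delta_2|$ throughout the local box $|\delta_1|\le\sqrt{\kappa}r_0$, $|\delta_2|\le r_0$; sign preservation of the original-basis entries $(g_1\pm g_2)/2$ then follows from the structural property of the hard trajectory, $\kappa|\delta_{1,t}|\ge 2|\delta_{2,t}|$, which prevents near-cancellation of the linear parts. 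This relative argument is uniform in $\varepsilon$, whereas your absolute one is not; without it the claimed identity $(x_t,y_t)=\vz_t$ is unproven. Your fallback regime split also leaves a hole: the crude displacement bound gives $T\gtrsim r_0/\eta_0\ge\frac{\kappa-1}{4}$ only when $\eta_0\lesssim r_0/\kappa$, while your quadratic reduction needs $\sqrt{\varepsilon}\lesssim\eta_0/\kappa$, i.e.\ $\eta_0\gtrsim r_0$ up to constants in the worst admissible case, so for large $\kappa$ there is an intermediate range of $\eta_0$ covered by neither argument (and the displacement bound itself needs the initialization chosen so that its distance to the whole sublevel set $\{F\le\varepsilon\}$, not just to $\mU^\star$, is of order $r_0$).
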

This concludes the proof of \Cref{thm::adam-lower-bound}.

\subsection{Proof of Lemma~\ref{lem:signgd-kappa-lb}}
\label{sec:proof-lem:signgd-kappa-lb}
The proof is carried out in the following steps.

\medskip\noindent\textbf{Step 1: a rotated basis aligned with the sign geometry.}
For each $t\geq 0$, define
\begin{equation}
    \widetilde{\vz}_t \coloneqq \mR^\top \vz_t
    ~~\text{with }\mR^\top = \frac{1}{\sqrt{2}} \begin{pmatrix} 1 & 1 \\ -1 & 1 \end{pmatrix};
    \qquad \text{and}\quad
    \widetilde{\mH} \coloneqq
    \begin{pmatrix} \kappa &  \\  & 1 \end{pmatrix}
    .
\end{equation}
In words, $\bm{\widetilde{z}}_t=[\widetilde{z}_{1,t},\widetilde{z}_{2,t}]^{\top}$ is obtained by rotating the original iterate $\bm{z}_t$. 
These allow one to express both the objective value and its gradient at iteration $t$ as
\begin{align}
\label{eq:g-standard}
f(\vz_t) = \frac{1}{2}\widetilde{\vz}_t^\top \widetilde{\mH} \widetilde{\vz}_t = \frac{1}{2}\big(\kappa \widetilde{z}_{1,t}^2 + \widetilde{z}_{2,t}^2\big),
\qquad 
\nabla f(\vz_t) = \mH \vz_t = \mR \widetilde{\mH} \widetilde{\vz}_t
= \frac{1}{\sqrt{2}} \begin{pmatrix} \kappa \widetilde{z}_{1,t} - \widetilde{z}_{2,t} \\ \kappa \widetilde{z}_{1,t} + \widetilde{z}_{2,t} \end{pmatrix}.
\end{align}
It is easily seen that 
$\vs_t \coloneqq \sign(\nabla f(\bm{z}_t)) \in \{\pm 1\}^2$. Thus, the \signgd update in the rotated basis becomes
\begin{equation}
    \widetilde{\vz}_{t+1} 
    = \mR^\top \big( \vz_{t} - \eta_t  \vs_t\big)
    = \widetilde{\vz}_t - \eta_t \mR^\top \vs_t.
    \label{eq:tilde-zt-R-st-LB}
\end{equation}
Given that there are only 4 possibilities in $\{\pm 1\}^2$, there are also only 4 possible update directions:
\begin{align}
    \bm{R}^{\top}\vs_t = \begin{cases}
    (\sqrt{2},0) & \text{if }\vs_t = (1,1),\\
    (-\sqrt{2},0) & \text{if }\vs_t = (-1,-1),\\
    (0,-\sqrt{2}) & \text{if }\vs_t = (1,-1),\\
    (0,\sqrt{2}) & \text{if }\vs_t = (-1,1).    
    \end{cases}
\end{align}
This implies that in the eigenbasis (i.e., the above rotated coordinate system), each step of \signgd updates exactly one coordinate---either $\widetilde{z}_{1,t}$ or $\widetilde{z}_{2,t}$, but never both.

\medskip\noindent\textbf{Step 2: a condition that governs the sign patterns of the updates.}
As it turns out, there exists a condition---based on the ratio of $|\widetilde{z}_{1,t}|$ and $|\widetilde{z}_{2,t}|$---that determines when \signgd updates  each coordinate. 
\begin{lemma}
\label{lem:switching}
Consider any iteration $t$. 
\begin{itemize}

   \item If 
    $|\kappa \widetilde{z}_{1,t}| > |\widetilde{z}_{2,t}|$%
    , then
    \begin{equation}
        \widetilde{z}_{1,t+1} = \widetilde{z}_{1,t} - \sqrt{2} \eta_t \sign(\widetilde{z}_{1,t}), \qquad
        \widetilde{z}_{2,t+1} = \widetilde{z}_{2,t}.
    \end{equation}

    \item If %
    $|\kappa \widetilde{z}_{1,t}| < |\widetilde{z}_{2,t}|$%
    , then
    \begin{equation}
        \widetilde{z}_{1,t+1} = \widetilde{z}_{1,t}, \qquad
        \widetilde{z}_{2,t+1} = \widetilde{z}_{2,t} - \sqrt{2} \eta_t \sign(\widetilde{z}_{2,t}).
    \end{equation}

\end{itemize}
\end{lemma}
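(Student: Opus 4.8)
The plan is to unfold the \signgd update in the rotated coordinates and reduce the whole claim to a short sign calculation. First I would recall from \Cref{eq:g-standard} that $\nabla f(\vz_t)=\frac{1}{\sqrt{2}}\big(\kappa\widetilde z_{1,t}-\widetilde z_{2,t},\ \kappa\widetilde z_{1,t}+\widetilde z_{2,t}\big)^{\top}$, so that the update direction $\vs_t=\sign(\nabla f(\vz_t))$ is completely determined by the two scalar signs $\sign(\kappa\widetilde z_{1,t}-\widetilde z_{2,t})$ and $\sign(\kappa\widetilde z_{1,t}+\widetilde z_{2,t})$. Combining this with the rotated recursion $\widetilde{\vz}_{t+1}=\widetilde{\vz}_t-\eta_t\mR^{\top}\vs_t$ from \Cref{eq:tilde-zt-R-st-LB} and the enumeration of the four possible directions $\mR^{\top}\vs_t$ displayed just before the lemma, each bullet reduces to a two-case check.

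For the first bullet, suppose $|\kappa\widetilde z_{1,t}|>|\widetilde z_{2,t}|$. The strict inequality forces $\widetilde z_{1,t}\neq 0$, and since $\kappa>0$ the term $\kappa\widetilde z_{1,t}$ strictly dominates $\pm\widetilde z_{2,t}$ in magnitude; hence both $\kappa\widetilde z_{1,t}-\widetilde z_{2,t}$ and $\kappa\widetilde z_{1,t}+\widetilde z_{2,t}$ inherit the sign of $\kappa\widetilde z_{1,t}$, i.e.\ $\vs_t=\sign(\widetilde z_{1,t})\,(1,1)^{\top}$. Reading this direction off the enumeration gives $\mR^{\top}\vs_t=\sqrt{2}\,\sign(\widetilde z_{1,t})\,(1,0)^{\top}$, so substituting into $\widetilde{\vz}_{t+1}=\widetilde{\vz}_t-\eta_t\mR^{\top}\vs_t$ yields precisely $\widetilde z_{1,t+1}=\widetilde z_{1,t}-\sqrt{2}\,\eta_t\sign(\widetilde z_{1,t})$ and $\widetilde z_{2,t+1}=\widetilde z_{2,t}$.

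For the second bullet, suppose $|\kappa\widetilde z_{1,t}|<|\widetilde z_{2,t}|$; now $\widetilde z_{2,t}\neq 0$ and $\widetilde z_{2,t}$ dominates, so $\sign(\kappa\widetilde z_{1,t}-\widetilde z_{2,t})=-\sign(\widetilde z_{2,t})$ while $\sign(\kappa\widetilde z_{1,t}+\widetilde z_{2,t})=\sign(\widetilde z_{2,t})$, i.e.\ $\vs_t=\sign(\widetilde z_{2,t})\,(-1,1)^{\top}$. Checking the two sub-cases $\widetilde z_{2,t}>0$ (matching the $\vs_t=(-1,1)$ row of the enumeration) and $\widetilde z_{2,t}<0$ (matching the $\vs_t=(1,-1)$ row), one gets $\mR^{\top}\vs_t=\sqrt{2}\,\sign(\widetilde z_{2,t})\,(0,1)^{\top}$; plugging into the rotated recursion produces $\widetilde z_{1,t+1}=\widetilde z_{1,t}$ and $\widetilde z_{2,t+1}=\widetilde z_{2,t}-\sqrt{2}\,\eta_t\sign(\widetilde z_{2,t})$, as claimed.

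There is no real obstacle here; the only thing requiring care is the sign bookkeeping near the ``ties''. The strict inequalities in the two bullets are exactly what ensures that neither coordinate of $\nabla f(\vz_t)$ vanishes (so that $\vs_t\in\{\pm1\}^2$ with the clean rank-one product structure used above) and that the dominant coordinate dictates the sign of both gradient entries. The borderline case $|\kappa\widetilde z_{1,t}|=|\widetilde z_{2,t}|$, where one gradient entry may be zero, falls outside the scope of this lemma and is handled separately in the surrounding convergence argument.
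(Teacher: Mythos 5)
Your proposal is correct and follows essentially the same route as the paper: it reads off the two gradient components from the rotated expression of $\nabla f(\vz_t)$, argues that the dominant term ($\kappa\widetilde z_{1,t}$ or $\widetilde z_{2,t}$, depending on which strict inequality holds) dictates both signs, and then uses the enumeration of the four possible directions $\mR^{\top}\vs_t$ to conclude that exactly one rotated coordinate moves by $\sqrt{2}\,\eta_t$ in the stated direction. The sign bookkeeping in both bullets checks out, so no gap remains.
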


\begin{proof}[Proof of Lemma~\ref{lem:switching}]
According to (\ref{eq:g-standard}), the signs of the two coordinates of $\nabla f(\bm{z}_t)$ differ when
\begin{equation}
    (\kappa \widetilde{z}_{1,t} - \widetilde{z}_{2,t})(\kappa \widetilde{z}_{1,t} + \widetilde{z}_{2,t}) < 0
    \quad \Longleftrightarrow \quad
    (\kappa \widetilde{z}_{1,t})^2 < \widetilde{z}_{2,t}^2
    \quad \Longleftrightarrow \quad
    |\kappa \widetilde{z}_{1,t}| < |\widetilde{z}_{2,t}|.
\end{equation}
If $|\kappa \widetilde{z}_{1,t}| > |\widetilde{z}_{2,t}|$, then both components of $\nabla f(\bm{z}_t)$ have signs equal to $\sign(\widetilde{z}_{1,t})$, and hence the update vector is $(\pm\sqrt{2}, 0)$ in the rotated basis. 
If instead $|\kappa \widetilde{z}_{1,t}| < |\widetilde{z}_{2,t}|$, then the signs of the two components of $\nabla f(\bm{z}_t)$ are equal to $-\sign(\widetilde{z}_{2,t})$ and $ \sign(\widetilde{z}_{2,t})$, respectively, and hence the update vector in the rotated basis is $(0, \pm \sqrt{2})$. 
\end{proof}

\medskip\noindent\textbf{Step 3: a learning rate barrier.}
We now develop a general lower bound for the following sequence that updates one coordinate at a time. Specifically, consider a sequence $\vx_t = [x_{1,t}, x_{2,t}]^{\top}$, $t\geq 0$, that follows the update rule below: 
\begin{itemize}
    \item If $|x_{1,t}| < |x_{2,t}| / \kappa$, then $x_{2,t+1} = x_{2,t} - \eta_t$ and $x_{1,t+1} =x_{1,t} $;

    \item If $|x_{1,t}| > |x_{2,t}| / \kappa$, then $x_{1,t+1} = x_{1,t} - \eta_t$ and $x_{2,t+1} = x_{2,t}$.

    \item If $|x_{1,t}| = |x_{2,t}| / \kappa$, then $x_{1,t+1}$ and $x_{2,t+1}$ can be chosen arbitrarily. 
\end{itemize}

\begin{lemma}\label{lem:learning rates-lower-bound}
Consider the above  sequence $\{\bm{x}_t\}_{0\leq t \leq T}$ for any finite $T$. Let $\{\eta_t\}_{t \geq 0}$ be a non-increasing sequence of learning rates. For any target accuracy $0 < \varepsilon \leq \eta_0 / \kappa$, there exists an initialization $\vx_0 \in [0, \eta_0]^2$ such that %
$x_{2,t}< \kappa\varepsilon$ can only happen when $\eta_t<4\varepsilon$.  
\end{lemma}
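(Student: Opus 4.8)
The plan is to prove the statement by exhibiting one explicit adversarial initialization under which the second coordinate is never moved at all, so that $x_{2,t}$ stays constant and the claimed implication holds vacuously. The geometric idea: the ``switching curve'' $\kappa|x_1|=|x_2|$ splits the plane into a region where $x_1$ is decremented and a region where $x_2$ is decremented, and since the adversary picks $\bm{x}_0$ it can park $x_1$ deep in the first region; because the rule decrements $x_1$ by $\eta_t$ (a plain step, not a step \emph{towards} zero), once $x_1$ is pushed past $-\varepsilon$ it only runs farther away, so it permanently absorbs every update and $x_2$ is frozen.

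The steps I would carry out, in order. (i) Dispose of the degenerate regime $\eta_0 \le 2\varepsilon$: then $\eta_t \le \eta_0 < 4\varepsilon$ for all $t$ by monotonicity, so the conclusion holds for any $\bm{x}_0 \in [0,\eta_0]^2$. (ii) In the remaining case $\eta_0 > 2\varepsilon$, set $x_{2,0} := \kappa\varepsilon$ --- admissible since $\varepsilon \le \eta_0/\kappa$ forces $\kappa\varepsilon \le \eta_0$ --- and choose $x_{1,0}$ in the nonempty interval $(\varepsilon,\,\eta_0-\varepsilon)$, for instance $x_{1,0}=\eta_0/2$; the two relevant properties are $x_{1,0}>\varepsilon$ and $x_{1,0}-\eta_0<-\varepsilon$. (iii) Prove by induction on $t$ the invariant ``$x_{2,t}=\kappa\varepsilon$, and $x_{1,t}>\varepsilon$ if $t=0$ while $x_{1,t}\le x_{1,0}-\eta_0<-\varepsilon$ if $t\ge1$'': in the inductive step both cases give $|x_{1,t}|>\varepsilon=|x_{2,t}|/\kappa$ (a strict inequality, which also keeps us off the ``arbitrary'' tie case and makes the trajectory well defined), so the rule freezes $x_2$ and sends $x_{1,t+1}=x_{1,t}-\eta_t\le x_{1,0}-\eta_0<-\varepsilon$ since $\eta_t>0$. (iv) Conclude: $x_{2,t}=\kappa\varepsilon$ for every $t$, so $x_{2,t}<\kappa\varepsilon$ never occurs and the stated implication is vacuously true.

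I do not anticipate any real computational difficulty --- the whole argument is a short induction. The only thing that needs care is checking that the parking spot is available, which is precisely where the hypotheses are used: $\varepsilon\le\eta_0/\kappa$ to fit $x_{2,0}$ inside $[0,\eta_0]$, and the case split at $\eta_0=2\varepsilon$ to make $(\varepsilon,\eta_0-\varepsilon)$ nonempty. The one conceptual point --- the ``obstacle'' such as it is --- is recognizing that the lower bound is an existence statement about the initialization, and that the adversary's best move is not to fight the dynamics near $x_2\approx 0$ but to steer all the motion into the harmless $x_1$ direction. The constant $4\varepsilon$ in the conclusion is slack we never use (a threshold $2\varepsilon$ would already work), presumably written this way because it is the form convenient for the subsequent reduction to \signgd in Lemma~\ref{lem:signgd-kappa-lb}.
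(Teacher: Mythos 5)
Your induction is internally consistent, and it does verify the lemma as literally typeset: with the plain update $x_{1,t+1}=x_{1,t}-\eta_t$ (no $\sign$), parking $x_{1,0}\in(\varepsilon,\eta_0-\varepsilon)$ sends $x_1$ monotonically below $-\varepsilon$, the first coordinate absorbs every step forever, and $x_{2,t}\equiv\kappa\varepsilon$ makes the conclusion vacuously true. But this hinges exactly on the feature you flagged yourself---``a plain step, not a step towards zero''---which is an artifact of how the abstract recursion is written, not a property of the dynamics the lemma is meant to model. In the intended application (Step 4 of the proof of \Cref{lem:signgd-kappa-lb}, and again inside \Cref{lem:signgd-mf-kappa-lb-final}), the sequence being abstracted is $\widetilde z_{1,t+1}=\widetilde z_{1,t}-\sqrt{2}\,\eta_t\sign(\widetilde z_{1,t})$ from \Cref{lem:switching}: every step moves the active coordinate \emph{toward} zero, so its magnitude evolves as $|x_{1,t+1}|=\big|\,|x_{1,t}|-\eta_t\big|$ and can never run off to $-\infty$. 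Under that folding dynamics your initialization fails: take $x_{1,0}=\eta_0/2$ and the admissible non-increasing schedule with $\eta_1=\eta_0/2$; then $|x_{1,1}|=\eta_0/2$, $|x_{1,2}|=0<\varepsilon$, and the second coordinate starts being decremented while $\eta_2$ may still be far above $4\varepsilon$---precisely the event the lemma must exclude in order to deliver the $\Omega(\kappa)$ lower bound. So your argument establishes only a vacuous reading of the statement and would break the downstream reduction.

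The idea you are missing is the paper's schedule-dependent, backward-in-time construction of the initialization: set $x_{2,0}=\kappa\varepsilon$, pick $x_{1,T_0}\in[2\varepsilon,\eta_{T_0}-2\varepsilon]$ at the last time $T_0$ with $\eta_{T_0}\ge 4\varepsilon$, and define $x_{1,t}=\eta_t-x_{1,t+1}$ backward; monotonicity of $\{\eta_t\}$ then yields the two-sided invariant $|x_{1,t}|\in[2\varepsilon,\eta_t-2\varepsilon]$ along the \emph{forward} toward-zero trajectory, so $|x_{1,t}|>\varepsilon=|x_{2,t}|/\kappa$ at every step and $x_2$ stays frozen at $\kappa\varepsilon$ as long as $\eta_t\ge 4\varepsilon$. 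Note also that the threshold $4\varepsilon$, which you dismissed as slack, is exactly what makes the interval $[2\varepsilon,\eta_t-2\varepsilon]$ nonempty, so it is not merely a convenience for the later \signgd reduction. If you wish to keep your route, you would first have to argue that the recursion should be read with the sign (equivalently, in magnitudes), at which point the runaway strategy is unavailable and a construction tailored to the entire learning-rate sequence, like the paper's, becomes necessary.
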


\begin{proof}[Proof of Lemma~\ref{lem:learning rates-lower-bound}]
Let us initialize at $\vx_0 = [x_{1,0}, \kappa \varepsilon]^{\top}$, where $x_{1,0}$ is defined recursively as follows. 
\begin{itemize}
\item Let $T_0 = \min \{T, \max\{t : \eta_t \geq 4\varepsilon\}$\}; choose $x_{1,T_0} \in [2\varepsilon, \eta_{T_0} - 2\varepsilon]$. %
\item Define the previous iterates backward:
\begin{equation}
    x_{1,t} \coloneqq \eta_t - x_{1,t+1}, \qquad \text{for } t = T_0-1, T_0-2, \ldots, 0.
\end{equation}
\end{itemize}
 Now we show by induction that $x_{1,t} \in [2\varepsilon, \eta_t - 2\varepsilon]$ for all $0 \leq t \leq T_0$. The base case with $t = T_0$ holds trivially by construction. Assume the induction hypothesis holds at $t+1$, i.e., $x_{1,t+1} \in [2\varepsilon, \eta_{t+1} - 2\varepsilon]$. Then it follows from the assumption $\eta_t \geq \eta_{t+1}$ that
\begin{align}
    x_{1,t} &= \eta_t - x_{1,t+1} \geq \eta_t - \eta_{t+1} + 2\varepsilon \geq 2\varepsilon>0,\notag\\
    x_{1,t} &= \eta_t - x_{1,t+1} \leq \eta_t - 2\varepsilon,
\end{align}
thus justifying the induction hypothesis at $t$. 
Hence, we establish by induction that $x_{1,t} \in [2\varepsilon, \eta_t - 2\varepsilon]$ holds for all $0\leq t \leq T_0$. 
As immediate consequences, for all $0\leq t\leq T_0$ one has: (i)  $x_{1,t} > \varepsilon$; (ii) 
the update rule described above for $\{\bm{x}_t\}$ always applies only to the first coordinate $x_{1,t}$, with $x_{2,t}$ frozen at $\kappa\varepsilon$ (given that $|x_{1,t}| / |x_{2,t}| > \varepsilon / (\kappa\varepsilon) = 1 / \kappa$).
This concludes the proof. 
\end{proof}

\paragraph{Step 4: putting all this together.} 
Let us initialize \signgd to $\widetilde{\bm{z}}_0= \sqrt{2}\bm{x}_0$, with $\bm{x}_0$ constructed in the proof of Lemma~\ref{lem:learning rates-lower-bound}. Clearly, one has $\widetilde{\bm{z}}_0\in [0,\sqrt{2}\eta_0]^2$, which together with $\bm{z}_0=\bm{R}\widetilde{\bm{z}}_0$ gives ${\bm{z}}_0\in [0,2\eta_0]^2$.   
Moreover, it is seen from Lemma~\ref{lem:switching} that $\big\{\big(\frac{1}{\sqrt{2}}\widetilde{z}_{1,t},\frac{1}{\sqrt{2}}\widetilde{z}_{2,t}\big)\big\}$ follows the same dynamics as $\{\bm{x}_t\}$  in Lemma~\ref{lem:learning rates-lower-bound}---and hence $\frac{1}{\sqrt{2}}\widetilde{z}_{2,t}=\kappa\varepsilon$---before $\eta_t$ drops below $4\varepsilon$. 
To reduce $\widetilde{z}_{2,t}$ from $\sqrt{2}\kappa\varepsilon$ to below $\varepsilon$ using learning rates at most $4\varepsilon$, with each iteration changing the coordinate by at most $\sqrt{2}\eta_t$, the number of iterations needs to at least exceed
\begin{equation}
    \frac{\sqrt{2}\kappa \varepsilon-\varepsilon}{4\sqrt{2}\varepsilon} \geq \frac{\kappa-1}{4},
\end{equation}
thus completing the proof.

\subsection{Proof of Lemma~\ref{lem:signgd-mf-kappa-lb-final}} 
\label{sec:proof-lem:signgd-mf-kappa-lb-final}
The proof comprises several steps as described below. Throughout this proof, we shall focus on initializations residing within the following subspace: 
\begin{equation}
\mathcal S:=\left\{\begin{pmatrix}a&b\\ b&a\end{pmatrix}:(a,b)\in\mathbb R^2\right\}.
\end{equation}
For any 
$\mU = \begin{pmatrix}a&b\\ b&a\end{pmatrix}\in\mathcal S$, we shall refer to $(a,b)$ as its induced parameters. 

\medskip\noindent\textbf{Step 1: invariance of the set $\mathcal{S}$ under \signgd updates.}
We first show that, when initialized in $\mathcal S$, the entire trajectory of \signgd stays within $\mathcal S$.

\begin{lemma}
[Invariance of $\mathcal S$]
\label{lem:S-invariant-final}
If $\mU\in\mathcal S$,  then $\nabla F(\mU)\in\mathcal S$ and
$\sign(\nabla F(\mU))\in\mathcal S$. Consequently, $\mU_0\in\mathcal S$ implies $\mU_t\in\mathcal S$ for all $t$.
\end{lemma}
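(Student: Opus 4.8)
\textbf{Proof proposal for Lemma~\ref{lem:S-invariant-final}.}
The plan is to verify the invariance by direct computation on a generic matrix $\mU=\begin{pmatrix}a&b\\ b&a\end{pmatrix}\in\mathcal S$, exploiting the fact that $\mH$ itself lies in $\mathcal S$ (with induced parameters $\big(\tfrac{\kappa+1}{2},\tfrac{\kappa-1}{2}\big)$), together with the elementary observation that $\mathcal S$ is precisely the set of symmetric matrices fixed by conjugation with the swap permutation $\mP=\begin{pmatrix}0&1\\1&0\end{pmatrix}$, i.e.\ $\mathcal S=\{\mM:\mM=\mM^\top,\ \mP\mM\mP=\mM\}$.

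First I would show $\mathcal S$ is closed under the relevant algebraic operations: it is a commutative subalgebra of $2\times 2$ matrices (products and sums of matrices of this form stay of this form, since both $\mI$ and $\mP$ belong to $\mathcal S$ and they generate it), it is closed under transposition, and crucially it is closed under the entrywise sign map, because if $\mM=\begin{pmatrix}x&y\\ y&x\end{pmatrix}$ then $\sign(\mM)=\begin{pmatrix}\sign(x)&\sign(y)\\ \sign(y)&\sign(x)\end{pmatrix}$, which is again in $\mathcal S$. Next I would compute $\nabla F(\mU)=(\mU\mU^\top-\mH)\mU$ and observe that $\mU\mU^\top=\mU^2\in\mathcal S$ (since $\mU$ is symmetric and $\mathcal S$ is a commutative algebra), hence $\mU\mU^\top-\mH\in\mathcal S$, hence the product with $\mU$ is in $\mathcal S$; thus $\nabla F(\mU)\in\mathcal S$. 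Applying the sign-closure just noted gives $\sign(\nabla F(\mU))\in\mathcal S$.

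Finally, the trajectory claim follows by a one-line induction: $\mU_0\in\mathcal S$, and if $\mU_t\in\mathcal S$ then $\mU_{t+1}=\mU_t-\eta_t\sign(\nabla F(\mU_t))$ is a difference of two elements of $\mathcal S$ (using $\nabla F(\mU_t)\in\mathcal S$ and the sign-closure), hence $\mU_{t+1}\in\mathcal S$. I do not anticipate a genuine obstacle here; the only mild subtlety is making sure the entrywise $\sign$ of a matrix in $\mathcal S$ visibly stays in $\mathcal S$ (it does, trivially, because the two diagonal entries are equal and the two off-diagonal entries are equal, so applying $\sign$ coordinatewise preserves these equalities), and noting that all of $\mI,\mP,\mH\in\mathcal S$ so that every quantity appearing in the update is manifestly of the required form. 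This lemma is what lets us subsequently parametrize the whole \signgd dynamics by the two induced parameters $(a_t,b_t)$ and reduce to the $2$-dimensional analysis of Lemma~\ref{lem:signgd-kappa-lb}.
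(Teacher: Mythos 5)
Your proposal is correct and follows essentially the same route as the paper: write $\mU=a\mI+b\mJ$ (your $\mP$), note $\mathcal S$ is closed under products so $\mU\mU^{\top}=\mU^2\in\mathcal S$ and hence $\nabla F(\mU)=(\mU\mU^{\top}-\mH)\mU\in\mathcal S$, observe entrywise $\sign$ preserves equal diagonals and equal off-diagonals, and conclude by induction on the iterates. No gaps.
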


\begin{proof}[Proof of Lemma~\ref{lem:S-invariant-final}]
Note that any $\mU\in\mathcal S$ can be written as
\begin{equation}
\mU=\begin{pmatrix}a&b\\ b&a\end{pmatrix}=a\mI+b\mJ,\qquad
\text{with }\mI=\begin{pmatrix}1&\\&1\end{pmatrix}
\text{ and }\mJ=\begin{pmatrix}&1\\1&\end{pmatrix}.
\end{equation}
As can be easily verified, products of such matrices from $\mathcal{S}$ remain in $\mathcal S$. As a result,  $\mU\mU^\top=\mU^2\in\mathcal S$,
so $(\mU\mU^\top-\mH)\in\mathcal S$, and multiplying by $\mU\in\mathcal S$ yields $\nabla F(\mU)=(\mU\mU^{\top}-\mH)\mU\in\mathcal S$.
If a matrix has equal diagonals and equal off-diagonals, then applying $\sign(\cdot)$ entrywise preserves these equalities.
\end{proof}

Consequently, 
it suffices to focus on analyzing the dynamics within $\mathcal S$.

\medskip\noindent\textbf{Step 2: equivalent updates of induced parameters.}
Set
\begin{equation}
\mR=\frac1{\sqrt2}\begin{pmatrix}1&-1\\ 1&1\end{pmatrix},\qquad 
\text{and hence}\quad 
\mR^\top \mH \mR=\begin{pmatrix}\kappa&\\ &1\end{pmatrix}.
\end{equation}
For any $\mU \in\mathcal S$ with induced parameters $(a,b)$, one can easily verify that
\begin{equation}
\mR^\top \mU \mR=\diag\{\lambda_1,\lambda_2\},\qquad
\text{with }\lambda_1=a+b,\ \lambda_2=a-b.
\end{equation}
Define 
\begin{equation}
\lambda_1^\star=\sqrt\kappa,\ \lambda_2^\star=1,\qquad
\delta_1:=\lambda_1-\sqrt\kappa,\ \delta_2:=\lambda_2-1,
\end{equation}
where $\lambda_1^\star$ and $\lambda_2^\star$ correspond to the two eigenvalues of $\mU^\star=\mH^{1/2}$. These allow us to convert the gradient into \emph{exact} diagonal form as
\begin{equation}
\label{eq:grad-diag-exact}
\mR^\top \nabla F(\mU)\mR
=\diag\{g_1(\lambda_1),g_2(\lambda_2)\},
\qquad \text{with }
g_1(\lambda)\coloneqq(\lambda^2-\kappa)\lambda,~ g_2(\lambda)\coloneqq(\lambda^2-1)\lambda.
\end{equation}
Equivalently, the gradient in the original basis can be expressed as
\begin{equation}
\nabla F(\mU)=
\begin{pmatrix}G_{\mathrm{d}}&G_{\mathrm{o}}\\ G_{\mathrm{o}}&G_{\mathrm{d}}\end{pmatrix},
\qquad
\text{with }G_{\mathrm{d}}=\frac{g_1(\lambda_1)+g_2(\lambda_2)}{2},~
G_{\mathrm{o}}=\frac{g_1(\lambda_1)-g_2(\lambda_2)}{2}.
\label{eq:nabla-F-expression}
\end{equation}
Given that the update is entrywise, the induced parameter update on $(a,b)$ can be written as
\begin{equation}
\label{eq:ab-update-final}
a_{t+1}=a_t-\eta_t \sign(G_{{\mathrm{d}},t}),\qquad
b_{t+1}=b_t-\eta_t \sign(G_{{\mathrm{o}},t}).
\end{equation}

\medskip\noindent\textbf{Step 3: local gradient signs.}
Next,  expand $g_1(\cdot)$ (resp.~$g_2(\cdot)$) around $\lambda_1^\star=\sqrt\kappa$ (resp.~$\lambda_2^\star=1$) as
\begin{subequations}
\begin{align}
g_1(\sqrt\kappa+\delta_1)
&=\big((\sqrt\kappa+\delta_1)^2-\kappa\big)(\sqrt\kappa+\delta_1)
=(2\sqrt\kappa \delta_1+\delta_1^2)(\sqrt\kappa+\delta_1)=2\kappa \delta_1 + 3\sqrt\kappa \delta_1^2 + \delta_1^3, \label{eq:g1-expand}\\[2mm]
g_2(1+\delta_2)
&=\big((1+\delta_2)^2-1\big)(1+\delta_2)
=(2\delta_2+\delta_2^2)(1+\delta_2)
=2\delta_2 + 3\delta_2^2+\delta_2^3. \label{eq:g2-expand}
\end{align}
\end{subequations}
Fix a universal radius $r_0\in(0,1/16)$ and consider the local region with
\begin{equation}
\label{eq:local-box}
|\delta_1|\le \sqrt\kappa r_0,\qquad |\delta_2|\le r_0.
\end{equation}
In this region, the higher-order terms are dominated by the linear terms: indeed, using \Cref{eq:g1-expand} and $|\delta_1|\le \sqrt\kappa r_0$, we can derive
\begin{equation}
|3\sqrt\kappa \delta_1^2+\delta_1^3|
\le \left(3\sqrt\kappa \delta_1+|\delta_1|^2\right)|\delta_1|
\le \left(3\kappa r_0 +\kappa r_0^2\right)|\delta_1|
\le \frac12\kappa|\delta_1|
\end{equation}
for $r_0\leq 1/16$, which allows us to express
\begin{subequations}
\label{eq:g12-linear-dominates}
\begin{equation}
\label{eq:g1-linear-dominates}
g_1(\sqrt\kappa+\delta_1)=2\kappa\delta_1 + \Delta_1
\qquad \text{for some }|\Delta_1|\le \frac12 \kappa|\delta_1|.
\end{equation}
Similarly, it follows from \Cref{eq:g2-expand} and $|\delta_2|\le r_0$ that
\begin{equation}
\label{eq:g2-linear-dominates}
g_2(1+\delta_2)=2\delta_2+\Delta_2
\qquad \text{for some } |\Delta_2|\le \frac12 |\delta_2|.
\end{equation}
\end{subequations}

Recall the expressions of $G_{\mathrm{d}}$ and $G_{\mathrm{o}}$ in (\ref{eq:nabla-F-expression}),  which combined with (\ref{eq:g12-linear-dominates}) 
yields
\begin{subequations}
\begin{align}
G_{\mathrm{d}} &= \kappa\delta_1+\delta_2 + \frac{\Delta_1+\Delta_2}{2},\label{eq:Gd}\\
G_{\mathrm{o}} &= \kappa\delta_1-\delta_2 + \frac{\Delta_1-\Delta_2}{2}.\label{eq:Go}
\end{align}
\end{subequations}
Moreover, it follows from \Cref{eq:g1-linear-dominates,eq:g2-linear-dominates} that
\begin{equation}
\left|\frac{\Delta_1\pm \Delta_2}{2}\right|
\le \frac{|\Delta_1|+|\Delta_2|}{2}
\le \frac14(\kappa|\delta_1|+|\delta_2|).
\end{equation}
As a result, one has 
\begin{equation}
\label{eq:sign-equivalence-final}
\sign(G_{\mathrm{d}})=\sign(\kappa\delta_1+\delta_2),
\qquad
\sign(G_{\mathrm{o}})=\sign(\kappa\delta_1-\delta_2),
\end{equation}
provided that 
\begin{align}
\min\{|\kappa \delta_1+\delta_2| , |\kappa \delta_1-\delta_2|\}> \frac14(\kappa|\delta_1|+|\delta_2|).
\label{eq:local-condition-delta-12}
\end{align}

\medskip\noindent\textbf{Step 4: \signgd exhibiting matching dynamics as in Lemma~\ref{lem:signgd-kappa-lb}.}
Define 
\begin{equation}
\vs_t \coloneqq \begin{pmatrix}\sign(G_{{\mathrm{d}},t})\\ \sign(G_{{\mathrm{o}},t})\end{pmatrix}\in\{\pm1\}^2.
\end{equation}
The iterative updates of the 
$(a,b)$ parameters described in (\ref{eq:ab-update-final}) can be written compactly as
\begin{equation}
\label{eq:ut-update-LB}
\vu_{t+1}=\vu_t-\eta_t \vs_t\qquad \text{with }\vu_t=\begin{pmatrix}a_t\\ b_t\end{pmatrix}.
\end{equation}
Such update rules can be translated into updates over the eigenvalues. More specifically, 
set the eigenvalues of $\bm{U}_t$ to be $\sqrt{\kappa}+\delta_{1,t}$ and $1+\delta_{2,t}$, which combined with the fact that $\bm{U}_t\in \mathcal{S}$ gives
\begin{equation}
\label{eq:signGD-diag-exact-delta}
\mR^\top \mU_t \mR
=\diag\{\sqrt{\kappa}+\delta_{1,t},1+\delta_{2,t}\}. 
\end{equation}
A little algebra then allows us to translate \Cref{eq:ut-update-LB} into
\begin{equation}
\label{eq:delta-update}
\bm{\delta}_{t+1}=\bm{\delta}_t-\widetilde\eta_t \mR^\top \vs_t
\qquad \text{with }\widetilde\eta_t \coloneqq \sqrt2 \eta_t,
\end{equation}
where $\bm{\delta}_t=[\delta_{1,t},\delta_{2,t}]^{\top}$, and $\{\widetilde\eta_t\}$ is clearly also a non-increasing learning rate sequence.

The above update rule (\ref{eq:delta-update}) bears similarity with the one (\ref{eq:tilde-zt-R-st-LB}) analyzed in Lemma~\ref{lem:signgd-kappa-lb}. By initializing $\bm{\delta}_0$ to be $\widetilde{\bm{z}}_0$ as in the proof of Lemma~\ref{lem:signgd-kappa-lb}---except that $\eta_t$ is replaced with $\widetilde{\eta}_t$ and $\varepsilon$ replaced with $\varepsilon_q$ (to be specified shortly) in the construction of this initialization---we see from the proof of Lemma~\ref{lem:signgd-kappa-lb} that 
\[
    \kappa |\delta_{1,0}| \geq 2 |\delta_{2,0}|,
\]
which satisfies the condition described in (\ref{eq:local-condition-delta-12}). Thus, combining it with \Cref{eq:sign-equivalence-final} leads to
\begin{equation}
\vs_0=\begin{pmatrix}\sign(\kappa\delta_{1,0}+\delta_{2,0})\\ \sign(\kappa\delta_{1,0}-\delta_{2,0})\end{pmatrix}
\qquad \Longrightarrow \qquad
\bm{\delta}_{1}=\bm{\delta}_0-\widetilde\eta_0 \mR^\top
\begin{pmatrix}\sign(\kappa\delta_{1,0}+\delta_{2,0})\\ \sign(\kappa\delta_{1,0}-\delta_{2,0})\end{pmatrix}, 
\end{equation}
which is precisely the update rule of $\widetilde{\bm{z}}_1$ in the proof of Lemma~\ref{lem:signgd-kappa-lb}.  
Continuing these arguments and taking advantage of the properties derived in the proof of Lemma~\ref{lem:signgd-kappa-lb}, we can readily see that: for any $t\leq T_0$ with $T_0\coloneqq \min\big\{ \max\{t: \widetilde{\eta}_t\geq 4\varepsilon_q\}, \lceil \frac{\kappa-1}{4} \rceil \big\}$, one has
\[
    \kappa |\delta_{1,t}| \geq 2 |\delta_{2,t}|,
\]
which obeys the condition described in \Cref{eq:local-condition-delta-12} and in turns results in 
\begin{equation}
\vs_t=\begin{pmatrix}\sign(\kappa\delta_{1,t}+\delta_{2,t})\\ \sign(\kappa\delta_{1,t}-\delta_{2,t})\end{pmatrix}
\qquad \Longrightarrow \qquad
\bm{\delta}_{t+1}=\bm{\delta}_t-\widetilde\eta_t \mR^\top
\begin{pmatrix}\sign(\kappa\delta_{1,t}+\delta_{2,t})\\ \sign(\kappa\delta_{1,t}-\delta_{2,t})\end{pmatrix}. 
\end{equation}
Consequently, by construction (again see the proof of Lemma~\ref{lem:signgd-kappa-lb}) one has
\begin{align}
    \delta_{2,t} \geq \kappa \varepsilon_q \qquad \text{ for every }t \leq T_0. 
\end{align}

\medskip\noindent\textbf{Step 5: connecting $F(\mU)$ with $\bm{\delta}_t$-updates.}
On $\mathcal S$, the objective admits an exact eigen-form:
\begin{equation}
\label{eq:F-exact-eigs}
F(\mU)=\frac14\big((\lambda_1^2-\kappa)^2+(\lambda_2^2-1)^2\big)
=\frac14\big((2\sqrt\kappa \delta_1+\delta_1^2)^2+(2\delta_2+\delta_2^2)^2\big).
\end{equation}
where as before we take the eigenvalues of $\bm{U}$ to be $\sqrt{\kappa}+\delta_1$ and $1+\delta_2$. 
In the local region described in \Cref{eq:local-box} with $r_0\le 1/16$, we have $|\delta_2|\le r_0\le 1/16$, hence
\begin{equation}
|2\delta_2+\delta_2^2|\ge 2|\delta_2|-\delta_2^2 \ge \frac32|\delta_2|.
\end{equation}
Substitution into \Cref{eq:F-exact-eigs} yields the local lower bound: 
\begin{equation}
\label{eq:F-lb-d2}
F(\mU) \ge \frac14 (2\delta_2+\delta_2^2)^2  \ge  \frac{9}{16} \delta_2^2.
\end{equation}
Therefore, any iterate $\mU_T$ obeying $F(\mU_T)\le \varepsilon$ necessarily satisfies
\begin{equation}
\label{eq:need-d2-small}
|\delta_{2,T}| \le \frac{4}{3}\sqrt\varepsilon.
\end{equation}
As a consequence, setting the target level $\varepsilon_q$ in Step 4 as 
$\varepsilon_q \coloneqq \frac{4}{3}\sqrt\varepsilon$,
we see from Lemma~\ref{lem:signgd-kappa-lb} that 
\[
T\geq \frac{\kappa-1}{4}, 
\]
provided that $(\delta_{1,t},\delta_{2,t})$ satisfies Condition~(\ref{eq:local-box}). To finish up, it suffices to note that Condition~(\ref{eq:local-box})  is guaranteed as long as
$$
\eta_0
    \leq r_0,
    \qquad 
    \kappa \varepsilon_q = \frac{4}{3}\kappa \sqrt{\varepsilon} \leq r_0 \leq \frac{1}{16}.
$$
This follows from the fact that, for all $t\ge T$, we have $\delta_{1, t}\in [2\varepsilon_q, \eta_t-2\varepsilon_q]$ and $\delta_{2,t}\equiv \kappa \varepsilon_q$ according to the proof of \Cref{lem:signgd-kappa-lb}.

\section{Derivation of the training objective in \Cref{sec:main-results-transformer}}
\label{app:icl-linear-transformer}

In this section, we provide a more detailed explanation of how the objective (\ref{eq:transformer-obj}) arises from the framework of in-context learning (ICL).   
A common way to formalize ICL is to place a distribution over tasks~\citep{garg2022can}, viewing each task as a function $h$ drawn from a function class $\mathcal{H}$. A prompt consists of $N$ input--label pairs  followed by a query:
\[
P = (\vx_1, h(\vx_1), \ldots, \vx_N, h(\vx_N), \vx_{\mathrm{q}}),
\]
where inputs $\vx_i$ and query $\vx_{\mathrm{q}}$ are sampled independently from certain data distribution $\mathcal{D}_{\mathcal{X}}$, and the task function $h$ is drawn from $h\sim \mathcal{D}_{\mathcal{H}}$.

A model is said to have \textit{in-context learned} the function class $\mathcal{H}$ if, when presented with a \emph{fresh} task
$h'$ drawn from $\mathcal{H}$ and a corresponding fresh prompt, it can reliably predict the output $h'(\vx_{\mathrm{q}})$ without updating its parameters.
To understand how models acquire this ability through training, \cite{garg2022can} proposed a meta-learning protocol:
at each training step, a task $h$ and a sequence of data points are sampled to form a prompt, 
and the model parameters are updated to minimize the prediction error on the query.
They empirically demonstrated that transformers trained in this manner can
in-context learn, e.g., linear function classes.
Motivated by these findings, a growing body of theoretical work has adopted this framework to study the optimization dynamics~\citep{ahn2023transformers,zhang2024trained,huang2023context}.

\paragraph{Our instantiation: linear tasks with a fixed support set.}
Let us focus on linear regression tasks, where $h(\vx)=\vw^\top \vx$ for a task parameterized by vector $\vw\in\bR^d$.
We adopt the fixed-design setting~\citep{yang2024context}: the first $N$ input tokens $\{\vx_i\}_{i=1}^N\subset\bR^d$ in the prompt are fixed, with empirical covariance
$\mS=\frac{1}{N}\sum_{i=1}^N \vx_i\vx_i^\top .$ 
We draw $\vw\sim\mathcal D$ with $\bE[\vw]=\mathbf 0$ and $\bE[\vw\vw^\top]=\mI$, and generate noiseless labels
$y_{\vw,i}=\vw^\top \vx_i$.
The query is sampled uniformly from the support set, i.e., $\vx_{\mathrm q}\sim\mathrm{Unif}\{\vx_1,\ldots,\vx_N\}$.
Following standard ICL practice~\citep{garg2022can,zhang2024trained,ahn2023transformers}, we embed the prompt as
\begin{equation}
\mE_{\vw}
=\left(\begin{array}{llllc}
\vx_1 & \vx_2 & \cdots & \vx_N & \vx_{\mathrm q}\\
y_{\vw,1} & y_{\vw,2} & \cdots & y_{\vw,N} & 0
\end{array}\right)\in\mathbb R^{(d+1)\times(N+1)} .
\end{equation}
The goal of ICL training is to optimize a model in order to reliably predict $\vw^\top \vx_{\mathrm q}$ from $\mE_{\vw}$.

\paragraph{Single-layer linear transformer.} 
A standard single-layer transformer with input $\mE_{\vw}$ computes its output using softmax attention~\citep{vaswani2017attention}:
\[
F_{\mathsf{softmax}}(\mW_K, \mW_Q, \mW_V; \mE_{\vw}) \coloneqq   \mW_V \mE_{\vw} \cdot \mathsf{softmax}\left(\frac{ {(\mW_K \mE_{\vw})^\top (\mW_Q \mE_{\vw})} }{\gamma}\right),
\]
where $\mW_K, \mW_Q, \mW_V \in \bR^{(d+1) \times (d+1)}$ represent the key, query, and value weight matrices, $\gamma>0$ is a normalization factor, and the softmax operator $\mathsf{softmax}(\cdot)$ is applied column-wise. In this work, we consider a simplified model that is more
amenable to theoretical analysis and commonly adopted in existing theoretical literature for ICL~\citep{zhang2024trained, ahn2023transformers,huang2023context}. Specifically, we  remove the softmax nonlinearity and merge $\mW_Q, \mW_V$ into a single $\mW_{KQ}$, and take $\gamma=N$, resulting in
\begin{equation}
    F_{\mathsf{linear}}(\mW_V, \mW_{KQ}; \mE_{\vw}) = \mW_{V} \mE_{\vw}\Big(\frac{\mE_{\vw}^\top \mW_{KQ} \mE_{\vw} }{N} \Big).
\end{equation}
Furthermore, we
take $\mW_V$ and $\mW_{KQ}$ to be the following specific forms as adopted in \citep{huang2023context,yang2024context,huang2024theoretical}:
\begin{equation*}
    \mW_V=\left(\begin{array}{cc}
\mathbf{0}_{d \times d} & \mathbf{0}_d \\
\mathbf{0}_d^{\top} & 1
\end{array}\right), \quad \mW_{K Q}=\left(\begin{array}{cc}
\mQ & \mathbf{0}_d \\
\mathbf{0}_d^{\top} & 0
\end{array}\right).
\end{equation*}
Therefore, the model can be parameterized by $\mQ$, and the prediction for $\vx_{\mathrm q}$ is read off from the bottom-right entry:
\begin{equation}
\widehat{y}_{\text{q}} :=  \widehat{y}_{\text{q}} (\mQ; \mE_{\vw})= [F_{\mathsf{linear}}(\mQ; \mE_{\vw})]_{(d+1), (N+1)} .
\end{equation}
By direct calculation, this admits a simplified closed-form expression: 
\begin{align*}
\widehat{y}_{\mathrm q}
&=\left(\begin{array}{cc} \mathbf{0}^{\top}_d& 1 \end{array}\right)
\bigg(\frac{\mE_{\vw}\mE_{\vw}^{\top}}{N}\bigg)
\left(\begin{array}{c} \mQ \\ \mathbf{0}_d^{\top} \end{array}\right)\vx_{\mathrm q} =\frac{1}{N} \sum_{i=1}^N (\vw^\top \vx_i)\vx_i^\top \mQ\vx_{\mathrm q}
=\vw^\top \mS \mQ \vx_{\mathrm q}.
\end{align*}

\paragraph{in-context learning objective.} The training goal is to optimize $\mQ$ to minimize the expected squared prediction risk,
where the randomness comes from $\vw$ and $\vx_{\mathrm q}$ across prompts. Therefore,
\begin{equation}
f(\mQ)
\coloneqq
\frac{1}{2} \bE_{\vw, \vx_{\text{q}}} \Big[ \big(  \widehat{y}_{\text{q}} - \vw^\top \vx \big)^2 \Big]
= 
\frac{1}{2N} \sum_{i=1}^N \| \mS \mQ \vx_i - \vx_i \|_2^2 =
\frac{1}{2}\tr\!\big( (\mS\mQ-\mI)\mS(\mS\mQ-\mI)^\top \big).
\end{equation}
Minimizing this objective is exactly equivalent to solving the quadratic optimization problem (\ref{eq:transformer-obj}).

\section{Lower bounds for \signgd in ICL (Proof of \Cref{prop::lower-bound-transformer})}
\label{sec:lower-bound-signgd-transformer}

Consider any $\kappa\ge 2$. In what follows, we will construct an instance (i.e., a covariance matrix $\mS$ obeying
$\kappa(\mS)^3=\kappa$), on which \signgd needs $\Omega(\kappa)$ iterations to achieve the target accuracy.

\paragraph{Step 1: construction of a $2$-dimensional instance.}
Let $d=2$ and define the rotation matrix
\begin{equation}
\label{eq:R-45}
\mR \coloneqq \frac{1}{\sqrt{2}}
\begin{pmatrix}
1 & 1 \\
-1 & 1
\end{pmatrix}.
\end{equation}
Set the covariance matrix to be
\begin{equation}
\label{eq:S-hard}
\mS \coloneqq \mR
\begin{pmatrix}
\kappa^{1/3} & 0 \\
0 & 1
\end{pmatrix}
\mR^\top.
\end{equation}
It then follows that $\kappa(\mS)=\kappa^{1/3}$, hence $\kappa(\mS)^3=\kappa$.

\paragraph{Step 2: invariance of a $2$-dimensional slice.}
Define the set
\begin{equation}
\label{eq:slice-Q}
\mathcal{S}\coloneqq
\left\{
\mQ(a,b)\coloneqq
\begin{pmatrix}
a & b\\
b & a
\end{pmatrix}
:\ (a,b)\in\bR^2
\right\}.
\end{equation}
We now claim that: if $\mQ_t\in\mathcal{S}$, then $\mQ_{t+1}$ remains within $\mathcal{S}$.
\begin{proof}
To justify this claim, we first note that for any $\mQ\in\mathcal{S}$, $\mQ$ commutes with $\mR\diag(\cdot)\mR^\top$, hence $\mQ$ commutes with $\mS$ (cf.~(\ref{eq:S-hard})), and therefore $\mS\mQ\mS\in\mathcal{S}$.
The gradient of the objective $f(\cdot)$ is
\begin{equation}
\label{eq:grad-form}
\nabla f(\mQ)
=\mS^2\mQ\mS-\mS^2,
\end{equation}
which also falls within $\mathcal{S}$ whenever $\mQ\in\mathcal{S}$. Additionally, the entrywise sign map preserves the structure
$\begin{psmallmatrix}a&b\\ b&a\end{psmallmatrix}$, and as a result, $\sign(\nabla f(\mQ))\in\mathcal{S}$. These taken together prove that $\mQ_{t+1}\in \mathcal{S}$. \end{proof}

Thus, it suffices to analyze the induced dynamics within $\mathcal{S}$. In what follows, we shall write  $\mQ_t=\mQ(a_t,b_t)$, with  $(a_t,b_t)$ the induced parameters.

\paragraph{Step 3: an equivalent form of the objective.}
Any $\mQ(a,b)\in\mathcal{S}$ is diagonalizable in the basis $\mR$:
\begin{equation}
\label{eq:Q-diag}
\mQ(a,b)=\mR
\begin{pmatrix}
q_1 & 0\\
0 & q_2
\end{pmatrix}
\mR^\top,
\quad \text{where}~~
q_1=a+b \text{ and } q_2=a-b.
\end{equation}
Recall the diagonal form of $\mS$ in (\ref{eq:S-hard}). Letting $\sigma_1=\kappa^{1/3}$ and $\sigma_2=1$, we can write
\begin{equation}
\label{eq:objective-diag}
f\big(\mQ(a,b)\big)
=\frac{\sigma_1}{2}(\sigma_1 q_1-1)^2+\frac{\sigma_2}{2}(\sigma_2 q_2-1)^2
=\frac{\kappa}{2}\big(q_1-\kappa^{-1/3}\big)^2+\frac{1}{2}\big(q_2-1\big)^2.
\end{equation}
Similarly, if we express the solution $\mQ^\star=\mS^{-1}$ as 
\begin{equation}
\label{eq:Q-opt-diag}
\mQ^{\star}=\mQ^{\star}(a^{\star},b^{\star})=\mR
\begin{pmatrix}
q_1^{\star} & 0\\
0 & q_2^{\star}
\end{pmatrix}
\mR^\top,
\quad \text{where}~~
q_1^{\star}=a^{\star}+b^{\star} \text{ and } q_2^{\star}=a^{\star}-b^{\star},
\end{equation}
then it can be easily verified that
\begin{equation}
\label{eq:opt-ab}
q_1^\star=\kappa^{-1/3},\quad q_2^\star=1\quad
\implies\quad
a^\star=\frac{q_1^\star+q_2^\star}{2}=\frac{\kappa^{-1/3}+1}{2},
\quad
b^\star=\frac{q_1^\star-q_2^\star}{2}=\frac{\kappa^{-1/3}-1}{2}.
\end{equation}

Now, let us define the error coordinates
\begin{equation}
\label{eq:z-def}
\vz\coloneqq
\begin{pmatrix}
z_1\\ z_2
\end{pmatrix}
\coloneqq
\begin{pmatrix}
a-a^\star\\ b-b^\star
\end{pmatrix},
\end{equation}
allowing us to write
$$
q_1-q_1^\star=(a-a^\star)+(b-b^\star)=z_1+z_2
\quad \text{and}\quad 
q_2-q_2^\star=(a-a^\star)-(b-b^\star)=z_1-z_2. 
$$
It then follows from \Cref{eq:objective-diag} that
\begin{equation}
\label{eq:objective-z}
f\big(\mQ(a,b)\big)
=\frac{\kappa}{2}(z_1+z_2)^2+\frac{1}{2}(z_1-z_2)^2
=
\frac{1}{2}\vz^\top
\begin{pmatrix}
\kappa+1 & \kappa-1\\
\kappa-1 & \kappa+1
\end{pmatrix}
\vz
=\vz^\top \mH \vz 
\eqqcolon g(\vz),
\end{equation}
where
\begin{equation}
\label{eq:H-hard}
\mH\coloneqq \frac{1}{2}
\begin{pmatrix}
\kappa+1 & \kappa-1\\
\kappa-1 & \kappa+1
\end{pmatrix}.
\end{equation}
In particular, $g(\cdot)$ is a quadratic function with minimizer $\vz=\bm{0}$ and gradient $\nabla g(\vz)=2\mH\vz$.

\paragraph{Step 4: \signgd exhibiting matching dynamics as in \Cref{lem:signgd-kappa-lb}.}
Given the invariance of $\mathcal{S}$ and the fact that $(a_t,b_t)$ are the diagonal and off-diagonal entries of $\mQ_t$, 
the \muon update induces
\begin{equation}
\label{eq:z-update}
\vz_{t+1}
=
\vz_t-\eta_t\,\sign(\mH\vz_t),
\qquad t=0,1,2,\dots,
\end{equation}
where $\vz_t$ is defined by \Cref{eq:z-def} w.r.t.~the $t$-th iterate, and $\mH$ is given in \Cref{eq:H-hard}. This matches precisely the \signgd recursion studied in \Cref{lem:signgd-kappa-lb}. 

Therefore, for any non-increasing $\{\eta_t\}_{t\ge 0}$ and any $0<\varepsilon\le \sqrt{2}\eta_0/\kappa$, \Cref{lem:signgd-kappa-lb} guarantees that one can choose an initialization $\vz_0\in[0,2\eta_0]^2$ such that $\|\vz_t\|_2\le \varepsilon/\sqrt{2}$ can only occur after
\begin{equation}
\label{eq:lb-iter}
t\ge \frac{\kappa-1}{4}.
\end{equation}

\paragraph{Step 5: translating it back to $\mQ_t$.}
Recalling that $\mQ_t=\mQ(a_t,b_t)$ and $\mQ^\star=\mQ(a^\star,b^\star)$, we have
\begin{equation}
\label{eq:norm-Q-vs-z}
\|\mQ_t-\mQ^\star\|_{\fro}^2
=
2(a_t-a^\star)^2+2(b_t-b^\star)^2
=2\|\vz_t\|_2^2,
\quad\Longrightarrow\quad
\|\mQ_t-\mQ^\star\|_{\fro}=\sqrt{2}\,\|\vz_t\|_2.
\end{equation}
Hence, with the above-mentioned initialization, achieving $\|\mQ_t-\mQ^\star\|_{\fro}\le \varepsilon$
requires at least $(\kappa-1)/4$ iterations. This establishes the \signgd lower bound claimed in \Cref{prop::lower-bound-transformer}.

\section{Technical lemmas}
\label{sec:technical-lemmas}

In this section, we gather a couple of technical lemmas that are useful in our analysis. We begin with three lemmas concerned with perturbation bounds for matrix signs and rank-$r$ approximations.

\begin{lemma}[Adapted from Theorem~2.1 in \cite{li2006some}]
\label{lem::matrix-sign-perturbation-v2}
For arbitrary two matrices $\mX, \mY\in \bR^{m\times n}$ of the same rank $r$, we have
\begin{equation}
    \norm{\msign(\mX)-\msign(\mY)}\leq \frac{2}{\min\{\sigma_r(\mX),\sigma_r(\mY)\}}\norm{\mX-\mY}.
\end{equation}
    
\end{lemma}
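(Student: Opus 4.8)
The plan is to derive the bound from the perturbation theory of the orthogonal polar factor, essentially specializing \cite{li2006some}; recall that $\msign(\mZ)=\mZ(\mZ^\top\mZ)^{\dagger/2}$ is precisely the rank-$r$ polar factor of $\mZ$.

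I would first record two reductions. Since $\msign(\mZ^\top)=\msign(\mZ)^\top$ and both the spectral norm and $\sigma_r(\cdot)$ are invariant under transposition, it suffices to treat $m\ge n$. Moreover, for any nonzero $\mZ$ the matrix $\msign(\mZ)=\mU_Z\mV_Z^\top$ has all its $r$ nonzero singular values equal to $1$, so $\|\msign(\mZ)\|=1$ and hence $\|\msign(\mX)-\msign(\mY)\|\le 2$; consequently the claimed inequality is trivial whenever $\|\mX-\mY\|\ge\min\{\sigma_r(\mX),\sigma_r(\mY)\}\eqqcolon\sigma$, so I may assume $\|\mX-\mY\|<\sigma$ from now on.

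The heart of the argument is an algebraic identity. Write $P\coloneqq\msign(\mX)$, $Q\coloneqq\msign(\mY)$ and $H_X\coloneqq(\mX^\top\mX)^{1/2}$, $H_Y\coloneqq(\mY^\top\mY)^{1/2}$ (positive semidefinite of rank $r$); the compact SVDs give $\mX=PH_X$, $\mY=QH_Y$, $P^\top\mX=H_X$, $Q^\top\mY=H_Y$. Expanding $(\mX-\mY)^\top(P-Q)+(P-Q)^\top(\mX-\mY)$ and substituting these relations yields
\begin{align*}
(\mX-\mY)^\top(P-Q)+(P-Q)^\top(\mX-\mY)
&=2H_X+2H_Y\\
&\quad-\big(P^\top Q\,H_Y+H_Y\,Q^\top P\big)-\big(Q^\top P\,H_X+H_X\,P^\top Q\big).
\end{align*}
Taking traces and using cyclicity, the right-hand side collapses to $2\,\mathrm{tr}\!\big((H_X+H_Y)(\mI-P^\top Q)\big)$, while the left-hand side equals $2\langle\mX-\mY,\,P-Q\rangle_{\mathrm F}\le 2\|\mX-\mY\|_{\mathrm F}\|P-Q\|_{\mathrm F}$. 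In the full--column--rank case ($r=n$, so $H_X,H_Y$ are positive definite, $P^\top P=Q^\top Q=\mI$, and the symmetric part of $\mI-P^\top Q$ is positive semidefinite), one has $H_X+H_Y\succeq(\sigma_n(\mX)+\sigma_n(\mY))\mI$, so $\mathrm{tr}\!\big((H_X+H_Y)(\mI-P^\top Q)\big)\ge(\sigma_n(\mX)+\sigma_n(\mY))\,\mathrm{tr}(\mI-P^\top Q)=(\sigma_n(\mX)+\sigma_n(\mY))\cdot\tfrac12\|P-Q\|_{\mathrm F}^2$; combining and cancelling $\|P-Q\|_{\mathrm F}$ gives the Frobenius-norm bound $\|P-Q\|_{\mathrm F}\le\tfrac{2}{\sigma_n(\mX)+\sigma_n(\mY)}\|\mX-\mY\|_{\mathrm F}\le\tfrac{2}{\sigma}\|\mX-\mY\|_{\mathrm F}$. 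The spectral-norm statement is obtained by replacing this trace step with a one-vector comparison — testing the identity against the leading eigenvector $v$ of $(P-Q)^\top(P-Q)$, for which $\|(P-Q)v\|=\|P-Q\|$, and applying Cauchy--Schwarz on the left — which is exactly Theorem~2.1 of \cite{li2006some} and yields the sharper constant $\tfrac{2}{\sigma_r(\mX)+\sigma_r(\mY)}$, hence the stated bound.

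I expect the only genuinely delicate point to be the rank-deficient case $r<\min\{m,n\}$: then $H_X,H_Y$ have nontrivial kernels, so $H_X+H_Y\succeq(\sigma_r(\mX)+\sigma_r(\mY))\mI$ fails, and one must ensure the extremal direction cannot concentrate on the near-kernel subspace. Here I would exploit the reduction $\|\mX-\mY\|<\sigma$: restrict attention to the at-most-$2r$-dimensional subspaces $\mathrm{row}(\mX)+\mathrm{row}(\mY)$ and $\mathrm{col}(\mX)+\mathrm{col}(\mY)$ (which contain all of $\mX,\mY,P,Q$), control the principal angles between $\mathrm{row}(\mX)$ and $\mathrm{row}(\mY)$ by $O(\|\mX-\mY\|/\sigma)$ via Wedin's $\sin\Theta$ theorem, and thereby verify the required quadratic-form lower bound on this subspace — the argument of \cite{li2006some}. (If one is content with a larger universal constant, which suffices for every application of \Cref{lem::matrix-sign-perturbation-v2} in this paper, the rank-deficient case can also be sidestepped by combining the clean Frobenius bound above with $\|\mX-\mY\|_{\mathrm F}\le\sqrt{2r}\,\|\mX-\mY\|$ and $\|P-Q\|\le\|P-Q\|_{\mathrm F}$.)
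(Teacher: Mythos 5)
The paper never proves \Cref{lem::matrix-sign-perturbation-v2}: as the ``Adapted from'' tag indicates, it is imported directly from Theorem~2.1 of \cite{li2006some}, so there is no in-paper argument to compare against, and your proposal has to stand on its own as a self-contained proof. As written, it does not. The parts you work out in detail are correct: the reduction to $m\ge n$, the trivial case $\norm{\mX-\mY}\ge\min\{\sigma_r(\mX),\sigma_r(\mY)\}$, the polar-factor identity, and the trace argument giving $\norm{P-Q}_{\mathrm F}\le \tfrac{2}{\sigma_n(\mX)+\sigma_n(\mY)}\norm{\mX-\mY}_{\mathrm F}$ in the full-column-rank case (there the key step $\tr\big((H_X+H_Y-c\,\mI)\,\tfrac{M+M^\top}{2}\big)\ge 0$ with $M=\mI-P^\top Q$ is sound, because a trace of a product of two PSD matrices is nonnegative). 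But the passage to the \emph{spectral} norm is where the content of the lemma lies, and your ``one-vector comparison'' does not follow from the same identity: writing $H_X=c\,\mI+D$ with $D\succeq 0$, the single-vector analogue of the trace step requires $(D v)^\top(M v)\ge 0$ for the particular maximizing $v$, and this cross term has no sign in general — PSD-ness of the symmetric part of $M$ helps under a trace, not pointwise. So Cauchy--Schwarz at the leading eigenvector of $(P-Q)^\top(P-Q)$ does not deliver the bound, and at that moment you are simply citing Theorem~2.1 of \cite{li2006some} rather than proving it. The claim that this route gives the constant $\tfrac{2}{\sigma_r(\mX)+\sigma_r(\mY)}$ in operator norm for rectangular matrices is itself doubtful: that clean constant is the square-nonsingular result of \cite{li1995new} (via a Sylvester-equation argument that uses $\norm{U_A(\mI-U_A^\top U_B)}=\norm{\mI-U_A^\top U_B}$, which fails when $U_A$ is not square), and the very existence of the paper's companion \Cref{lem::matrix-sign-perturbation} with constant $3$ for the full-column-rank spectral case signals that the sharp constant is not simply available there.

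The second gap is the rank-deficient case $r<\min\{m,n\}$, which is the situation actually needed in \Cref{lem:diff-msign-Q-G} (both $\mQ$ and $\mG_{0,\le r}$ have rank $r<k$). You correctly identify that $H_X+H_Y\succeq(\sigma_r(\mX)+\sigma_r(\mY))\mI$ fails there, but your proposed fallback is circular: the ``clean Frobenius bound above'' that you would multiply by $\sqrt{2r}$ was derived precisely from that failed inequality, so it is not available until the trace argument has been redone on $\mathrm{row}(\mX)+\mathrm{row}(\mY)$ (using, e.g., the Wedin $\sin\Theta$ control you mention) — and that is exactly the part left unexecuted. Two smaller points: the resulting constant $2\sqrt{2r}/\min\{\sigma_r(\mX),\sigma_r(\mY)\}$ is $r$-dependent, not ``universal'' (though an $r$-dependent constant would indeed be harmless for the paper's single use of this lemma, since $\alpha$ is only required to be sufficiently small); and in the rank-deficient case $\tr(\mI-P^\top Q)$ no longer equals $\tfrac12\norm{P-Q}_{\mathrm F}^2$, so the bookkeeping changes even once the quadratic-form lower bound is repaired. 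In short, the identity-based skeleton is fine, but both places where the real work of Li--Sun's theorem happens — the operator norm and the rank deficiency — are either deferred to the citation or sketched with a step that does not go through; the consistent alternatives are to complete those two steps or to do what the paper does and simply invoke \cite{li2006some}.
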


\begin{lemma}[Adapted from Theorem~2 in \cite{li1995new}]
\label{lem::matrix-sign-perturbation}
For arbitrary two matrices $\mX, \mY\in \bR^{m\times n}$ ($m>n$) of full column rank, we have
\begin{equation}
    \norm{\msign(\mX)-\msign(\mY)}\leq \frac{3}{\sigma_n(\mX)}\norm{\mX-\mY}.
\end{equation}
    
\end{lemma}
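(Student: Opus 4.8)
The plan is to reduce the claim to a perturbation bound for the orthogonal polar factor. Recall that for a full-column-rank $\mX\in\bR^{m\times n}$ the matrix sign $\msign(\mX)$ coincides with the orthogonal factor in the polar decomposition: writing $\mX=\mP\mH$ with $\mP:=\msign(\mX)$ obeying $\mP^\top\mP=\mI_n$ and $\mH:=(\mX^\top\mX)^{1/2}\succ 0$, we have $\sigma_{\min}(\mH)=\sigma_n(\mX)$; likewise set $\mY=\mQ\mK$ with $\mQ:=\msign(\mY)$, $\mK:=(\mY^\top\mY)^{1/2}\succ 0$. With $\mE:=\mX-\mY$, the goal becomes $\norm{\mP-\mQ}\le 3\norm{\mE}/\sigma_n(\mX)$. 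I would split
\[
\mP-\mQ=(\mI-\mQ\mQ^\top)\mP+\mQ\big(\mQ^\top\mP-\mI\big)
\]
and bound the two pieces separately; the constant $3$ will emerge as $1+2$.

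For the first piece, I would use that $\mY=\mQ\mK$ forces $(\mI-\mQ\mQ^\top)\mY=0$, hence $(\mI-\mQ\mQ^\top)\mX=(\mI-\mQ\mQ^\top)\mE$, and since $\mX=\mP\mH$ this gives $(\mI-\mQ\mQ^\top)\mP=(\mI-\mQ\mQ^\top)\mE\mH^{-1}$, so that $\norm{(\mI-\mQ\mQ^\top)\mP}\le\norm{\mE}/\sigma_n(\mX)$. For the second piece, $\norm{\mQ(\mQ^\top\mP-\mI)}=\norm{\mQ^\top\mP-\mI}=\norm{\mD}$ with $\mD:=\mP^\top\mQ-\mI$, and I would show $\mD$ solves a Sylvester equation with positive-definite coefficients. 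Indeed, left-multiplying $\mY=\mQ\mK$ by $\mP^\top$ and $\mX=\mP\mH$ by $\mQ^\top$ and using $\mP^\top\mX=\mH$, $\mQ^\top\mY=\mK$, one gets $\mP^\top\mQ\mK=\mH-\mP^\top\mE$ and $\mQ^\top\mP\mH=\mK+\mQ^\top\mE$; subtracting $\mK$ from the first and $\mH$ from the transpose of the second, then adding, yields $\mH\mD+\mD\mK=\mE^\top\mQ-\mP^\top\mE$. As $\mH,\mK\succ0$, this equation has a unique solution $\mD=\int_0^\infty e^{-s\mH}(\mE^\top\mQ-\mP^\top\mE)e^{-s\mK}\,ds$, whence $\norm{\mD}\le\norm{\mE^\top\mQ-\mP^\top\mE}/(\sigma_n(\mX)+\sigma_n(\mY))\le 2\norm{\mE}/\sigma_n(\mX)$, using $\norm{\mP}=\norm{\mQ}=1$. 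Adding the two bounds gives $\norm{\mP-\mQ}\le 3\norm{\mE}/\sigma_n(\mX)$.

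The only input beyond elementary matrix algebra is the operator-norm estimate for the Sylvester equation with definite coefficients, which follows from the integral representation above (one checks it satisfies the equation and invokes uniqueness, valid since $\sigma(\mH)$ and $\sigma(-\mK)$ are disjoint); this is the step I would handle most carefully. Alternatively, the statement is immediate from Theorem~2 of \cite{li1995new} together with $2/(\sigma_n(\mX)+\sigma_n(\mY))\le 3/\sigma_n(\mX)$, so I anticipate no genuine obstacle.
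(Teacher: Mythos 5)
The paper does not actually prove this lemma: it is stated in the technical-lemmas appendix as an adaptation of Theorem~2 of Li (1995), with no argument given. Your proposal therefore goes beyond the paper by supplying a self-contained derivation, and the derivation is correct. The split $\mP-\mQ=(\mI-\mQ\mQ^\top)\mP+\mQ(\mQ^\top\mP-\mI)$ is valid; since $(\mI-\mQ\mQ^\top)\mY=0$, the first term equals $(\mI-\mQ\mQ^\top)\mE\mH^{-1}$ and is bounded by $\norm{\mE}/\sigma_n(\mX)$; and your algebra for the second term checks out: from $\mP^\top\mQ\mK=\mH-\mP^\top\mE$ and the transpose of $\mQ^\top\mP\mH=\mK+\mQ^\top\mE$ one indeed gets $\mH\mD+\mD\mK=\mE^\top\mQ-\mP^\top\mE$, the integral representation solves this Sylvester equation (uniqueness holds because the spectra of $\mH$ and $-\mK$ are disjoint), and the resulting bound $\norm{\mD}\le 2\norm{\mE}/(\sigma_n(\mX)+\sigma_n(\mY))\le 2\norm{\mE}/\sigma_n(\mX)$, together with $\norm{\mQ(\mQ^\top\mP-\mI)}=\norm{\mD}$ for a matrix $\mQ$ with orthonormal columns, assembles to the claimed constant $1+2=3$. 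This is essentially the classical polar-factor perturbation argument (projection split plus a definite Sylvester equation), i.e.\ the same machinery underlying Li's theorem, and it even yields the slightly sharper intermediate estimate $\norm{\msign(\mX)-\msign(\mY)}\le \norm{\mE}\bigl(\tfrac{1}{\sigma_n(\mX)}+\tfrac{2}{\sigma_n(\mX)+\sigma_n(\mY)}\bigr)$; your fallback of simply invoking Li's result and relaxing the denominator is exactly what the paper does. What your route buys is independence from the cited reference at the cost of one extra analytic ingredient (the integral solution of the Sylvester equation), which you handle correctly.
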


\begin{lemma}[Adapted from Equation (4.4) in \cite{wedin1972perturbation}]
\label{lem::rank-r-perb}
    For any two matrices $\mX, \mY\in \bR^{m\times n}$, denote the best rank-$r$ approximations by $\mX_r, \mY_r$, respectively. We define the eigengap $\delta=\min\{\sigma_r(\mX), \sigma_r(\mY)\}-\max\{\sigma_{r+1}(\mX), \sigma_{r+1}(\mY)\}$. Then, we have
    \begin{equation}
        \norm{\mX_r-\mY_r}\leq \norm{\mX-\mY}\left(3+\frac{\sigma_{r+1}(\mX)+\sigma_{r+1}(\mY)}{\delta}\right).
    \end{equation}
\end{lemma}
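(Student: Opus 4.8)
The plan is to reduce the claim to Wedin's $\sin\Theta$ theorem together with an elementary algebraic decomposition of the \emph{tail} residual. Write $\mE:=\mX-\mY$, and set $\mX_{>r}:=\mX-\mX_r$, $\mY_{>r}:=\mY-\mY_r$ (the discarded singular components, so $\|\mX_{>r}\|=\sigma_{r+1}(\mX)$ and $\|\mY_{>r}\|=\sigma_{r+1}(\mY)$). The trivial identity $\mX_r-\mY_r=\mE-(\mX_{>r}-\mY_{>r})$ reduces the problem, via the triangle inequality, to bounding $\|\mX_{>r}-\mY_{>r}\|$. Introduce the orthogonal projectors $\mathcal P_X,\mathcal Q_X$ (resp.\ $\mathcal P_Y,\mathcal Q_Y$) onto the leading-$r$ left / right singular subspaces of $\mX$ (resp.\ $\mY$), so that $\mX_r=\mathcal P_X\mX=\mX\mathcal Q_X$, $\mX_{>r}=\mathcal P_X^\perp\mX\mathcal Q_X^\perp$, and likewise for $\mY$; I will repeatedly use the orthogonality facts $\mathcal P_X\mX_{>r}=0$, $\mX_{>r}\mathcal Q_X=0$ and their $\mY$-analogues, together with the standard projector identity $\|\mathcal P_X-\mathcal P_Y\|=\|\mathcal P_Y\mathcal P_X^\perp\|=\|\mathcal P_Y^\perp\mathcal P_X\|$ valid for equidimensional subspaces (and the same for $\mathcal Q$).

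The key step is the telescoping identity
\begin{equation}
\mX_{>r}-\mY_{>r}
=(\mathcal P_X^\perp-\mathcal P_Y^\perp)\mX\mathcal Q_X^\perp
+\mathcal P_Y^\perp\mX(\mathcal Q_X^\perp-\mathcal Q_Y^\perp)
+\mathcal P_Y^\perp\mE\mathcal Q_Y^\perp .
\end{equation}
The third term has norm at most $\|\mE\|$. For the first term, $(\mathcal P_X^\perp-\mathcal P_Y^\perp)\mX\mathcal Q_X^\perp=(\mathcal P_Y-\mathcal P_X)\mX_{>r}=\mathcal P_Y\mathcal P_X^\perp\mX_{>r}$ (using $\mathcal P_X\mX_{>r}=0$), whose norm is $\le\|\mathcal P_X-\mathcal P_Y\|\,\sigma_{r+1}(\mX)$. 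For the middle term, substituting $\mathcal P_Y^\perp\mX=\mY_{>r}+\mathcal P_Y^\perp\mE$ and using $\mY_{>r}\mathcal Q_Y=0$ gives $\mY_{>r}(\mathcal Q_X^\perp-\mathcal Q_Y^\perp)=-\mY_{>r}\mathcal Q_Y^\perp\mathcal Q_X$, of norm $\le\sigma_{r+1}(\mY)\|\mathcal Q_X-\mathcal Q_Y\|$, plus a lower-order piece bounded by $\|\mE\|\,\|\mathcal Q_X-\mathcal Q_Y\|$. Collecting these,
\begin{equation}
\|\mX_{>r}-\mY_{>r}\|
\le\|\mE\|+\bigl(\sigma_{r+1}(\mX)+\sigma_{r+1}(\mY)\bigr)\max\{\|\mathcal P_X-\mathcal P_Y\|,\|\mathcal Q_X-\mathcal Q_Y\|\}
+\|\mE\|\,\|\mathcal Q_X-\mathcal Q_Y\| .
\end{equation}

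The remaining ingredient is a bound on the subspace misalignments $\|\mathcal P_X-\mathcal P_Y\|$ and $\|\mathcal Q_X-\mathcal Q_Y\|$, i.e.\ the $\sin\Theta$ distances between the leading-$r$ singular subspaces. Here I would invoke Wedin's $\sin\Theta$ theorem \cite{wedin1972perturbation}: since the gap $\delta=\min\{\sigma_r(\mX),\sigma_r(\mY)\}-\max\{\sigma_{r+1}(\mX),\sigma_{r+1}(\mY)\}$ is dominated by each of the individual spectral gaps appearing in Wedin's hypotheses, it yields $\max\{\|\mathcal P_X-\mathcal P_Y\|,\|\mathcal Q_X-\mathcal Q_Y\|\}\le\|\mE\|/\delta$ (when $\delta>0$; otherwise the claimed bound is vacuous). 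Substituting this and combining with $\mX_r-\mY_r=\mE-(\mX_{>r}-\mY_{>r})$ gives $\|\mX_r-\mY_r\|\le 2\|\mE\|+\bigl(\sigma_{r+1}(\mX)+\sigma_{r+1}(\mY)\bigr)\|\mE\|/\delta+\|\mE\|^2/\delta$. To absorb the quadratic term into the constant, split into two cases: if $\|\mE\|<\delta$ then $\|\mE\|^2/\delta<\|\mE\|$ and the stated bound with constant $3$ follows; if $\|\mE\|\ge\delta$ then the $\sin\Theta$ quantities are trivially $\le 1$, so $\|\mX_r-\mY_r\|\le 3\|\mE\|+\sigma_{r+1}(\mX)+\sigma_{r+1}(\mY)\le\|\mE\|\bigl(3+(\sigma_{r+1}(\mX)+\sigma_{r+1}(\mY))/\delta\bigr)$ using $1\le\|\mE\|/\delta$. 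The main obstacle is simply verifying that the symmetric gap $\delta$ as defined implies Wedin's gap condition; an alternative that sidesteps quoting Wedin is to apply the Davis–Kahan theorem to the symmetric dilations $\bigl(\begin{smallmatrix}0&\mX\\\mX^\top&0\end{smallmatrix}\bigr)$ and $\bigl(\begin{smallmatrix}0&\mY\\\mY^\top&0\end{smallmatrix}\bigr)$, whose eigenvalues are $\pm\sigma_i(\cdot)$ together with zeros, and whose eigenspace perturbation encodes exactly $\|\mathcal P_X-\mathcal P_Y\|$ and $\|\mathcal Q_X-\mathcal Q_Y\|$.
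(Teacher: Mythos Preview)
The paper does not actually prove this lemma; it is stated as a technical tool ``adapted from Equation~(4.4) in \cite{wedin1972perturbation}'' and simply cited without proof. Your argument is therefore not being compared against a proof in the paper but is a self-contained derivation of the claimed bound, and it is correct: the telescoping identity for $\mX_{>r}-\mY_{>r}$ checks out, the projector manipulations are valid (the identity $\|\mathcal P_X-\mathcal P_Y\|=\|\mathcal P_Y\mathcal P_X^\perp\|$ holds for orthogonal projectors of equal rank, which is ensured here since $\delta>0$ forces $\sigma_r>\sigma_{r+1}$ for both matrices), Wedin's $\sin\Theta$ theorem applies because the symmetric gap $\delta$ is dominated by each of the one-sided Wedin gaps $\sigma_r(\mX)-\sigma_{r+1}(\mY)$ and $\sigma_r(\mY)-\sigma_{r+1}(\mX)$, and the two-case split to absorb the quadratic term $\|\mE\|^2/\delta$ into the constant $3$ is clean. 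This is precisely the kind of argument one would use to extract the stated inequality from Wedin's paper, so in spirit your proof matches the citation.
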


Next, we gather two lemmas regarding the singular values of Gaussian random matrices. 
\begin{lemma}[Adapted from Theorem~6.1 in \cite{wainwright2019high}]
    \label{lem::concentration-maximal-singular-value}
    Suppose that $\mG\in \bR^{d_1\times d_2}$ is a standard Gaussian matrix, where $d_1\geq d_2$. Then, 
    it holds that
    \begin{equation}
        \begin{aligned}
            \bP\big(\norm{\mG}\geq 3\sqrt{d_1}\big)&\leq \exp\left(-d_1/{2}\right).
        \end{aligned}
    \end{equation}
\end{lemma}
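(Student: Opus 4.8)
The plan is to combine Gaussian concentration of measure with a bound on $\bE\norm{\mG}$, which together immediately yield the stated subgaussian tail.

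\textbf{Step 1: Lipschitz property and Gaussian concentration.} First I would observe that the map $\mG\mapsto\norm{\mG}$ is $1$-Lipschitz with respect to the Frobenius norm: for any $\mG,\mG'\in\bR^{d_1\times d_2}$, the triangle inequality for the spectral norm gives $\big|\norm{\mG}-\norm{\mG'}\big|\leq\norm{\mG-\mG'}\leq\norm{\mG-\mG'}_{\fro}$, the last step using $\norm{\cdot}\leq\norm{\cdot}_{\fro}$. Viewing $\mG$ as a standard Gaussian vector in $\bR^{d_1d_2}$, the Gaussian concentration (Borell--TIS) inequality then yields, for every $t>0$,
\begin{equation}
\bP\big(\norm{\mG}\geq\bE\norm{\mG}+t\big)\leq\exp\!\big(-t^2/2\big).
\end{equation}

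\textbf{Step 2: expectation bound.} Next I would bound the mean operator norm via Gordon's inequality, a standard consequence of the Slepian--Gordon Gaussian comparison theorems (see, e.g., Theorem~6.1 in \cite{wainwright2019high}), which gives $\bE\norm{\mG}\leq\sqrt{d_1}+\sqrt{d_2}$. Since $d_1\geq d_2$, this is at most $2\sqrt{d_1}$.

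\textbf{Step 3: combine.} Finally, choosing $t=\sqrt{d_1}$ and chaining the two bounds above,
\begin{equation}
\bP\big(\norm{\mG}\geq3\sqrt{d_1}\big)\leq\bP\big(\norm{\mG}\geq\bE\norm{\mG}+\sqrt{d_1}\big)\leq\exp\!\big(-d_1/2\big),
\end{equation}
which is exactly the claim.

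The only non-elementary ingredient---hence the main obstacle---is the expectation bound $\bE\norm{\mG}\leq\sqrt{d_1}+\sqrt{d_2}$, which rests on the Gordon comparison inequality; one may either invoke it directly from the cited reference or, for a self-contained treatment, derive it by applying the Gordon minimax comparison to the Gaussian processes $(u,v)\mapsto\langle u,\mG v\rangle$ and $(u,v)\mapsto\langle g,u\rangle+\langle h,v\rangle$ over the product of the two unit spheres. An alternative route discretizes the spheres with an $\epsilon$-net and union-bounds the Gaussian tail of $\langle u,\mG v\rangle\sim\mathcal N(0,1)$ over the net; however, a crude net (e.g.\ with $\epsilon=1/4$) is not fine enough to recover the absolute constant $3$, so one would have to track a sufficiently small $\epsilon$ and the resulting net cardinalities carefully, which makes the concentration-of-measure argument the cleaner choice here.
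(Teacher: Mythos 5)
Your proof is correct, and it follows essentially the same route as the cited source: the paper simply invokes Theorem~6.1 of \cite{wainwright2019high} (with deviation parameter $\delta=1$, using $\sqrt{d_1}+\sqrt{d_2}\leq 2\sqrt{d_1}$), and that theorem's proof is exactly your combination of the Borell--TIS concentration inequality for the $1$-Lipschitz map $\mG\mapsto\|\mG\|$ with Gordon's bound $\bE\|\mG\|\leq\sqrt{d_1}+\sqrt{d_2}$. No gaps; the choice $t=\sqrt{d_1}$ recovers the stated constants exactly.
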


\begin{lemma}[Adapted from Equation~(3.2) in \cite{rudelson2010non}]
    \label{lem::concentration-minimal-singular-value}
    Suppose that the entries of the $\mG\in \bR^{d\times d}$ are i.i.d.~standard Gaussian random variables. Then, for any $\varepsilon>0$, 
    \begin{equation}
        \begin{aligned}
            \bP\big(\sigma_{\min}(\mG)\leq \varepsilon d^{-1/2}\big)&\leq \varepsilon.
        \end{aligned}
    \end{equation}
\end{lemma}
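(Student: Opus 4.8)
The statement is the classical square–Gaussian invertibility estimate, and the direct route is to quote \cite[Equation~(3.2)]{rudelson2010non}. For a self-contained argument I would reduce it to a bound on $\|\mathbf{G}^{-1}\|_{\mathrm{op}}=\sigma_{\min}(\mathbf{G})^{-1}$ (note $\mathbf{G}$ is a.s.\ invertible, so this is well-defined) and prove $\mathbb{P}\big(\|\mathbf{G}^{-1}\|_{\mathrm{op}}\geq \sqrt{d}/\varepsilon\big)\leq C\varepsilon$. A naive $\varepsilon$-net over the sphere is genuinely too lossy here — the metric-entropy factor $(3/\delta)^d$ overwhelms the anti-concentration estimate $\mathbb{P}(\|\mathbf{G}v\|\leq t)\approx (t\sqrt{e/d})^{d}$ — so instead I would use the ``random probe direction'' device. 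Draw an auxiliary unit vector $\mathbf{v}$ uniformly on $S^{d-1}$, independent of $\mathbf{G}$. For each fixed $\mathbf{G}$, let $\mathbf{b}\in S^{d-1}$ be a right singular vector of $\mathbf{G}$ associated with $\sigma_{\min}(\mathbf{G})$ (equivalently, a top right singular vector of $\mathbf{G}^{-1}$); projecting $\mathbf{G}^{-1}\mathbf{v}$ onto the corresponding left singular vector of $\mathbf{G}^{-1}$ gives $\|\mathbf{G}^{-1}\mathbf{v}\|\geq \|\mathbf{G}^{-1}\|_{\mathrm{op}}\,|\langle \mathbf{v},\mathbf{b}\rangle|$. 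Since one coordinate of a uniform unit vector obeys $\mathbb{P}(|\langle \mathbf{v},\mathbf{b}\rangle|\geq c_1/\sqrt{d})\geq p_0$ for absolute constants $c_1,p_0>0$, averaging over $\mathbf{G}$ yields
\[
\mathbb{P}_{\mathbf{G}}\big(\|\mathbf{G}^{-1}\|_{\mathrm{op}}\geq t\big)\;\leq\; \frac{1}{p_0}\,\mathbb{P}_{\mathbf{G},\mathbf{v}}\Big(\|\mathbf{G}^{-1}\mathbf{v}\|\geq \tfrac{c_1 t}{\sqrt{d}}\Big).
\]

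\textbf{Controlling the probe.} Next I would bound $\mathbb{P}_{\mathbf{G},\mathbf{v}}(\|\mathbf{G}^{-1}\mathbf{v}\|\geq s)$ by conditioning on $\mathbf{v}$. By the rotational invariance $\mathbf{O}^{\top}\mathbf{G}\stackrel{d}{=}\mathbf{G}$ for any fixed orthogonal $\mathbf{O}$ (left-multiplying an i.i.d.\ Gaussian matrix by an orthogonal matrix preserves its law), one may take $\mathbf{v}=\mathbf{e}_d$ without loss of generality. Writing $\mathbf{a}_d$ for the last row of $\mathbf{G}$ and $\mathbf{h}$ for a unit vector orthogonal to the span of the first $d-1$ rows (this span is a.s.\ $(d-1)$-dimensional, so $\mathbf{h}$ is essentially unique and depends only on those rows, hence is independent of $\mathbf{a}_d$), the identity $\mathbf{G}\,(\mathbf{G}^{-1}\mathbf{e}_d)=\mathbf{e}_d$ forces $\mathbf{G}^{-1}\mathbf{e}_d=\mathbf{h}/\langle \mathbf{a}_d,\mathbf{h}\rangle$, so $\|\mathbf{G}^{-1}\mathbf{e}_d\|=1/|\langle \mathbf{a}_d,\mathbf{h}\rangle|$. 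As $\langle \mathbf{a}_d,\mathbf{h}\rangle\sim\mathcal{N}(0,1)$, the Gaussian small-ball bound gives $\mathbb{P}(\|\mathbf{G}^{-1}\mathbf{e}_d\|\geq s)=\mathbb{P}(|\mathcal{N}(0,1)|\leq 1/s)\leq \sqrt{2/\pi}\,/s$. Substituting $s=c_1 t/\sqrt{d}$ into the display above and choosing $t=\sqrt{d}/\varepsilon$ yields $\mathbb{P}(\sigma_{\min}(\mathbf{G})\leq \varepsilon/\sqrt{d})\leq C\varepsilon$ for an absolute constant $C$.

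\textbf{Main obstacle.} The crux is precisely the observation that the elementary sphere-net approach fails, so the proof must exploit the special structure of the minimizing direction via the random-probe reduction (in the spirit of Sankar--Spielman--Teng and Rudelson--Vershynin). A secondary, purely cosmetic, difficulty is squeezing the absolute constant down to exactly $1$ as in the statement; this requires a sharper input such as the exact distribution of $\sigma_{\min}(\mathbf{G})$ à la Edelman, as used in \cite{rudelson2010non}. Since every invocation of this lemma in the paper only uses the bound up to a constant factor, the cleanest route is simply to cite \cite[Equation~(3.2)]{rudelson2010non} for the stated sharp form, recording the argument above as a self-contained (slightly weaker) alternative.
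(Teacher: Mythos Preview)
The paper does not prove this lemma at all: it is stated as a technical lemma and attributed directly to \cite[Equation~(3.2)]{rudelson2010non} with no further argument. Your proposal to simply cite that reference is therefore exactly what the paper does, and your additional self-contained sketch (the random-probe reduction to a single Gaussian inner product, in the Sankar--Spielman--Teng style) is correct and gives the bound up to an absolute constant, which---as you note---is all that the downstream usage in \Cref{lem::init} actually requires.
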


Finally, we show that for any two orthonormal matrices in $\cO_{d\times r}$ with $r<d$, it is plausible to augment each into a square orthonormal matrix, without increasing their spectral-norm difference by much. See \Cref{sec:proof:lem:orth-completion} for the proof of this result.  
\begin{lemma}
\label{lem:orth-completion}
Let $\mO_1,\mO_2 \in \cO_{d\times r}$, where $r< d$. Then there exist
$\mR_1,\mR_2 \in \cO_{d\times (d-r)}$ such that
\begin{align*}
 \mA_1 \coloneqq &[ \mO_1,   \mR_1 ] \in \cO_{d\times d}, \qquad 
 \mA_2 \coloneqq [ \mO_2,   \mR_2 ] \in \cO_{d\times d},
\\
&\text{and} \qquad 
\| \mA_1 -  \mA_2\|  \le  \sqrt{2} \,\|\mO_1 - \mO_2\|.
\end{align*}
\end{lemma}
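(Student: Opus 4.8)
\textbf{Proof proposal for \Cref{lem:orth-completion}.}

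The plan is to reduce the problem to the canonical form of a pair of subspaces via the principal-angle (CS) decomposition, then construct matched orthonormal completions within this decomposition. First I would invoke the generalized singular value / CS decomposition for the pair $(\mO_1, \mO_2)\in\cO_{d\times r}\times\cO_{d\times r}$: there exist $\mW_1, \mW_2 \in \cO_{r\times r}$ and a $d\times d$ orthogonal matrix $\mU$ such that, in the orthonormal basis given by the columns of $\mU$, the matrices $\mO_1\mW_1$ and $\mO_2\mW_2$ take block form
\begin{align*}
\mU^\top \mO_1 \mW_1 = \begin{pmatrix} \mC \\ \mSigma \\ \mathbf 0 \end{pmatrix},
\qquad
\mU^\top \mO_2 \mW_2 = \begin{pmatrix} \mI \\ \mathbf 0 \\ \mathbf 0 \end{pmatrix},
\end{align*}
where $\mC = \diag\{\cos\theta_1,\dots,\cos\theta_r\}$ and $\mSigma = \diag\{\sin\theta_1,\dots,\sin\theta_r\}$ with principal angles $\theta_i\in[0,\pi/2]$. (One standard way: apply the SVD to $\mO_2^\top \mO_1 = \mW_2 \mC \mW_1^\top$, which defines $\mW_1,\mW_2$ and $\mC$; then extend to a full CS decomposition of the two subspaces.) Since right-multiplication of $\mO_i$ by $\mW_i$ is an orthogonal change of basis in $\cO_{d\times r}$ and left-multiplication by $\mU^\top$ is an orthogonal change of basis in $\bR^d$, these operations change neither the spectral norm $\|\mO_1-\mO_2\|$ nor the eventual quantity $\|\mA_1-\mA_2\|$, provided we apply the same transformations consistently. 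Hence it suffices to prove the claim in this canonical block form.

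Next I would write down explicit completions. In canonical coordinates, the column span of $\mO_1$ is spanned by $(\cos\theta_i \,\ve_i + \sin\theta_i\,\ve_{r+i})$ and that of $\mO_2$ by $\ve_i$ ($1\le i\le r$), together with the ambient vectors $\ve_{2r+1},\dots,\ve_d$ orthogonal to both (here I am assuming $d\ge 2r$; if $d<2r$ the bookkeeping is similar, one just has fewer ``extra'' coordinates and more angles equal to $0$ or $\pi/2$). I would complete $\mO_2$ by the obvious choice $\mR_2 = [\ve_{r+1},\dots,\ve_d]$, and complete $\mO_1$ by the ``rotated'' vectors $-\sin\theta_i\,\ve_i + \cos\theta_i\,\ve_{r+i}$ for $1\le i \le r$, followed by $\ve_{2r+1},\dots,\ve_d$; a direct check shows $\mR_1\in\cO_{d\times(d-r)}$ and $\mA_1 = [\mO_1,\mR_1]$ is the $d\times d$ rotation acting as the planar rotation by angle $\theta_i$ in each coordinate plane $\mathrm{span}\{\ve_i,\ve_{r+i}\}$ and as the identity on $\mathrm{span}\{\ve_{2r+1},\dots,\ve_d\}$, while $\mA_2 = \mI_d$. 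Then $\mA_1 - \mA_2$ is block diagonal with $2\times 2$ blocks $\begin{psmallmatrix}\cos\theta_i - 1 & -\sin\theta_i \\ \sin\theta_i & \cos\theta_i - 1\end{psmallmatrix}$ and zero blocks, so $\|\mA_1-\mA_2\| = \max_i \sqrt{(\cos\theta_i-1)^2 + \sin^2\theta_i} = \max_i \sqrt{2-2\cos\theta_i}$. On the other hand $\mO_1-\mO_2$ in canonical form has columns $(\cos\theta_i-1)\ve_i + \sin\theta_i\ve_{r+i}$, so $\|\mO_1-\mO_2\| = \max_i\sqrt{(\cos\theta_i-1)^2+\sin^2\theta_i} = \max_i\sqrt{2-2\cos\theta_i}$ as well. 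Thus in fact $\|\mA_1-\mA_2\| = \|\mO_1-\mO_2\| \le \sqrt{2}\,\|\mO_1-\mO_2\|$, which is even stronger than claimed; the $\sqrt 2$ slack absorbs any loss in the $d<2r$ case or any ambiguity in the completion.

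The main obstacle I anticipate is not any single estimate but getting the CS/principal-angle bookkeeping clean, especially the boundary cases: when some $\theta_i=0$ (the two subspaces already agree in that direction, and the completion vectors coincide), when some $\theta_i=\pi/2$ (orthogonal directions, where $\mR_1$ picks up a vector lying in $\mathrm{span}(\mO_2)$—still fine, since we only need orthonormality of $[\mO_1,\mR_1]$, not any relation to $\mO_2$), and when $d<2r$ so there are not enough ambient coordinates to host all the rotated vectors. In that last case I would instead partition the index set according to whether $\theta_i$ is $0$, $\pi/2$, or strictly between, handle each block separately, and note the completion $\mR_1$ can always be taken inside $\mathrm{span}(\mO_2)^\perp \oplus \mathrm{span}\{\text{rotated vectors}\}$ with total dimension $d-r$. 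Alternatively, one can sidestep all case analysis by a soft argument: let $\mO_1\mO_1^\top$ and $\mO_2\mO_2^\top$ be the two projectors; choose $\mR_2$ so that $\mR_2\mR_2^\top = \mI - \mO_2\mO_2^\top$, and then take $\mR_1 = \mathcal{P}(\mR_2)$ where $\mathcal P$ is the orthogonal-Procrustes (matrix-sign) map aligning $\mR_2$ to the orthogonal complement of $\mO_1$, which exists whenever $\|\mO_1-\mO_2\|<1$ (strict) and extends by continuity/compactness otherwise; the spectral-norm bound then follows from the standard perturbation inequality for the sign map. I would present whichever of the two routes compiles most cleanly, but the explicit-rotation route above is the one I'd write up first.
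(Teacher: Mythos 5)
There is a genuine gap in your main route, and it sits exactly at the step you wave through with ``provided we apply the same transformations consistently.'' The CS/principal-angle reduction requires right-multiplying $\mO_1$ by $\mW_1$ and $\mO_2$ by a \emph{different} $\mW_2$ (they come from the SVD $\mO_2^\top\mO_1=\mW_2\mC\mW_1^\top$), and this is not a norm-preserving operation on the pair: $\|\mO_1\mW_1-\mO_2\mW_2\|$ equals $\max_i\sqrt{2-2\cos\theta_i}=2\sin(\theta_{\max}/2)$, which is the \emph{minimum} of $\|\mQ_1-\mQ_2\|$ over orthonormal bases of the two subspaces, and can be strictly smaller than $\|\mO_1-\mO_2\|$. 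Concretely, for $d=2$, $r=1$, $\mO_1=\ve_1$, $\mO_2=-\ve_1$, all principal angles vanish, so your canonical-form computation gives $0$, while $\|\mO_1-\mO_2\|=2$; moreover the lemma demands that the first $r$ columns of $\mA_i$ be the original $\mO_i$, not $\mO_i\mW_i$, so you cannot simply work with the rotated bases. Consequently your claimed chain of equalities $\|\mA_1-\mA_2\|=\|\mO_1-\mO_2\|=\max_i\sqrt{2-2\cos\theta_i}$ is false as stated, and the assertion that the factor $\sqrt{2}$ is mere slack is unjustified.

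Your construction itself is salvageable, but the repair turns it into the paper's proof. Keep the completions $\mR_1,\mR_2$ you build in canonical coordinates (pulled back through $\mU$ only, which is legitimate since $\mathrm{span}(\mO_i\mW_i)=\mathrm{span}(\mO_i)$); your computation then shows $\|\mR_1-\mR_2\|=2\sin(\theta_{\max}/2)\le\|\mO_1-\mO_2\|$, because the complements share the same principal angles. Now $\mA_1-\mA_2=[\,\mO_1-\mO_2,\;\mR_1-\mR_2\,]$, and splitting a unit vector as $\vx=[\vx_1;\vx_2]$ gives $\|(\mA_1-\mA_2)\vx\|_2\le\|\mO_1-\mO_2\|\,(\|\vx_1\|_2+\|\vx_2\|_2)\le\sqrt{2}\,\|\mO_1-\mO_2\|$. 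This is precisely the paper's argument (it quotes the $2\sin(\theta_{\max}/2)$ characterization for the subspaces and their complements and finishes with exactly this two-block Cauchy--Schwarz step), so the $\sqrt{2}$ is not slack in this route---it is the cost of combining the two column blocks. Your fallback ``soft argument'' via a Procrustes/matrix-sign alignment of $\mR_2$ is too vague to stand on its own: the sign-map perturbation bounds carry constants and require a nondegeneracy condition, and the ``extend by continuity/compactness'' step is not an argument.
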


\subsection{Proof of \Cref{lem:orth-completion}}
\label{sec:proof:lem:orth-completion}
Denote by $\cS_i \coloneqq \mathsf{span}( \mO_i)$ the $r$-dimensional subspace
spanned by the columns of $ \mO_i$. Let
$\theta_1,\dots,\theta_r \in [0,\pi/2]$ represent the principal angles between
$\cS_1$ and $\cS_2$ (see, e.g., \citet[Chapter~6.4.3]{golub2013matrix} and \citet[Section~2.2]{chen2021spectral}).
Define
$
\theta_{\max} \coloneqq \max_{1\le i\le r} \theta_i.
$

\paragraph{Step 1: computing distance between two subspaces.}
Consider any pair of orthonormal bases $ \mQ_1, \mQ_2\in\bR^{d\times r}$ with
$\mathsf{span}( \mQ_i) = \cS_i$. Classical matrix perturbation theory (e.g., \citet[Section~4.3]{edelman1998geometry}) asserts that
\begin{equation}
\label{eq:min-basis-dist}
\inf_{\substack{ \mQ_1, \mQ_2 \in \mathcal{O}_{d\times r}:\, \mathsf{span}(\mQ_1)=\cS_1, \mathsf{span}(\mQ_2)=\cS_2}}
\| \mQ_1 -  \mQ_2\|
=
2\sin \Bigl(\frac{\theta_{\max}}{2}\Bigr), 
\end{equation}
thus implying that
\begin{equation}
\label{eq:O-lower-bound}
2\sin \Bigl(\frac{\theta_{\max}}{2}\Bigr)
 \le 
\|\mO_1 - \mO_2\|.
\end{equation}

Additionally, let $\cS_i^\perp$ denote the $(d-r)$-dimensional orthogonal complement of $\cS_i$.
The maximum principal angle between $\cS_1^\perp$ and $\cS_2^\perp$ is again $\theta_{\max}$. This implies that
\begin{equation}
\label{eq:min-basis-dist-compl}
\inf_{\substack{ \mB_1, \mB_2\in\mathcal{O}_{d\times (d-r)}:\, \ \mathsf{span}(\mB_1)=\cS_1^\perp, \mathsf{span}(\mB_2)=\cS_2^\perp}}
\| \mB_1 -  \mB_2\|
=
2\sin \Bigl(\frac{\theta_{\max}}{2}\Bigr).
\end{equation}

\paragraph{Step 2: choosing orthogonal complements with controlled distance.}
By \Cref{eq:min-basis-dist-compl}, one can find orthonormal bases
$\mR_1\in \bR^{d\times (d-r)}$ (resp.~$\mR_2\in\bR^{d\times (d-r)}$) of   $\cS_1^\perp$ (resp.~$\cS_2^\perp$) such that
\begin{equation}
\label{eq:R-bound}
\|\mR_1 - \mR_2\|
 = 
2\sin \Bigl(\frac{\theta_{\max}}{2}\Bigr).
\end{equation}
Combining \Cref{eq:O-lower-bound,eq:R-bound} yields
\begin{equation}
\label{eq:R-in-terms-of-O}
\|\mR_1 - \mR_2\|
 \le 
\|\mO_1 - \mO_2\|.
\end{equation}
By construction, $\mA_i \coloneqq [   \mO_i, \mR_i ]$ forms a square orthogonal matrix.

\paragraph{Step 3: bounding the distance between $\mA_1$ and  $\mA_2$.}
Observe that
\[
 \mA_1 -  \mA_2
=
[ \mO_1 - \mO_2,    \mR_1 - \mR_2 ].
\]
For any $\vx = {\footnotesize
\begin{bmatrix}
\vx_1 \\ \vx_2
\end{bmatrix} }
\in\bR^d$ with $\vx_1\in\bR^r$ and  $\vx_2\in\bR^{d-r}$, it holds that
\[
( \mA_1 -  \mA_2)\vx
=
(\mO_1 - \mO_2)\vx_1 + (\mR_1 - \mR_2)\vx_2.
\]
This allows one to establish that
\begin{align*}
\| \mA_1 -  \mA_2\|^2
&=
\sup_{\|\vx\|_2=1}
\|( \mA_1 -  \mA_2)\vx\|^2 \\
&\le
\sup_{\|\vx\|_2=1}
\Bigl(
\|\mO_1 - \mO_2\|\|\vx_1\|_2
+
\|\mR_1 - \mR_2\|\|\vx_2\|_2
\Bigr)^2 \\
&\le
\sup_{\|\vx\|_2=1}
\Bigl(
\|\mO_1 - \mO_2\|\|\vx_1\|_2
+
\|\mO_1 - \mO_2\|\|\vx_2\|_2
\Bigr)^2
\quad\text{(by \Cref{eq:R-in-terms-of-O})} \\
&=
\|\mO_1 - \mO_2\|^2
\sup_{\|\vx\|_2=1}
\bigl(\|\vx_1\|_2 + \|\vx_2\|_2\bigr)^2\\
&\leq 
2\|\mO_1 - \mO_2\|^2,
\end{align*}
where the last line holds since, by Cauchy-Schwarz, 
$|\vx_1\|_2 + \|\vx_2\|_2
\le
\sqrt{2} \sqrt{\|\vx_1\|_2^2 + \|\vx_2\|_2^2}
=
\sqrt{2}\|\vx\|_2^2$. This completes the proof.

\end{document}